\newcommand{\pr}[1]{{\color{red} {\bf Peter:} #1}}
\newcommand{\ks}[1]{{\color{blue} {\bf Kostya:} #1}}
\newcommand{\LAM}{{\color{blue}L_{\rm AM}}}
\newcommand{\LQM}{{\color{red}L_{\rm QM}}}
\newcommand{\Lvar}{L_{\rm var}}
\newcommand{\LAMsq}{{\color{blue}L_{\rm AM}^2}}
\newcommand{\LQMsq}{{\color{red}L_{\rm QM}^2}}
\newcommand{\algname}[1]{{\color{ForestGreen}\small\sf#1}\xspace}
\newcommand{\algnamesmall}[1]{{\color{ForestGreen}\scriptsize\sf#1}\xspace}
\newcommand{\algnametiny}[1]{{\color{ForestGreen}\tiny\sf#1}\xspace}
\newcommand{\norm}[1]{\left\| #1 \right\|}
\newcommand{\cD}{\mathcal{D}}
\newcommand{\ExpBr}[1]{\mathbb{E}\left[#1\right]}
\newcommand{\cZ}{\mathcal{Z}}
\newcommand{\cJ}{\mathcal{J}}
\newcommand{\cI}{\mathcal{I}}
\newcommand{\sumin}{\sum_{i=1}^n}
\newcommand{\sumjn}{\sum_{j=1}^n}
\newcommand{\avein}{\frac{1}{n}\sum_{i=1}^n}
\newcommand{\RR}{\mathbb{R}}
\newcommand{\cC}{{\cal C}}
\newcommand{\cO}{{\cal O}}
\newcommand{\bA}{\mathbf{A}}
\newcommand{\RD}{\mathbb{R}^d}
\newcommand{\sqnorm}[1]{\left\| #1 \right\|^{2}}
\newcommand{\rb}[1]{\left(#1\right)}
\newcommand{\del}[1]{}
\newcommand{\eqdef}{\coloneqq} % use this one if we do not have enough space for the other one
\newcommand{\R}{\mathbb{R}}
\newcommand{\Prob}{\mathbf{Prob}}
\newcommand{\squeeze}{\textstyle}
\DeclareMathOperator*{\argmin}{argmin}
\newtheoremstyle{custom}
	{0pt}
	{0pt}
	{\itshape}
	{}
	{\bfseries}
	{.}
	{.5em}
	{}
\theoremstyle{custom}
\newtheorem{assumption}{Assumption}
\newtheorem{lemma}{Lemma}
\newtheorem{theorem}{Theorem}
\newtheorem{example}{Example}
\theoremstyle{plain}
\theoremstyle{definition}
\newtheorem{definition}[theorem]{Definition}
\title{Error Feedback Reloaded: From Quadratic to Arithmetic Mean of Smoothness Constants}
\author{Peter Richt\'{a}rik \\
AI Initiative \\ KAUST\thanks{King Abdullah University of Science and Technology}, Saudi Arabia
%\texttt{\scriptsize{peter.richtarik@kaust.edu.sa}} \\
\And
Elnur Gasanov \\
AI Initiative \\ KAUST, Saudi Arabia
%\texttt{\scriptsize{elnur.gasanov@kaust.edu.sa}} \\
\And
Konstantin Burlachenko\\
AI Initiative \\ KAUST, Saudi Arabia
%\texttt{\scriptsize{konstantin.burlachenko@kaust.edu.sa}}
}
\begin{document}

\maketitle

\begin{abstract}
Error Feedback (\algname{EF}) is a highly popular and immensely effective mechanism for fixing convergence issues which arise in distributed training methods (such as distributed \algname{GD} or \algname{SGD}) when these are enhanced with greedy communication compression techniques such as TopK. While \algname{EF} was proposed almost a decade ago~\citep{Seide2014}, and despite concentrated effort by the community to advance the theoretical understanding of this mechanism, there is still a lot to explore. In this work we study a modern form of error feedback called \algname{EF21}~\citep{EF21} which offers the currently best-known theoretical guarantees, under the weakest assumptions, and also works well in practice. In particular, while the theoretical communication complexity of \algname{EF21} depends on the {\em quadratic mean} of certain smoothness parameters, we improve this dependence to their {\em arithmetic mean}, which is always smaller, and can be substantially smaller, especially in heterogeneous data regimes. We take the reader on a journey of our discovery process. Starting with the idea of applying \algname{EF21} to an equivalent reformulation of the underlying problem which (unfortunately) requires (often impractical) machine {\em cloning}, we continue to the discovery of a new {\em weighted} version of \algname{EF21} which can (fortunately) be executed without any cloning, and finally circle back to an improved {\em analysis} of the original \algname{EF21} method. While this development applies to the simplest form of \algname{EF21}, our approach naturally extends to more elaborate variants involving stochastic gradients and partial participation. Further, our technique improves the best-known theory of \algname{EF21} in the {\em rare features} regime~\citep{EF21-RF}. Finally, we validate our theoretical findings with suitable experiments.
\end{abstract}

%FOR ARXIV PURPOSES ONLY
 \tableofcontents
 \clearpage

\section{Introduction}

Due to their ability to harness the computational capabilities of modern devices and their capacity to extract value from the enormous data generated by organizations, individuals, and various digital devices and sensors, Machine Learning (ML) methods~\citep{bishop2016pattern, shalev2014understanding} have become indispensable in numerous practical applications~\citep{krizhevsky2012, lin-etal-2022-truthfulqa, transformer, onay2018review, cardio_vas_DL, gavriluct2009malware, sun2017deep}. 

The necessity to handle large datasets has driven application entities to store and process their data in powerful computing centers~\citep{YANG2019278, dean2012large, verbraeken2020survey} via {\em distributed} training algorithms. Beside this industry-standard centralized approach, decentralized forms of distributed learning are becoming increasingly popular. For example, Federated Learning (FL) facilitates a collaborative learning process in which various clients, such as hospitals or owners of edge devices, collectively train a model on their devices while retaining their data locally, without uploading it to a centralized location \citep{FEDLEARN, FEDOPT, mcmahan2017communication, FL_overview, kairouz2021advances, FieldGuide2021}. 

Distributed training problems are typically formulated as optimization problems of the form
\begin{equation}\label{eq:main_problem}
\squeeze\min\limits_{x \in \RR^d} \left\{ f(x) \eqdef \frac{1}{n}\sum \limits_{i=1}^n f_i(x) \right\},
\end{equation}
where $n$ is the number of clients/workers/nodes, vector $x\in \RR^d$ represents the $d$ trainable parameters, and  $f_i(x)$ is the loss of the model parameterized by $x$ on the training data stored on client $i \in [n]\eqdef \{1,\dots,n\}$. One of the key issues in distributed training in general, and FL in particular, is the \textit{communication bottleneck}~\citep{FEDLEARN,kairouz2021advances}. The overall efficiency of a distributed algorithm for solving  \eqref{eq:main_problem} can be characterized by multiplying the number of communication rounds needed to find a solution of acceptable accuracy by the cost of each communication round:
\begin{equation}\label{eq:comm_burden_formula}
{\text{communication complexity} = \text{\# communication rounds} \times \text{cost of 1 communication round}}.
\end{equation}
%In situations where on-device computation is significantly cheaper compared to client-server communication, it is customary to measure 
This simple formula clarifies the rationale behind two orthogonal approaches to alleviating the communication bottleneck. i) The first approach aims to minimize the first factor in~\eqref{eq:comm_burden_formula}. This is done by carefully deciding on {\em what work should be done} on the clients in each communication round in order for it  to reduce the total number of communication rounds needed, and includes methods based on local training~\citep{stich2018local,lin2018don, mishchenko2022proxskip, condat2023tamuna, li2019convergence} and  momentum~\citep{nesterov_accelerated,NesterovBook,AccMethodsBook}. Methods in this class  communicate dense $d$-dimensional vectors. ii) The second approach aims to minimize the second factor in~\eqref{eq:comm_burden_formula}. Methods in this category {\em compress the information} (typically $d$-dimensional vectors) transmitted between the clients and the server~\citep{alistarh2017qsgd, DCGD, bernstein2018signsgd, safaryan2021}.

%In contrast to the full dense vectors used in the previous category. 
\subsection{Communication compression}
Vector compression can be achieved through the application of a compression operator. Below, we outline two primary classes of these operators: unbiased (with conically bounded  variance) and contractive.

\begin{definition}[Compressors]
A randomized mapping $\cC: \RR^d \to \RR^d$ is called i) an {\em unbiased compressor} if for some $\omega > 0$ it satisfies 
\begin{equation}\label{eq:unbiased}
\squeeze \ExpBr{\cC(x)} = x, \quad \ExpBr{\|\cC(x) - x \|^2} \leq \omega \|x\|^2, \quad 
\forall x\in\RR^d,\end{equation}
and ii) a {\em contractive compressor} if for some $\alpha \in (0, 1]$ it satisfies 
\begin{equation}\label{eq:compressor_contraction}
\squeeze
\ExpBr{\|\cC(x) - x \|^2} \leq (1 - \alpha) \|x\|^2, \quad 
\forall x\in\RR^d.
\end{equation}\vspace{-2em}
\end{definition}
It is well known that whenever a compressor $\cC$ satisfies \eqref{eq:unbiased}, then the scaled compressor $\cC/(\omega+1)$ satisfies \eqref{eq:compressor_contraction} with $\alpha=1-
(\omega+1)^{-1}$. In this sense, the class of contractive compressors includes all unbiased compressors as well. However, it is also strictly larger. For example, the Top$K$ compressor, which retains the $K$ largest elements in absolute value of the vector it is applied to and replaces the rest by zeros, and happens to be very powerful in practice~\citep{Alistarh-EF-NIPS2018}, satisfies  \eqref{eq:compressor_contraction} with $\alpha=
\frac{K}{d}$, but does not satisfy \eqref{eq:unbiased}. From now on, we write $\mathbb{C}(\alpha)$ to denote the class of compressors satisfying \eqref{eq:compressor_contraction}. 

It will be convenient to define the following functions of the contraction parameter \iffalse$\alpha$:  $$\squeeze \theta = 
\theta(\alpha) \eqdef 1 - \sqrt{1 - \alpha}, \quad \beta = \beta(\alpha) \eqdef \frac{1 - \alpha}{1 - \sqrt{1 - \alpha}}$$ and \begin{equation}
\squeeze\label{eq:xi}\xi= \xi(\alpha)\eqdef \sqrt{\frac{\beta(\alpha)}{\theta(\alpha)}}=\frac{\sqrt{1 - \alpha}}{1 - \sqrt{1 - \alpha}}=\frac{1+\sqrt{1 - \alpha}}{\alpha}-1.\end{equation} \fi
\begin{equation}\label{eq:xi}
\squeeze \theta = 
\theta(\alpha) \eqdef 1 - \sqrt{1 - \alpha}; \quad \beta = \beta(\alpha) \eqdef \frac{1 - \alpha}{1 - \sqrt{1 - \alpha}}; \quad \xi= \xi(\alpha)\eqdef \sqrt{\frac{\beta(\alpha)}{\theta(\alpha)}}=\frac{1+\sqrt{1 - \alpha}}{\alpha}-1.
\end{equation}
Note that $0\leq \xi(\alpha) <\frac{2}{\alpha}-1$.
%Numerous FL algorithms employ either unbiased or biased (also referred to as contractive) compressors. However, within the category of compressed methods, there is a notably greater prevalence of algorithms utilizing unbiased operators in comparison to their biased counterparts. Several factors contribute to this phenomenon.  
The behavior of distributed algorithms  utilizing unbiased compressors for solving \eqref{eq:main_problem}   is relatively well-understood from a theoretical standpoint~\citep{DCGD, DIANA, ADIANA, MARINA, DASHA}. By now, the community possesses a robust theoretical understanding of the advantages such methods can offer and the mechanisms behind their efficacy~\citep{sigma_k,khaled2023,DASHA}.  However, it is well known that the class of contractive compressors includes some practically very powerful operators, such as the greedy sparsifier Top$K$~\citep{Stich-EF-NIPS2018,Alistarh-EF-NIPS2018} and the low-rank approximator Rank$K$ \citep{PowerSGD, FedNL}, which are biased, and hence their behavior is not explainable by the above developments. These compressors have demonstrated surprisingly effective performance in practice~\citep{Seide2014,Alistarh-EF-NIPS2018}, even when compared to the best results we can get with unbiased compressors~\citep{PermK}, and are indispensable on difficult tasks such as the fine-tuning of foundation models in a geographically distributed manner over slow networks~\citep{cocktailsgd}. 

However, our theoretical understanding of algorithms based on contractive compressors in general, and these powerful biased compressors in particular, is very weak. Indeed, while the SOTA theory involving unbiased compressors offers significant and often several-degrees-of-magnitude improvements  over the baseline methods that do not use compression~\citep{DIANA, DIANA2, ADIANA,sigma_k,MARINA, DASHA}, the best theory we currently have for methods that can provably work with contractive compressors, i.e., the theory behind the error feedback method called \algname{EF21}  developed by \citet{EF21} (see Algorithm~\ref{alg:EF21})  and its variants~\citep{EF21BW, EF-BV,EF21M}, merely matches the communication complexity of the underlying methods that do not use any compression~\citep{PermK}. 

To the best of our knowledge, the only exception to this is the very recent work of \citet{EF21-RF} showing that in a {\em rare features} regime, the \algname{EF21} method \citep{EF21} outperforms gradient descent (which is a special case of \algname{EF21} when $\cC_i^t(x)\equiv x$ for all $i\in [n]$ and $t\geq 0$) in theory. However, \citet{EF21-RF} obtain no improvements upon the current best theoretical result for vanilla \algname{EF21}  \citep{EF21} in the general smooth nonconvex regime, outlined in Section~\ref{eq:assumptions}, we investigate in this work.
\begin{algorithm}[t]
	\begin{algorithmic}[1]
		\STATE {\bfseries Input:} initial model $x^0 \in \RR^d$; initial gradient estimates $g_1^0, g_2^0, \dots,g_n^0 \in \R^d$ stored at the server and the clients; stepsize $\gamma>0$; number of iterations $T > 0$
		\STATE {\bfseries Initialize:} $g^0 = \avein g_i^0 $ on the server
		\FOR{$t = 0, 1, 2, \dots, T - 1 $}
		\STATE Server computes $x^{t+1} = x^t - \gamma g^t$ and  broadcasts  $x^{t+1}$ to all $n$ clients
		\FOR{$i = 1, \dots, n$ {\bf on the clients in parallel}} 
		\STATE Compute $u_i^t=\cC_i^t (\nabla f_i(x^{t+1}) - g_i^t)$ and update $g_i^{t+1} = g_i^t +u_i^t$ \label{alg_line:g_update_step}
		\STATE Send the compressed message $u_i^{t}$ to the server
		\ENDFOR 
		\STATE Server updates $g_i^{t+1} = g_i^t +u_i^t$ for all $i\in [n]$, and computes $g^{t+1} = \avein g_i^{t+1}$
		\ENDFOR
		\STATE {\bfseries Output:} Point $\hat{x}^T$ chosen from the set $\{x^0, \dots, x^{T-1}\}$ uniformly at random
	\end{algorithmic}
	\caption{\algname{EF21}: Error Feedback 2021}
	\label{alg:EF21}
\end{algorithm}

\subsection{Assumptions}\label{eq:assumptions}

We adopt the same very weak assumptions as those used by \citet{EF21} in their analysis of \algname{EF21}.

\begin{assumption}\label{as:smooth}
	The function $f$ is $L$-smooth, i.e., there exists $L>0$ such that
	\begin{equation}\label{eq:smoothness_def}
		\squeeze \norm{\nabla f(x) - \nabla f(y)} \leq L \norm{x - y}, \quad \forall x, y \in \R^d.
	\end{equation}
\end{assumption}

\begin{assumption}\label{as:L_i} The functions $f_i$ are $L_i$-smooth,  i.e., for all $i\in [n]$ there exists $L_i>0$ such that
	\begin{equation} \label{eq:L_i}
		\squeeze \norm{\nabla f_i(x) - \nabla f_i(y)} \leq L_i \norm{x - y}, \quad \forall x, y \in \R^d.
	\end{equation}
\end{assumption}

Note  that if \eqref{eq:L_i} holds,  then \eqref{eq:smoothness_def} holds, and  $L\leq \LAM \eqdef {\color{blue}\frac{1}{n}\sum_{i=1}^n L_i}$. So, Assumption~\ref{as:smooth} does {\em not} further limit the class of functions already covered by Assumption~\ref{as:L_i}. Indeed, it merely provides a new parameter $L$ better characterizing the smoothness of $f$ than the estimate $\LAM$ obtainable from Assumption~\ref{as:L_i} could. 

Since our goal   in \eqref{eq:main_problem} is to minimize $f$, the below assumption is necessary for the problem to be meaningful.

\begin{assumption}\label{as:lower_bound}
There exists $f^\ast \in \RR$ such that $\inf f \geq f^\ast$.\end{assumption}

\subsection{Summary of contributions}

In our work we  improve the current SOTA theoretical communication complexity guarantees for distributed algorithms that  work with contractive compressors in general, and  empirically powerful biased compressors such as Top$K$ and Rank$K$ in particular~\citep{EF21,EF21BW}. 

%Due to divergence issues which may arise when applying contractive compressors in a naive way~\citep{Seide2014,beznosikov2020biased}, it is necessary to employ error feedback, such as the classical \algname{EF14} mechanism introduced by \citep{Seide2014} and further studied by many authors, including \citet{Karimireddy_SignSGD,Stich2019TheEF,EC-SGD}, or the modern variant \algname{EF21} proposed by \citet{EF21} and further studied by \citet{EF21BW,EF-BV,EF21M}, among others.

In particular, under Assumptions~\ref{as:smooth}--\ref{as:lower_bound}, the best known guarantees were obtained by \citet{EF21} for the \algname{EF21} method: to find a (random) vector $\hat{x}^T$ satisfying $\ExpBr{\norm{\nabla f(\hat{x}^T)}^2} \leq \varepsilon$,   \Cref{alg:EF21} requires 
$$
\squeeze T=\cO \left({\left(L + \LQM \xi (\alpha)\right)}{\varepsilon}^{-1}\right)
$$ 
iterations, where
$\color{red} \LQM \eqdef \sqrt{\frac{1}{n} \sum_{i=1}^n L_i^2}$ is the {\color{red}Quadratic Mean} of the smoothness constants $L_1,\dots,L_n$. Our main finding is an improvement of this result to \begin{equation}\label{eq:98y98fd}\squeeze T=\cO\left({\left( L + \LAM \xi (\alpha)\right)}{\varepsilon}^{-1} \right),\end{equation}
where $\color{blue} \LAM \eqdef \frac{1}{n} \sum_{i=1}^n L_i$ is the {\color{blue}Arithmetic Mean} of the smoothness constants $L_1,\dots,L_n$. \iffalse Obviously, this improvement grows as the {\em variance of the smoothness constants}, defined as $$\squeeze \Lvar \eqdef \LQMsq - \LAMsq = {\color{red} \frac{1}{n} \sum_{i=1}^n L_i^2} -  {\color{blue} \left(\frac{1}{n} \sum_{i=1}^n L_i \right)^2} = \frac{1}{n}\sum_{i=1}^n \left( L_i - \frac{1}{n}\sum_{j=1}^n L_j\right)^2,$$ grows. \fi We obtain this improvement in {\em three different ways:} 
\begin{itemize}
\item [i)] by {\em client cloning} (see Sections~\ref{sec:clone1} and \ref{sec:clone2} and Theorem~\ref{thm:clone}), 
\item [ii)] by proposing a new {\em smootness-weighted} variant of \algname{EF21} which we call \algname{EF21-W} (see Section~\ref{sec:weights} and Theorem~\ref{thm:EF21-W}), and 
\item [iii)] by a new {\em smoothness-weighted} analysis of classical \algname{EF21} (see Section~\ref{sec:weighted-analysis} and Theorem~\ref{thm:ef21_new_result}).
\end{itemize}

We obtain refined linear convergence results cases under the Polyak-Łojasiewicz condition. Further, our analysis technique extends to many variants of \algname{EF21}, including \algname{EF21-SGD} which uses stochastic gradients instead of gradients (Section~\ref{sec:EF21-W-SGD}), and \algname{EF21-PP} which enables partial participation of clients (Section~\ref{sec:EF21-W-PP}).
%, and \algname{EF21-M} which benefits from (Polyak) momentum. 
Our analysis also improves upon the results of \citet{EF21-RF} who study \algname{EF21} in the {\em rare features} regime (Section~\ref{sec:RF}). Finally, we validate our theory with suitable computational experiments (Sections~\ref{sec:experiments-main}, \ref{app:exp-additional-details-main-part} and \ref{app:exp-additional-experiments}).

\section{EF21 Reloaded: Our Discovery Story} \label{sec:reloaded}

We now take the reader along on a ride of our discovery process.
%As mentioned before, the best current convergence theory of \algname{EF21}~\citep{EF21,EF21BW} says that \Cref{alg:EF21}  requires $ \cO\left({\left(L +  \xi{\color{red} L_{\rm QM}}\right)}/{\varepsilon}\right)$ iterations (= communication rounds) to find an $\epsilon$-stationary point in expectation.  

\subsection{Step 1: Cloning the client with the worse smoothness constant}\label{sec:clone1}

The starting point of our journey is  a simple observation described in the following example.
\begin{example} Let $n=4$ and $f(x) = \frac14 (f_1(x) + f_2(x) + f_3(x) + f_4(x))$. Assume the smoothness constants $L_1, L_2, L_3$ of $f_1, f_2, f_3$ are equal to 1, and $L_4$ is equal to $100$. In this case,\algname{EF21} needs to run for
$$
\squeeze T_1 \eqdef \cO\left({\left(L + \LQM \xi(\alpha)\right)}{\varepsilon}^{-1}\right) = \cO\left({\left(L +  \sqrt{2501.5}\xi(\alpha)\right)}{\varepsilon}^{-1}\right)
$$ 
iterations. Now, envision the existence of an additional machine capable of downloading the data from the fourth ``problematic'' machine. By rescaling local loss functions, we maintain the overall loss function as: 
$$
\squeeze f(x) = \frac14 (f_1(x) + f_2(x) + f_3(x) + f_4(x)) =  \frac15 \left( \frac54 f_1(x) + \frac54 f_2(x) + \frac54 f_3(x) + \frac58 f_4(x) +\frac58 f_4(x) \right) \eqdef \tilde{f}(x).
$$ 
Rescaling of the functions modifies the smoothness constants to $\hat{L}_i = \frac54 L_i$ for $i = 1, 2, 3$, and $\hat{L}_i = \frac58 L_4$ for $i=4, 5$. \algname{EF21}, launched on this setting of five nodes, requires 
$$
\squeeze T_2 \eqdef \cO\left({\left(L + {\color{red}\tilde{L}_{\rm QM}} \xi(\alpha)\right)}{\varepsilon}^{-1} \right) \approx \cO\left({\left(L + \sqrt{1564} \xi (\alpha)\right)}{\varepsilon}^{-1}\right)
$$
iterations, where ${\color{red}\tilde{L}_{\rm QM}}$ is the quadratic mean of the new smoothness constants $\hat{L}_1,\dots,\hat{L}_5$.
\end{example}

This simple observation highlights that the addition of just one more client significantly enhances the convergence rate. Indeed, \algname{EF21} requires approximately $\frac{\xi(\alpha)}{\varepsilon} (\sqrt{2501.5} - \sqrt{1564}) \approx 10 \frac{\xi(\alpha)}{\varepsilon}$ fewer iterations. We will generalize this client cloning idea  in the next section.

\subsection{Step 2: Generalizing the cloning idea} \label{sec:clone2}
We will now take the above motivating example further, allowing each client $i$ to be cloned arbitrarily many ($N_i$) times. Let us see where this gets us. For each $i\in [n]$, let $N_i$ denote  a positive integer. We define $N\eqdef \sum_{i=1}^n N_i$ (the total number of clients after cloning), and observe that $f$ can be equivalently written as
\begin{equation}\label{eq:cloning}
\squeeze f(x)  \overset{\eqref{eq:main_problem}}{=} \frac{1}{n}\sum \limits_{i=1}^n f_i(x) =  \frac{1}{n}\sum \limits_{i=1}^n  \sum\limits_{j=1}^{N_i} \frac{1}{N_i} f_i(x) = \frac{1}{N}\sum \limits_{i=1}^n  \sum\limits_{j=1}^{N_i}\frac{N}{n N_i} f_i(x) = \frac{1}{N}\sum \limits_{i=1}^n  \sum\limits_{j=1}^{N_i} f_{ij}(x),
\end{equation}
where 
$ f_{ij}(x) \eqdef \frac{N}{n N_i} f_i(x)$ for all $ i\in [n]$ and $ j\in [N_i].$
Notice that we scaled the functions as before, and that $f_{ij}$ is $L_{ij}$-smooth, where $L_{ij} \eqdef \frac{N}{n N_i} L_i$. 

{\bf Analysis of the convergence rate.} The performance of \algname{EF21}, when applied to the problem \eqref{eq:cloning} involving $N$ clients, depends on the quadratic mean of the new smoothness constants:
\begin{equation} \label{eq:M-solve} \squeeze M(N_1,\dots,N_n) \eqdef \sqrt{\frac{1}{N} \sum\limits_{i=1}^n \sum\limits_{j=1}^{N_i} L_{ij}^2} = \sqrt{\sum\limits_{i=1}^n \frac{N}{n^2 N_i} L_i^2} = \frac{1}{n}\sqrt{\sum\limits_{i=1}^n \frac{L_i^2}{N_i/N} }  .\end{equation}
Note that if $N_i=1$ for all $i\in [n]$, then $M(1,\dots,1)=\LQM$.

{\bf Optimal choice of cloning frequencies.} Our goal is to find integer values $N_1\in \mathbb{N},\dots,N_n \in \mathbb{N} $ minimizing the function $M(N_1,\dots,M_n)$ defined in \eqref{eq:M-solve}. While we do not have a closed-form formula for the global minimizer, we are able to explicitly find a  solution that is at most $\sqrt{2}$ times worse than the optimal one in terms of the objective value. In particular, if we let $N^\star_i =  \left \lceil  \nicefrac{L_i}{\LAM} \right \rceil$ for all $i\in [n]$, then 
\[\squeeze \LAM \leq  \min\limits_{N_1 \in \mathbb{N},\dots, N_n\in \mathbb{N}} M(N_1,\dots,N_n) \leq   M(N^\star_1,\dots,N^\star_n) \leq \sqrt{2} \LAM,\]
and moreover, $n \leq N^\star \eqdef \sum_i N^\star_i \leq 2n$. That is, we need at most double the number of clients in our client cloning construction.  See Lemma~\ref{lem:sandwitch} in the Appendix for details.

By directly applying \algname{EF21} theory from \citep{EF21} to problem \eqref{eq:cloning} involving $N^\star$ clients, we obtain the advertised improvement from $\LQM$ to $\LAM$.

\begin{theorem}[\textbf{Convergence of \algname{EF21} applied to problem \eqref{eq:cloning} with $N^\star$ machines}] \label{thm:clone}  Consider Algorithm~\ref{alg:EF21} (\algname{EF21}) applied to the ``cloning reformulation''  \eqref{eq:cloning} of the distributed optimization problem \eqref{eq:main_problem}, where $N^\star_i = \left \lceil \nicefrac{L_i}{\LAM} \right \rceil$ for all $i \in [n]$.  Let Assumptions~\ref{as:smooth}--\ref{as:lower_bound} hold, assume that
$\cC_{ij}^t \in \mathbb{C}(\alpha)$ for all $i\in [n]$, $j\in [N_i]$ and $t\geq 0$,  set
 \begin{equation*}
\squeeze G^t \eqdef \frac{1}{N}\sum \limits_{i=1}^N \sum\limits_{j=1}^{N_i} \norm{g_{ij}^t - \nabla f_{ij}(x^t)}^2,
 \end{equation*}
 and let the  stepsize satisfy
$
0 < \gamma \leq \frac{1}{L + \sqrt{2}\LAM \xi (\alpha)}.
$
If for $T \geq 1$ we define $\hat{x}^T$ as an element of the set $\{x^0, x^1, \dots, x^{T-1}\}$ chosen uniformly at random, then
\begin{equation*}
\squeeze
 \ExpBr{\norm{\nabla f(\hat{x}^T)}^2} \leq \frac{2 (f(x^0) - f^\ast) }{\gamma T} + \frac{G^0}{\theta(\alpha) T}.
\end{equation*}
\end{theorem}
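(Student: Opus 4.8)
The plan is to treat Theorem~\ref{thm:clone} as a direct corollary of the existing \algname{EF21} convergence guarantee of \citet{EF21}, applied verbatim to the $N^\star$-client reformulation~\eqref{eq:cloning}. First I would verify that~\eqref{eq:cloning} is a genuine reformulation and that all three assumptions transfer. Since the chain of equalities in~\eqref{eq:cloning} shows the reformulated objective is literally $f$, Assumption~\ref{as:smooth} holds with the \emph{same} constant $L$ and Assumption~\ref{as:lower_bound} holds with the \emph{same} lower bound $f^\ast$; nothing needs to be re-derived for these. For Assumption~\ref{as:L_i}, I would observe that $f_{ij}=\frac{N}{nN_i}f_i$ is a positive rescaling of $f_i$, so $\nabla f_{ij}=\frac{N}{nN_i}\nabla f_i$ and hence $f_{ij}$ is $L_{ij}$-smooth with $L_{ij}=\frac{N}{nN_i}L_i$, exactly as claimed. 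The hypothesis $\cC_{ij}^t\in\mathbb{C}(\alpha)$ is assumed directly, so every precondition of the published theorem is in place for the $N^\star$-node system.

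Next I would invoke the \algname{EF21} theorem on this $N^\star$-node problem. That result guarantees, for any stepsize $\gamma\leq(L+M\xi(\alpha))^{-1}$ where $M=M(N^\star_1,\dots,N^\star_n)$ is the quadratic mean of the smoothness constants $\{L_{ij}\}$ defined in~\eqref{eq:M-solve}, the bound $\ExpBr{\norm{\nabla f(\hat{x}^T)}^2}\leq \frac{2(f(x^0)-f^\ast)}{\gamma T}+\frac{G^0}{\theta(\alpha)T}$, with $G^0$ the average initial estimator error taken over all $N^\star$ clients, matching the definition of $G^t$ in the statement. This is precisely the asserted conclusion, so the only remaining gap is to reconcile the stepsize condition, which is stated in terms of $\LAM$ rather than $M$.

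Finally I would close that gap using the sandwich bound of Lemma~\ref{lem:sandwitch}, which gives $M(N^\star_1,\dots,N^\star_n)\leq\sqrt{2}\,\LAM$. Consequently $L+M\xi(\alpha)\leq L+\sqrt{2}\,\LAM\,\xi(\alpha)$, so any $\gamma$ obeying the stated condition $\gamma\leq(L+\sqrt{2}\LAM\xi(\alpha))^{-1}$ automatically satisfies the native \algname{EF21} requirement $\gamma\leq(L+M\xi(\alpha))^{-1}$, and the conclusion follows. I expect the only real subtlety — the ``main obstacle'' — to be bookkeeping rather than depth: one must confirm that the quantity entering the \algname{EF21} stepsize for the reformulated problem is exactly the quadratic mean $M(N^\star_1,\dots,N^\star_n)$ and not some other average, and that the $\sqrt{2}$ slack from Lemma~\ref{lem:sandwitch} is precisely what converts the quadratic-mean condition into the advertised arithmetic-mean one. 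For a fully self-contained argument one could instead unroll the underlying two-ingredient \algname{EF21} analysis — the $L$-smoothness descent inequality for $f$ together with the per-step contraction $\ExpBr{G^{t+1}}\leq(1-\theta(\alpha))G^t+\beta(\alpha)M^2\norm{x^{t+1}-x^t}^2$ of the aggregated estimator error — but since the reformulation satisfies every hypothesis of the published theorem verbatim, the corollary route is cleaner and avoids duplicating that computation.
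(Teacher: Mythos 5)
Your proposal is correct and matches the paper's own argument exactly: the paper proves Theorem~\ref{thm:clone} by directly applying the \algname{EF21} theory of \citet{EF21} to the cloning reformulation~\eqref{eq:cloning} with $N^\star$ clients, and then invoking Lemma~\ref{lem:sandwitch} to bound the resulting quadratic mean $M(N^\star_1,\dots,N^\star_n)\leq\sqrt{2}\,\LAM$, which is precisely your reconciliation of the stepsize condition. Your additional bookkeeping (checking that $f$, $L$, $f^\ast$ are unchanged and that $f_{ij}$ is $L_{ij}$-smooth) is the same verification the paper performs in Section~\ref{sec:clone2} before stating the theorem.
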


When we choose the largest allowed stepsize and $g_{ij}^0 = \nabla f_{ij}(x^0)$ for all $i,j$, this leads to the complexity \eqref{eq:98y98fd}; that is, by cloning client machines, we can replace $\color{red}L_{\rm QM}$ in the standard rate with $\sqrt{2} \color{blue} L_{\rm AM}$. A similar result can be obtained even if we do not ignore the integrality constraint, but we do not include it for brevity reasons. However, it is important to note that the cloning approach has several straightforward shortcomings, which we will address in the next section.\footnote{	In our work, we address an optimization problem of the form 
	$
	\min_{\substack{w_j \geq 0 \ \forall j\in[n]; \sum_{i=1}^{n} w_i = 1}} \sum_{i=1}^{n} \frac{a_i^2}{w_i},
	$
	where \(a_i\) represent certain constants. This formulation bears a resemblance to the meta problem in the importance sampling strategy discussed in \citep{so_importance_sampling}. Despite the apparent similarities in the abstract formulation, our approach and the one in the referenced work diverge significantly in both motivation and implementation. While \cite{so_importance_sampling} applies importance sampling to reduce the variance of a stochastic gradient estimator by adjusting sampling probabilities, our method involves adjusting client cloning weights without sampling. Furthermore, our gradient estimator is biased, unlike the unbiased estimator in the referenced paper, and we aim to minimize the quadratic mean of the smoothness constants, which is inherently different from the objectives in \cite{so_importance_sampling}. Although both approaches can be expressed through a similar mathematical framework, they are employed in vastly different contexts, and any parallelism may be coincidental rather than indicative of a direct connection.}

\subsection{Step 3: From client cloning to update weighting} \label{sec:weights}

It is evident that employing client cloning improves the convergence rate. Nevertheless, there are obvious drawbacks associated with this approach. Firstly, it necessitates a larger number of computational devices, rendering its implementation less appealing from a resource allocation perspective. Secondly, the utilization of \algname{EF21} with cloned machines results in a departure from the principles of Federated Learning, as it inherently compromises user privacy -- transferring data from one device to another is prohibited in FL.

However, a  simpler approach to implementing the cloning idea emerges when we assume the compressors used to be {\em deterministic}. To illustrate this, let us initially examine how we would typically implement \algname{EF21} with cloned machines:
\begin{align} 
&\squeeze x^{t+1} = x^t - \gamma \frac{1}{N}\sum\limits_{i=1}^n \sum\limits_{j=1}^{N_i} g_{ij}^t, \label{eq:AlgStep1}  \\
&\squeeze g_{ij}^{t+1} = g_{ij}^t + \cC_{ij}^t ( \nabla f_{ij}(x^{t+1}) - g_{ij}^t), \quad i \in [n],  \quad j \in [N_i]. \label{eq:AlgStep2}
\end{align}

We will now rewrite the same method in a different way. Assume we choose $g_{ij}^0 = g_i^0$ for all $j\in[N_i]$. We show by induction that $g_{ij}^t$ is the same for all $j \in [N_i]$. We have just seen that this holds for $t=0$. Assume this holds for some $t$. Then since $ \nabla f_{ij}(x^{t+1}) = \frac{N}{n N_i} \nabla f_i(x^{t+1})$ for all $j\in[N_i]$ combined with the induction hypothesis, \eqref{eq:AlgStep2} and the determinism of $\cC_{ij}^t$, we see that $g_{ij}^{t+1}$ is the same for all $j \in [N_i]$. Let us define $g_{i}^t \equiv g_{ij}^t$ for all $t$. This is a valid definition since we have shown that $g_{ij}^t$ does not depend on $j$. Because of all of the above, iterations \eqref{eq:AlgStep1}--\eqref{eq:AlgStep2} can be equivalently written in the form
\begin{align}
&\squeeze x^{t+1} = x^t - \gamma \sum\limits_{i=1}^n {\color{ForestGreen}w_i}  g_{i}^t, \label{eq:AlgStep1X} \\
&\squeeze g_{i}^{t+1} = g_{i}^t + \cC_i^t\left( \frac{1}{n {\color{ForestGreen}w_i}} \nabla f_i(x^{t+1}) - g_{i}^t \right), \quad i\in [n], \label{eq:AlgStep2X} 
\end{align}
where ${\color{ForestGreen}w_i} = \frac{L_i}{\sum_j L_j}$. This transformation effectively enables us to operate the method on the original $n$ clients, eliminating the need for $N$ clients! This refinement has led to the creation of a new algorithm that outperforms \algname{EF21} in terms of convergence rate, which we call \algname{EF21-W} (\Cref{alg:EF21-W}).
\begin{algorithm}[!t]
	\begin{algorithmic}[1]
		\STATE {\bfseries Input:} initial model parameters $x^0 \in \RR^d$; initial gradient estimates $g_1^0, g_2^0, \dots,g_n^0 \in \R^d$ stored at the server and the clients; weights ${\color{ForestGreen}w_i} = \nicefrac{L_i}{\sum_j L_j}$; stepsize $\gamma>0$; number of iterations $T > 0$
		\STATE {\bfseries Initialize:} $g^0 = \sum_{i=1}^n {\color{ForestGreen}w_i} g_i^0 $ on the server
		\FOR{$t = 0, 1, 2, \dots, T - 1 $}
		\STATE Server computes $x^{t+1} = x^t - \gamma g^t$		 and broadcasts  $x^{t+1}$ to all $n$ clients
		\FOR{$i = 1, \dots, n$ {\bf on the clients in parallel}} 
		\STATE Compute $u_i^t=\cC_i^t (\frac{1}{n {\color{ForestGreen}w_i}}\nabla f_i(x^{t+1}) - g_i^t)$ and update $g_i^{t+1} = g_i^t +u_i^t$ \label{alg_line:g_update_step}
		\STATE Send the compressed message $u_i^{t}$ to the server
		\ENDFOR 
		\STATE Server updates $g_i^{t+1} = g_i^t +u_i^t$ for all $i\in [n]$, and computes $g^{t+1} = \sum_{i=1}^n {\color{ForestGreen}w_i} g_i^{t+1}$
		\ENDFOR
		\STATE {\bfseries Output:} Point $\hat{x}^T$ chosen from the set $\{x^0, \dots, x^{T-1}\}$ uniformly at random
	\end{algorithmic}
	\caption{\algname{EF21-W}: Weighted Error Feedback 2021}
	\label{alg:EF21-W}
\end{algorithm}
While we relied on assuming that the compressors are \textit{deterministic} in order to motivate the transition from $N$ to $n$ clients, it turns out that \algname{EF21-W} converges without the need to invoke this assumption. \begin{theorem}[\textbf{Theory for \algname{EF21-W}}]\label{thm:EF21-W}
	Consider  \Cref{alg:EF21-W} (\algname{EF21-W}) applied to the distributed optimization problem \eqref{eq:main_problem}.
Let Assumptions~\ref{as:smooth}--\ref{as:lower_bound} hold, assume that $\cC_i^t \in \mathbb{C}(\alpha)$ for all $i\in [n]$ and $t\geq 0$, set $$\squeeze G^t \eqdef \sum\limits_{i=1}^n {\color{ForestGreen}w_i}  \norm{g_i^t - \frac{1}{n\color{ForestGreen}w_i} \nabla f_i(x^t)}^2,$$ where ${\color{ForestGreen}w_i} = \frac{L_i}{\sum_j L_j}$ for all $i\in [n]$, and let the stepsize satisfy
$
0< \gamma \leq \frac{1}{L +  \LAM  \xi(\alpha)}.
$
If for $T>1$ we define $\hat{x}^T$ as an element of the set  $\{x^0, x^1, \dots, x^{T-1}\}$ chosen uniformly at random, then
	\begin{equation}
\squeeze	
		\ExpBr{\norm{ \nabla f(\hat{x}^T)  }^2} \leq \frac{2(f(x^0) - f^\ast)}{\gamma T} + \frac{G^0}{\theta(\alpha) T}.
	\end{equation}
\end{theorem}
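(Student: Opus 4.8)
The plan is to adapt the two-lemma Lyapunov argument behind the original \algname{EF21} analysis, exploiting the crucial fact that the weights $w_i = \nicefrac{L_i}{\sum_j L_j}$ are chosen precisely so that the rescaled local functions become equi-smooth. Concretely, I would define $\phi_i(x) \eqdef \frac{1}{n w_i} f_i(x)$, so that $\nabla\phi_i$ is exactly the quantity the estimator $g_i^t$ tracks, the server direction satisfies $g^t = \sum_i w_i g_i^t$, and $\sum_i w_i \nabla\phi_i(x) = \frac{1}{n}\sum_i \nabla f_i(x) = \nabla f(x)$. The key observation is that each $\phi_i$ is $\frac{L_i}{n w_i} = \LAM$-smooth; that is, \emph{all} rescaled functions share the single smoothness constant $\LAM$, which is exactly what will replace $\LQM$ in the final rate.

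First I would establish a descent inequality. Since $x^{t+1} - x^t = -\gamma g^t$ and $f$ is $L$-smooth (Assumption~\ref{as:smooth}), the standard quadratic upper bound together with the polarization identity $\langle a,b\rangle = \frac12(\norm{a}^2+\norm{b}^2-\norm{a-b}^2)$ gives
\begin{equation*}
\squeeze f(x^{t+1}) \leq f(x^t) - \frac{\gamma}{2}\norm{\nabla f(x^t)}^2 - \frac{\gamma}{2}(1-L\gamma)\norm{g^t}^2 + \frac{\gamma}{2}\norm{g^t - \nabla f(x^t)}^2 .
\end{equation*}
Because $\nabla f(x^t) = \sum_i w_i \nabla\phi_i(x^t)$, $g^t = \sum_i w_i g_i^t$, and $\sum_i w_i = 1$ with $w_i\ge 0$, Jensen's inequality yields $\norm{g^t - \nabla f(x^t)}^2 \leq \sum_i w_i \norm{g_i^t - \nabla\phi_i(x^t)}^2 = G^t$, so the last term is controlled by $\frac{\gamma}{2}G^t$.

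Next I would prove a contraction inequality for $G^t$. Applying the contraction property \eqref{eq:compressor_contraction} of $\cC_i^t$ to $g_i^{t+1} - \nabla\phi_i(x^{t+1}) = \cC_i^t(\nabla\phi_i(x^{t+1})-g_i^t) - (\nabla\phi_i(x^{t+1})-g_i^t)$, then inserting a Young's inequality split with parameter $s$ tuned so that $(1-\alpha)(1+s) = 1-\theta$ and $(1-\alpha)(1+\nicefrac{1}{s}) = \beta$, and using $\LAM$-smoothness of each $\phi_i$ to bound $\norm{\nabla\phi_i(x^{t+1})-\nabla\phi_i(x^t)}^2 \le \LAM^2\norm{x^{t+1}-x^t}^2$, I obtain after weighting by $w_i$ and summing (again using $\sum_i w_i = 1$)
\begin{equation*}
\squeeze \ExpBr{G^{t+1}} \leq (1-\theta) G^t + \beta \LAM^2 \gamma^2 \norm{g^t}^2 ,
\end{equation*}
where the conditional expectation is over the compressors at step $t$. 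This is the step where the weighting does its work: the per-client smoothness that surfaces is the uniform $\LAM$, not the individual $L_i$, and the convex weights let the movement term collapse cleanly.

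Finally I would combine the two inequalities through the Lyapunov function $\Phi^t \eqdef f(x^t) - f^\ast + \frac{\gamma}{2\theta} G^t$. The coefficient $\frac{\gamma}{2\theta}$ is chosen so that the $G^t$ terms telescope exactly, leaving a $\norm{g^t}^2$ coefficient equal to $-\frac{\gamma}{2}\bigl(1 - L\gamma - \xi^2\LAM^2\gamma^2\bigr)$, which is nonpositive for $\gamma \le \frac{1}{L+\LAM\xi(\alpha)}$ (this follows by verifying $L\gamma + \xi^2\LAM^2\gamma^2 \le 1$ at the boundary, using $\xi^2 = \nicefrac{\beta}{\theta}$). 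Dropping that term yields the one-step contraction $\ExpBr{\Phi^{t+1}} \leq \Phi^t - \frac{\gamma}{2}\norm{\nabla f(x^t)}^2$; taking total expectation, telescoping over $t=0,\dots,T-1$, using $\Phi^T \ge 0$ and $\Phi^0 = f(x^0)-f^\ast + \frac{\gamma}{2\theta}G^0$, and finally invoking the uniform random choice of $\hat x^T$ to convert the average gradient norm into $\ExpBr{\norm{\nabla f(\hat x^T)}^2}$ gives the claimed bound. I expect the only genuinely delicate points to be the tuning of the Young's inequality parameter $s$ and the verification of the stepsize condition via $\xi$; everything else is a direct reweighting of the original \algname{EF21} template, with the convex combination $\sum_i w_i = 1$ playing the role that the uniform average $\frac{1}{n}\sum_i$ plays there.
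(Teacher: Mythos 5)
Your proposal is correct and follows essentially the same route as the paper's proof: the descent lemma with Jensen's inequality to bound $\norm{g^t - \nabla f(x^t)}^2$ by $G^t$, a compressor-contraction-plus-Young's lemma giving $\ExpBr{G^{t+1}} \leq (1-\theta)G^t + \beta \LAMsq \norm{x^{t+1}-x^t}^2$, and the Lyapunov function $f(x^t)-f^\ast+\frac{\gamma}{2\theta}G^t$ with the stepsize condition verified through $L\gamma + \xi^2\LAMsq\gamma^2\le 1$. Your framing via the rescaled functions $\phi_i = \frac{1}{nw_i}f_i$, which are all exactly $\LAM$-smooth, is a clean repackaging of the paper's key computation $\sum_{i} \frac{L_i^2}{n^2 w_i} = \LAMsq$ rather than a genuinely different argument.
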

\iffalse
\textbf{Modification to the standard proof.} It is noteworthy that the general outline of the proof closely resembles that of Theorem 1 in \citep{EF21}. Nevertheless, a key distinction lies in the weighted gradient distortion, defined as $G^t \eqdef \sumin w_i \|g_i^t - \frac{1}{nw_i} \nabla f_i(x^t)\|^2$. 
\fi
\subsection{Step 4: From weights in the algorithm to weights in the analysis}\label{sec:weighted-analysis}

In the preceding section, we introduced a novel algorithm: \algname{EF21-W}. While it bears some resemblance to the vanilla \algname{EF21} algorithm~\citep{EF21} (we recover it for uniform weights), the reliance on  particular non-uniform weights  enables it to achieve a faster convergence rate. However, this is not the end of the story as another insight reveals yet another surprise.

Let us consider the scenario when the compressors in \Cref{alg:EF21-W} are  {\em positively homogeneous}\footnote{A compressor $\cC:\R^d\to \R^d$ is positively homogeneous if $\cC(tx) = t\cC(x)$ for all $t>0$ and $x\in\RR^d$.}. Introducing the new variable $h_i^t = n {\color{ForestGreen}w_i} g_i^t$, we can reformulate the gradient update in  \Cref{alg:EF21-W} to \begin{align*}
\squeeze h_i^{t+1} & \squeeze = {n{\color{ForestGreen}w_i}} g_i^{t+1} \overset{\eqref{eq:AlgStep2X}}{=} {n{\color{ForestGreen}w_i}} \left[g_{i}^t + \cC_i^t\left( \frac{\nabla f_i(x^{t+1})}{n {\color{ForestGreen}w_i}}  - g_{i}^t \right) \right]  =  h_i^t + \cC_i^t (\nabla f_i(x^t) - h_i^t),
\end{align*}
%\begin{align*}
%\squeeze h_i^{t+1} & \squeeze = {n{\color{ForestGreen}w_i}} g_i^{t+1} \overset{\eqref{eq:AlgStep2X}}{=} {n{\color{ForestGreen}w_i}} \left[g_{i}^t + \cC_i^t\left( \frac{\nabla f_i(x^{t+1})}{n {\color{ForestGreen}w_i}}  - g_{i}^t \right) \right]  = {n{\color{ForestGreen}w_i}} \left[ \frac{h_i^t}{n{\color{ForestGreen}w_i}} + \cC_i^t\left( \frac{\nabla f_i(x^{t+1})}{n {\color{ForestGreen}w_i}}  - \frac{h_{i}^t}{n{\color{ForestGreen}w_i}} \right) \right]\\
%& \squeeze =  {n{\color{ForestGreen}w_i}} \left[ \frac{h_i^t}{n{\color{ForestGreen}w_i}} + \frac{1}{n{\color{ForestGreen}w_i}}\cC_i^t\left( \nabla f_i(x^{t+1}) -h_{i}^t \right) \right] = h_i^t + \cC_i^t (\nabla f_i(x^t) - h_i^t),
%\end{align*}
indicating that $h_i^t$ adheres to the update rule of the vanilla \algname{EF21} method! Furthermore, the iterates $x^t$  also follow the same rule as \algname{EF21}:
\[
 \squeeze x^{t+1} \overset{\eqref{eq:AlgStep1X}}{=} x^t - \gamma \sum \limits_{i=1}^n{\color{ForestGreen}w_i} g^t =  x^t - \gamma \sum \limits_{i=1}^n {\color{ForestGreen}w_i} \frac{1}{n{\color{ForestGreen}w_i}} h_i^t = x^t - \gamma \frac{1}{n}\sum \limits_{i=1}^n h_i^t. 
\]
So, what does this mean? One interpretation suggests that for positively homogeneous contractive compressors, the vanilla \algname{EF21} algorithm is equivalent to \algname{EF21-W}, and hence inherits its faster convergence rate that depends on $\LAM$ rather than on $\LQM$. However, it turns out that we can establish the same result without having to resort to positive homogeneity altogether. For example, the "natural compression" quantizer, which rounds to one of the two nearest powers of two, is not positively homogeneous \citep{Cnat}.

\begin{theorem}[{\bf New theory for \algname{EF21}}]\label{thm:ef21_new_result}
	Consider  \Cref{alg:EF21} (\algname{EF21}) applied to the distributed optimization problem \eqref{eq:main_problem}. Let Assumptions~\ref{as:smooth}--\ref{as:lower_bound} hold, assume that $\cC_i^t \in \mathbb{C}(\alpha)$ for all $i\in [n]$ and $t\geq 0$, set $$\squeeze G^t \eqdef \frac{1}{n}\sum \limits_{i=1}^n \frac{1}{n {\color{ForestGreen}w_i}} \norm{g_i^t - \nabla f_i(x^t)}^2,$$ where ${\color{ForestGreen}w_i} = \frac{L_i}{\sum_j L_j}$ for all $i\in [n]$, and let the stepsize satisfy
$
0< \gamma \leq \frac{1}{L + \LAM \xi(\alpha) }.
$
If for $T>1$ we define $\hat{x}^T$ as an element of the set  $\{x^0, x^1, \dots, x^{T-1}\}$ chosen uniformly at random, then
	\begin{equation}
\squeeze	
		\ExpBr{\| \nabla f(\hat{x}^T)  \|^2} \leq \frac{2(f(x^0) - f^\ast)}{\gamma T} + \frac{G^0}{\theta(\alpha) T}.
	\end{equation}
\end{theorem}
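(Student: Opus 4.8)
The plan is to run the standard two-component Lyapunov analysis of \algname{EF21}, but with the \emph{unweighted} gradient-distortion potential replaced by the smoothness-weighted quantity $G^t$ in the statement. The one algebraic fact that drives everything is $\frac{1}{n w_i} = \frac{\LAM}{L_i}$, which holds because $\sum_j L_j = n\LAM$, so that $G^t = \frac{1}{n}\sum_{i=1}^n \frac{\LAM}{L_i}\sqn{g_i^t - \nabla f_i(x^t)}$. First I would prove a descent estimate. Since $x^{t+1} = x^t - \gamma g^t$ with $g^t = \avein g_i^t$, Assumption~\ref{as:smooth} and the polarization identity $\inp{a}{b} = \tfrac12(\sqn{a}+\sqn{b}-\sqn{a-b})$ give
\[
f(x^{t+1}) \leq f(x^t) - \frac{\gamma}{2}\sqn{\nabla f(x^t)} - \frac{1 - L\gamma}{2\gamma}\sqn{x^{t+1}-x^t} + \frac{\gamma}{2}\sqn{g^t - \nabla f(x^t)}.
\]
The key is to control the last term by $G^t$. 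Writing $g^t - \nabla f(x^t) = \sum_{i=1}^n w_i\cdot\frac{1}{n w_i}(g_i^t - \nabla f_i(x^t))$ and invoking Jensen's inequality for the convex function $\sqn{\cdot}$ with the weights $w_i \geq 0$, $\sum_i w_i = 1$, gives exactly $\sqn{g^t - \nabla f(x^t)} \leq \frac{1}{n}\sum_{i=1}^n \frac{1}{n w_i}\sqn{g_i^t - \nabla f_i(x^t)} = G^t$.

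Next I would derive the contraction recursion for $G^t$. For each client, the update $g_i^{t+1} = g_i^t + \cC_i^t(\nabla f_i(x^{t+1}) - g_i^t)$ with $\cC_i^t \in \mathbb{C}(\alpha)$ gives $\sqn{g_i^{t+1} - \nabla f_i(x^{t+1})} \leq (1-\alpha)\sqn{g_i^t - \nabla f_i(x^{t+1})}$; decomposing $\nabla f_i(x^{t+1}) - g_i^t = (\nabla f_i(x^t) - g_i^t) + (\nabla f_i(x^{t+1}) - \nabla f_i(x^t))$, applying Young's inequality with the parameter that turns $(1-\alpha)$ into the pair $1 - \theta(\alpha)$ and $\beta(\alpha)$, and then using Assumption~\ref{as:L_i} to bound $\sqn{\nabla f_i(x^{t+1}) - \nabla f_i(x^t)} \leq L_i^2 \sqn{x^{t+1}-x^t}$, gives the per-client bound $\sqn{g_i^{t+1} - \nabla f_i(x^{t+1})} \leq (1-\theta(\alpha))\sqn{g_i^t - \nabla f_i(x^t)} + \beta(\alpha) L_i^2 \sqn{x^{t+1}-x^t}$. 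Multiplying by $\frac{1}{n}\cdot\frac{\LAM}{L_i}$ and summing over $i$ is where the weighting pays off: the factor $\frac{\LAM}{L_i}$ cancels one power of $L_i$, so $\frac{1}{n}\sum_{i=1}^n \frac{\LAM}{L_i}L_i^2 = \LAM\cdot\frac{1}{n}\sum_{i=1}^n L_i = \LAM^2$, and I obtain $G^{t+1} \leq (1-\theta(\alpha))G^t + \beta(\alpha)\LAM^2\sqn{x^{t+1}-x^t}$ --- with $\LAM^2$ exactly where the unweighted potential would have left $\LQM^2$.

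Finally I would combine the two estimates via the potential $\Psi^t \eqdef f(x^t) - f^\ast + \frac{\gamma}{2\theta(\alpha)}G^t$. Adding $\frac{\gamma}{2\theta(\alpha)}$ times the recursion to the descent estimate, the two $G^t$-terms collapse to $\frac{\gamma}{2\theta(\alpha)}G^t$ and leave
\[
\Psi^{t+1} \leq \Psi^t - \frac{\gamma}{2}\sqn{\nabla f(x^t)} - \left(\frac{1-L\gamma}{2\gamma} - \frac{\gamma\beta(\alpha)\LAM^2}{2\theta(\alpha)}\right)\sqn{x^{t+1}-x^t}.
\]
Using $\xi(\alpha)^2 = \beta(\alpha)/\theta(\alpha)$, the condition $\gamma \leq (L + \LAM\xi(\alpha))^{-1}$ gives $\LAM\xi(\alpha)\gamma \leq 1$, hence $(\LAM\xi(\alpha)\gamma)^2 \leq \LAM\xi(\alpha)\gamma$ and therefore $L\gamma + \xi(\alpha)^2\LAM^2\gamma^2 \leq \gamma(L + \LAM\xi(\alpha)) \leq 1$, which is precisely the statement that the bracketed coefficient is nonnegative. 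Dropping it yields $\Psi^{t+1} \leq \Psi^t - \frac{\gamma}{2}\sqn{\nabla f(x^t)}$; telescoping over $t = 0,\dots,T-1$, dividing by $\gamma T/2$, bounding $\Psi^T \geq 0$, and using that $\hat{x}^T$ is uniform over $\{x^0,\dots,x^{T-1}\}$ (so $\ExpBr{\sqn{\nabla f(\hat{x}^T)}} = \frac{1}{T}\sum_{t=0}^{T-1}\sqn{\nabla f(x^t)}$) gives the claimed inequality.

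The main obstacle is not any single calculation --- each of the three steps mirrors the original \algname{EF21} proof --- but rather the verification that the single weighting $\frac{\LAM}{L_i}$ does two jobs at once. It must (a) keep the Jensen bound $\sqn{g^t - \nabla f(x^t)} \leq G^t$ valid, which needs the weights multiplying the $\frac{1}{n}$-average to be nonnegative and to sum to one, and (b) collapse the weighted average $\frac{1}{n}\sum_i \frac{\LAM}{L_i}L_i^2$ to $\LAM^2$ in the recursion. Once the potential $G^t$ is chosen so that both properties hold simultaneously, the $\LQM \to \LAM$ improvement falls out of the otherwise unchanged telescoping argument, and the randomness of the compressors enters only through taking expectations at the very end, exactly as in \citet{EF21}.
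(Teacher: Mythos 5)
Your proposal is correct and follows essentially the same route as the paper's own proof: the same weighted distortion $G^t$, the same Jensen bound $\sqn{g^t - \nabla f(x^t)} \leq G^t$, the same contraction recursion in which $\frac{1}{n^2}\sum_{i} L_i^2/w_i$ collapses to $\LAMsq$, the same Lyapunov function $f(x^t)-f^\ast + \frac{\gamma}{2\theta(\alpha)}G^t$, and the same stepsize verification (the paper invokes its Lemma~\ref{lm:stepsize_bound} and optimizes the Young parameter $s$ at the end, whereas you plug in the optimal parameter from the start and prove the stepsize inequality inline --- a purely presentational difference). The only point to tighten in a full write-up is that the compressor contraction holds in conditional expectation given $W^t$, so the per-client and aggregated recursions must be stated with $\ExpBr{\cdot \mid W^t}$ and combined via the tower property, exactly as the paper does, rather than pointwise with expectation taken only at the end.
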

This last result effectively pushes the weights from the algorithm in \algname{EF21-W} to the proof, which enabled us to show that the original \algname{EF21} method also enjoys the same improvement: from $\LQM$ to $\LAM$.

\section{Experiments}
\label{sec:experiments-main}

\subsection{Non-convex logistic regression on benchmark datasets}
\label{sec:experiments-main-real}

In our first experiment, we employed a logistic regression model with a non-convex regularizer, i.e., \begin{equation*}
\squeeze f_i(x) \eqdef \frac{1}{n_i} \sum \limits_{j=1}^{n_i} \log \left(1+\exp({-y_{ij} \cdot a_{ij}^{\top} x})\right) + \lambda  \sum\limits_{j=1}^{d} \frac{x_j^2}{x_j^2 + 1},
\end{equation*}
where $(a_{ij},  y_{ij}) \in \mathbb{R}^{d} \times \{-1,1\}$ represents the $j$-th data point out from a set of $n_i$ data points stored at client $i$, and $\lambda>0$ denotes a regularization coefficient. We utilized six datasets from \texttt{LIBSVM} \citep{chang2011libsvm}.  The dataset shuffling strategy, detailed in \Cref{app:dataset-shuffling-for-libsvm}, was employed to emulate heterogeneous data distribution. Each client was assigned the same number of data points. Figure~\ref{fig:real-ef21-vc-ncvx} provides a comparison between \algname{EF21} employing the original stepsize~\citep{EF21} and \algname{EF21-W} with the better stepsize. The initial gradient estimators were chosen as $g_i^0 = \nabla f_i(x^0)$ for all $ i \in [n]$. As evidenced empirically, the \algname{EF21-W} algorithm emerges as a practical choice when utilized in situations characterized by high variance in smoothness constants. As evident from the plots, the algorithm employing the new step size exhibits superior performance compared to its predecessor.

Next, we conducted a comparative analysis of the performance of \algname{EF21-W-PP} and \algname{EF21-W-SGD}, as elucidated in the appendix, compared to  their non-weighted counterparts. In the \algname{EF21-PP}/\algname{EF21-W-PP} algorithms, each client participated independently in each round with probability $p_i=0.5$. Moreover, in the case of \algname{EF21-SGD}/\algname{EF21-W-SGD} algorithms, a single data point was stochastically sampled from a uniform distribution at each client during each iteration of the algorithm. As observed in \Cref{fig:ext-real-ef21-vc-ncvx}, the algorithms employing the new learning rates demonstrate faster convergence. Notably, \Cref{fig:ext-real-ef21-vc-ncvx}~(c) depicts more pronounced oscillations with updated step sizes, as the new analysis permits larger step sizes, which can induce oscillations in stochastic methods.

\subsection{Non-convex linear model on synthetic datasets}
\label{sec:experiments-main-syb-ncvx}
In our second set of experiments, we trained a linear regression model with a non-convex regularizer. The function $f_i$ for the linear regression problem is defined as follows:
$ f_i(x) \eqdef \frac{1}{n_i} \norm{\bA_i x - {b_i}}^2 + \lambda  \sum_{j=1}^{d} \frac{x_j^2}{x_j^2 + 1}.
$
Here, $\bA_i \in \RR^{n_i \times d}$ and $b_i \in \RR^{n_i}$ represent the feature matrix and labels stored on client $i$ encompassing $n_i$ data points.   The data employed in four experiments, as illustrated in \Cref{fig:syn-ef21-vc-noncvx}, was generated in such a manner that the smoothness constant $L$ remained fixed, while $L_i$ varied so that the difference between two crucial to analysis terms $\LQM$ and $\LAM$ changed from a relatively large value to negligible. As evident from  \Cref{fig:syn-ef21-vc-noncvx}, the performance of \algname{EF21-W} consistently matches or surpasses that of the original \algname{EF21}, particularly in scenarios characterized by significant variations in the smoothness constants. For additional details and supplementary experiments, we refer the reader to Sections~\ref{app:exp-additional-details-main-part} and~\ref{app:exp-additional-experiments}.

\begin{center}	
	\begin{figure*}[t!]
		\centering
		\captionsetup[sub]{font=normalsize,labelfont={}}	
		\captionsetup[subfigure]{labelformat=empty}
		%\captionsetup{position=top}
		\newcommand{\myVar}{0.3\textwidth}
		\newcommand{\myVarN}{0.7\textwidth}
		
		\begin{subfigure}[ht]{\myVar}
			\includegraphics[width=\myVarN]{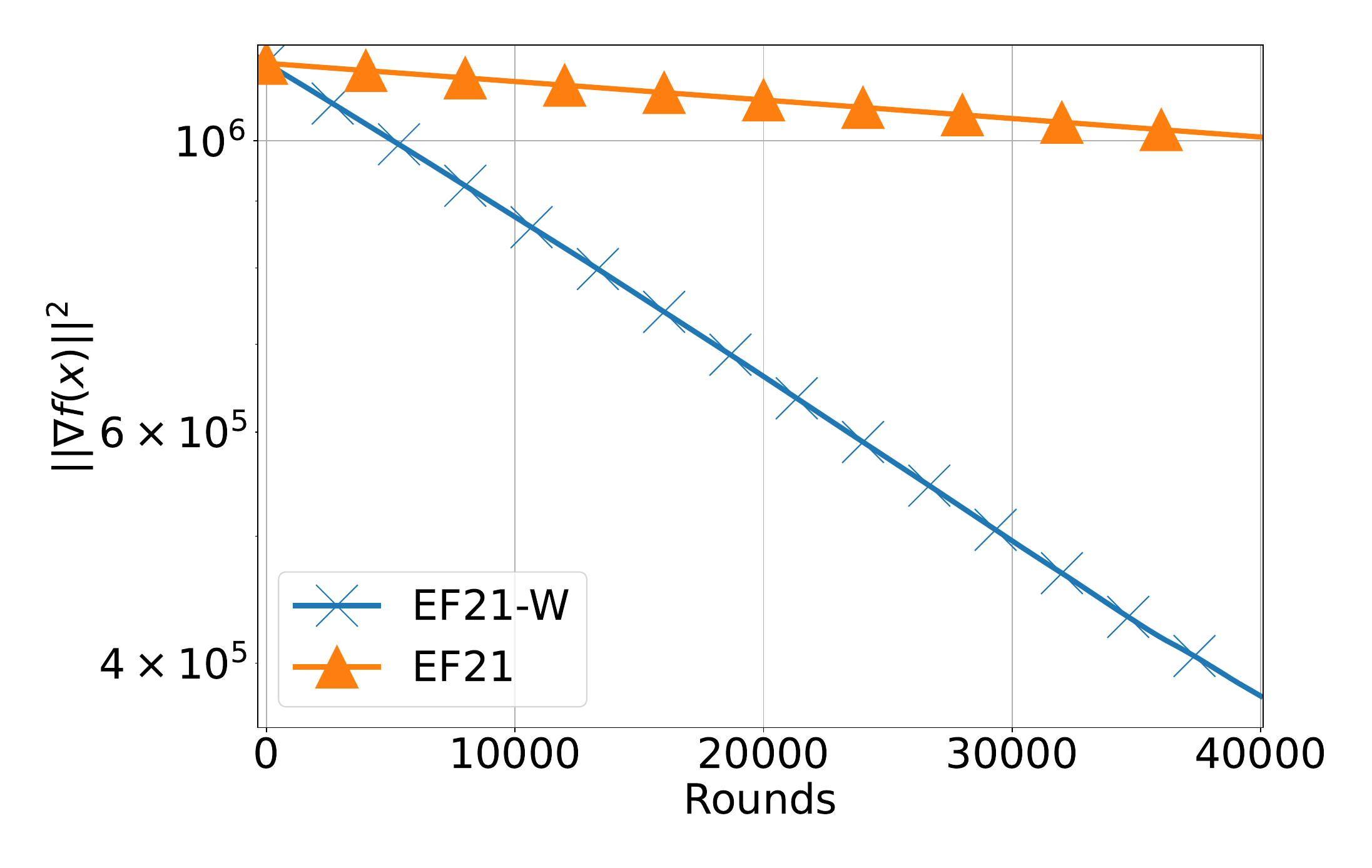} 
			%\vspace{-2pt}
			\caption{ \small{ (a) \texttt{AUSTRALIAN}, $\Lvar \approx 10^{16}$ } }
		\end{subfigure}	
		\begin{subfigure}[ht]{\myVar}
			\includegraphics[width=\myVarN]{./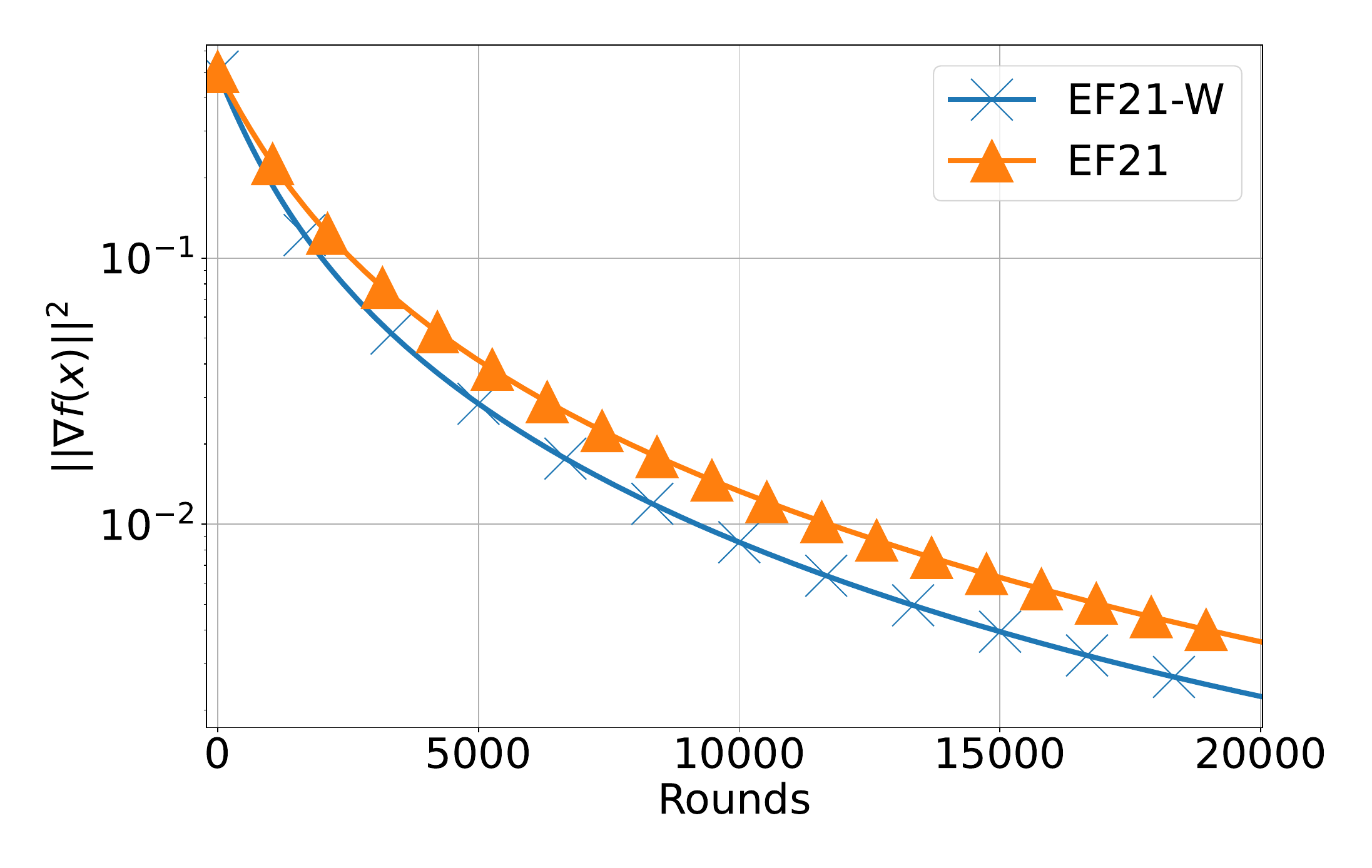} \caption{ \small{(b) \texttt{W1A}, $\Lvar\approx 3.28$ } }
		\end{subfigure}		
		\begin{subfigure}[ht]{\myVar}
			\includegraphics[width=\myVarN]{./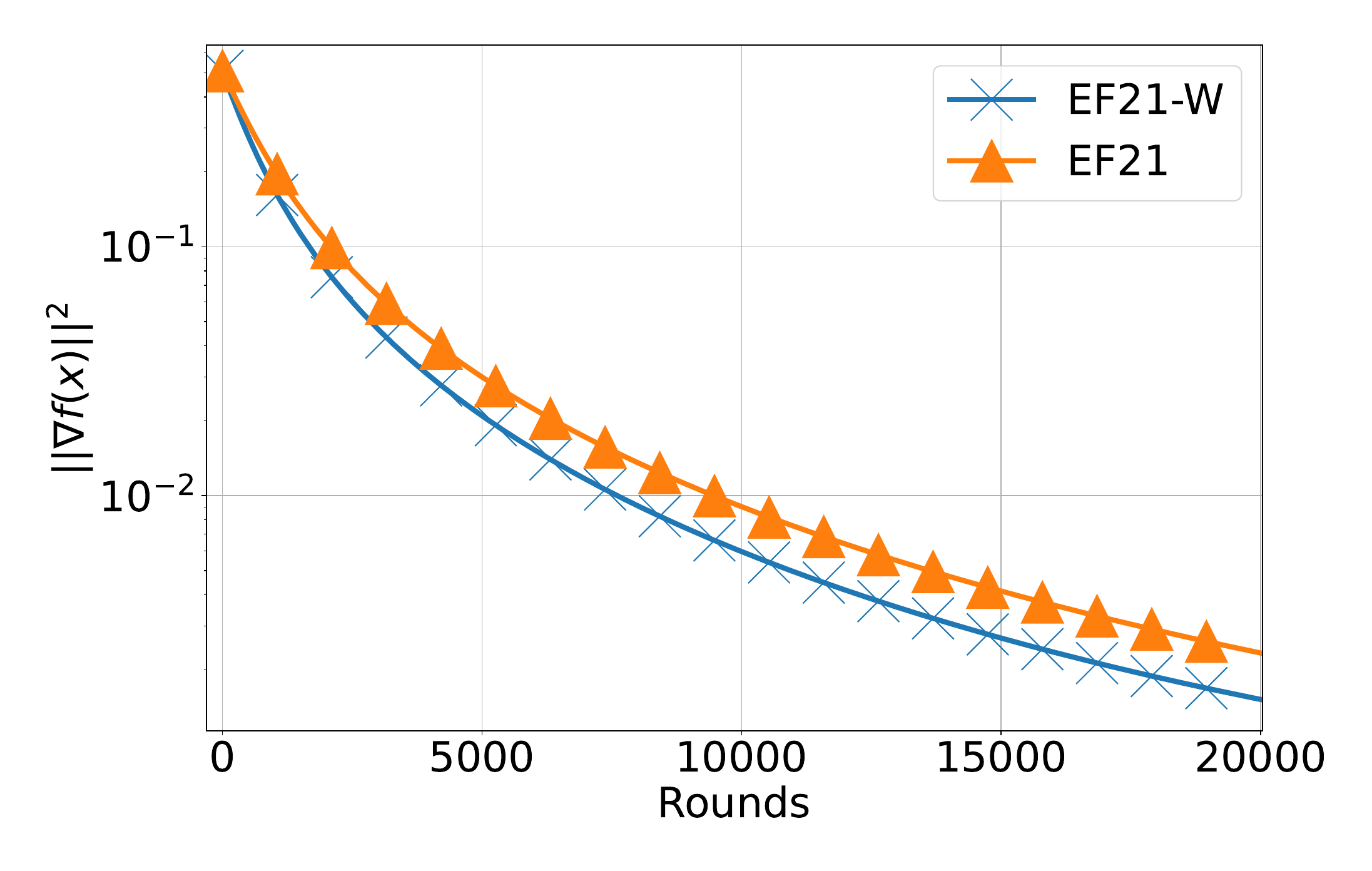} 
			\caption{ \small{(c) \texttt{W2A}, $\Lvar\approx2.04$ } }
		\end{subfigure}
		
		\begin{subfigure}[ht]{\myVar}
			\includegraphics[width=\myVarN]{./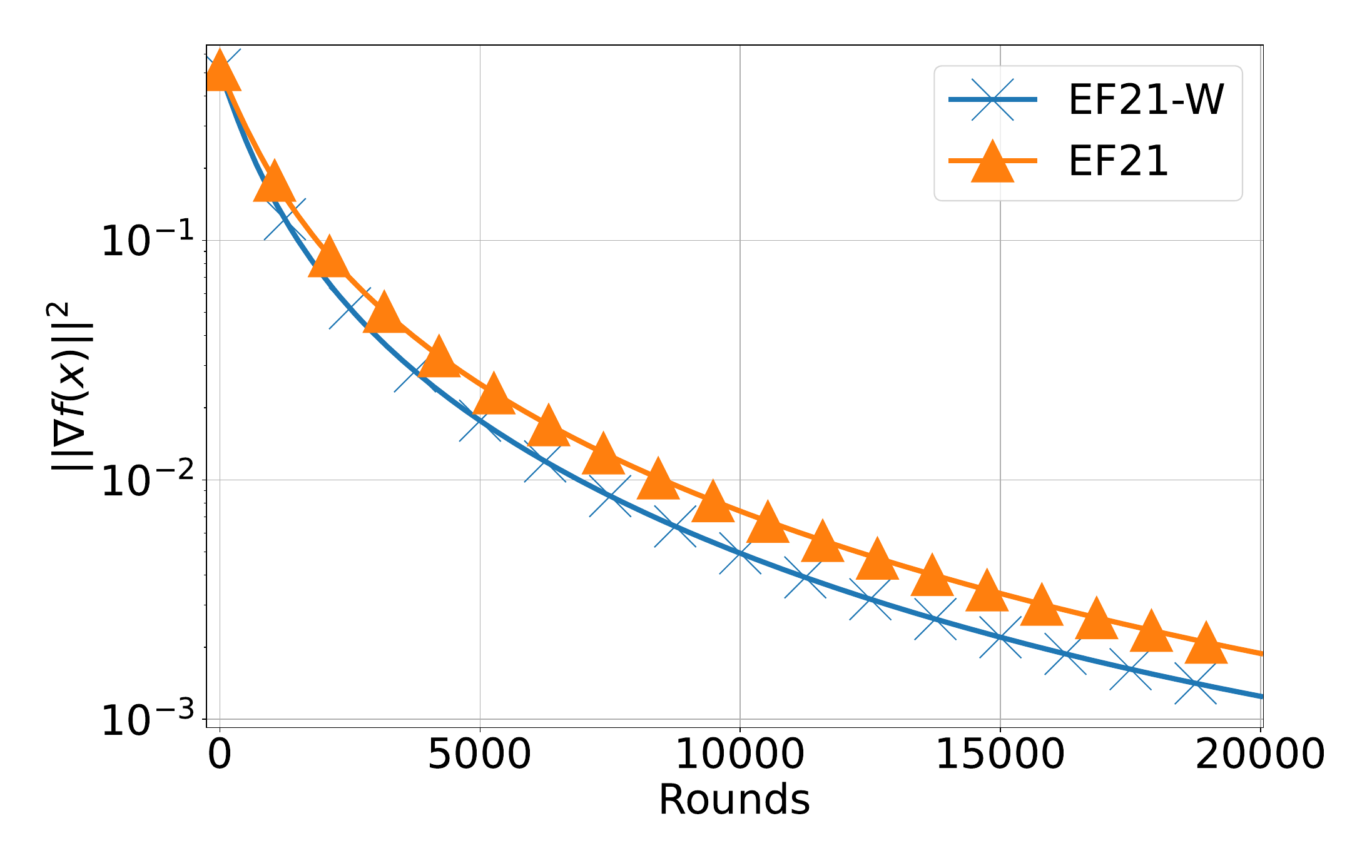} 
			\caption{ \small{(d) \texttt{W3A}, $\Lvar\approx1.58$ } }
		\end{subfigure}	
		\begin{subfigure}[ht]{\myVar}
			\includegraphics[width=\myVarN]{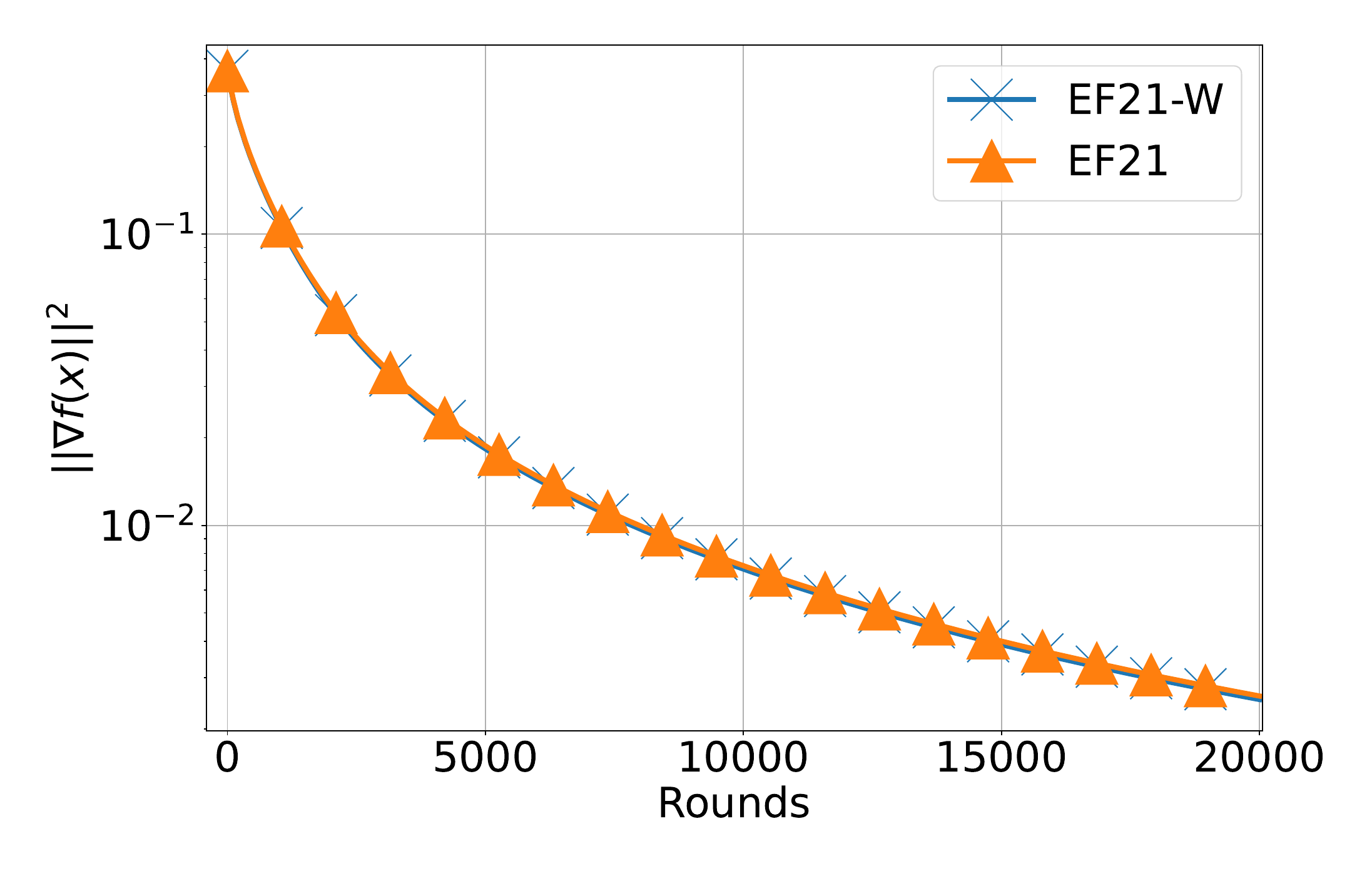} \caption{ \small{(e) \texttt{MUSHROOMS}, $\Lvar\approx 5 \times 10^{-1}$ } }
		\end{subfigure}
		\begin{subfigure}[ht]{\myVar}
			\includegraphics[width=\myVarN]{./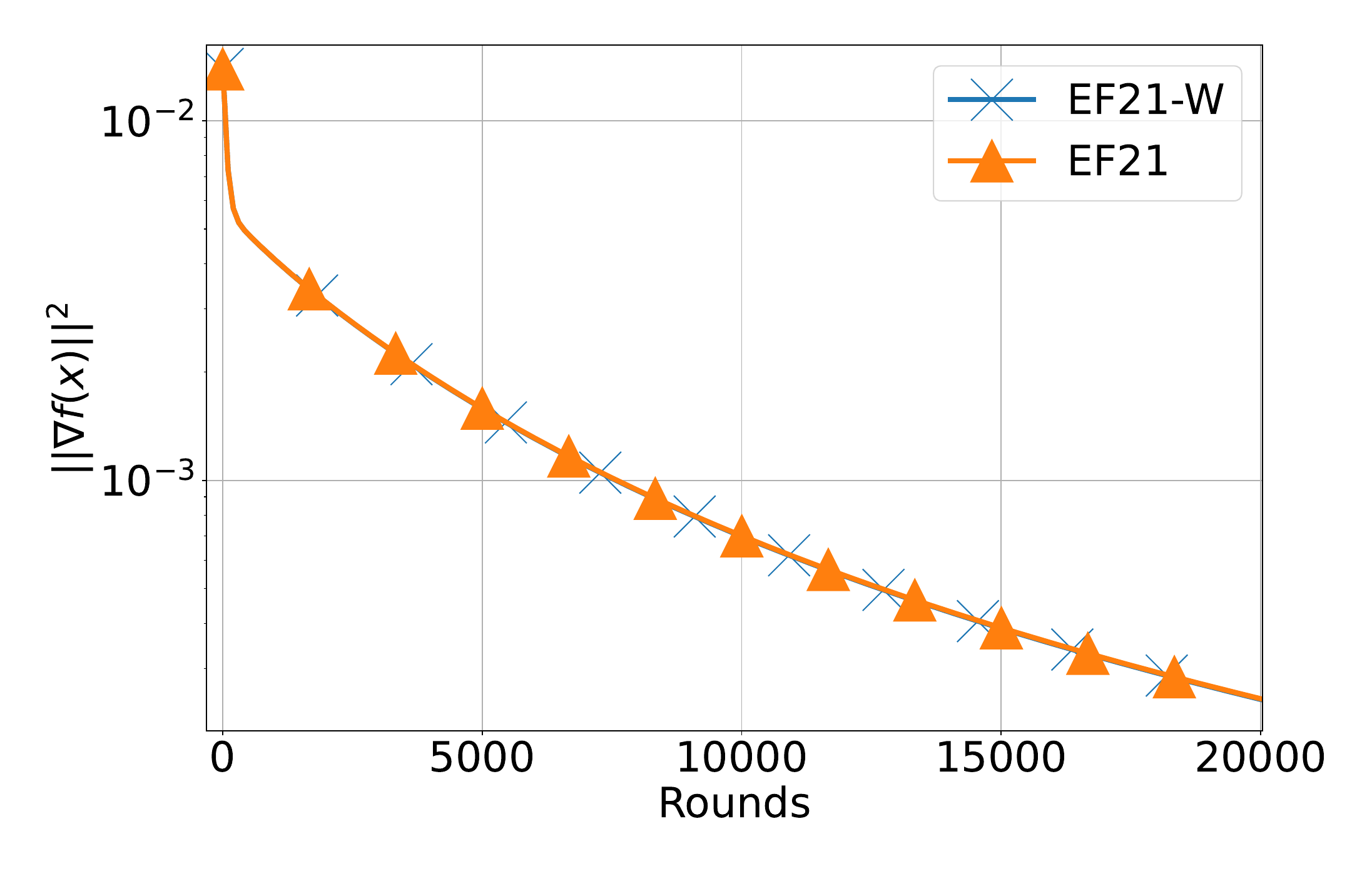} \caption{ \small{(f) \texttt{PHISHING}, $\Lvar=9 \times 10^{-4}$ } }
		\end{subfigure}
		
		%\vspace{-2pt}
		\caption{\small{Comparison of  \algnamesmall{EF21} versus our new \algnamesmall{EF21-W} with the \algnamesmall{Top1} compressor on the non-convex logistic regression problem. The number of clients $n$ is $1,000$. The step size for~\algnamesmall{EF21} is set according to~\citep{EF21}, and the step size for~\algnamesmall{EF21-W} is set according to \Cref{thm:EF21-W}. The coefficient $\lambda$ for (b)--(f) is set to $0.001$, and for (a) is set to $1,000$ for numerical stability. We let $\Lvar \eqdef \LQMsq - \LAMsq = {\color{red}\avein L_i^2} - {\color{blue}\left(\avein L_i \right)^2}$.}}
		\label{fig:real-ef21-vc-ncvx}
	\end{figure*}
\end{center}
\begin{center}
	\begin{figure*}[t!]
		\centering
		\captionsetup[sub]{font=normalsize,labelfont={}}	
		\captionsetup[subfigure]{labelformat=empty}
		%\captionsetup{position=top}
		\newcommand{\sfwidth}{0.24\textwidth}
		\newcommand{\figwidth}{0.85\textwidth}
		
		\begin{subfigure}[ht]{\sfwidth}			\includegraphics[width=\figwidth]{./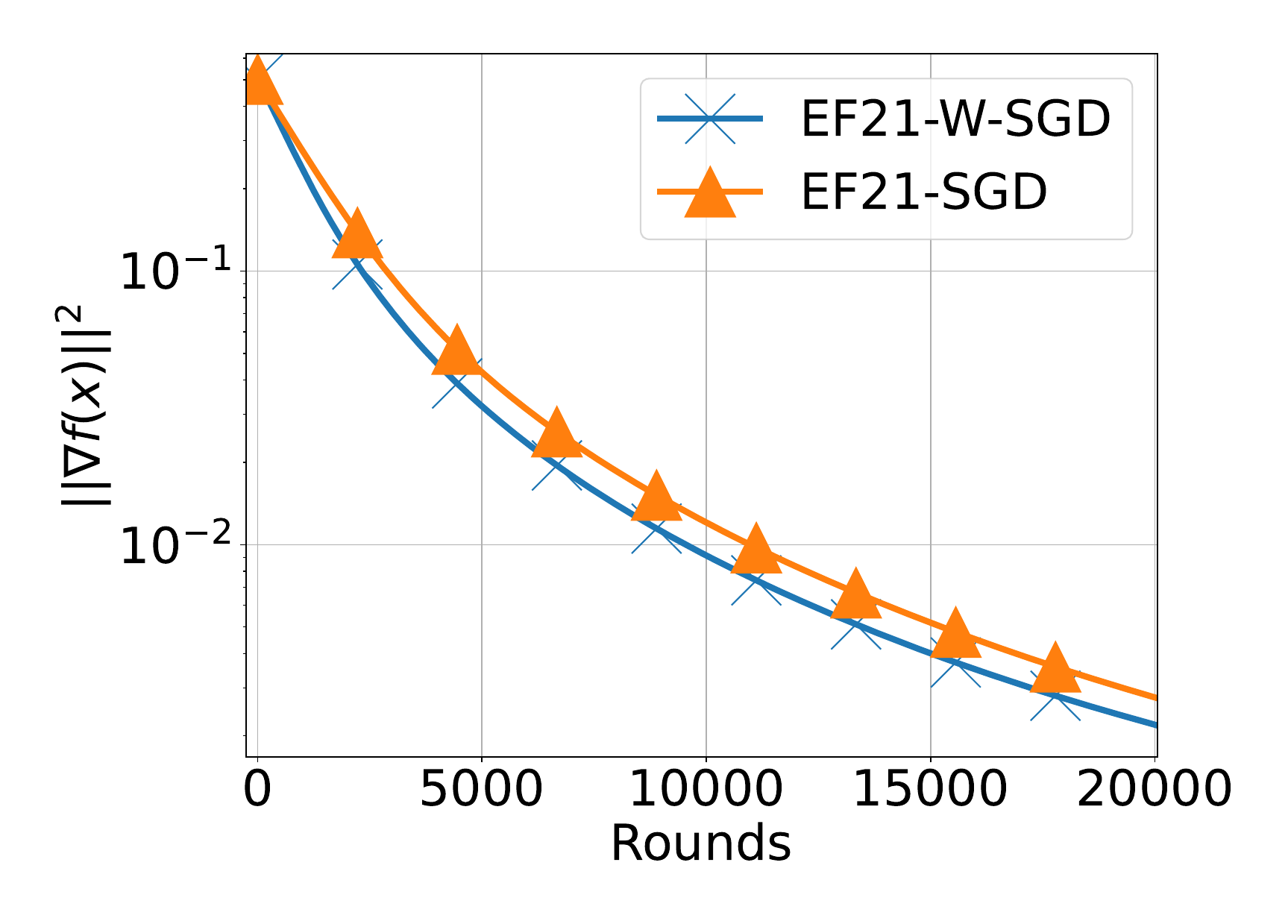} \caption{ \small{(a) \texttt{W1A}, \algnamesmall{SGD} } }
		\end{subfigure}
		\begin{subfigure}[ht]{\sfwidth}
			\includegraphics[width=\figwidth]{./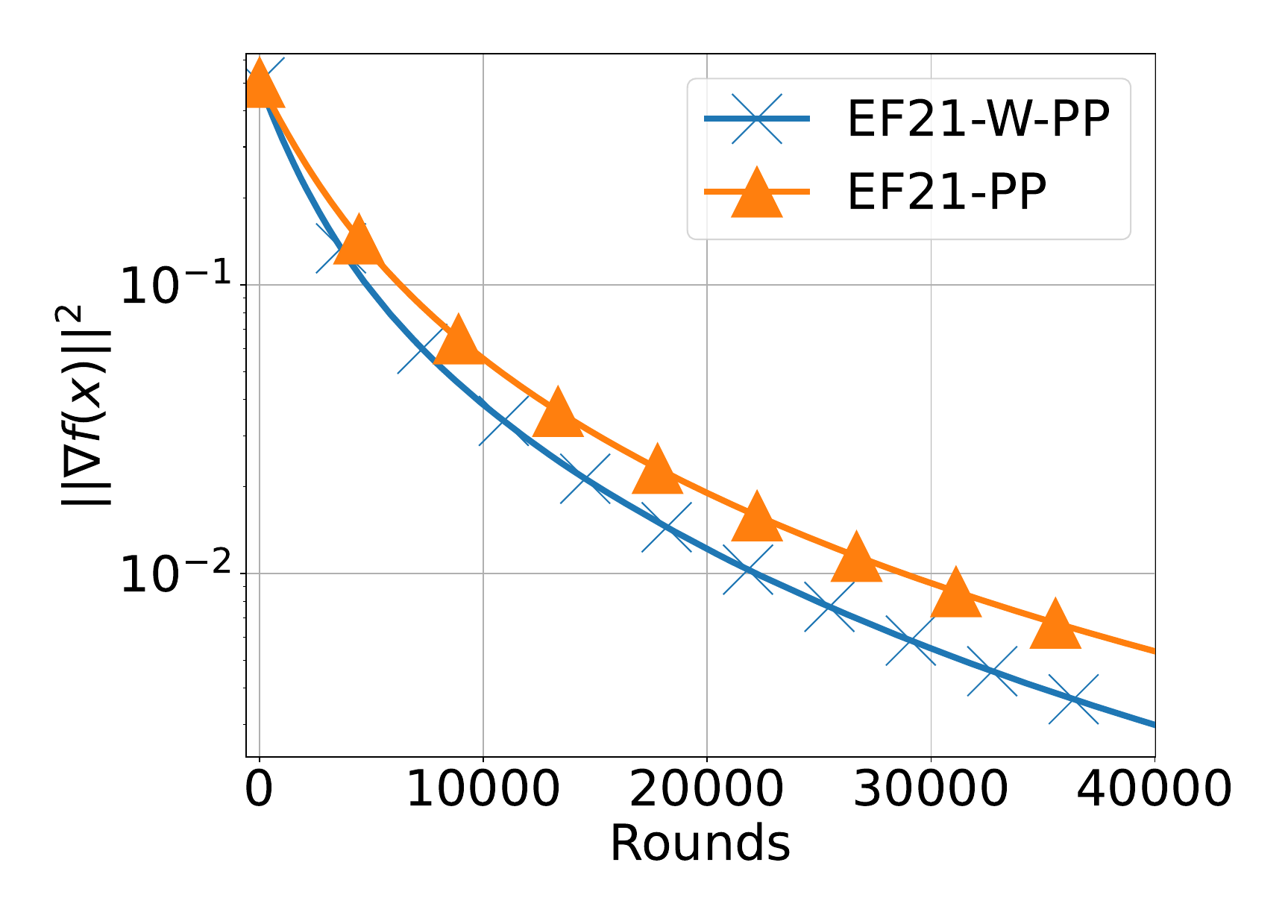} \caption{ \small{(b) \texttt{W1A}, \algnamesmall{PP} } }
		\end{subfigure}			
		\begin{subfigure}[ht]{\sfwidth}
			\includegraphics[width=\figwidth]{./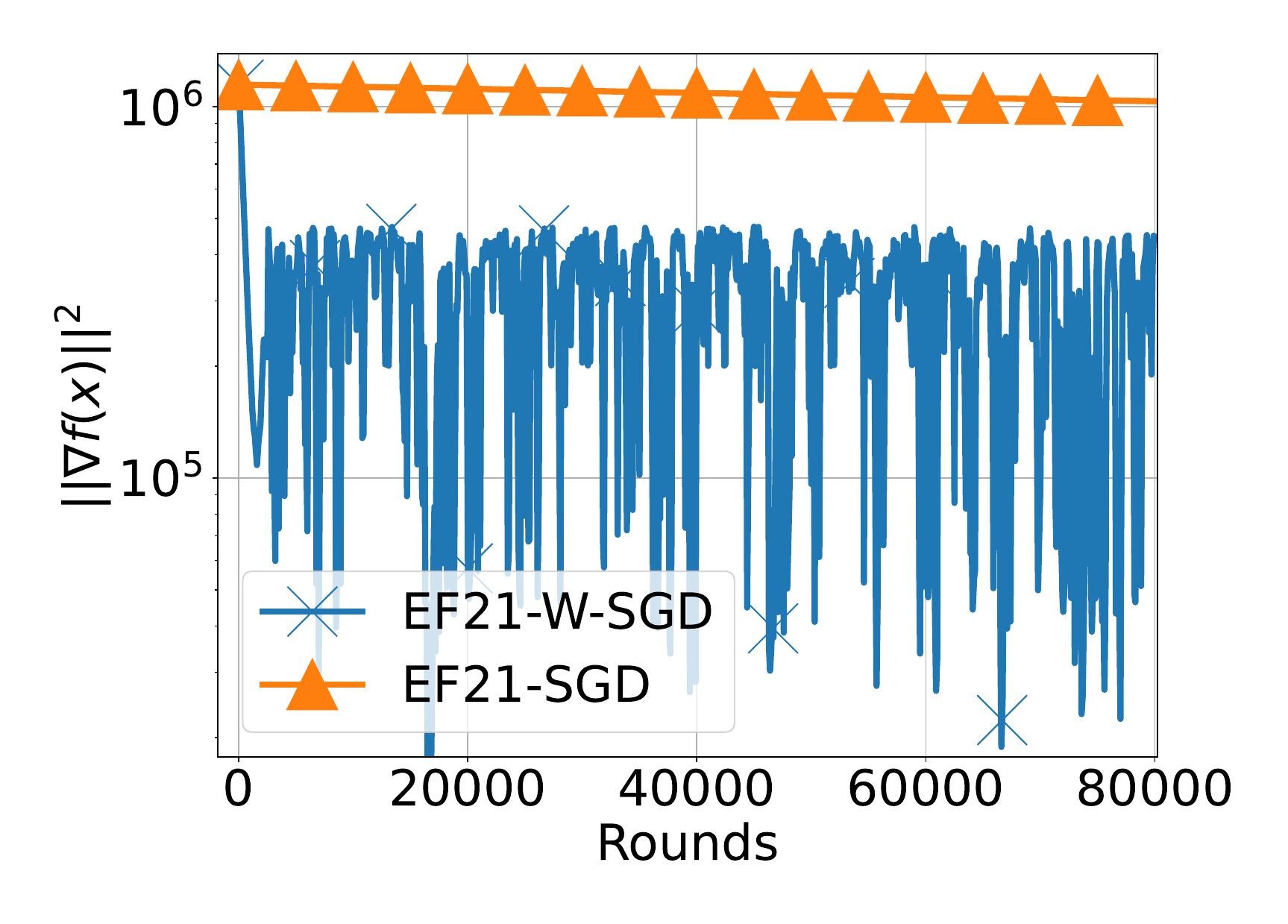} \caption{ \small{ (c) \texttt{AUSTRALIAN}, \algnamesmall{SGD} } }
		\end{subfigure}
		\begin{subfigure}[ht]{\sfwidth}
			\includegraphics[width=\figwidth]{./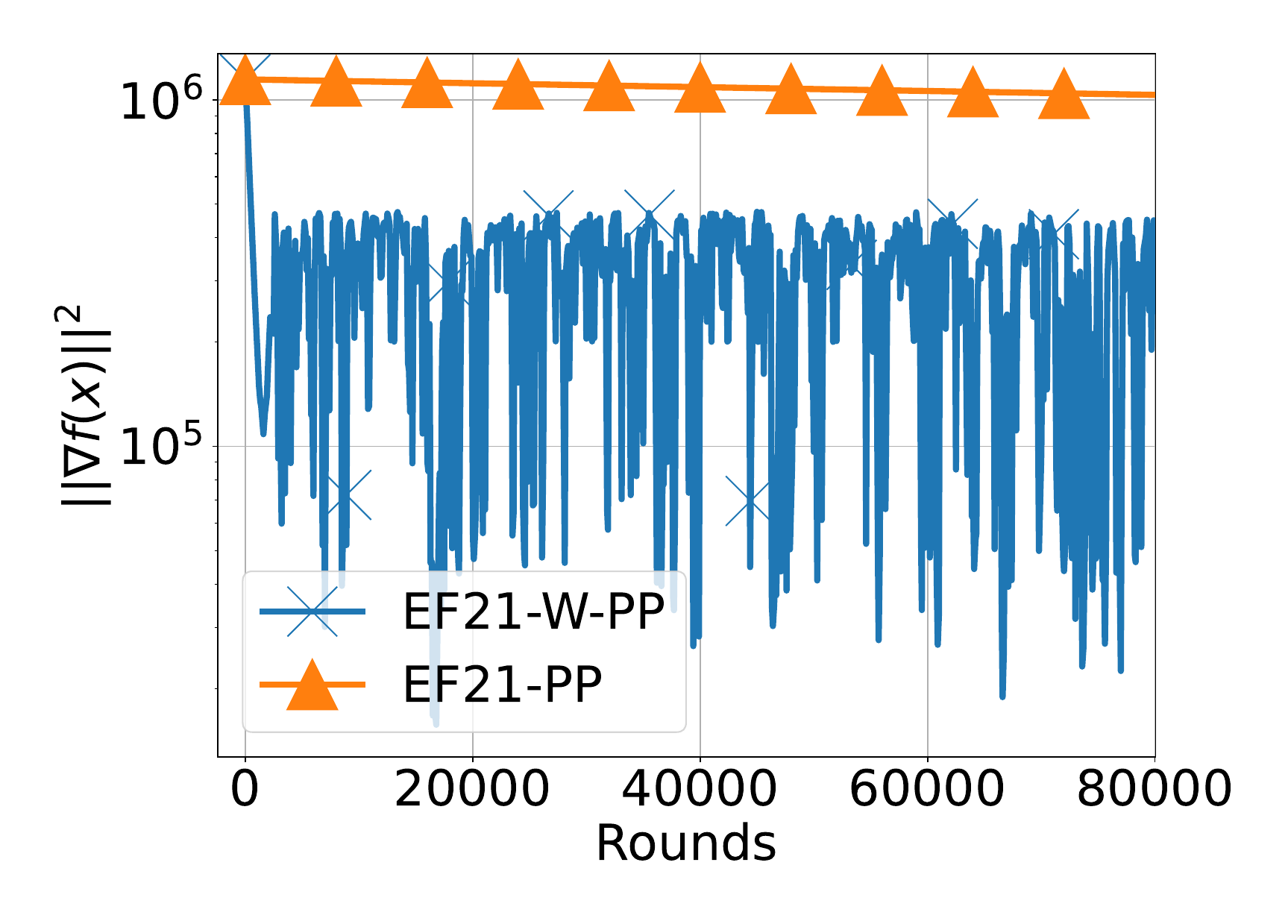} \caption{ \small{ (d) \texttt{AUSTRALIAN}, \algnamesmall{PP} } }
		\end{subfigure}
		
		%\vspace{-2pt}
		\caption{\small{Comparison of \algnamesmall{EF21-W} with partial partial participation (\algnamesmall{EF21-W-PP}) or stochastic gradients (\algnamesmall{EF21-W-SGD}) versus \algnamesmall{EF21}  with partial partial participation (\algnamesmall{EF21-PP}) or stochastic gradients (\algnamesmall{EF21-SGD})~\citep{EF21BW}). The \algnamesmall{Top1} compressor was employed in all experiments. The number of clients $n=1,000$. All stepsizes are theoretical. The coefficient $\lambda$ was set to $0.001$ for (a), (b) and to $1,000$ for (c), (d).}}
		\label{fig:ext-real-ef21-vc-ncvx}
	\end{figure*}
\end{center}
\begin{center}	
	\begin{figure*}[t!]
		\centering
		\captionsetup[sub]{font=normalsize,labelfont={}}	
		\captionsetup[subfigure]{labelformat=empty}
		%\captionsetup{position=top}
		
		\begin{subfigure}[ht]{0.245\textwidth}
			\includegraphics[width=\textwidth]{./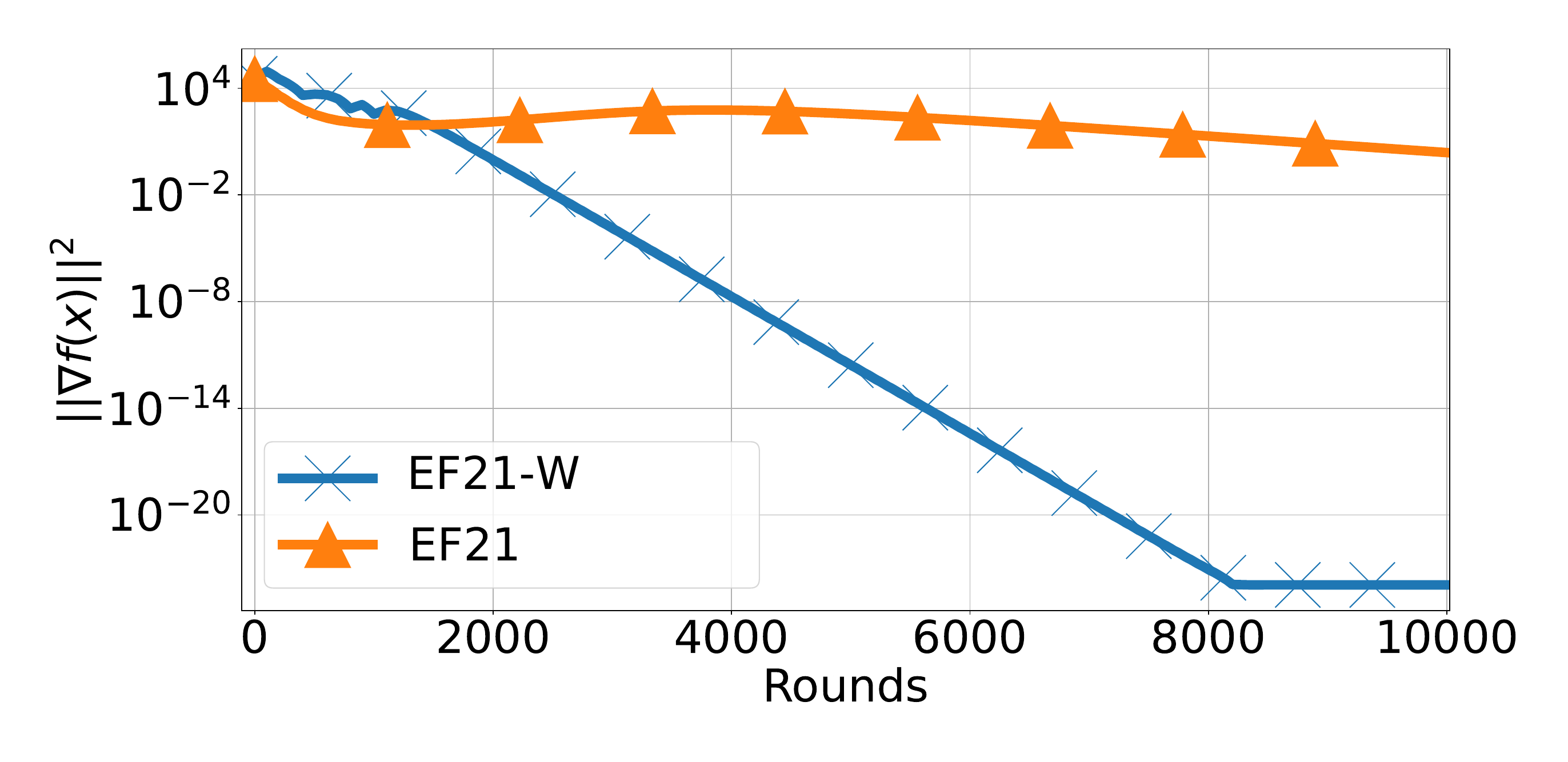} \caption{\small{(a) $\Lvar  \approx 4.4 \times 10^6$} \newline \scriptsize{$\LQM  \approx 2126, \LAM \approx 252$}}
			%\caption{ (a) $q=1,z=10^4$ }
		\end{subfigure}		
		\begin{subfigure}[ht]{0.245\textwidth}
			\includegraphics[width=\textwidth]{./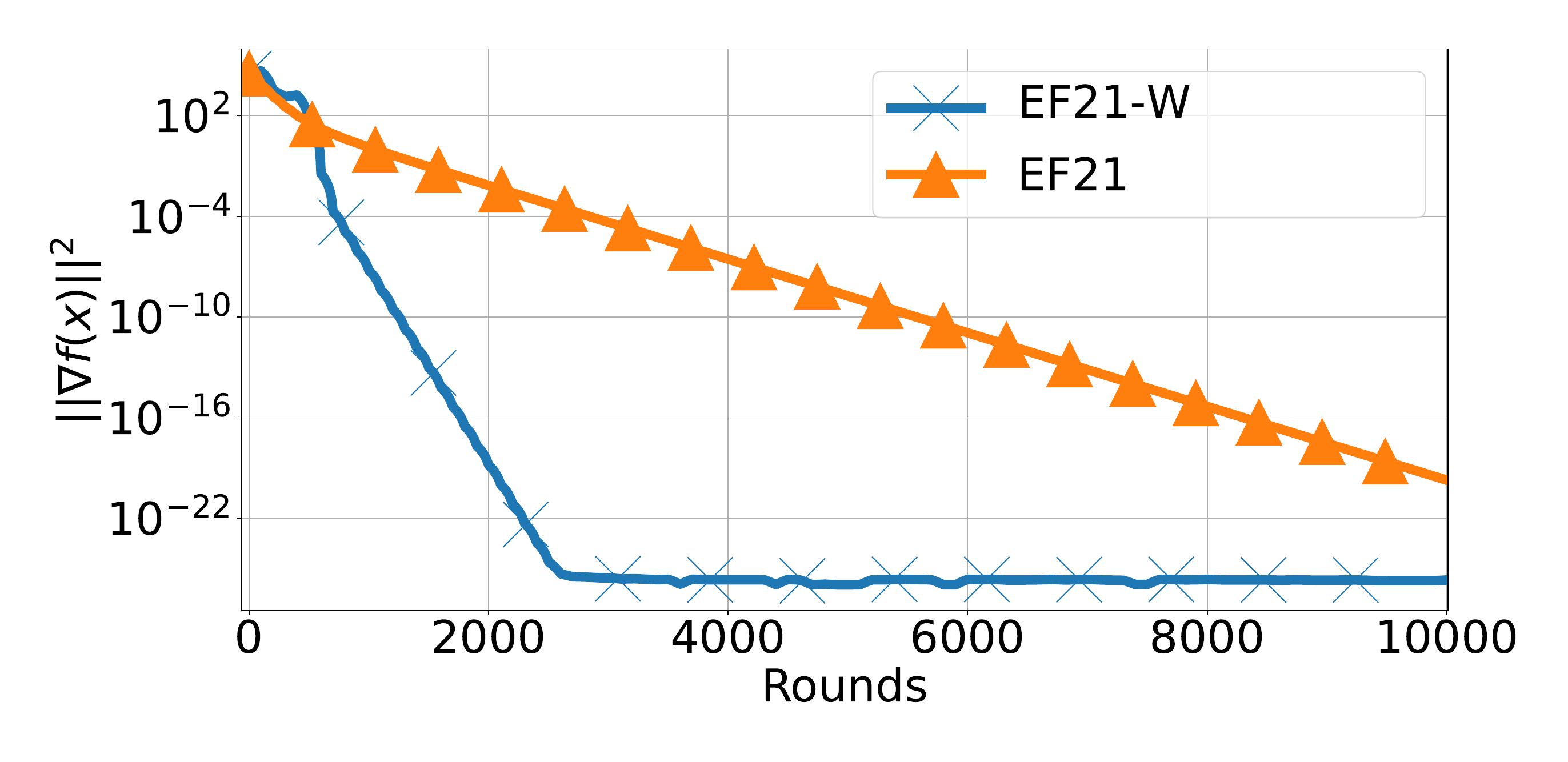} \caption{\small{(b) $\Lvar \approx 1.9 \times 10^6$} \newline \scriptsize{ $\LQM  \approx 1431, \LAM \approx 263$
			}} 
			%\caption{ (b) $q=1,z=10^3$ }
		\end{subfigure}
		\begin{subfigure}[ht]{0.245\textwidth}
			\includegraphics[width=\textwidth]{./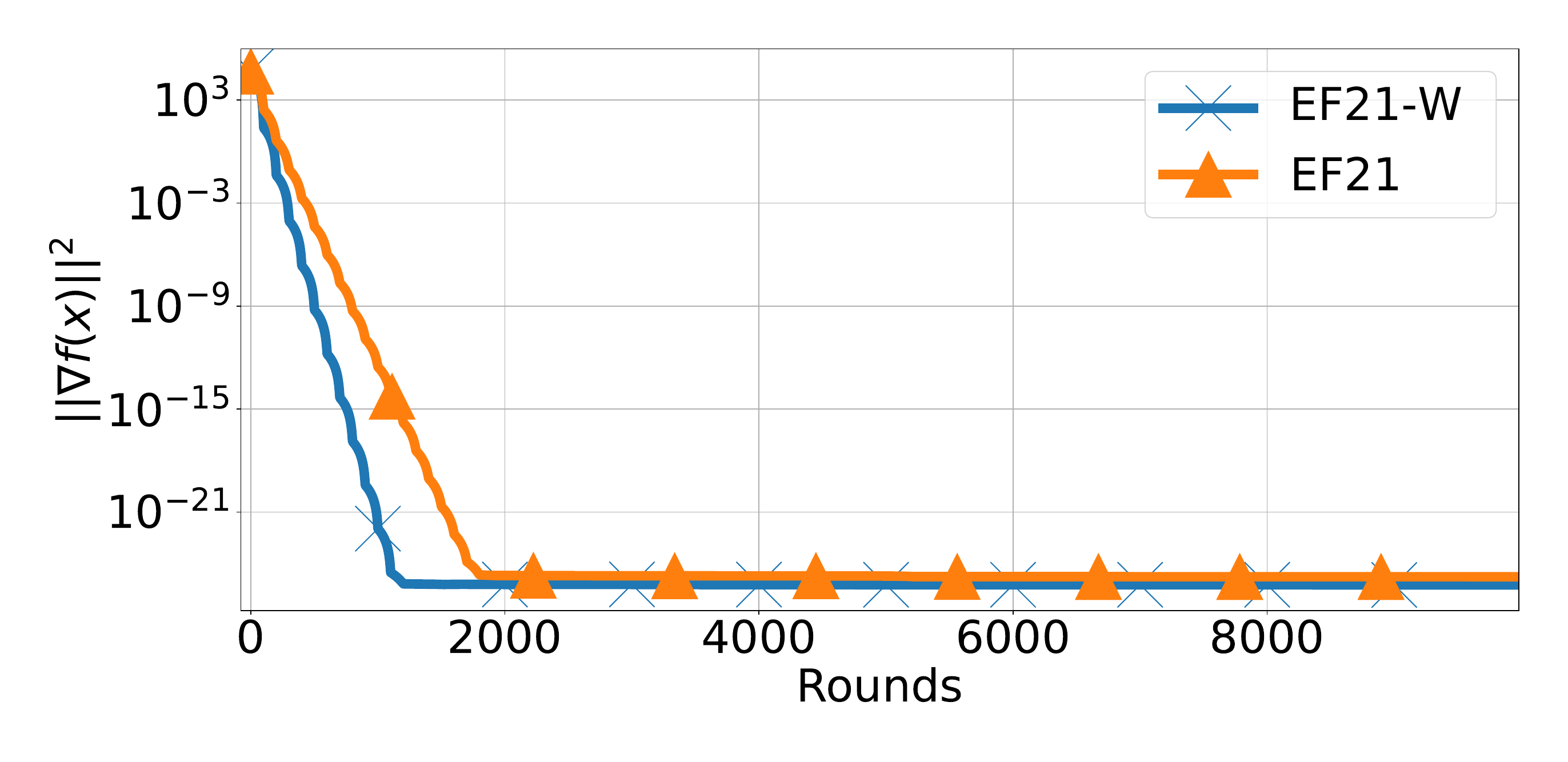} \caption{\small{(c) $\Lvar \approx 1.0 \times 10^5$} \newline \scriptsize{ $\LQM \approx 433, \LAM\approx 280$
			}}
			%\caption{ (c) $q=1,z=10^2$}
		\end{subfigure}		
		\begin{subfigure}[ht]{0.245\textwidth}
			\includegraphics[width=\textwidth]{./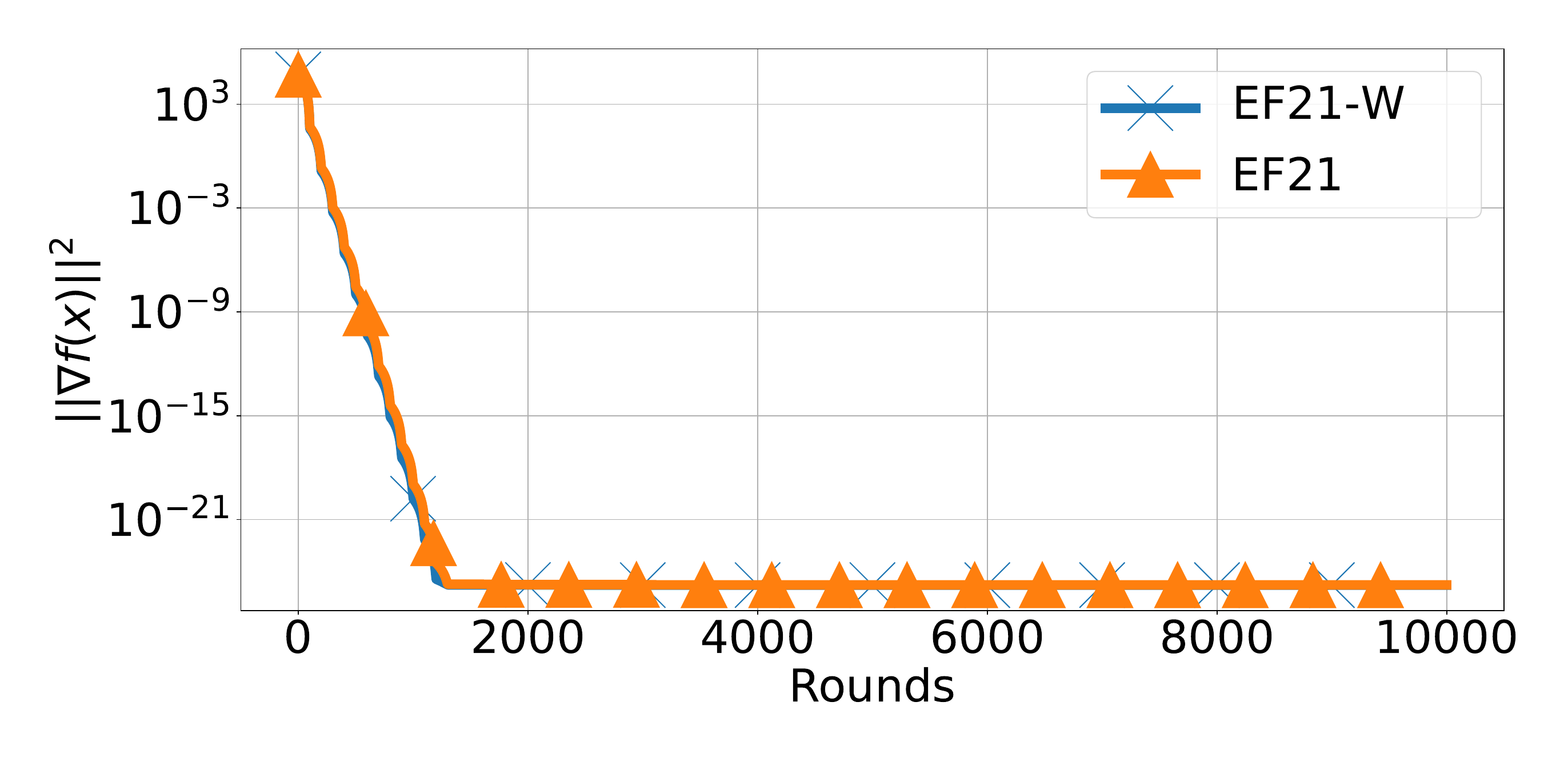} \caption{\small{(d) $\Lvar \approx 5.4 \times 10^3$} \newline \scriptsize{ $\LQM  \approx 294, \LAM \approx 285$
			}} 
			%\caption{ (d) $q=0.8,z=1$}
		\end{subfigure}
		
		%\vspace{-2pt}
		\caption{ \small{Comparison of~\algnamesmall{EF21} vs.~\algnamesmall{EF21-W} with the~\algnamesmall{Top1} compressor on the non-convex linear problem. The number of clients $n$ is $2,000$. The coefficient $\lambda$ has been set to $100$. The step size for~\algnamesmall{EF21} is set according to~\citep{EF21}, and the step size for~\algnamesmall{EF21-W} is set according to \Cref{thm:EF21-W}. In all cases, the smoothness constant $L$ equals $50$}.}
		\label{fig:syn-ef21-vc-noncvx}
	\end{figure*}
\end{center}

%\section{Limitations} 
%
%While our results provide the new best convergence rates for error feedback with contractive compressors with full (as opposed to stochastic)  gradient computations on the clients, the theoretical communication complexity merely matches that of gradient descent.

%	In this work, we present significant improvements to the \algname{EF21} algorithm and its extensions, which are distributed optimization algorithms that use contractive compressors and achieve state-of-the-art theoretical communication complexity guarantees. We demonstrate both theoretically and empirically that our improvements are fundamental. Our improvement is applied not only to \algname{EF21}, but also to its variants that handle stochastic gradients (\algname{EF21-W-SGD}, Section~\ref{sec:EF21-W-SGD}), partial participation of clients (\algname{EF21-W-PP}, Section~\ref{sec:EF21-W-PP}), and \algname{EF21} in {\em rare features} regime, Section~\ref{sec:RF}). We share our discovery process with the reader, hoping that it can be useful for researchers who face difficulties in some part of the research journey. We believe that the underlying reason for the empirical good behavior of \algname{EF21} is a complex question, and further discoveries are needed in this direction.

\clearpage
\section*{Acknowledgements}

This work of all authors was supported by the KAUST Baseline Research Scheme (KAUST BRF). The work Peter Richt\'{a}rik and Konstantin Burlachenko was also supported by the SDAIA-KAUST Center of Excellence in Data Science and Artificial Intelligence (SDAIA-KAUST AI). We wish to thank Babis Kostopoulos---a VSRP intern at KAUST who spent some time working on this project in Summer 2023---for helping with some parts of the project. We offered Babis co-authorship, but he declined. 

% \clearpage

\bibliography{bibliography}
\bibliographystyle{iclr2024_conference}

\newpage
\appendix

\section{Basic Results and Lemmas}
In this section, we offer a few results that serve as essential prerequisites for establishing the main findings in the paper.

\subsection{Optimal client cloning frequencies}

\begin{lemma}[Optimal weights] \label{lem:ow}
Let $a_i > 0$ for $i \in [n]$. Then
\begin{equation}\label{eq:0998fddff-89y9fd}
\min\limits_{ \substack{w_1 > 0, \dots, w_n>0 \\ \sumin w_i = 1}} \sumin \frac{a_i^2}{w_i} =\left(\sumin a_i\right)^2,
\end{equation}
which is achieved when $w^\ast_i = \frac{a_i}{\sum_j a_j}$. This means that
\begin{equation}\label{eq:0998fddff}
\min\limits_{ \substack{w_1 > 0, \dots, w_n>0 \\ \sumin w_i = 1}} \frac{1}{n}\sqrt{\sumin \frac{a_i^2}{w_i}} =\frac{1}{n}\sumin a_i.
\end{equation}
\end{lemma}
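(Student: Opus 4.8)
The plan is to prove the first identity~\eqref{eq:0998fddff-89y9fd} directly via the Cauchy--Schwarz inequality, and then deduce the second identity~\eqref{eq:0998fddff} from it by monotonicity of the square root. I expect the whole argument to be short, since this is essentially a one-line application of Cauchy--Schwarz together with a verification that the proposed $w^\ast$ attains the bound.

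First I would establish the lower bound. Fix any feasible $w$, i.e., $w_i > 0$ for all $i$ and $\sumin w_i = 1$. Applying the Cauchy--Schwarz inequality to the vectors with coordinates $a_i/\sqrt{w_i}$ and $\sqrt{w_i}$, I obtain
\[
\left(\sumin a_i\right)^2 = \left(\sumin \frac{a_i}{\sqrt{w_i}}\cdot \sqrt{w_i}\right)^2 \leq \left(\sumin \frac{a_i^2}{w_i}\right)\left(\sumin w_i\right) = \sumin \frac{a_i^2}{w_i},
\]
where the last step uses the constraint $\sumin w_i = 1$. Since this holds for every feasible $w$, the right-hand side is bounded below by $\left(\sumin a_i\right)^2$, which gives ``$\geq$'' in~\eqref{eq:0998fddff-89y9fd}.

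Next I would verify that the candidate $w^\ast_i = a_i/\sum_j a_j$ is feasible and attains this bound. Feasibility is immediate: each $w^\ast_i > 0$ since $a_i > 0$, and $\sumin w^\ast_i = 1$. Substituting directly,
\[
\sumin \frac{a_i^2}{w^\ast_i} = \sumin a_i^2 \cdot \frac{\sum_j a_j}{a_i} = \left(\sum_j a_j\right)\sumin a_i = \left(\sumin a_i\right)^2,
\]
which matches the lower bound and hence proves~\eqref{eq:0998fddff-89y9fd}, with the minimum attained at $w^\ast$. (The equality case of Cauchy--Schwarz, namely proportionality of the two coordinate vectors, forces $w_i \propto a_i$, confirming $w^\ast$ is the unique minimizer; one also notes the minimum is genuinely attained in the interior, as the objective tends to $+\infty$ whenever some $w_i \to 0$.) Finally, to get~\eqref{eq:0998fddff} I would take square roots of both sides of~\eqref{eq:0998fddff-89y9fd}---valid since both sides are positive and $t \mapsto \sqrt{t}$ is increasing, so the minimizer is unchanged---and divide by $n$, yielding $\frac{1}{n}\sqrt{\sumin a_i^2/w_i}$ minimized at the value $\frac{1}{n}\sumin a_i$.

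There is no real obstacle here; the only points requiring minor care are checking feasibility of $w^\ast$ and noting that the square-root step in passing from~\eqref{eq:0998fddff-89y9fd} to~\eqref{eq:0998fddff} preserves both the minimal value (under $\sqrt{\cdot}$ and scaling by $1/n$) and the optimal $w^\ast$.
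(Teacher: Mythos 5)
Your proof is correct and complete. The paper actually states this lemma \emph{without} proof (it is only invoked later, in the proof of Lemma~\ref{lem:sandwitch}), so there is no paper argument to compare against; your route --- Cauchy--Schwarz for the lower bound, direct substitution of $w^\ast_i = a_i/\sum_j a_j$ for attainment, and monotonicity of $t\mapsto\sqrt{t}$ together with the $1/n$ scaling to pass from the first identity to the second --- is the standard one and covers everything required. Your extra observations (uniqueness of the minimizer via the equality case of Cauchy--Schwarz, and that the objective blows up as any $w_i\to 0$, so the minimum is attained in the interior) are correct but not needed for the statement as given.
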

%\begin{proof}
%The objective function is convex for nonnegative $w_i$, and the constraints are affine. Thus, the Karush–Kuhn–Tucker theorem is applicable. The Lagrangian of the problem is defined as:
%$$
%\cL(w, \lambda, \nu) = \sumin \frac{a_i^2}{w_i} - \sumin \lambda_i w_i + \nu \left(\sumin w_i - 1\right).
%$$
%Taking partial derivatives with respect to $w_i$, we obtain:
%\begin{align*}
%\frac{\partial \cL}{\partial w_i} = -\frac{a_i^2}{w_i^2} - \lambda_i  + \nu = 0 \Longrightarrow w_i = \frac{a_i}{\sqrt{\nu - \lambda_i}}.
%\end{align*}
%Incorporating the complementary slackness condition $w_i \lambda_i = 0$, we deduce that $\lambda_i = 0$ for $i \in [n]$, and $w_i = \frac{a_i}{\sqrt{\nu}}$. Furthermore, from the constraint $\sumin w_i = 1$, we recover $\nu = \left(\sumin a_i \right)^2$. Therefore, we conclude that $$w_i = \frac{a_i}{\sqrt{\nu}} = \frac{a_i}{\sum\limits_{j=1}^n a_j}.$$ The proof is completed by substituting the optimal values of $w_i^\star$ into the objective function.
%\end{proof}

We now show that the cloning frequencies given by $N^\star_i =  \left \lceil  \frac{L_i}{\LAM} \right \rceil$ form a $\sqrt{2}$-approximation for the optimization problem of finding the optimal integer client frequencies. 
 
\begin{lemma}[$\sqrt{2}$-approximation]\label{lem:sandwitch} If we let $N^\star_i =  \left \lceil  \frac{L_i}{\LAM} \right \rceil$ for all $i\in [n]$, then 
\[ \LAM \leq  \min_{N_1 \in \mathbb{N},\dots, N_n\in \mathbb{N}} M(N_1,\dots,N_n) \leq   M(N^\star_1,\dots,N^\star_n) \leq \sqrt{2} \LAM.\]
\end{lemma}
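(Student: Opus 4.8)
The plan is to handle the two outer inequalities separately, noting that the middle inequality $\min_{N_1,\dots,N_n} M(N_1,\dots,N_n) \leq M(N^\star_1,\dots,N^\star_n)$ is immediate, since $(N^\star_1,\dots,N^\star_n)$ is one admissible integer tuple and hence its objective value cannot beat the minimum.

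For the lower bound $\LAM \leq \min_{N_1,\dots,N_n} M(N_1,\dots,N_n)$, I would use the closed form from~\eqref{eq:M-solve}, namely $M(N_1,\dots,N_n) = \frac{1}{n}\sqrt{\sum_i L_i^2/(N_i/N)}$, and substitute the weights $w_i \eqdef N_i/N$. Every integer tuple yields $w_i > 0$ with $\sum_i w_i = 1$, so the set of achievable (rational) weight vectors sits inside the full probability simplex. Enlarging the feasible set from these rational weights to all positive weights summing to one can only decrease the minimum, so the integer minimum is bounded below by the continuous one, which Lemma~\ref{lem:ow} (with $a_i = L_i$) evaluates exactly to $\frac{1}{n}\sum_i L_i = \LAM$. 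This gives the left inequality.

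For the upper bound I would introduce the ratios $r_i \eqdef L_i/\LAM$ and record the identity $\sum_i r_i = \frac{1}{\LAM}\sum_i L_i = n$, which follows directly from $\LAM = \frac{1}{n}\sum_i L_i$. From $r_i \leq \lceil r_i\rceil = N^\star_i < r_i + 1$, summing over $i$ yields the clone count $n \leq N^\star \leq 2n$. The core termwise estimate then uses $N^\star_i \geq r_i = L_i/\LAM$, so that $L_i^2/N^\star_i \leq L_i\LAM$, and summing gives $\sum_i L_i^2/N^\star_i \leq \LAM\sum_i L_i = n\LAMsq$. Plugging both facts into $M(N^\star_1,\dots,N^\star_n)^2 = \frac{N^\star}{n^2}\sum_i L_i^2/N^\star_i$ produces $M^2 \leq \frac{2n}{n^2}\cdot n\LAMsq = 2\LAMsq$, i.e.\ $M(N^\star_1,\dots,N^\star_n) \leq \sqrt{2}\,\LAM$.

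Every step is an elementary inequality, so I do not anticipate a genuine obstacle. The only point that deserves care is the lower bound's reduction to Lemma~\ref{lem:ow}: one must explicitly observe that relaxing the constraint ``$w_i = N_i/N$ for integers $N_i$'' to ``$w_i > 0$, $\sum_i w_i = 1$'' enlarges the feasible region, so the continuous optimum $\LAM$ is a valid lower bound for the integer problem even though it is generally not attained by any integer tuple.
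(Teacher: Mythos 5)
Your proposal is correct and follows essentially the same route as the paper's own proof: the lower bound via relaxing integrality to the simplex and invoking Lemma~\ref{lem:ow} with $a_i = L_i$, and the upper bound via the two facts $N^\star \leq 2n$ and $N^\star_i \geq L_i/\LAM$ (the paper phrases the latter as $\frac{L_i/\LAM}{N^\star_i} \leq 1$ inside a factorization, which is the same estimate $L_i^2/N^\star_i \leq L_i \LAM$ you use). Working with $M^2$ rather than $M$ is a cosmetic difference only.
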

\begin{proof}  Recall that
\[M(N_1,\dots,N_n) \eqdef  \frac{1}{n}\sqrt{\sumin \frac{L_i^2}{N_i/N}} .\]
The first inequality in the claim follows by relaxing the integrality constraints, which gives us the bound
\[\min\limits_{ \substack{w_1 > 0, \dots, w_n>0 \\ \sumin w_i = 1}} \frac{1}{n}\sqrt{\sumin \frac{L_i^2}{w_i}}   \leq  \min_{N_1 \in \mathbb{N},\dots, N_n\in \mathbb{N}} \frac{1}{n}\sqrt{\sumin \frac{L_i^2}{N_i/N}}  , \] 
and subsequently applying Lemma~\ref{eq:0998fddff-89y9fd}. 

Next, we argue that the quantity $N^\star \eqdef \sum_i N^\star_i$ is at most $2n$. Indeed,
\begin{equation}\label{eq:98y08fd=-=f-d} N^\star = \sum_{i=1}^n N^\star_i = \sum_{i=1}^n \left \lceil  \frac{L_i}{\LAM} \right \rceil \leq \sum_{i=1}^n \left( \frac{L_i}{\LAM} + 1\right) = 2n.\end{equation}
We will now use this to bound $M(N^\star_1,\dots,N^\star_n)$ from above:
\begin{eqnarray*}
M(N^\star_1,\dots,N^\star_n) = \frac{1}{n}\sqrt{\sumin \frac{L_i^2}{N^\star_i/N^\star}} 
\overset{\eqref{eq:98y08fd=-=f-d}}{=}  \frac{\sqrt{2}}{\sqrt{n}}\sqrt{\sumin \frac{L_i^2}{N^\star_i}}  =  \frac{\sqrt{2}}{\sqrt{n}}\sqrt{\sumin \frac{\frac{L_i}{\LAM} }{N^\star_i}L_i \LAM} .
\end{eqnarray*}
Since $ \frac{\frac{L_i}{\LAM} }{N^\star_i} \leq 1$ for all $i\in [n]$, the proof is finished as follows: \begin{eqnarray*}
M(N^\star_1,\dots,N^\star_n) \leq \frac{\sqrt{2}}{\sqrt{n}}\sqrt{\sumin L_i \LAM}  =  \frac{\sqrt{2}}{\sqrt{n}} \sqrt{\LAM}\sqrt{\sumin L_i }  = \sqrt{2} \LAM.
\end{eqnarray*}
\end{proof}

\subsection{Descent lemma}

\begin{lemma}[\citet{PAGE2021}]\label{lm:descent_lemma}
	Let \Cref{as:smooth} hold and $x^{t+1} = x^t - \gamma g^t$, where $g^t \in \RR^d$ is any vector, and $\gamma > 0$ is any scalar. Then, we have
	\begin{equation}\label{eq:descent_lemma}
		f(x^{t+1}) \leq f(x^t) - \frac{\gamma}{2} \|\nabla f(x^t)\|^2 - \left(\frac{1}{2\gamma} - \frac{L}{2}\right) \|x^{t+1} - x^t\|^2 + \frac{\gamma}{2} \|g^t - \nabla f(x^t)\|^2.
	\end{equation}
\end{lemma}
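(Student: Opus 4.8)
The plan is to derive the claim from the standard quadratic upper bound (the textbook descent inequality) that $L$-smoothness provides, followed by a single application of the polarization identity to reshape the resulting cross term. First I would recall that Assumption~\ref{as:smooth} ($L$-smoothness of $f$, i.e.\ Lipschitz continuity of $\nabla f$) implies, for all $x,y\in\RR^d$, the quadratic bound $f(y)\leq f(x)+\inp{\nabla f(x)}{y-x}+\frac{L}{2}\norm{y-x}^2$. Applying this with $y=x^{t+1}$ and $x=x^t$ immediately yields $f(x^{t+1})\leq f(x^t)+\inp{\nabla f(x^t)}{x^{t+1}-x^t}+\frac{L}{2}\norm{x^{t+1}-x^t}^2$, which is the starting point for the rest of the computation.

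Next I would eliminate the cross term using the update rule $x^{t+1}-x^t=-\gamma g^t$, so that $\inp{\nabla f(x^t)}{x^{t+1}-x^t}=-\gamma\inp{\nabla f(x^t)}{g^t}$. I would then invoke the polarization identity $-\inp{a}{b}=\frac{1}{2}\left(\norm{a-b}^2-\norm{a}^2-\norm{b}^2\right)$ with the choice $a=\nabla f(x^t)$ and $b=g^t$, obtaining $-\gamma\inp{\nabla f(x^t)}{g^t}=\frac{\gamma}{2}\norm{g^t-\nabla f(x^t)}^2-\frac{\gamma}{2}\norm{\nabla f(x^t)}^2-\frac{\gamma}{2}\norm{g^t}^2$.

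To finish, I would use the update rule once more in the form $\norm{g^t}^2=\frac{1}{\gamma^2}\norm{x^{t+1}-x^t}^2$, which converts the term $-\frac{\gamma}{2}\norm{g^t}^2$ into $-\frac{1}{2\gamma}\norm{x^{t+1}-x^t}^2$. Substituting the rewritten cross term back into the descent inequality and collecting the two coefficients multiplying $\norm{x^{t+1}-x^t}^2$ (namely $-\frac{1}{2\gamma}$ coming from the polarization step and $+\frac{L}{2}$ coming from the smoothness bound) produces exactly $-\left(\frac{1}{2\gamma}-\frac{L}{2}\right)\norm{x^{t+1}-x^t}^2$, and the remaining two terms match $-\frac{\gamma}{2}\norm{\nabla f(x^t)}^2$ and $\frac{\gamma}{2}\norm{g^t-\nabla f(x^t)}^2$, yielding \eqref{eq:descent_lemma}.

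There is no genuine obstacle here: the argument is a short, self-contained computation whose only nontrivial input is the quadratic upper bound, itself a standard consequence of $L$-smoothness obtained by integrating $\nabla f$ along the segment $[x^t,x^{t+1}]$. The sole point requiring mild care is the bookkeeping of the three squared-norm terms generated by polarization together with the correct $\gamma^{-2}$ conversion of $\norm{g^t}^2$ into $\norm{x^{t+1}-x^t}^2$; keeping these straight is what makes the factors $\frac{\gamma}{2}$, $\frac{1}{2\gamma}$, and $\frac{L}{2}$ align precisely with the stated inequality.
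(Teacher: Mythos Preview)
Your argument is correct: the quadratic upper bound from $L$-smoothness, the polarization identity for the inner product, and the substitution $\norm{g^t}^2=\gamma^{-2}\norm{x^{t+1}-x^t}^2$ combine exactly as you describe to give \eqref{eq:descent_lemma}. The paper does not actually supply a proof of this lemma---it merely cites it from \citet{PAGE2021}---so there is nothing to compare against; your derivation is the standard one and would match what appears in that reference.
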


\subsection{Young's inequality}

\begin{lemma}[Young's inequality]
For any $a, b \in \RR^d$ and any positive scalar $s > 0$ it holds that
\begin{equation}\label{eq:young}
\|a + b\|^2 \leq (1 + s) \|a\|^2 + (1 + s^{-1})\|b\|^2.
\end{equation}
\end{lemma}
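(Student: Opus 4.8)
The plan is to reduce the claim to expanding the squared norm and then controlling the single cross term that appears. First I would expand
$\norm{a+b}^2 = \norm{a}^2 + 2\inp{a}{b} + \norm{b}^2$
using bilinearity of the inner product on $\RR^d$. The entire content of the inequality then lies in bounding the cross term $2\inp{a}{b}$ from above by $s\norm{a}^2 + s^{-1}\norm{b}^2$; once this is in hand, adding $\norm{a}^2 + \norm{b}^2$ to both sides immediately yields the stated bound with the coefficients $(1+s)$ on $\norm{a}^2$ and $(1+s^{-1})$ on $\norm{b}^2$.

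To control the cross term I would proceed in two steps. First, Cauchy--Schwarz gives $\inp{a}{b} \leq \norm{a}\,\norm{b}$, reducing a vector inequality to a scalar one. Second, I would invoke the scaled arithmetic--geometric mean inequality $2\norm{a}\,\norm{b} \leq s\norm{a}^2 + s^{-1}\norm{b}^2$, valid for every $s>0$. The cleanest justification for this second step is to observe that it is equivalent to the nonnegativity of a perfect square, namely $\left(\sqrt{s}\,\norm{a} - \tfrac{1}{\sqrt{s}}\norm{b}\right)^2 \geq 0$; expanding this square and rearranging produces exactly the desired bound. The positivity of $s$ is precisely what makes the substitutions $\sqrt{s}$ and $1/\sqrt{s}$ legitimate.

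There is no genuine obstacle here: the result is a standard weighted version of the elementary inequality $2\inp{a}{b} \leq \norm{a}^2 + \norm{b}^2$ (recovered at $s=1$), and the only real choice is the introduction of the free parameter $s$, which rescales the two terms and trades off the weight placed on $\norm{a}^2$ against that on $\norm{b}^2$. Chaining the expansion, Cauchy--Schwarz, and the completed square in that order finishes the argument in a couple of lines.
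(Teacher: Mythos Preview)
Your proposal is correct and is the standard argument for this inequality. The paper itself states this lemma without proof, treating it as a well-known basic fact; your expansion-plus-Cauchy--Schwarz-plus-completed-square derivation is exactly the elementary justification one would supply if asked.
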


\subsection{$2$-Suboptimal but simple stepsize rule}

\begin{lemma}[Lemma 5,~\citet{EF21}]\label{lm:stepsize_bound}
Let $a, b > 0$. If $0 < \gamma \leq \frac{1}{\sqrt{a} + b},$ then $a \gamma^2 + b \gamma \leq 1$. Moreover, the bound is tight up to the factor of 2 since $$\frac{1}{\sqrt{a} + b} \leq \min\left\{\frac{1}{\sqrt{a}}, \frac{1}{b}\right\} \leq \frac{2}{\sqrt{a} + b}.$$
\end{lemma}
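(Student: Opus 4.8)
The final statement is an elementary two-part algebraic lemma, so the plan is direct and requires no machinery beyond monotonicity of elementary functions. The key idea for the first claim is to \emph{convert the quadratic term $a\gamma^2$ into a linear one}. The plan is to first observe that the hypothesis $0 < \gamma \leq \frac{1}{\sqrt{a}+b}$ immediately gives $\gamma \leq \frac{1}{\sqrt{a}}$ (since $b>0$ forces $\sqrt{a}+b \geq \sqrt{a}$), and hence $\sqrt{a}\,\gamma \leq 1$. This is the crucial inequality: because $\sqrt{a}\,\gamma$ is a nonnegative number bounded by $1$, squaring it can only make it smaller, so
\[
a\gamma^2 = \left(\sqrt{a}\,\gamma\right)^2 \leq \sqrt{a}\,\gamma.
\]

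With this in hand, the first claim follows by a one-line summation. The plan is to add $b\gamma$ to both sides of the above and factor:
\[
a\gamma^2 + b\gamma \leq \sqrt{a}\,\gamma + b\gamma = \gamma\left(\sqrt{a}+b\right) \leq \left(\sqrt{a}+b\right)^{-1}\left(\sqrt{a}+b\right) = 1,
\]
where the last inequality is exactly the hypothesis $\gamma \leq \frac{1}{\sqrt{a}+b}$. This completes the first assertion.

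For the tightness claim, the plan is to prove the two stated inequalities separately. The left inequality $\frac{1}{\sqrt{a}+b} \leq \min\left\{\frac{1}{\sqrt{a}}, \frac{1}{b}\right\}$ is immediate from $\sqrt{a}+b \geq \sqrt{a}$ and $\sqrt{a}+b \geq b$ (both using positivity of the omitted summand), inverting each bound, and taking the smaller of the two resulting upper bounds on $\frac{1}{\sqrt{a}+b}$. For the right inequality, I would rewrite $\min\left\{\frac{1}{\sqrt{a}}, \frac{1}{b}\right\} = \frac{1}{\max\{\sqrt{a},\,b\}}$ and invoke the fact that the maximum of two positive reals dominates their arithmetic mean, i.e.\ $\max\{\sqrt{a},\,b\} \geq \frac{\sqrt{a}+b}{2}$; inverting this gives $\frac{1}{\max\{\sqrt{a},\,b\}} \leq \frac{2}{\sqrt{a}+b}$, as required.

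There is no genuine obstacle here: the entire argument is elementary, and the only mild ``insight'' is recognizing that the hypothesis simultaneously controls both the linear and the quadratic term, with the quadratic term tamed via $x^2 \leq x$ on $[0,1]$. The one point to state carefully is that the strict positivity $a,b>0$ is what makes $\sqrt{a}$ and $b$ both available as lower bounds for $\sqrt{a}+b$; this is used in both halves of the tightness claim.
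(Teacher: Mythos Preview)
Your proof is correct. The paper does not actually supply its own proof of this lemma; it is quoted verbatim as Lemma~5 from \citet{EF21} and stated without proof, so there is nothing in the paper to compare against beyond confirming that your elementary argument (bounding $a\gamma^2 \leq \sqrt{a}\,\gamma$ via $x^2\leq x$ on $[0,1]$, then factoring) is the standard one and is complete.
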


\subsection{Optimal coefficient in Young's inequality}

\begin{lemma}[Lemma 3, \citet{EF21}]\label{lm:best_theta_beta_choice}
Let $0 < \alpha \leq 1$ and for $s> 0$, let  $\theta(\alpha,s) \eqdef 1- (1- \alpha )(1+s)$ and $\beta(\alpha,s)\eqdef (1- \alpha ) \left(1+ s^{-1} \right)$. Then, the solution of the optimization problem
\begin{equation}
\min\limits_{s} \left\{\frac{\beta(\alpha,s)}{\theta(\alpha,s)} : 0 < s < \frac{\alpha}{1 - \alpha} \right\}
\end{equation}
is given by $s^\ast = \frac{1}{\sqrt{1 - \alpha}} - 1$. Furthermore, $\theta(\alpha,s^\ast) = 1 - \sqrt{1-\alpha}$, and $\beta(\alpha,s^\ast) = \frac{1-\alpha}{1 - \sqrt{1 - \alpha}}$.
\end{lemma}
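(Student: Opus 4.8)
The plan is to treat this as a one-dimensional calculus optimization: the objective $\phi(s) \eqdef \beta(\alpha,s)/\theta(\alpha,s)$ is a smooth function of $s$ on the prescribed interval, so I would locate its critical points and then certify that the relevant one is a global minimizer. First I would pin down the feasible set: writing $c \eqdef 1-\alpha$, the denominator $\theta(\alpha,s) = 1 - c(1+s)$ is positive exactly when $s < \tfrac{1-c}{c} = \tfrac{\alpha}{1-\alpha}$, so the stated constraint is precisely the requirement that $\theta > 0$; since $\beta(\alpha,s) = c(1+s^{-1}) > 0$ for every $s>0$, the ratio $\phi$ is a well-defined positive smooth function on $(0, \tfrac{\alpha}{1-\alpha})$. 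The degenerate case $\alpha = 1$ gives $\beta \equiv 0$ and is trivial, so I would assume $0 < \alpha < 1$ throughout.

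Next I would reduce $\phi$ to a single rational expression. Clearing the inner fraction gives $\phi(s) = \tfrac{c(s+1)}{s(\alpha - cs)}$, and differentiating via the quotient rule, the numerator of $\phi'(s)$ factors as $c\,q(s)$ with $q(s) \eqdef c s^2 + 2cs - \alpha$; all the cross terms cancel, and this is the one computation I would carry out explicitly. The stationarity condition is therefore the quadratic $(1-\alpha)s^2 + 2(1-\alpha)s - \alpha = 0$. The pleasant surprise that makes a clean closed form possible is that the discriminant collapses: $4c^2 + 4c\alpha = 4c(c+\alpha) = 4c$, so its square root is $2\sqrt{1-\alpha}$ and the two roots are $-1 \pm \tfrac{1}{\sqrt{1-\alpha}}$. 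Only the $+$ root lies in $(0,\infty)$, yielding the claimed $s^\ast = \tfrac{1}{\sqrt{1-\alpha}} - 1$.

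I would then verify feasibility and, more carefully, global optimality, which I expect to be the only genuinely delicate point rather than settling for a merely local statement. Feasibility $0 < s^\ast < \tfrac{\alpha}{1-\alpha}$ follows from $\sqrt{1-\alpha} < 1$ via elementary reciprocal inequalities. For global minimality I would argue from the sign of $\phi'$: since $\phi' = cq/D^2$ with $D^2 > 0$ and $c > 0$, its sign equals that of the upward parabola $q$, whose second root $-1-\tfrac{1}{\sqrt{1-\alpha}}$ is negative; hence $q < 0$ on $(0, s^\ast)$ and $q > 0$ on $(s^\ast, \tfrac{\alpha}{1-\alpha})$, so $\phi$ strictly decreases and then strictly increases, making $s^\ast$ the unique global minimizer. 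As a consistency check one also sees $\phi \to +\infty$ at both endpoints (at $s \to 0^+$ the denominator vanishes, and at $s \to (\tfrac{\alpha}{1-\alpha})^-$ one has $\theta \to 0^+$). Finally I would substitute $s^\ast$, using $1+s^\ast = \tfrac{1}{\sqrt{1-\alpha}}$ to obtain $\theta(\alpha,s^\ast) = 1 - c\cdot\tfrac{1}{\sqrt{c}} = 1-\sqrt{1-\alpha}$, and $1 + (s^\ast)^{-1} = \tfrac{1}{1-\sqrt{1-\alpha}}$ to obtain $\beta(\alpha,s^\ast) = \tfrac{1-\alpha}{1-\sqrt{1-\alpha}}$, which completes the claim.
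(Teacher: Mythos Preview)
Your proof is correct. The paper itself does not supply a proof of this lemma; it merely cites it as Lemma~3 of \citet{EF21}. Your direct calculus argument---reducing to a rational function, locating the unique feasible root of the quadratic $q(s) = (1-\alpha)s^2 + 2(1-\alpha)s - \alpha$, and certifying global optimality via the sign of $\phi'$---is exactly the standard route and is cleanly executed, including the care you take with the degenerate case $\alpha = 1$ and with feasibility of $s^\ast$.
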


\clearpage
\section{Cloning reformulation for Polyak-\L ojaschewitz functions}

For completeness, we also provide a series of convergence results under Polyak-\L ojasiewicz condition. We commence our exposition with the subsequent definition.

\begin{assumption}[Polyak-\L ojasiewicz]\label{as:PL}
	There exists a positive scalar $\mu > 0$ such that for all points $x \in \RR^d$, the following inequality is satisfied: 
	\begin{equation}\label{eq:PL}
		f(x) - f(x^\ast) \leq \frac{1}{2\mu}\|\nabla f(x)\|^2,
	\end{equation} where $x^\ast \eqdef \argmin f(x)$.
\end{assumption}

\begin{theorem}
Let Assumptions~\ref{as:smooth},~\ref{as:L_i},~and~\ref{as:PL} hold. Assume that $\cC_i^t \in \mathbb{C}(\alpha)$ for all $i \in [n]$ and $t \geq 0$. Consider   \Cref{alg:EF21} (\algname{EF21}) applied to the ``cloning'' reformulation~\ref{eq:cloning} of the distributed optimization problem~\eqref{eq:main_problem}, where $N_i^\ast = \lceil \frac{L_i}{\LAM} \rceil$ for all $i \in [n]$. Let the stepsize be set as
$$
0 \leq \gamma \leq \min\left\{ \left(L + \sqrt{2} \LAM \sqrt{\frac{2\beta}{\theta}} \right)^{-1}, \frac{\theta}{2\mu}\right\},
$$
where $\theta = 1 - \sqrt{1 - \alpha}$ and $\beta = \frac{1 - \alpha}{1 - \sqrt{1 - \alpha}}$.
Let $$\Psi^t \eqdef f(x^t) - f(x^\ast) + \frac{\gamma}{\theta} G^t .$$ Then, for any $T \geq 0$, we have
$$
\ExpBr{\Psi^T} \leq (1 - \gamma \mu)^T \Psi^0.
$$
\end{theorem}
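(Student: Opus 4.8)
The plan is to prove a one-step contraction of the Lyapunov function $\Psi^t = f(x^t)-f(x^\ast)+\frac{\gamma}{\theta}G^t$, namely $\ExpBr{\Psi^{t+1}}\leq(1-\gamma\mu)\Psi^t$ (conditional on the state at time $t$), and then unroll it over $T$ steps via the tower property to obtain $\ExpBr{\Psi^T}\leq(1-\gamma\mu)^T\Psi^0$. The entire argument runs on the $N^\star$-client reformulation \eqref{eq:cloning}, whose functions $f_{ij}$ are $L_{ij}$-smooth with $L_{ij}=\frac{N}{nN_i}L_i$; the only structural input from the cloning construction is that the quadratic mean of these constants satisfies $\frac{1}{N}\sum_{i,j}L_{ij}^2=M(N_1^\star,\dots,N_n^\star)^2\leq 2\LAMsq$ by \Cref{lem:sandwitch}.

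First I would apply the descent lemma \Cref{lm:descent_lemma} to $x^{t+1}=x^t-\gamma g^t$, obtaining $f(x^{t+1})\leq f(x^t)-\frac{\gamma}{2}\|\nabla f(x^t)\|^2-\left(\frac{1}{2\gamma}-\frac{L}{2}\right)\|x^{t+1}-x^t\|^2+\frac{\gamma}{2}\|g^t-\nabla f(x^t)\|^2$. Since $g^t=\frac{1}{N}\sum_{i,j}g_{ij}^t$ and $\nabla f(x^t)=\frac{1}{N}\sum_{i,j}\nabla f_{ij}(x^t)$, convexity of $\|\cdot\|^2$ (Jensen) gives $\|g^t-\nabla f(x^t)\|^2\leq G^t$, so the final term is controlled by $\frac{\gamma}{2}G^t$. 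Next I would establish the contraction of $G^t$: the residual $g_{ij}^{t+1}-\nabla f_{ij}(x^{t+1})$ is exactly the compression error of $\cC_{ij}^t$ applied to $\nabla f_{ij}(x^{t+1})-g_{ij}^t$, so \eqref{eq:compressor_contraction}, followed by Young's inequality \eqref{eq:young} with the optimal parameter from \Cref{lm:best_theta_beta_choice} and the $L_{ij}$-smoothness of $f_{ij}$, yields $\ExpBr{\|g_{ij}^{t+1}-\nabla f_{ij}(x^{t+1})\|^2}\leq(1-\theta)\|g_{ij}^t-\nabla f_{ij}(x^t)\|^2+\beta L_{ij}^2\|x^{t+1}-x^t\|^2$. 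Averaging over $i,j$ and inserting $\frac{1}{N}\sum_{i,j}L_{ij}^2\leq 2\LAMsq$ gives $\ExpBr{G^{t+1}}\leq(1-\theta)G^t+2\beta\LAMsq\|x^{t+1}-x^t\|^2$.

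I then combine the two pieces through $\ExpBr{\Psi^{t+1}}=\ExpBr{f(x^{t+1})-f(x^\ast)}+\frac{\gamma}{\theta}\ExpBr{G^{t+1}}$ and collect coefficients. The coefficient of $\|x^{t+1}-x^t\|^2$ is $-\frac{1}{2\gamma}+\frac{L}{2}+\frac{2\gamma\beta\LAMsq}{\theta}$, which is nonpositive exactly when $\frac{4\beta\LAMsq}{\theta}\gamma^2+L\gamma\leq 1$; by \Cref{lm:stepsize_bound} with $a=\frac{4\beta\LAMsq}{\theta}$ and $b=L$ this holds under $\gamma\leq\left(L+\sqrt{2}\LAM\sqrt{2\beta/\theta}\right)^{-1}$, because $\sqrt{a}=\sqrt{2}\LAM\sqrt{2\beta/\theta}$, so this term is dropped. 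The coefficient of $G^t$ equals $\frac{\gamma}{2}+\frac{\gamma(1-\theta)}{\theta}=\frac{\gamma}{\theta}\left(1-\frac{\theta}{2}\right)$, which is at most $(1-\gamma\mu)\frac{\gamma}{\theta}$ precisely when $\gamma\leq\frac{\theta}{2\mu}$, the second stepsize restriction. Finally the PL inequality \eqref{eq:PL} converts $-\frac{\gamma}{2}\|\nabla f(x^t)\|^2$ into $-\gamma\mu(f(x^t)-f(x^\ast))$, collapsing the function-value part to $(1-\gamma\mu)(f(x^t)-f(x^\ast))$. Assembling these gives $\ExpBr{\Psi^{t+1}}\leq(1-\gamma\mu)\Psi^t$, and since $\gamma\mu\leq\frac{\theta}{2}\leq\frac12$ keeps $1-\gamma\mu\in(0,1)$, iterating over $t=0,\dots,T-1$ completes the proof.

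I expect the main difficulty to be purely bookkeeping rather than conceptual: carefully tracking constants so that the Lyapunov weight $\frac{\gamma}{\theta}$, the contraction factor $1-\theta$, and the cloning bound $M^2\leq 2\LAMsq$ conspire to split cleanly into the two separate stepsize conditions that are then merged into the stated $\min$. The one place requiring care is justifying that the descent inequality and the $G^t$ contraction are combined under the correct conditioning (the step-$t$ update $x^{t+1}$ is deterministic given the time-$t$ state, whereas the contraction expectation is over the step-$t$ compressor randomness), so that the tower property applies at the end.
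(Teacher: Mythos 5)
Your proposal is correct, but it takes a genuinely more self-contained route than the paper. The paper disposes of this theorem in a single line: it is declared a corollary of Theorem 2 of \citet{EF21} (the known PL convergence result for \algname{EF21}) applied to the cloning reformulation \eqref{eq:cloning} with $N^\star$ machines, combined with \Cref{lem:sandwitch}, which supplies the only structural input from the cloning construction, namely $\frac{1}{N^\star}\sum_{i,j}L_{ij}^2 = M(N^\star_1,\dots,N^\star_n)^2 \leq 2\LAMsq$; this bound converts the cited theorem's dependence on the quadratic mean of the (cloned) smoothness constants into the stated constants. You instead re-derive that external theorem inline: descent lemma plus Jensen, the compressor contraction with the optimal Young parameter from \Cref{lm:best_theta_beta_choice}, the PL step, and the two stepsize conditions via \Cref{lm:stepsize_bound}. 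The substance is identical --- both arguments pivot on the same $\sqrt{2}$-approximation lemma --- but yours makes explicit the bookkeeping the paper hides in the citation: the check that $\sqrt{4\beta\LAMsq/\theta} = \sqrt{2}\,\LAM\sqrt{2\beta/\theta}$, so that \Cref{lm:stepsize_bound} with $a=\frac{4\beta\LAMsq}{\theta}$, $b=L$ reproduces exactly the first term of the stated $\min$; and the observation that the $G^t$ coefficient $\frac{\gamma}{\theta}\left(1-\frac{\theta}{2}\right)$ is absorbed into $(1-\gamma\mu)\frac{\gamma}{\theta}$ precisely when $\gamma\leq\frac{\theta}{2\mu}$, which is where the second term of the $\min$ comes from. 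What the paper's route buys is brevity and no duplication of the \algname{EF21} analysis; what yours buys is a proof readable without consulting the external reference, plus an explicit confirmation that the Lyapunov weight $\frac{\gamma}{\theta}$ (rather than the $\frac{\gamma}{2\theta}$ used in the nonconvex theorems of this paper) is the one compatible with the stated stepsize. Your final remark about conditioning (the descent step is deterministic given the time-$t$ state, the contraction averages over the step-$t$ compressor randomness, and the tower property stitches them together) is exactly the right justification and mirrors how the paper handles the analogous PL theorems for \algname{EF21-W}.
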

\begin{proof}
This theorem is a corollary of Theorem 2 in~\citep{EF21} and  \Cref{lem:sandwitch}.
\end{proof}

\clearpage
\section{Proof of  \Cref{thm:EF21-W} (Theory for EF21-W)}

In this section, we present a proof for  \Cref{thm:EF21-W}. To start this proof, we establish a corresponding contraction lemma. We define the following quantities:
\begin{equation}\label{eq:weighted_def_grad_distortion}
	G_i^{t} \eqdef \sqnorm{ g_i^t - \frac{\nabla f_i(x^{t})}{n w_i}}; \qquad G^t \eqdef \sumin w_i G_i^t,
\end{equation}
where the weights $w_i$ are defined as specified in  \Cref{alg:EF21-W}, that is, 
\begin{equation}\label{eq:weight_definition}
w_i = \frac{L_i}{\sum_{j=1}^n L_j}.
\end{equation}

\subsection{A lemma}

With these definitions in place, we are now prepared to proceed to the lemma.

\begin{lemma} Let $\cC_i^t\in \mathbb{C}(\alpha)$ for all $i\in [n]$ and $t\geq 0$. Let $W^t \eqdef \{g_1^t, g_2^t, \dots, g_n^t, x^t, x^{t+1}\}$. Then, for iterates of \Cref{alg:EF21-W} we have
	\begin{equation}\label{eq:weighted_ind_grad_dist_evolution}
	\ExpBr{ G_i^{t+1} \;|\; W^t} \leq (1-\theta(\alpha,s))   G_i^t + \beta(\alpha,s) \frac{1}{n^2 w_i^2} \sqnorm{\nabla f_i(x^{t+1}) - \nabla f_i(x^t)},
	\end{equation}
	and 
	\begin{equation}\label{eq:weighted_full_grad_dist_evolution}
	\ExpBr{G^{t+1} } {\leq} \rb{1 - \theta(\alpha,s)} \ExpBr{G^t} + \beta(\alpha,s) \LAMsq \ExpBr{ \sqnorm{x^{t+1} - x^t}},
	\end{equation}
	where $s > 0$ is an arbitrary positive scalar, and
	\begin{equation}\label{eq:theta-beta-def} \theta(\alpha,s) \eqdef 1- (1- \alpha )(1+s), \qquad \text{and} \qquad \beta(\alpha,s)\eqdef (1- \alpha ) \left(1+ s^{-1} \right). \end{equation}
\end{lemma}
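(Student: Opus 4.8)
The plan is to prove the two inequalities in sequence, deriving the per-client bound \eqref{eq:weighted_ind_grad_dist_evolution} first and then aggregating it into \eqref{eq:weighted_full_grad_dist_evolution} via the weights $w_i$. The engine of the whole argument is a single algebraic identity. Writing the argument fed into the compressor at client $i$ as $a_i^t \eqdef \frac{1}{n w_i}\nabla f_i(x^{t+1}) - g_i^t$, the update \eqref{eq:AlgStep2X} reads $g_i^{t+1} = g_i^t + \cC_i^t(a_i^t)$, and therefore
\[
g_i^{t+1} - \frac{\nabla f_i(x^{t+1})}{n w_i} = g_i^t + \cC_i^t(a_i^t) - \frac{\nabla f_i(x^{t+1})}{n w_i} = \cC_i^t(a_i^t) - a_i^t.
\]
Thus $G_i^{t+1} = \sqnorm{\cC_i^t(a_i^t) - a_i^t}$. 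Since conditioning on $W^t$ fixes $g_i^t$ and $x^{t+1}$, the vector $a_i^t$ is deterministic given $W^t$, so the contraction property \eqref{eq:compressor_contraction} applies directly and yields $\ExpBr{G_i^{t+1} \mid W^t} \le (1-\alpha)\sqnorm{a_i^t} = (1-\alpha)\sqnorm{g_i^t - \frac{\nabla f_i(x^{t+1})}{n w_i}}$.

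Next I would split the right-hand side around the ``correct'' target $\frac{\nabla f_i(x^t)}{n w_i}$. Applying Young's inequality \eqref{eq:young} with the free parameter $s>0$ to the decomposition $g_i^t - \frac{\nabla f_i(x^{t+1})}{n w_i} = \left(g_i^t - \frac{\nabla f_i(x^t)}{n w_i}\right) + \frac{1}{n w_i}\left(\nabla f_i(x^t) - \nabla f_i(x^{t+1})\right)$ gives
\[
\sqnorm{g_i^t - \frac{\nabla f_i(x^{t+1})}{n w_i}} \leq (1+s)\,G_i^t + (1+s^{-1})\,\frac{1}{n^2 w_i^2}\sqnorm{\nabla f_i(x^{t+1}) - \nabla f_i(x^t)}.
\]
Multiplying through by $(1-\alpha)$ and recognizing $(1-\alpha)(1+s) = 1 - \theta(\alpha,s)$ and $(1-\alpha)(1+s^{-1}) = \beta(\alpha,s)$ from \eqref{eq:theta-beta-def} produces \eqref{eq:weighted_ind_grad_dist_evolution} exactly.

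For \eqref{eq:weighted_full_grad_dist_evolution} I would take the $w_i$-weighted sum of \eqref{eq:weighted_ind_grad_dist_evolution}. The leading term assembles into $(1-\theta(\alpha,s))\sumin w_i G_i^t = (1-\theta(\alpha,s))G^t$. For the second term, invoking $L_i$-smoothness \eqref{eq:L_i} to bound $\sqnorm{\nabla f_i(x^{t+1}) - \nabla f_i(x^t)} \le L_i^2 \sqnorm{x^{t+1}-x^t}$ leaves the coefficient $\sumin w_i \cdot \frac{L_i^2}{n^2 w_i^2} = \frac{1}{n^2}\sumin \frac{L_i^2}{w_i}$. Substituting $w_i = \frac{L_i}{\sum_j L_j}$ collapses this to $\frac{1}{n^2}\left(\sumin L_i\right)^2 = \LAMsq$. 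Finally, taking total expectation via the tower property over the conditional bound gives \eqref{eq:weighted_full_grad_dist_evolution}.

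The two genuinely substantive steps — as opposed to routine manipulation — are the telescoping identity $g_i^{t+1} - \frac{\nabla f_i(x^{t+1})}{n w_i} = \cC_i^t(a_i^t) - a_i^t$, which lets the contraction bound attach cleanly to the weighted distortion, and the weight-induced collapse $\frac{1}{n^2}\sumin \frac{L_i^2}{w_i} = \LAMsq$. The latter is precisely where the improvement from $\LQM$ to $\LAM$ originates: it is the optimal-weights identity of \Cref{lem:ow} (with $a_i = L_i$), and the choice $w_i \propto L_i$ is exactly what turns the $\sum_i L_i^2/w_i$ aggregate into $\left(\sum_i L_i\right)^2$ rather than the $n\sum_i L_i^2$ one would get from uniform weights. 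I expect no real obstacle beyond bookkeeping; the only point requiring care is verifying that $a_i^t$ is $W^t$-measurable so that the contraction inequality can be applied conditionally without residual randomness.
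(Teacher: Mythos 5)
Your proposal is correct and follows essentially the same route as the paper's own proof: apply the contraction property \eqref{eq:compressor_contraction} conditionally on $W^t$, split around $\frac{\nabla f_i(x^t)}{n w_i}$ via Young's inequality \eqref{eq:young} with parameter $s$, then take the $w_i$-weighted sum, invoke \Cref{as:L_i}, and use $w_i = \nicefrac{L_i}{\sum_j L_j}$ to collapse $\frac{1}{n^2}\sum_i \nicefrac{L_i^2}{w_i}$ into $\LAMsq$ before finishing with the tower property. The only cosmetic difference is that you make explicit the $W^t$-measurability of the compressor's argument, which the paper leaves implicit.
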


\begin{proof}
	
The proof is straightforward and bears resemblance to a similar proof found in a prior work \citep{EF21}.
	\begin{eqnarray*}
		\ExpBr{ G_i^{t+1} \;|\; W^t} & \overset{\eqref{eq:weighted_def_grad_distortion}}{=} & \ExpBr{  \sqnorm{g_i^{t+1} 
				- \frac{\nabla f_i(x^{t+1})}{n w_i}}  \;|\; W^t}	 \\
		&= & \ExpBr{  \sqnorm{g_i^t + \cC_i^t \left( \frac{\nabla f_i(x^{t+1})}{n w_i} - g_i^t \right) 
				- \frac{\nabla f_i(x^{t+1})}{n w_i}}  \;|\; W^t}	 \\
		&\overset{\eqref{eq:compressor_contraction}}{\leq} &  (1-\alpha) 
		\sqnorm{\frac{\nabla f_i(x^{t+1})}{n w_i} - g_i^t} \\
		& = & (1-\alpha) 
		\sqnorm{\frac{\nabla f_i(x^{t+1})}{n w_i} - \frac{\nabla f_i(x^{t})}{n w_i} + \frac{\nabla f_i(x^{t})}{n w_i} -  g_i^t} \\
		&\overset{\eqref{eq:young}}{\leq}& (1-\alpha) (1+ s) \sqnorm{ \frac{\nabla f_i(x^{t})}{n w_i} - g_i^t}\\
		&& \qquad  + (1-\alpha)  \left(1+s^{-1}\right) \frac{1}{n^2 w_i^2} 
		\sqnorm{\nabla f_i(x^{t+1}) - \nabla f_i(x^t)}, 
	\end{eqnarray*}
with the final inequality holding for any positive scalar $s > 0$. Consequently, we have successfully established the first part of the lemma.

By employing~\eqref{eq:weighted_def_grad_distortion} and the preceding inequality, we can derive the subsequent bound for the conditional expectation of $G^{t+1}$:
	\begin{eqnarray}
	\ExpBr{G^{t+1} \mid W^t } &\overset{\eqref{eq:weighted_def_grad_distortion}}{=} & \ExpBr{\sumin w_i G_i^{t+1} \mid W^t} \notag \\
	&\overset{\eqref{eq:weighted_def_grad_distortion}}{=} & \sumin w_i \ExpBr{\sqnorm{ g_i^{t+1} - \frac{\nabla f_i(x^{t+1})}{n w_i}} \mid W^t}  \notag \\
	&\overset{\eqref{eq:weighted_ind_grad_dist_evolution}}{\leq} & \rb{1 - \theta(\alpha,s)} \sumin w_i 
	\sqnorm{g_i^t - \frac{\nabla f_i(x^{t})}{n w_i} } \notag \\
	&& \quad+ \beta(\alpha,s) \sumin \frac{w_i}{w_i^2 n^2} \sqnorm {\nabla f_i(x^{t+1}) - \nabla f_i(x^t)}  .\label{eq:weighted_aux_1}
	\end{eqnarray}
Applying \Cref{as:L_i} and~\eqref{eq:weight_definition}, we further proceed to:
\begin{eqnarray}
	\ExpBr{G^{t+1} \mid W^t } &\overset{\eqref{eq:weighted_aux_1}}{\leq} &  \rb{1 - \theta(\alpha,s)} \sumin w_i 
	\sqnorm{g_i^t - \frac{\nabla f_i(x^{t})}{n w_i} } 
	+ \beta(\alpha,s) \sumin \frac{w_i}{w_i^2 n^2} \sqnorm {\nabla f_i(x^{t+1}) - \nabla f_i(x^t)}  \notag \\
	&\overset{\eqref{eq:weighted_def_grad_distortion}}{=}  & \rb{1 - \theta(\alpha,s)}G^t+ \beta(\alpha,s) \sumin \frac{1}{w_i n^2} \sqnorm {\nabla f_i(x^{t+1}) - \nabla f_i(x^t)}   \notag \\
	&\overset{\eqref{eq:L_i}}{\leq} &  \rb{1 - \theta(\alpha,s)}G^t
	+ \beta(\alpha,s) \rb{\sumin \frac{L_i^2}{w_i n^2} }\sqnorm{x^{t+1} - x^t}   \notag \\
	&\overset{\eqref{eq:weight_definition}}{=} & \rb{1 - \theta(\alpha,s)}G^t
	+ \beta(\alpha,s) \rb{\sumin \frac{L_i^2}{\frac{L_i}{\sumjn L_j} n^2}}\sqnorm{x^{t+1} - x^t}   \notag \\
	&=& \rb{1 - \theta(\alpha,s)}G^t
	+ \beta(\alpha,s) \rb{\sumin \frac{L_i \sumjn L_j}{n^2}}\sqnorm{x^{t+1} - x^t}  \notag  \\
	&= & \rb{1 - \theta(\alpha,s)}G^t
	+ \beta(\alpha,s) \LAMsq \sqnorm{x^{t+1} - x^t} .\label{eq:weighted_aux_2}
\end{eqnarray}

Using the tower property,  we get 
\begin{eqnarray*}
	\ExpBr{G^{t+1}} = \ExpBr{\ExpBr{G^{t+1} \mid W^t}} \overset{\eqref{eq:weighted_aux_2}}{\leq} \rb{1 - \theta(\alpha,s)}\ExpBr{G^t}+ \beta(\alpha,s) \LAMsq \ExpBr{ \sqnorm{x^{t+1} - x^t}},
\end{eqnarray*}
and this finalizes the proof.
\end{proof}

\subsection{Main result}

We are now prepared to establish the proof for  \Cref{thm:EF21-W}.
\begin{proof} Note that, according to~\eqref{eq:AlgStep1X}, the gradient estimate for \Cref{alg:EF21-W} gets the following form:
\begin{equation}\label{eq:weighted_grad_estimate_def}
g^t=\sum\limits_{i=1}^{n} w_i g_i^t.
\end{equation}
Using \Cref{lm:descent_lemma} and Jensen's inequality applied to the function $x\mapsto \sqnorm{x}$ (since $\sumin w_i = 1$), we obtain the following bound:
\begin{eqnarray}
	f(x^{t+1}) &\overset{\eqref{eq:descent_lemma}}{\leq} & 
	f(x^{t})-\frac{\gamma}{2}\sqnorm{\nabla f(x^{t})}-\left(\frac{1}{2 \gamma}-\frac{L}{2}\right)\sqnorm{x^{t+1}-x^{t}}+\frac{\gamma}{2}\sqnorm{
		g^t- \sumin \nabla f_i(x^{t}) } \notag \\
	& \overset{\eqref{eq:weighted_grad_estimate_def}}{=} & 
	f(x^{t})-\frac{\gamma}{2}\sqnorm{\nabla f(x^{t})}-\left(\frac{1}{2 \gamma}-\frac{L} 
	{2}\right)\sqnorm{x^{t+1}-x^{t}}+\frac{\gamma}{2}\sqnorm{
		\sumin w_i \left(g_i^t-  \frac{\nabla f_i(x^{t})}{n w_i} \right) }   \notag \\
	& \leq & 
	f(x^{t})-\frac{\gamma}{2}\sqnorm{\nabla f(x^{t})}-\left(\frac{1}{2 \gamma}-\frac{L}       
	{2}\right)\sqnorm{x^{t+1}-x^{t}}+\frac{\gamma}{2}
	\sumin w_i \sqnorm{ g_i^t-  \frac{\nabla f_i(x^{t})}{n w_i}  }  \notag \\
	& \overset{\eqref{eq:weighted_def_grad_distortion}}{=}  & f(x^{t})-\frac{\gamma}{2}\sqnorm{\nabla f(x^{t})}-\left(\frac{1}{2 \gamma}-\frac{L}       
	{2}\right)\sqnorm{x^{t+1}-x^{t}}+\frac{\gamma}{2} G^t.\label{eq:weighted_aux_3}
\end{eqnarray}

Subtracting $f^\ast$ from both sides and taking expectation, we get
\begin{align}
	\ExpBr{f(x^{t+1})-f^\ast} & \leq  \ExpBr{f(x^{t})-f^\ast}
	-\frac{\gamma}{2} \ExpBr{\sqnorm{\nabla f(x^{t})}}  \notag \\
	& \qquad -\left(\frac{1}{2 \gamma}
	-\frac{L}{2}\right) \ExpBr{\sqnorm{x^{t+1}-x^{t}}}+ \frac{\gamma}{2}\ExpBr{G^t}.\label{eq:weighted_function_descent}
\end{align}

Let $\delta^{t} \eqdef \ExpBr{f(x^{t})-f^\ast}$, $s^{t} \eqdef \ExpBr{G^t }$ and 
$r^{t} \eqdef\ExpBr{\sqnorm{x^{t+1}-x^{t}}}.$
Subsequently, by adding \eqref{eq:weighted_full_grad_dist_evolution} with a $\frac{\gamma}{2 \theta(\alpha,s)}$ multiplier, we obtain 
\begin{eqnarray*}
	\delta^{t+1}+\frac{\gamma}{2 \theta(\alpha,s)} s^{t+1} &\overset{\eqref{eq:weighted_function_descent}}{\leq} & \delta^{t}-\frac{\gamma}{2}\sqnorm{\nabla f(x^{t})}-\left(\frac{1}{2 \gamma}-\frac{L}{2}\right) r^{t}+\frac{\gamma}{2} s^{t} + \frac{\gamma}{2 \theta} s^{t+1}\\
	&\overset{\eqref{eq:weighted_full_grad_dist_evolution}}{\leq}& \delta^{t}-\frac{\gamma}{2}\sqnorm{\nabla f(x^{t})}-          \left(\frac{1}{2 \gamma}-\frac{L}{2}\right) r^{t}+\frac{\gamma}{2} s^{t} \notag \\ 
& & \qquad	+\frac{\gamma}{2 \theta(\alpha,s)}\left(\beta(\alpha,s) \LAMsq r^t + (1 - \theta(\alpha,s)) s^{t}\right) \\
	&=&\delta^{t}+\frac{\gamma}{2\theta(\alpha,s)} s^{t}-\frac{\gamma}{2}\sqnorm{\nabla f(x^{t})}-\left(\frac{1}{2\gamma} -\frac{L}{2} - \frac{\gamma}{2\theta(\alpha,s)}\beta(\alpha,s) \LAMsq \right) r^{t} \\
	& \leq& \delta^{t}+\frac{\gamma}{2\theta(\alpha,s)} s^{t} -\frac{\gamma}{2}\sqnorm{\nabla f(x^{t})}.
\end{eqnarray*}
The last inequality is a result of the bound $\gamma^2\frac{\beta(\alpha,s) \LAMsq}{\theta(\alpha,s)} + L\gamma \leq 1,$ which is satisfied for the stepsize $$\gamma \leq \frac{1}{L + \LAM \xi(\alpha,s)},$$ 
where $ \xi(\alpha,s)\eqdef \sqrt{\frac{\beta(\alpha,s)}{\theta(\alpha,s)}}$.
Maximizing the stepsize bound over the choice of $s$ using  \Cref{lm:best_theta_beta_choice}, we obtain the final stepsize.
By summing up inequalities for $t =0, \ldots, T-1,$ we get
$$
0 \leq \delta^{T}+\frac{\gamma}{2 \theta} s^{T} \leq \delta^{0}+\frac{\gamma}{2 \theta} s^{0}-\frac{\gamma}       {2} \sum_{t=0}^{T-1} \ExpBr{\sqnorm{\nabla f(x^{t})}}.
$$
Multiplying both sides by $\frac{2}{\gamma T}$, after rearranging we get
$$
\sum_{t=0}^{T-1} \frac{1}{T} \ExpBr{\sqnorm{\nabla f (x^{t})}} \leq \frac{2 \delta^{0}}{\gamma T} + \frac{s^0}      {\theta T}.
$$
It remains to notice that the left hand side can be interpreted as $\ExpBr{ \sqnorm{\nabla f(\hat{x}^{T})} }$, where $\hat{x}^{T}$ is chosen from $\{ x^{0}, x^{1}, \ldots, x^{T-1} \}$ uniformly at random.
\end{proof}

\subsection{Main result for Polyak-\L ojasiewicz functions}

The main result is presented next.

\begin{theorem}
Let Assumptions~\ref{as:smooth},~\ref{as:L_i}, and~\ref{as:PL} hold. Assume that $\cC_i^t \in \mathbb{C}(\alpha)$ for all $i\in [n]$ and $t\geq 0$. Let the stepsize in \Cref{alg:EF21-W} be set as 
$$
0 < \gamma \leq \min\left\{ \frac{1}{L + \sqrt{2} \LAM  \xi(\alpha)}, \frac{\theta(\alpha)}{2\mu}\right\}.
$$
Let $$\Psi^t \eqdef f(x^t) - f(x^\ast) + \frac{\gamma}{\theta} G^t.$$ Then, for any $T>0$ the following inequality holds:
\begin{equation}
\ExpBr{\Psi^T} \leq (1 - \gamma \mu)^T \Psi^0.
\end{equation}
\end{theorem}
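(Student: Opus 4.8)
The plan is to establish a one-step contraction of the Lyapunov function $\Psi^t = f(x^t) - f(x^\ast) + \frac{\gamma}{\theta} G^t$ in expectation, namely $\ExpBr{\Psi^{t+1}} \leq (1-\gamma\mu)\ExpBr{\Psi^t}$, and then unroll it over $t = 0,1,\dots,T-1$. The two ingredients are already available from the proof of \Cref{thm:EF21-W}: the weighted descent inequality \eqref{eq:weighted_function_descent} and the distortion recursion \eqref{eq:weighted_full_grad_dist_evolution}. Writing $\delta^t \eqdef \ExpBr{f(x^t)-f^\ast}$, $s^t \eqdef \ExpBr{G^t}$, and $r^t \eqdef \ExpBr{\sqnorm{x^{t+1}-x^t}}$, the key difference from the nonconvex case is the choice of the potential weight: I would combine \eqref{eq:weighted_function_descent} with \eqref{eq:weighted_full_grad_dist_evolution} multiplied by $\frac{\gamma}{\theta}$ (rather than the $\frac{\gamma}{2\theta}$ used before), so that $\ExpBr{\Psi^{t+1}} = \delta^{t+1} + \frac{\gamma}{\theta}s^{t+1}$ is bounded by
$$\delta^t - \frac{\gamma}{2}\ExpBr{\sqnorm{\nabla f(x^t)}} - \left(\frac{1}{2\gamma} - \frac{L}{2} - \frac{\gamma}{\theta}\beta\LAMsq\right) r^t + \left(\frac{\gamma}{2} + \frac{\gamma(1-\theta)}{\theta}\right) s^t.$$

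Next I would handle the two stepsize constraints separately, since each is responsible for one term. The coefficient of $r^t$ is nonnegative precisely when $\frac{2\beta\LAMsq}{\theta}\gamma^2 + L\gamma \leq 1$; by \Cref{lm:stepsize_bound} with $a = \frac{2\beta\LAMsq}{\theta}$ and $b=L$ this holds whenever $\gamma \leq \frac{1}{L + \sqrt{2}\LAM\xi(\alpha)}$, which is the first entry of the stepsize minimum and explains the appearance of the factor $\sqrt{2}$ (it is a direct consequence of the doubled potential weight). Dropping the now-nonpositive $r^t$ term and simplifying the coefficient of $s^t$ to $\frac{\gamma}{\theta} - \frac{\gamma}{2}$, I would then invoke the PL inequality \eqref{eq:PL} in expectation, i.e. $\ExpBr{\sqnorm{\nabla f(x^t)}} \geq 2\mu\delta^t$, to convert the negative gradient term into $-\gamma\mu\delta^t$, yielding $\ExpBr{\Psi^{t+1}} \leq (1-\gamma\mu)\delta^t + \left(\frac{\gamma}{\theta} - \frac{\gamma}{2}\right)s^t$.

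Finally, to recognize the right-hand side as $(1-\gamma\mu)\ExpBr{\Psi^t} = (1-\gamma\mu)\delta^t + (1-\gamma\mu)\frac{\gamma}{\theta}s^t$, it remains to verify $\frac{\gamma}{\theta} - \frac{\gamma}{2} \leq (1-\gamma\mu)\frac{\gamma}{\theta}$, which rearranges to $\gamma \leq \frac{\theta}{2\mu}$ --- exactly the second entry of the stepsize minimum. This establishes the one-step contraction, and unrolling together with the determinism of $\Psi^0$ yields $\ExpBr{\Psi^T} \leq (1-\gamma\mu)^T\Psi^0$. The main obstacle I anticipate is not any isolated computation but the coupled bookkeeping of the two competing stepsize conditions: the potential weight $\frac{\gamma}{\theta}$ must be large enough for the distortion term $s^t$ to contract at the common rate $1-\gamma\mu$, yet enlarging it simultaneously tightens the $r^t$ constraint and forces the $\sqrt{2}$ penalty in the gradient-descent-like part of the stepsize. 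Getting both $\gamma$-bounds to emerge cleanly (one from \Cref{lm:stepsize_bound}, one from the PL balancing) is the delicate part; one could optionally optimize the Young's-inequality parameter via \Cref{lm:best_theta_beta_choice} to present the final $\theta,\beta$ in closed form.
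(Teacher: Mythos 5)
Your proposal is correct and follows essentially the same route as the paper's own proof: the same Lyapunov function with the doubled weight $\frac{\gamma}{\theta}$, the same combination of the weighted descent inequality with the contraction recursion \eqref{eq:weighted_full_grad_dist_evolution}, the same two stepsize conditions (the $r^t$-coefficient handled via \Cref{lm:stepsize_bound} with $a=\frac{2\beta\LAMsq}{\theta}$, producing the $\sqrt{2}$ factor, and the $s^t$-coefficient balanced against $1-\gamma\mu$ via $\gamma\le\frac{\theta}{2\mu}$), followed by unrolling. The only difference is cosmetic---you invoke the PL inequality after dropping the $r^t$ term while the paper applies it immediately after the descent lemma---which does not change the argument.
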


	\begin{proof}
	We proceed as in the previous proof, starting from the descent lemma with the same vector but using the PL inequality and            subtracting $f(x^\star)$ from both sides:
	\begin{eqnarray}
		\ExpBr{f(x^{t+1})- f(x^\star)}  &\overset{\eqref{eq:descent_lemma}}{\leq} & \ExpBr{ f(x^{t})- f(x^\star) }-\frac{\gamma}{2}\sqnorm{\nabla f(x^{t})}-\left(\frac{1}{2 \gamma}-\frac{L}{2}\right)\sqnorm{x^{t+1}-x^{t}}+\frac{\gamma}{2} G^t \notag \\ 
		& \overset{\eqref{eq:PL}}{\leq}  & (1-\gamma \mu) \ExpBr{ f(x^{t})- f(x^\star) } - \left(\frac{1}{2 \gamma}-\frac{L}{2}\right)\sqnorm{x^{t+1}-x^{t}}+\frac{\gamma}{2} G^t. \label{eq:weighted_aux_4}
	\end{eqnarray}
	
	Let $\delta^{t} \eqdef \ExpBr{f(x^{t})-f(x^\star)}$, $s^{t} \eqdef \ExpBr{G^t }$ and $ r^{t} \eqdef \ExpBr{\sqnorm{x^{t+1}-x^{t}}}$. Thus, 
	\begin{eqnarray*}
		\delta^{t+1}+\frac{\gamma}{ \theta} s^{t+1} &\overset{\eqref{eq:weighted_aux_4}}{\leq}& (1-\gamma \mu)\delta^{t} -\left(\frac{1}{2 \gamma} - \frac{L}{2}\right) r^{t} + \frac{\gamma}{2} s^{t} + \frac{\gamma}{ \theta} s^{t+1} \\
		& \overset{\eqref{eq:weighted_full_grad_dist_evolution}}{\leq}  & (1-\gamma \mu)\delta^{t} -\left(\frac{1}{2 \gamma} - \frac{L}{2}\right) r^{t} + \frac{\gamma}{2} s^{t} + \frac{\gamma}{ \theta}\left( (1-\theta) s^t + \beta \left(\avein L_i \right)^2 r^t       \right) \\
		&=&  (1-\gamma \mu) \delta^{t}  + \frac{\gamma}{\theta} \left(1-\frac{\theta}{2}\right) s^t  -\left(\frac{1}{2 \gamma} - \frac{L}{2} - \frac{\beta \LAMsq \gamma}{\theta}\right) r^{t},
	\end{eqnarray*}
	where $\theta$ and $\beta$ are set as in \Cref{lm:best_theta_beta_choice}.
	Note that our extra assumption on the stepsize implies that 	$ 1 - \frac{\theta}{2} \leq 1 -\gamma \mu$ and $$\frac{1}{2 \gamma} - \frac{L}{2} - \frac{\beta \LAMsq \gamma}{\theta} \geq 0.$$ The last inequality follows from the bound $\gamma^2\frac{2\beta \LAMsq}{\theta} + \gamma L \leq 1$. Thus,
	\begin{eqnarray*}
		\delta^{t+1}+\frac{\gamma}{ \theta} s^{t+1} \leq (1-\gamma \mu) \left(\delta^{t}+\frac{\gamma}{ \theta} s^{t} \right).
	\end{eqnarray*}
	It remains to unroll the recurrence.
\end{proof}

\clearpage
\section{Proof of  \Cref{thm:ef21_new_result} (Improved Theory for EF21)}
We commence by redefining gradient distortion as follows:
\begin{equation}\label{eq:new_def_distortion}
	G^{t} \eqdef \frac{1}{n^2} \sumin \frac{1}{w_i} \|\nabla f_i(x^t) - g^t_i\|^2.
\end{equation}

We recall that the gradient update step for standard \algname{EF21} (\Cref{alg:EF21}) takes the following form:
\begin{align}
&g_i^{t+1} = g_i^t + \cC_i^t(\nabla f_i(x^{t+1}) - g_i^t), \label{eq:standard_ind_grad_upd} \\
&g^{t+1} = \avein g_i^{t+1} \label{eq:standard_average_grad}.
\end{align}

\subsection{Two lemmas}

Once more, we start our proof with the contraction lemma.
\begin{lemma}
	Let $\cC_i^t \in \mathbb{C}(\alpha)$ for all $i \in [n]$ and $t\geq 0$. Define $W^t \eqdef \{g_1^t, g_2^t, \dots, g_n^t, x^t, x^{t+1}\}$. Let \Cref{as:L_i} hold. Then
	\begin{equation}\label{eq:grad_distortion_evolution}
		\ExpBr{G^{t+1}\; | \;  W^t} \leq (1 - \theta(\alpha,s)) G^t + \beta(\alpha,s) \LAMsq \|x^{t+1} - x^t \|^2,
	\end{equation}
	where $\theta(\alpha,s) \eqdef 1 -  (1 - \alpha)(1+s)$ and $\beta(\alpha,s) \eqdef (1 - \alpha) (1 + s^{-1})$ for any $s > 0$.
\end{lemma}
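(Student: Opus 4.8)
The plan is to mirror the contraction-lemma argument already used for \algname{EF21-W}, except that now the smoothness weights live inside the distortion \eqref{eq:new_def_distortion} rather than inside the algorithm. I would first set $G_i^t \eqdef \frac{1}{n^2 w_i}\sqnorm{\nabla f_i(x^t) - g_i^t}$ so that $G^t = \sumin G_i^t$, and bound each conditional expectation $\ExpBr{G_i^{t+1}\mid W^t}$ individually before summing.

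First I would expand $\nabla f_i(x^{t+1}) - g_i^{t+1}$ using the standard \algname{EF21} update \eqref{eq:standard_ind_grad_upd} as $(\nabla f_i(x^{t+1}) - g_i^t) - \cC_i^t(\nabla f_i(x^{t+1}) - g_i^t)$, and apply the contraction property \eqref{eq:compressor_contraction} to pull out a factor $(1-\alpha)$ in front of $\sqnorm{\nabla f_i(x^{t+1}) - g_i^t}$. Inserting $\pm \nabla f_i(x^t)$ and invoking Young's inequality \eqref{eq:young} with free parameter $s>0$ then splits this into $(1-\alpha)(1+s)\sqnorm{\nabla f_i(x^t) - g_i^t}$ plus $(1-\alpha)(1+s^{-1})\sqnorm{\nabla f_i(x^{t+1}) - \nabla f_i(x^t)}$; by the definitions of $\theta(\alpha,s)$ and $\beta(\alpha,s)$ these two coefficients are exactly $1-\theta(\alpha,s)$ and $\beta(\alpha,s)$.

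Next I would multiply this per-client bound by $\frac{1}{n^2 w_i}$ and sum over $i\in[n]$. The distortion terms reassemble, by \eqref{eq:new_def_distortion}, into $(1-\theta(\alpha,s))G^t$, while the gradient-difference terms collect into $\beta(\alpha,s)\frac{1}{n^2}\sumin \frac{1}{w_i}\sqnorm{\nabla f_i(x^{t+1}) - \nabla f_i(x^t)}$. Bounding each summand by $L_i$-smoothness \eqref{eq:L_i} and then substituting $w_i = L_i/\sumjn L_j$ turns $\frac{1}{n^2}\sumin \frac{L_i^2}{w_i}$ into $\frac{1}{n^2}\sumin L_i \sumjn L_j = \frac{1}{n^2}\br{\sumin L_i}^2 = \LAMsq$, so the coefficient of $\sqnorm{x^{t+1}-x^t}$ is precisely $\beta(\alpha,s)\LAMsq$. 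A final application of the tower property removes the conditioning and yields \eqref{eq:grad_distortion_evolution}.

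I expect the only genuinely load-bearing step to be this last cancellation: it is exactly where the reciprocal weight $1/w_i$ eats one factor of $L_i$, leaving a single power of $L_i$ that sums to the arithmetic mean rather than producing the $\sumin L_i^2$ that would give the quadratic mean $\LQMsq$. Everything preceding it is a routine, essentially verbatim, adaptation of the original \algname{EF21} contraction lemma, so the conceptual content of the proof is entirely concentrated in the choice of weights in \eqref{eq:new_def_distortion} together with this algebraic identity.
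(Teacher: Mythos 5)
Your proposal is correct and follows essentially the same route as the paper's proof: expand $g_i^{t+1}$ via the \algname{EF21} update, apply the contraction property \eqref{eq:compressor_contraction}, insert $\pm\nabla f_i(x^t)$ with Young's inequality to produce the coefficients $1-\theta(\alpha,s)$ and $\beta(\alpha,s)$, then use $L_i$-smoothness and the substitution $w_i = L_i/\sum_j L_j$ to collapse $\frac{1}{n^2}\sumin L_i^2/w_i$ into $\LAMsq$ --- exactly the paper's argument, up to your harmless bookkeeping choice of folding the weight $1/(n^2 w_i)$ into per-client quantities $G_i^t$. One cosmetic remark: the target inequality \eqref{eq:grad_distortion_evolution} is itself the conditional bound, so your final appeal to the tower property is unnecessary (it would only be needed to pass to unconditional expectations, which the paper does later, inside the proof of the main theorem).
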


\begin{proof}
	The proof of this lemma starts as the similar lemma in the standard analysis of \algname{EF21}:
	\begin{eqnarray}
			\ExpBr{G^{t+1}\; | \;  W^t} & \overset{\eqref{eq:new_def_distortion}}{=}  & \frac{1}{n^2}\sumin \frac{1}{w_i}\ExpBr{ \|\nabla f_i(x^{t+1}) - g^{t+1}_i\|^2\; | \;  W^t}  \notag \\
			& \overset{\eqref{eq:standard_ind_grad_upd}}{=} & \frac{1}{n^2} \sumin \frac{1}{w_i} \ExpBr{\|g_i^{t} + \cC_i^t(\nabla f_i(x^{t+1}) - g_i^t) -\nabla f_i(x^{t+1}) \|^2 \;|\; W^t}  \notag \\
			& \overset{\eqref{eq:compressor_contraction}}{\leq} & \frac{1}{n^2} \sumin \frac{1 - \alpha}{w_i}  \|\nabla f_i(x^{t+1}) - g_i^t) \|^2  \notag \\
			& = & \frac{1}{n^2} \sumin \frac{1 - \alpha}{w_i}  \|\nabla f_i(x^{t+1}) - \nabla f_i(x^t) + \nabla f_i(x^t) - g_i^t) \|^2 \notag \\
			& \overset{\eqref{eq:young}}{\leq} & \frac{1}{n^2} \sumin \frac{1 - \alpha}{w_i}  \left((1 + s^{-1}) \|\nabla f_i(x^{t+1}) - \nabla f_i(x^t)) \|^2 + (1 + s) \|g_i^t - \nabla f_i(x^t) \|^2 \right) \notag \\
&& 			 \label{eq:aux1}
	\end{eqnarray}
for all  $s > 0$.		
We proceed the proof as follows:
	\begin{eqnarray}
			\ExpBr{G^{t+1} \;|\; W^t}  &\overset{\eqref{eq:aux1}}{\leq} & \frac{1}{n^2} \sumin \frac{1 - \alpha}{w_i}  \left((1 + s^{-1}) \|\nabla f_i(x^{t+1}) - \nabla f_i(x^t)) \|^2 + (1 + s) \|g_i^t - \nabla f_i(x^t) \|^2 \right) \notag \\
			& =  & (1 - \theta(\alpha,s)) \frac{1}{n^2} \sumin \frac{1}{w_i} \|g_i^t - \nabla f_i(x^t)\|^2  + \frac{\beta(\alpha,s)}{n^2} \sumin \frac{1}{w_i} \|\nabla f_i(x^{t+1}) - \nabla f_i(x^t)) \|^2  \notag \\
			& \overset{\eqref{eq:new_def_distortion}}{=} & (1 - \theta(\alpha,s)) G^t + \frac{\beta(\alpha,s)}{n^2} \sumin \frac{1}{w_i } \|\nabla f_i(x^{t+1}) - \nabla f_i(x^t) \|^2  \notag \\
			& \overset{\eqref{eq:L_i}}{\leq} & (1 - \theta(\alpha,s)) G^t + \frac{\beta(\alpha,s)}{n^2} \sumin \frac{L_i^2}{w_i} \|x^{t+1} - x^t \|^2.\label{eq:aux2}
	\end{eqnarray}
	Note that this is the exact place where the current analysis differs from the standard one. It fully coincides with it when $w_i = \frac{1}{n}$, i.e., when we assign the same weight for each individual gradient distortion $\|g_i^t - \nabla f_i(x^t) \|^2$. However, applying weights according to ``importance'' of each function, we proceed as follows:
	\begin{eqnarray*}
		\ExpBr{G^{t+1}\; | \; W^t} &\overset{\eqref{eq:aux2}}{\leq}  & (1 - \theta(\alpha,s)) G^t + \frac{\beta(\alpha,s)}{n^2} \sumin \frac{L_i^2}{w_i} \|x^{t+1} - x^t \|^2 \\
		& \overset{\eqref{eq:weight_definition}}{=}  & (1 - \theta(\alpha,s)) G^t + \frac{\beta(\alpha,s)}{n^2} \sumin \frac{L_i^2}{L_i} \left(\sum\limits_{i=1}^n L_i\right) \|x^{t+1} - x^t \|^2 \\
		& =  & (1 - \theta(\alpha,s)) G^t + \frac{\beta(\alpha,s)}{n^2} \sum_j L_j \left(\sum\limits_{i=1}^n L_i\right) \|x^{t+1} - x^t \|^2 \\
		& = & (1 - \theta(\alpha,s)) G^t + \beta(\alpha,s) \LAMsq \|x^{t+1} - x^t \|^2,
	\end{eqnarray*}
	what finishes the proof.
\end{proof}
To prove the main convergence theorem, we also need the following lemma.
\begin{lemma}
	For the variable $g^t$ from \Cref{alg:EF21}, the following inequality holds:
	\begin{equation}\label{eq:distortion_connection}
		\|g^t - \nabla f(x^t) \|^2 \leq G^t.
	\end{equation}
\end{lemma}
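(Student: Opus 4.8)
The plan is to recognize the inequality as a direct consequence of the convexity of the squared norm, applied with the weights $w_i$. First I would unpack the two sides in terms of the per-client residuals $a_i \eqdef g_i^t - \nabla f_i(x^t)$. Since $g^t = \avein g_i^t$ (see \Cref{alg:EF21}) and $\nabla f(x^t) = \avein \nabla f_i(x^t)$ by the definition of $f$ in \eqref{eq:main_problem}, the left-hand side is $\sqnorm{g^t - \nabla f(x^t)} = \sqnorm{\avein a_i}$, whereas the right-hand side is $G^t = \frac{1}{n^2}\sumin \frac{1}{w_i}\sqnorm{a_i}$ by \eqref{eq:new_def_distortion}. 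After clearing the common factor $\nicefrac{1}{n^2}$, the claim reduces to $\sqnorm{\sumin a_i} \leq \sumin \frac{1}{w_i}\sqnorm{a_i}$.

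The key step is to insert the weights multiplicatively. Because $w_i > 0$ and $\sumin w_i = 1$ (recall $w_i = L_i/\sum_j L_j$), I would write $\sumin a_i = \sumin w_i \cdot \frac{a_i}{w_i}$ and apply Jensen's inequality to the convex function $z \mapsto \sqnorm{z}$ with weights $w_i$ evaluated at the points $\nicefrac{a_i}{w_i}$. This yields $\sqnorm{\sumin w_i \frac{a_i}{w_i}} \leq \sumin w_i \sqnorm{\frac{a_i}{w_i}} = \sumin \frac{\sqnorm{a_i}}{w_i}$, which is exactly the reduced claim. Equivalently, one reaches the same bound in one line from the Cauchy--Schwarz inequality by writing $a_i = \sqrt{w_i}\cdot \frac{a_i}{\sqrt{w_i}}$ and using $\sumin w_i = 1$.

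There is no genuine obstacle here; the only points requiring care are bookkeeping ones, namely tracking the $\nicefrac{1}{n}$ factors so that the $n^{-2}$ cancels cleanly, and checking that the weights are admissible for Jensen (positivity and normalization $\sumin w_i = 1$), both of which follow immediately from $w_i = L_i/\sum_j L_j$. It is worth noting that the inequality holds for \emph{any} probability weights and is tight precisely when the residuals are aligned in the proportion $a_i \propto w_i$; the particular choice of $w_i$ therefore plays no role in this lemma and enters only earlier, through the contraction estimate \eqref{eq:grad_distortion_evolution}.
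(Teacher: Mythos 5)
Your proof is correct and is essentially identical to the paper's: the paper also rewrites $g^t - \nabla f(x^t) = \sumin w_i \cdot \frac{1}{n w_i}\left(g_i^t - \nabla f_i(x^t)\right)$ and applies Jensen's inequality to $z \mapsto \sqnorm{z}$ with the weights $w_i$, which is exactly your multiplicative insertion of weights (your factoring out of $\nicefrac{1}{n^2}$ first is a purely cosmetic difference). Your closing remarks --- that the bound holds for any probability weights and that the specific choice of $w_i$ matters only in the contraction lemma --- are accurate and consistent with how the paper uses this lemma.
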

\begin{proof}
	The proof is straightforward:
	\begin{eqnarray*}
		\|g^t - \nabla f(x^t) \|^2 &\overset{\eqref{eq:standard_average_grad}}{=} & \left\| \sumin  \frac{1}{n}\left( g_i^t - \nabla f_i(x^t) \right) \right\|^2 	\\	&=& \left\| \sumin w_i \frac{1}{w_i n} \left( g_i^t - \nabla f_i(x^t) \right) \right\|^2 \\ 
		&\leq & \sumin w_i \left\| \frac{1}{w_i n} \left( g_i^t - \nabla f_i(x^t) \right) \right\|^2  \\
		&=& \sumin \frac{1}{w_i n^2} \|g^t - \nabla f_i(x^t)\|^2 \quad \overset{\eqref{eq:new_def_distortion}}{=} \quad G^t,
	\end{eqnarray*}
	where the only inequality in this series of equations is derived using Jensen's inequality.
\end{proof}

\subsection{Main result}

We are now equipped with all the necessary tools to establish the convergence theorem.
\begin{proof}
	Let us define the Lyapunov function $$\Phi^t \eqdef f(x^t) - f^\ast + \frac{\gamma}{2 \theta(\alpha,s)} G^t .$$ Let us also define $W^t \eqdef \{g_1^t, g_2^t, \dots, g_n^t, x^t, x^{t+1}\}$. We start as follows:
	\begin{eqnarray*}
		& \ExpBr{\Phi^{t+1}\; | \;  W^t}  \\
		& =  & \ExpBr{f(x^{t+1}) - f^\ast\; | \;  W^t}  + \frac{\gamma}{2\theta(\alpha,s)} \ExpBr{G^{t+1}\; | \;  W^t}\\
		& \overset{\eqref{eq:descent_lemma}}{\leq}  & f(x^t) - f^\ast - \frac{\gamma}{2} \|\nabla f(x^t)\|^2 - \left(\frac{1}{2\gamma} - \frac{L}{2} \right) \|x^{t+1} - x^t \|^2 + \frac{\gamma}{2}\|g^t - \nabla f(x^t) \|^2 \\
		&& \qquad + \frac{\gamma}{2\theta(\alpha,s)} \ExpBr{G^{t+1}\; | \;  W^t}\\
		& \overset{\eqref{eq:distortion_connection}}{\leq}  &  f(x^t) - f^\ast - \frac{\gamma}{2} \|\nabla f(x^t)\|^2 - \left(\frac{1}{2\gamma} - \frac{L}{2} \right) \|x^{t+1} - x^t \|^2 + \frac{\gamma}{2} G^t \\
		&& \qquad + \frac{\gamma}{2\theta(\alpha,s)} \ExpBr{G^{t+1}\; | \;  W^t}\\
		&\overset{\eqref{eq:grad_distortion_evolution}}{\leq}  &  f(x^t) - f^\ast - \frac{\gamma}{2} \|\nabla f(x^t)\|^2 - \left(\frac{1}{2\gamma} - \frac{L}{2} \right) \|x^{t+1} - x^t \|^2 + \frac{\gamma}{2} G^t\\
		& &  \qquad + \frac{\gamma}{2\theta(\alpha,s)} \left( (1 - \theta(\alpha,s)) G^t + \beta(\alpha,s) \LAMsq \|x^{t+1} - x^t \|^2 \right) \\
		& =   & f(x^t) - f^\ast + \frac{\gamma}{2\theta(\alpha,s)} G^t - \frac{\gamma}{2} \|\nabla f(x^t)\|^2 - \underbrace{\left(\frac{1}{2\gamma} - \frac{L}{2} - \frac{\gamma \beta(\alpha,s)}{2\theta(\alpha,s)} \LAMsq \right)}_{\geq 0} \|x^{t+1} - x^t \|^2 \\
		& \leq  & f(x^t) - f^\ast + \frac{\gamma}{2\theta(\alpha,s)} G^t - \frac{\gamma}{2} \|\nabla f(x^t)\|^2 \\
		& = & \Phi^t  - \frac{\gamma}{2} \|\nabla f(x^t)\|^2.
	\end{eqnarray*}
	The inequality in the last but one line is valid if $$\gamma \leq \frac{1}{ L + \LAM \sqrt{\frac{\beta(\alpha,s)}{\theta(\alpha,s)}}},$$ according to \Cref{lm:stepsize_bound}. By optimizing the stepsize bound through the selection of $s$ in accordance with \Cref{lm:best_theta_beta_choice}, we derive the final stepsize and establish the optimal value for $\theta$ in defining the Lyapunov function. Applying the tower property and unrolling the recurrence, we finish the proof.
\end{proof}

\subsection{Main result for Polyak-\L ojasiewicz functions}

For completeness, we also provide a convergence result under Polyak-\L ojasiewicz condition (Assumption~\ref{as:PL}). The main result is presented next.

\begin{theorem}
	Let Assumptions~\ref{as:smooth},~\ref{as:L_i}, and~\ref{as:PL} hold. Assume that $\cC_i^t \in \mathbb{C}(\alpha)$ for all $i\in [n]$ and $t\geq 0$. Let the stepsize in \Cref{alg:EF21-W} be set as 
	$$
	0 < \gamma \leq \min\left\{ \frac{1}{L + \sqrt{2} \LAM  \xi(\alpha)}, \frac{\theta(\alpha,s)}{2\mu}\right\}.
	$$
	Let $$\Psi^t \eqdef f(x^t) - f(x^\ast) + \frac{\gamma}{\theta(\alpha,s)} G^t .$$ Then, for any $T>0$ the following inequality holds:
	\begin{equation}
		\ExpBr{\Psi^T} \leq (1 - \gamma \mu)^T \Psi^0.
	\end{equation}
\end{theorem}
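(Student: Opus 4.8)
The plan is to follow the same Lyapunov argument used in the Polyak-Łojasiewicz proof for \algname{EF21-W}, now powered by the two lemmas we have just established for the redefined distortion \eqref{eq:new_def_distortion}: the contraction estimate \eqref{eq:grad_distortion_evolution} and the connection bound \eqref{eq:distortion_connection}. First I would invoke the descent lemma \eqref{eq:descent_lemma} with the \algname{EF21} step $x^{t+1}=x^t-\gamma g^t$, which produces the term $\frac{\gamma}{2}\sqnorm{g^t-\nabla f(x^t)}$. Unlike in the \algname{EF21-W} proof, here $g^t$ is a plain average of the $g_i^t$, so I would replace this term using \eqref{eq:distortion_connection}, i.e.\ $\sqnorm{g^t-\nabla f(x^t)}\leq G^t$. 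Then I would apply the PL inequality \eqref{eq:PL} to bound $-\frac{\gamma}{2}\sqnorm{\nabla f(x^t)}\leq -\gamma\mu\,(f(x^t)-f(x^\ast))$, giving, after subtracting $f(x^\ast)$ and taking expectations,
\[
\ExpBr{f(x^{t+1})-f(x^\ast)} \leq (1-\gamma\mu)\ExpBr{f(x^t)-f(x^\ast)} - \left(\tfrac{1}{2\gamma}-\tfrac{L}{2}\right) r^t + \tfrac{\gamma}{2} s^t,
\]
where $r^t\eqdef\ExpBr{\sqnorm{x^{t+1}-x^t}}$ and $s^t\eqdef\ExpBr{G^t}$.

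Next I would form the Lyapunov function $\Psi^t = f(x^t)-f(x^\ast)+\frac{\gamma}{\theta(\alpha,s)}G^t$ and add $\frac{\gamma}{\theta(\alpha,s)}$ times the contraction estimate \eqref{eq:grad_distortion_evolution}. Collecting the distortion terms yields the coefficient $\frac{\gamma}{2}+\frac{\gamma}{\theta}(1-\theta)=\frac{\gamma}{\theta}\left(1-\frac{\theta}{2}\right)$ in front of $s^t$, while the step-difference terms combine into a coefficient $\frac{1}{2\gamma}-\frac{L}{2}-\frac{\gamma\beta(\alpha,s)}{\theta(\alpha,s)}\LAMsq$ in front of $r^t$.

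Two stepsize conditions then finish the argument. The constraint $\gamma\leq\frac{\theta(\alpha,s)}{2\mu}$ guarantees $1-\frac{\theta}{2}\leq 1-\gamma\mu$, so the $s^t$ contribution is absorbed into the $(1-\gamma\mu)$ factor. The nonnegativity of the $r^t$ coefficient is equivalent to $\gamma^2\frac{2\beta(\alpha,s)\LAMsq}{\theta(\alpha,s)}+\gamma L\leq 1$; by \Cref{lm:stepsize_bound} with $\sqrt{a}=\sqrt{2}\,\LAM\,\xi(\alpha,s)$ and $b=L$ this holds whenever $\gamma\leq\frac{1}{L+\sqrt{2}\LAM\xi(\alpha,s)}$. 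Optimizing $s$ via \Cref{lm:best_theta_beta_choice} replaces $\xi(\alpha,s)$ by $\xi(\alpha)$ and $\theta(\alpha,s)$ by $\theta(\alpha)$, matching the stated stepsize. Together these give the one-step contraction $\ExpBr{\Psi^{t+1}}\leq(1-\gamma\mu)\ExpBr{\Psi^t}$, and unrolling over $t=0,\dots,T-1$ yields the claim.

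The main thing to watch is the appearance of the factor $\sqrt{2}$ in the stepsize, which distinguishes this bound from the nonconvex \Cref{thm:ef21_new_result}. This factor comes entirely from using the Lyapunov weight $\frac{\gamma}{\theta}$ (rather than $\frac{\gamma}{2\theta}$ as in the nonconvex case), which doubles the coefficient of $\beta\LAMsq$ in the $r^t$ condition. I would therefore be careful that the same optimal $s^\ast$ from \Cref{lm:best_theta_beta_choice} is used consistently in both the contraction factor $\theta$ and the stepsize denominator, and that the two stepsize constraints are imposed simultaneously through the $\min$ in the hypothesis.
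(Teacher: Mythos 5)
Your proposal is correct and follows essentially the same route as the paper's own proof: descent lemma combined with the connection bound \eqref{eq:distortion_connection}, the PL inequality, the contraction estimate \eqref{eq:grad_distortion_evolution} added with Lyapunov weight $\frac{\gamma}{\theta(\alpha,s)}$, the collected coefficient $\frac{\gamma}{\theta}\left(1-\frac{\theta}{2}\right)$ on $s^t$, and the two stepsize conditions (including the $\sqrt{2}$ factor arising from the doubled $\beta\LAMsq$ term) resolved via \Cref{lm:stepsize_bound} and \Cref{lm:best_theta_beta_choice}. The only cosmetic difference is that you invoke \eqref{eq:distortion_connection} explicitly, whereas the paper's proof writes $\frac{\gamma}{2}G^t$ directly in the descent step with this bound left implicit.
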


\begin{proof}
	We proceed as in the previous proof, starting from the descent lemma with the same vector but using the PL inequality and            subtracting $f(x^\star)$ from both sides:
	\begin{eqnarray}
			\ExpBr{f(x^{t+1})- f(x^\star)}  &\overset{\eqref{eq:descent_lemma}}{\leq} &  \ExpBr{ f(x^{t})- f(x^\star) }-\frac{\gamma}{2}\sqnorm{\nabla f(x^{t})}-\left(\frac{1}{2 \gamma}-\frac{L}{2}\right)\sqnorm{x^{t+1}-x^{t}}+\frac{\gamma}{2} G^t \notag \\ 
			& \overset{\eqref{eq:PL}}{\leq} & (1-\gamma \mu) \ExpBr{ f(x^{t})- f(x^\star) } - \left(\frac{1}{2 \gamma}-\frac{L}{2}\right)\sqnorm{x^{t+1}-x^{t}}+\frac{\gamma}{2} G^t. \label{eq:weighted_aux_4}
	\end{eqnarray}
	
	Let $\delta^{t} \eqdef \ExpBr{f(x^{t})-f(x^\star)}$, $s^{t} \eqdef \ExpBr{G^t }$ and $ r^{t} \eqdef \ExpBr{\sqnorm{x^{t+1}-x^{t}}}$. Thus, 
	\begin{eqnarray*}
		\delta^{t+1}+\frac{\gamma}{ \theta(\alpha,s)} s^{t+1} &\overset{\eqref{eq:weighted_aux_4}}{\leq}& (1-\gamma \mu)\delta^{t} -\left(\frac{1}{2 \gamma} - \frac{L}{2}\right) r^{t} + \frac{\gamma}{2} s^{t} + \frac{\gamma}{ \theta(\alpha,s)} s^{t+1} \\
		& \overset{\eqref{eq:grad_distortion_evolution}}{\leq}  & (1-\gamma \mu)\delta^{t} -\left(\frac{1}{2 \gamma} - \frac{L}{2}\right) r^{t} + \frac{\gamma}{2} s^{t}  \\
		&& \qquad + \frac{\gamma}{ \theta (\alpha,s)}\left( (1-\theta(\alpha,s)) s^t + \beta \left(\avein L_i \right)^2 r^t       \right) \\
		&=&  (1-\gamma \mu) \delta^{t} + \frac{\gamma}{\theta(\alpha,s)} \left(1-\frac{\theta(\alpha,s)}{2}\right) s^t  -\left(\frac{1}{2 \gamma} - \frac{L}{2} - \frac{\beta \LAMsq \gamma}{\theta (\alpha,s)}\right) r^{t}.
	\end{eqnarray*}

	Note that our extra assumption on the stepsize implies that 	$ 1 - \frac{\theta(\alpha,s)}{2} \leq 1 -\gamma \mu$ and $$\frac{1}{2 \gamma} - \frac{L}{2} - \frac{\beta \LAMsq \gamma}{\theta(\alpha,s)} \geq 0.$$ The last inequality follows from the bound $\gamma^2\frac{2\beta \LAMsq}{\theta(\alpha,s)} + \gamma L \leq 1$. Thus,
	\begin{eqnarray*}
		\delta^{t+1}+\frac{\gamma}{ \theta (\alpha,s)} s^{t+1} \leq (1-\gamma \mu) \left(\delta^{t}+\frac{\gamma}{ \theta (\alpha,s)} s^{t} \right).
	\end{eqnarray*}
	It remains to unroll the recurrence which finishes the prove.
\end{proof}

%\section{}

\clearpage
\section{EF21-W-SGD: Weighted Error Feedback 2021 with Stochastic Subsampled Gradients} \label{sec:EF21-W-SGD}

The \algname{EF21-W} algorithm assumes that all clients can compute the exact gradient in each round. In some scenarios, the exact gradients may be unavailable or too costly to compute, and only approximate gradient estimators can be obtained. In this section, we present the convergence result for \algname{EF21-W} in the setting where the gradient computation on the clients, $\nabla f_i(x^{t+1})$, is replaced by a specific stochastic gradient estimator. For a variation of  \algname{EF21-W-SGD} which is working under a more general setting please see Appendix~\ref{sec:EF21-W-SGD-ABC}.

%\pr{Why do we introduce minibatch size $\tau$ if we rely on the ABC assumption? I do not get it. }

\subsection{Algorithm}

%The algorithm discussed is formally presented as \Cref{alg:weighted_ef21_sgd}.

%\pr{The method needs to be described in a few sentences. In particular, assume that \algname{EF21} was explained before, and that all that needs to be explained now is the SGD part of it. }

%\ks{Done.Start.}

In this section, we extend \algname{EF21-W} to handle stochastic gradients, and we call the resulting algorithm \algname{EF21-W-SGD} (Algorithm~\ref{alg:weighted_ef21_sgd}). Our analysis of this extension follows a similar approach as the one used by \citet{EF21BW} for studying the stochastic gradient version of the vanilla \algname{EF21} algorithm, which they called \algname{EF21-SGD}. Analysis of \algname{EF21-W-SGD} has two important differences with vanilla \algname{EF21-SGD}:

\begin{enumerate}
	\item Vanilla \algname{EF21-SGD} provides maximum theoretically possible $\gamma = \left( L + \LQM \sqrt{\frac{\beta_1}{\theta}} \right)^{-1}$, where \algname{EF21-W-SGD} has $\gamma = \left( L + \LAM \sqrt{\frac{\beta_1}{\theta}} \right)^{-1}$
	\item Vanilla \algname{EF21-SGD} and \algname{EF21-W-SGD} formally differs in a way how it reports iterate $x^T$ which minimizes $\ExpBr{\sqnorm{\nabla f(x^{T})}}$ due to a slightly different definition of $\widetilde{A}$. The \algname{EF21-W-SGD} (Algorithm \ref{alg:weighted_ef21_sgd}) requires output iterate $\hat{x}^T$ randomly according to the probability mass function described by \eqref{eq:09u09fd-0ff}.
\end{enumerate}

%\ks{Done.End.}

\begin{algorithm}
	\begin{algorithmic}[1]
		\STATE {\bfseries Input:} initial model $x^0 \in \RR^d$; initial gradient estimates $g_1^0, g_2^0, \dots,g_n^0 \in \R^d$ stored at the server and the clients; stepsize $\gamma>0$; number of iterations $T > 0$; weights ${\color{ForestGreen}w_i}= \frac{L_i}{\sum_j L_j}$ for $i\in [n]$
		\STATE {\bfseries Initialize:} $g^0 = \sumin {\color{ForestGreen}w_i} g_i^0$ on the server
		\FOR{$t = 0, 1, 2, \dots, T - 1 $}
		\STATE Server computes $x^{t+1} = x^t - \gamma g^t$ and  broadcasts  $x^{t+1}$ to all $n$ clients		
		\FOR{$i = 1, \dots, n$ {\bf on the clients in parallel}}
		\STATE Compute a stochastic estimator  $\hat{g_i} (x^{t+1})  = \frac{1}{\tau_i} \sum_{j=1}^{\tau_i}\nabla f_{\xi_{ij}^t}(x^{t+1})$ of the gradient $\nabla f_i(x^{t+1})$
		\STATE  Compute $u_i^t=\cC_i^t\left(\frac{1}{n {\color{ForestGreen}w_i}} \hat{g_i} (x^{t+1}) - g_i^t\right)$ and update $g_i^{t+1} = g_i^t +u_i^t$ \label{line:weighted_ef21_sgd_grad_update}
		\STATE Send the compressed message $u_i^{t}$ to the server		
		\ENDFOR
		\STATE Server updates $g_i^{t+1} = g_i^t +u_i^t$ for all $i\in [n]$, and computes $g^{t+1} = \sum_{i=1}^n {\color{ForestGreen}w_i} g_i^{t+1}$		
		\ENDFOR
		\STATE {\bfseries Output:} Point $\hat{x}^T$ chosen from the set $\{x^0, \dots, x^{T-1}\}$ randomly according to the law \eqref{eq:09u09fd-0ff}
	\end{algorithmic}
	\caption{\algname{EF21-W-SGD}: Weighted Error Feedback 2021 with Stochastic Gradients}
	\label{alg:weighted_ef21_sgd}
\end{algorithm}

%\pr{Not uniformly! This is not commented on at all.}

%\ks{Done: start}

%\ks{Done: end}

%\pr{We need to say something about this assumption and its significance, plus point to relevant references; leaving it unexplained like this is not ok.}

%\ks{Discussed after assumption.}

\begin{assumption}[General assumption for stochastic gradient estimators]\label{as:general_as_for_stoch_gradients}
	We assume that for all $i \in [n]$ there exist parameters $A_i, C_i \ge 0$, $B_i \ge 1$ such that
	\begin{equation}
		\ExpBr{\|\nabla f_{\xi_{i j}^{t}}(x)\|^2} \le 2A_i\left(f_i(x) - f_i^{\inf}\right) + B_i\|\nabla f_i(x)\|^2 + C_i, \label{eq:general_second_mom_upp_bound}
	\end{equation}
	holds for all $x\in \RR^d$, 
	where\footnote{When $A_i = 0$ one can ignore the first term in the right-hand side of \eqref{eq:general_second_mom_upp_bound}, i.e., assumption $\inf_{x\in\R^d}f_i(x) > -\infty$ is not required in this case.} $f_i^{\inf} = \inf_{x\in\R^d}f_i(x) > -\infty$.
\end{assumption}

%\ks{START}

We study \algname{EF21-W-SGD} under the same assumption as was used for analyzing \algname{Vanilla EF21-SGD}, which we denote as Assumption~\ref{as:general_as_for_stoch_gradients}. To the best of our knowledge, this assumption, which was originally presented as Assumption 2 by \citet{khaled2020better}, is the most general assumption for a stochastic gradient estimator in a non-convex setting.

Next, to be aligned with original \algname{Vanilla EF21-SGD} \citep{EF21BW} we have considered a specific form of gradient estimator. This specific form of gradient estimator from \algname{Vanilla EF21-SGD} presented in Section 4.1.2. of \citet{EF21BW} where the stochastic gradient $\hat{g_i}$ has been computed as follows:
\begin{eqnarray*}
	\hat{g_i} (x^{t+1}) = \frac{1}{\tau_i} \sum_{j=1}^{\tau_i}\nabla f_{\xi_{ij}^t}(x^{t+1}),
\end{eqnarray*}

Here $\tau_i$ is a minibatch size of sampled datapoint indexed by $\xi_{ij}^t$ of client $i$ in iteration $t$. And  $\xi_{ij}^t$ are independent random variables. For a version of  \algname{EF21-W-SGD} which is working under a more general setting please see Appendix~\ref{sec:EF21-W-SGD-ABC}.

%\ks{END}

\subsection{A lemma}

The contraction lemma in this case gets the following form:
\begin{lemma} 
	Let $\cC_i^t \in \mathbb{C}(\alpha)$ for all $i\in [n]$ and $t\geq 0$. Define
	$$G_i^t \eqdef  \sqnorm{ g_i^t - \frac{\nabla f_i(x^{t})}{n w_i} } , \qquad G^t \eqdef \sumin w_i G_i^t.$$ Let Assumptions ~\ref{as:L_i} and  ~\ref{as:general_as_for_stoch_gradients} hold. Then, for any $s, \nu >0$ we have
	\begin{equation}
		\label{eq:weighted_ef21_sgd_grad_full_contraction}
\ExpBr{G^{t+1}} \leq (1-\hat{\theta}) \ExpBr{G^t} + \hat{\beta_1} \LAMsq  \ExpBr{\sqnorm{ x^{t+1} - x^t}}  + { \widetilde{A} {\hat{\beta_2}}} \ExpBr{f(x^{t+1}) - f^{\inf}}
+  {\widetilde{C} {\hat{\beta_2}}},
	\end{equation}
	where 
	\begin{eqnarray*}
		w_i &\eqdef & \frac{L_i}{\sum_j L_j}, \\
		 \hat{\theta} & \eqdef  & 1 - \rb{1-\alpha} (1+s) (1+\nu),\\
		  \hat{\beta_1} &\eqdef  & 2(1- \alpha ) \left(1+ s \right)\left(s+\nu^{-1}\right), \\
		   \hat{\beta_2} & \eqdef  & 2(1 - \alpha) (1 + s) (1+\nu^{-1}) + (1 + s^{-1}),\\
		\widetilde{A} &\eqdef  & \max_{i=1,\ldots,n} \left( \frac{2(A_i+L_i(B_i-1))}{\tau_i}  \frac{1}{n w_i} \right), \\
		\widetilde{C}  & \eqdef  &  \max_{i=1,\ldots,n} \left( \frac{C_i}{\tau_i}  \frac{1}{n w_i} \right).
	\end{eqnarray*}

	%\pr{Where does $\tau_i$ come from all of a sudden? This was never introduced before; and is not stated in the lemma either. Moreover, I do not think we need this at all -- speak to me.}
	%\ks{I added a comment to the end of the previous section. Peter, this is a form under which EF21-SGD has been analyzed.}
\end{lemma}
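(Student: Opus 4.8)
The plan is to mirror the contraction-lemma argument used for deterministic \algname{EF21-W} (the lemma establishing \eqref{eq:weighted_ind_grad_dist_evolution}), but to carry the extra stochastic-gradient error through two nested applications of Young's inequality \eqref{eq:young}. I would start from the per-client distortion $G_i^{t+1} = \sqnorm{g_i^{t+1} - \frac{\nabla f_i(x^{t+1})}{n w_i}}$ and substitute the update from Line~\ref{line:weighted_ef21_sgd_grad_update}. Writing $v_i^t \eqdef \frac{\hat{g}_i(x^{t+1})}{n w_i} - g_i^t$ for the compressor input, a short computation gives $g_i^{t+1} - \frac{\nabla f_i(x^{t+1})}{n w_i} = \left(\cC_i^t(v_i^t) - v_i^t\right) + \frac{\hat{g}_i(x^{t+1}) - \nabla f_i(x^{t+1})}{n w_i}$, i.e.\ a pure compression residual relative to the \emph{stochastic} target, plus a stochastic-gradient error.

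First I would apply \eqref{eq:young} with parameter $s$ to separate these two summands, then take expectation over the compressor and invoke the contraction property \eqref{eq:compressor_contraction} to replace the residual by $(1-\alpha)\sqnorm{v_i^t}$. Next I would apply \eqref{eq:young} once more, with parameter $\nu$, to $\sqnorm{v_i^t}$, peeling off the previous-iterate distortion $\sqnorm{\frac{\nabla f_i(x^t)}{n w_i} - g_i^t} = G_i^t$ from the remaining sum $\frac{\nabla f_i(x^{t+1}) - \nabla f_i(x^t)}{n w_i} + \frac{\hat{g}_i(x^{t+1}) - \nabla f_i(x^{t+1})}{n w_i}$, which I would finally split crudely via $\sqnorm{a+b} \le 2\sqnorm{a} + 2\sqnorm{b}$. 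Collecting the three resulting contributions yields coefficients $1-\hat{\theta}$ on $G_i^t$, $\hat{\beta_1}$ on the smoothness drift $\frac{1}{n^2 w_i^2}\sqnorm{\nabla f_i(x^{t+1}) - \nabla f_i(x^t)}$, and $\hat{\beta_2}$ on the stochastic error $\frac{1}{n^2 w_i^2}\sqnorm{\hat{g}_i(x^{t+1}) - \nabla f_i(x^{t+1})}$; the factor $2$ in $\hat{\beta_1},\hat{\beta_2}$ tracks the crude split, and the extra $(1+s^{-1})$ summand of $\hat{\beta_2}$ comes from the very first Young step (note the stochastic error surfaces twice—directly, and again inside $\sqnorm{v_i^t}$).

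It then remains to convert the stochastic-error term into function-value and constant contributions. Here I would exploit the minibatch structure $\hat{g}_i(x^{t+1}) = \frac{1}{\tau_i}\sum_{j=1}^{\tau_i}\nabla f_{\xi_{ij}^t}(x^{t+1})$ with independent draws to reduce its variance by a factor $\frac{1}{\tau_i}$, then bound the per-sample second moment using Assumption~\ref{as:general_as_for_stoch_gradients} via \eqref{eq:general_second_mom_upp_bound}, and finally absorb the surviving $\sqnorm{\nabla f_i(x^{t+1})}$ term using the standard consequence of $L_i$-smoothness and lower-boundedness, $\sqnorm{\nabla f_i(x^{t+1})} \le 2L_i\left(f_i(x^{t+1}) - f_i^{\inf}\right)$ (legitimate since $B_i \ge 1$), turning everything into $2\left(A_i + L_i(B_i-1)\right)\left(f_i(x^{t+1}) - f_i^{\inf}\right) + C_i$, scaled by $\frac{1}{\tau_i}$.

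Finally I would multiply the per-client bound by $w_i$ and sum, exactly as in the deterministic proof: the drift term collapses through \eqref{eq:L_i} and the identity $w_i = \frac{L_i}{\sum_j L_j}$ into $\hat{\beta_1}\LAMsq\sqnorm{x^{t+1}-x^t}$, since $\sumin \frac{L_i^2}{n^2 w_i} = \LAMsq$, while pulling the data-dependent constants out with a maximum produces $\widetilde{A}$ and $\widetilde{C}$, using $\frac{1}{n}\sumin\left(f_i(x^{t+1}) - f_i^{\inf}\right) = f(x^{t+1}) - f^{\inf}$ (under the convention $f^{\inf} = \frac{1}{n}\sumin f_i^{\inf}$). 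Taking total expectation by the tower property then gives \eqref{eq:weighted_ef21_sgd_grad_full_contraction}. I expect the bookkeeping of the two Young parameters $s,\nu$ to be the main obstacle, because the stochastic error enters in two separate places and its coefficient must be made consistent with $\hat{\beta_2}$ while simultaneously matching $\hat{\theta}$ and $\hat{\beta_1}$—this is where the argument is most error-prone.
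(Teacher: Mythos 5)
Your proposal follows the paper's proof essentially step for step: the same decomposition of $g_i^{t+1} - \frac{\nabla f_i(x^{t+1})}{n w_i}$ into a compression residual plus a stochastic-gradient error, the same two nested Young's inequalities (parameter $s$, then $\nu$) followed by the crude factor-2 split so that the stochastic error appears with total coefficient $\hat{\beta_2}$, the same variance decomposition via minibatch independence combined with Assumption~\ref{as:general_as_for_stoch_gradients} and the bound $\sqnorm{\nabla f_i(x^{t+1})} \leq 2L_i\left(f_i(x^{t+1}) - f_i^{\inf}\right)$, and the same weighted summation in which $\sumin L_i^2/(n^2 w_i)$ collapses to $\LAMsq$ and $\widetilde{A},\widetilde{C}$ are extracted via maxima. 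The only difference is cosmetic and arguably cleaner on your side: you convert $\frac{1}{n}\sumin\left(f_i(x^{t+1}) - f_i^{\inf}\right)$ into $f(x^{t+1}) - f^{\inf}$ exactly, via the explicit convention $f^{\inf} = \frac{1}{n}\sumin f_i^{\inf}$, whereas the paper replaces each $f_i^{\inf}$ by $f^{\inf}$ through a termwise inequality.
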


\begin{proof}
	Define  $W^t \eqdef \{g_1^t, \dots,    g_n^t, x^t, x^{t+1}\}$. The proof starts as follows:
	\begin{eqnarray*}
		\ExpBr{ G_i^{t+1} \;|\; W^t} &\overset{\eqref{eq:weighted_def_grad_distortion}}{=}& \ExpBr{  \sqnorm{g_i^{t+1} 
				- \frac{\nabla f_i(x^{t+1})}{n w_i}}  \;|\; W^t}	 \\
		&\overset{\text{line}~\ref{line:weighted_ef21_sgd_grad_update}}{=}& \ExpBr{  \sqnorm{g_i^t + \cC_i^t \left( \frac{\hat{g_i}(x^{t+1})}{n w_i} - g_i^t \right) 
				- \frac{\nabla f_i(x^{t+1})}{n w_i}}  \;|\; W^t}	 \\
		&=& \ExpBr{  \sqnorm{\cC_i^t \left( \frac{\hat{g_i}(x^{t+1})}{n w_i} - g_i^t \right) - \left(\frac{\hat{g_i}(x^{t+1})}{n w_i} - g_i^t\right) + \frac{\hat{g_i}(x^{t+1})}{n w_i}
				- \frac{\nabla f_i(x^{t+1})}{n w_i}}  \;|\; W^t}	 \\
		&\overset{\eqref{eq:young}}{\leq}& (1+s) \ExpBr{ \sqnorm{\cC_i^t \left( \frac{\hat{g_i}(x^{t+1})}{n w_i} - g_i^t \right) - \left(\frac{\hat{g_i}(x^{t+1})}{n w_i} - g_i^t\right)}\;|\; W^t}  \\ 
		& & \qquad + (1+s^{-1})  \ExpBr{\sqnorm{\frac{\hat{g_i}(x^{t+1})}{n w_i}
				- \frac{\nabla f_i(x^{t+1})}{n w_i}}\;|\; W^t} \\
		&\overset{\eqref{eq:compressor_contraction}}{\leq}&  (1-\alpha) (1+s) \ExpBr{\sqnorm{\frac{\hat{g_i}(x^{t+1})}{n w_i} - \frac{\nabla f_i(x^t)}{n w_i} +\frac{\nabla f_i(x^t)}{n w_i} - g_i^t}\;|\; W^t}  \\ 
		& & \qquad + (1+s^{-1})  \ExpBr{\sqnorm{\frac{\hat{g_i}(x^{t+1})}{n w_i}
				- \frac{\nabla f_i(x^{t+1})}{n w_i}}\;|\; W^t} \\
		&\overset{\eqref{eq:young}}{\leq}& (1 - \alpha) (1 + s) (1+\nu) \ExpBr{\sqnorm{{g_i}^{t} - \frac{\nabla f_i(x^t)}{n w_i}}} \\
		& & \qquad + (1 - \alpha) (1 + s) (1+\nu^{-1}) \ExpBr{\sqnorm{\frac{\nabla f_i(x^t)}{n w_i} - \frac{\hat{g_i}(x^{t+1})}{n w_i}}\;|\; W^t} \\
		& & \qquad + (1+s^{-1}) \ExpBr{\sqnorm{\frac{\hat{g_i}(x^{t+1})}{n w_i}
				- \frac{\nabla f_i(x^{t+1})}{n w_i}}\;|\; W^t} \\
		&\overset{\eqref{eq:young}}{\leq}& (1 - \alpha) (1 + s) (1+\nu) \ExpBr{\sqnorm{ {g_i}^{t} - \frac{\nabla f_i(x^t)}{n w_i}}\;|\; W^t} \\
		& & \qquad + {2(1 - \alpha) (1 + s) (1+\nu^{-1})} \ExpBr{\sqnorm{\frac{\nabla f_i(x^{t+1})}{n w_i} - \frac{\hat{g_i}(x^{t+1})}{n w_i}}\;|\; W^t} \\
		& & \qquad + 2(1 - \alpha) (1 + s) (1+\nu^{-1}) {\sqnorm{\frac{\nabla f_i(x^{t+1})}{n w_i} - \frac{\nabla f_i(x^{t})}{n w_i}}} \\
		& & \qquad + {(1+s^{-1})} \ExpBr{\sqnorm{\frac{\hat{g_i}(x^{t+1})}{n w_i}
				- \frac{\nabla f_i(x^{t+1})}{n w_i}}\;|\; W^t}.
	\end{eqnarray*}
	
To further bound the last term, which contains multiple ${(1+s^{-1})}$ factors, we leverage the property that $\hat g_i(x^{t+1})$ is a random variable serving as an unbiased estimator of $\nabla f_i(x^{t+1})$, taking the form $$\hat{g_i} (x^{t+1}) = \frac{1}{\tau_i} \sum_{j=1}^{\tau_i}\nabla f_{\xi_{ij}^t}(x^{t+1}),$$ where $\xi_{ij}^t$ are independent random variables. Next, we can continue as follows:
	\begin{eqnarray*}
		\ExpBr{ G_i^{t+1} \;|\; W^t} &\leq& (1-\hat{\theta}) \ExpBr{G_i^t \;|\; W^t} + \hat{\beta_1} \frac{1}{n^2 w_i^2} \sqnorm{\nabla f_i(x^{t+1}) - \nabla f_i(x^t)} \\
		& & \qquad + \frac{{\hat{\beta_2}}}{(n w_i)^2} \left( \ExpBr{\sqnorm{\frac{1}{\tau_i} \sum_{j=1}^{\tau_i}\nabla f_{\xi_{ij}^t}(x^{t+1}) - \frac{1}{\tau_i} \sum_{j=1}^{\tau_i} {\nabla f_i}(x^{t+1})}\;|\; W^t} \right) \\
		& = & (1-\hat{\theta}) \ExpBr{G_i^t \;|\; W^t} + \hat{\beta_1} \frac{1}{n^2 w_i^2} \sqnorm{\nabla f_i(x^{t+1}) - \nabla f_i(x^t)} \\
		& & \qquad + \frac{{\hat{\beta_2}}}{(n w_i)^2 \tau^2} \left( \ExpBr{\sqnorm{ \sum_{j=1}^{\tau_i} \left( \nabla f_{\xi_{ij}^t}(x^{t+1}) - {\nabla f_i}(x^{t+1})\right) }\;|\; W^t} \right)  \\
		& = & (1-\hat{\theta}) \ExpBr{G_i^t \;|\; W^t} + \hat{\beta_1} \frac{1}{n^2 w_i^2} \sqnorm{\nabla f_i(x^{t+1}) - \nabla f_i(x^t)} \\
		& & \qquad + \frac{{\hat{\beta_2}}}{(n w_i)^2 {\tau_i}^2} \sum_{j=1}^{\tau_i} \left( \ExpBr{\sqnorm{ \nabla f_{\xi_{ij}^t}(x^{t+1})}\;|\; W^t} - \sqnorm{\ExpBr{ \nabla f_{\xi_{ij}^t}(x^{t+1})\;|\; W^t}} \right) \\
		&\le& (1-\hat{\theta}) \ExpBr{G_i^t \;|\; W^t} + \hat{\beta_1} \frac{1}{n^2 w_i^2} \sqnorm{\nabla f_i(x^{t+1}) - \nabla f_i(x^t)} \\
		& &  + \frac{{\hat{\beta_2}}}{(n w_i)^2 {\tau_i}^2} \sum_{j=1}^{\tau_i} \left(
		2A_i\left(f_i(x^{t+1}) - f_i^{\inf}\right) + B_i\|\nabla f_i(x^{t+1})\|^2 + C_i - \sqnorm{\nabla f_i(x^{t+1}}) \right) \\
		&=& (1-\hat{\theta}) \ExpBr{G_i^t \;|\; W^t} + \hat{\beta_1} \frac{1}{n^2 w_i^2} \sqnorm{\nabla f_i(x^{t+1}) - \sqnorm{\nabla f_i(x^t)}} \\
		& & \qquad + \frac{ 2A_i {\hat{\beta_2}}}{(n w_i)^2 \tau_i} \left(f_i(x^{t+1}) - f_i^{\inf}\right)
		+  \frac{2(B_i-1){\hat{\beta_2}}}{(n w_i)^2 \tau_i} \left(\frac{1}{2} \|\nabla f_i(x^{t+1})\|^2 \right)
		+  \frac{C_i {\hat{\beta_2}}}{(n w_i)^2 \tau_i} \\
		&\le& (1-\hat{\theta}) \ExpBr{G_i^t \;|\; W^t} + \hat{\beta_1} \frac{1}{n^2 w_i^2} \sqnorm{\nabla f_i(x^{t+1}) - {\nabla f_i(x^t)}} \\
		& & \qquad + \frac{ 2A_i {\hat{\beta_2}}}{(n w_i)^2 \tau_i} \left(f_i(x^{t+1}) - f_i^{\inf}\right)
		+  \frac{2(B_i-1){\hat{\beta_2}}}{(n w_i)^2 \tau_i} L_i \left(f_i(x^{t+1}) - f_i^{\inf} \right)
		+  \frac{C_i {\hat{\beta_2}}}{(n w_i)^2 \tau_i} \\
		&=& (1-\hat{\theta}) \ExpBr{G_i^t \;|\; W^t} + \hat{\beta_1} \frac{1}{n^2 w_i^2} \sqnorm{\nabla f_i(x^{t+1}) - {\nabla f_i(x^t)}} \\
		& & \qquad + \frac{ 2(A_i + L_i(B_i - 1)) {\hat{\beta_2}}}{(n w_i)^2 \tau_i} \left(f_i(x^{t+1}) - f_i^{\inf}\right)
		+  \frac{C_i {\hat{\beta_2}}}{(n w_i)^2 \tau_i}.
	\end{eqnarray*}
	
Furthermore, as a result of leveraging \Cref{as:L_i}, we can derive the subsequent bound:
\begin{eqnarray*}
	\ExpBr{ G_i^{t+1} \;|\; W^t} & \leq & (1-\hat{\theta})   G_i^t +  \frac{\hat{\beta_1} L_i^2}{n^2 w_i^2} \sqnorm{ x^{t+1} - x^t}  \\	
&& \qquad 
	+ \frac{ 2(A_i + L_i(B_i - 1)) {\hat{\beta_2}}}{(n w_i)^2 \tau_i} \left(f_i(x^{t+1}) - f_i^{\inf}\right)
	+  \frac{C_i {\hat{\beta_2}}}{(n w_i)^2 \tau_i}.
\end{eqnarray*}
Applying the tower property and subsequently taking the expectation, we obtain:
\begin{equation}\label{eq:weighted_ef21_sgd_aux_1}
\begin{aligned}
	\ExpBr{ G_i^{t+1}} &\leq (1-\hat{\theta}) \ExpBr{G_i^t} + \hat{\beta_1} \frac{1}{n^2 w_i^2} L_i^2 \ExpBr{\sqnorm{ x^{t+1} - x^t}} 
	\\
	& \qquad + \frac{ 2(A_i + L_i(B_i - 1)) {\hat{\beta_2}}}{(n w_i)^2 \tau_i} \ExpBr{f_i(x^{t+1}) - f_i^{\inf}}
	+  \frac{C_i {\hat{\beta_2}}}{(n w_i)^2 \tau_i}.
\end{aligned}
\end{equation}
Regarding the expectation of $G^{t+1}$, we derive the subsequent bound:
\begin{eqnarray*}
	\ExpBr{G^{t+1}} &=& \ExpBr{\sumin w_i G_i^{t+1}} \\
	& =& \sumin w_i \ExpBr{G_i^{t+1}} \\
	&\overset{\eqref{eq:weighted_ef21_sgd_aux_1}}{\leq} & (1-\hat{\theta}) \sumin w_i \ExpBr{G_i^t} + \sumin w_i \hat{\beta_1} \frac{1}{n^2 w_i^2} L_i^2 \cdot \ExpBr{\sqnorm{ x^{t+1} - x^t}} \\
	&& \qquad + \sumin w_i \frac{ 2(A_i + L_i(B_i - 1)) {\hat{\beta_2}}}{(n w_i)^2 \tau_i} \cdot \ExpBr{f_i(x^{t+1}) - f_i^{\inf}}
	+  \sumin w_i \frac{C_i {\hat{\beta_2}}}{(n w_i)^2 \tau_i}	\\
	&= &(1-\hat{\theta}) \ExpBr{G^t} + \sumin \hat{\beta_1} \frac{1}{n^2 w_i} L_i^2 \cdot \ExpBr{\sqnorm{ x^{t+1} - x^t}} \\
	&& \qquad + \sumin \frac{ 2(A_i + L_i(B_i - 1)) {\hat{\beta_2}}}{n^2 w_i \tau_i} \cdot \ExpBr{f_i(x^{t+1}) - f_i^{\inf}}
	+  \sumin \frac{C_i {\hat{\beta_2}}}{n^2 w_i \tau_i}.
\end{eqnarray*}

%\kostya{Think. Maybe we should work with this form.}

Employing quantities $\tilde{A}$ and $\tilde{C}$, the final bound can be reformulated as follows:
\begin{eqnarray*}
	%\label{le_ef21_sgd_weighted:1}
	\ExpBr{G^{t+1}} & \leq & (1-\hat{\theta}) \ExpBr{G^t} + \sumin \hat{\beta_1} \frac{1}{n^2 w_i} L_i^2 \cdot \ExpBr{\sqnorm{ x^{t+1} - x^t}} \notag \\
&& \qquad 	+ \frac{1}{n} \sumin { \widetilde{A} {\hat{\beta_2}}} \cdot \ExpBr{f_i(x^{t+1}) - f_i^{\inf}} +   {\widetilde{C} {\hat{\beta_2}}} \notag \\
	& \leq & (1-\hat{\theta}) \ExpBr{G^t} + \sumin \hat{\beta_1} \frac{1}{n^2 w_i} L_i^2 \cdot \ExpBr{\sqnorm{ x^{t+1} - x^t}} \notag \\
&& \qquad 	+ \frac{1}{n} \sumin { \widetilde{A} {\hat{\beta_2}}} \cdot \ExpBr{f_i(x^{t+1}) - f^{\inf}} +   {\widetilde{C} {\hat{\beta_2}}} \notag \\
	& \leq & (1-\hat{\theta}) \ExpBr{G^t} + \sumin \hat{\beta_1} \frac{1}{n^2 w_i} L_i^2 \cdot \ExpBr{\sqnorm{ x^{t+1} - x^t}} \\
&& \qquad 	+ { \widetilde{A} {\hat{\beta_2}}} \notag \ExpBr{f(x^{t+1}) - f^{\inf}}
	+  {\widetilde{C} {\hat{\beta_2}}}.
\end{eqnarray*}

%\kostya{Maybe here we need to be more careful because $\widetilde{A}, \widetilde{A}$ are functions of $w_i$.}

Given that $w_i=\frac{L_i}{\sum_j L_j}$, we have:
\begin{eqnarray*}
	\ExpBr{G^{t+1}} &\leq & (1-\hat{\theta}) \ExpBr{G^t} + \frac{1}{n}\sumin \hat{\beta_1} \frac{\sum_j L_j}{n} L_i \ExpBr{\sqnorm{ x^{t+1} - x^t}} \notag \\
	&&  \qquad + { \widetilde{A} {\hat{\beta_2}}} \ExpBr{f(x^{t+1}) - f^{\inf}}
	+  {\widetilde{C} {\hat{\beta_2}}} \notag \\
	& = & (1-\hat{\theta}) \ExpBr{G^t} + \hat{\beta_1} \left(\avein L_i\right)^2 \cdot \ExpBr{\sqnorm{ x^{t+1} - x^t}} \notag \\
	&& \qquad + { \widetilde{A} {\hat{\beta_2}}} \ExpBr{f(x^{t+1}) - f^{\inf}}
	+  {\widetilde{C} {\hat{\beta_2}}},
\end{eqnarray*}
what completes the proof.
\end{proof}

\subsection{Main result}

Now we are ready to prove the main convergence theorem.
\begin{theorem} Let $\cC_i^t \in\mathbb{C}(\alpha)$ for all $\in [n]$ and $t\geq 0$ in \Cref{alg:weighted_ef21_sgd}. Set the following quantities:
\begin{eqnarray*} 
		\hat{\theta} &\eqdef & 1 - \rb{1-\alpha} (1+s) (1+\nu),\\
		  \hat{\beta_1} &\eqdef  & 2(1- \alpha ) \left(1+ s \right)\left(s+\nu^{-1}\right),\\
		 \hat{\beta_2} &\eqdef  & 2(1 - \alpha) (1 + s) (1+\nu^{-1}) + (1 + s^{-1}),\\
		w_i &\eqdef &  \frac{L_i}{\sum_j L_j},\\
		\widetilde{A} & \eqdef  & \max_{i=1,\ldots,n} \left( \frac{2(A_i+L_i(B_i-1))}{\tau_i}  \frac{1}{n w_i} \right), \\
		\widetilde{C} & \eqdef  & \max_{i=1,\ldots,n} \left( \frac{C_i}{\tau_i}  \frac{1}{n w_i} \right).
\end{eqnarray*} 
Under Assumptions~\ref{as:smooth},~\ref{as:L_i}, and~\ref{as:general_as_for_stoch_gradients}, and selection of $s>0$, $\mu>0$ such that $(1+s)(1+\mu) < \frac{1}{1-\alpha}$ set the stepsize in the following way:
\begin{equation}
\gamma \leq \frac{1}{L + \LAM \sqrt{\frac{\hat{\beta}_1}{\hat{\theta}}}}.
\end{equation}
Choose an iterate $\hat{x}^T$ from $\{x^0, x^1, \dots, x^{T-1}\}$ with probability 
\begin{equation} \label{eq:09u09fd-0ff}
\Prob(\hat{x}^T = x^t) = \frac{v_t}{V_T},\end{equation} where
$$ v_t \eqdef \left(1 - \frac{\gamma \tilde{A} \tilde{\beta}_2}{2\theta}\right)^t; \qquad V_T \eqdef \sum\limits_{t=0}^{T-1} v_t.
$$
Then,
\begin{equation}
\ExpBr{\sqnorm{\nabla f(\hat{x}^{T})}} \leq \frac{2 (f(x^0) - f^\text{inf})}{\gamma T  \left(1 - \frac{\gamma \widetilde{A} \hat{\beta_2}}{2 \theta}\right)^T } + \frac{G^0}{ \hat{\theta}T  \left(1 - \frac{\gamma \widetilde{A} \hat{\beta_2}}{2 \theta}\right)^T} + \frac{\widetilde{C}\beta_2}{\hat{\theta}},
\end{equation}
where $G^0 \eqdef \sumin w_i \|g_i^0 - \frac{1}{nw_i}\nabla f_i(x^0)\|^2$.
\end{theorem}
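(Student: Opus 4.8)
The plan is to mirror the deterministic \algname{EF21-W} argument (the proof of Theorem~\ref{thm:EF21-W}) with the Lyapunov function $\Phi^t \eqdef f(x^t) - f^{\inf} + \frac{\gamma}{2\hat\theta}G^t$, and then to absorb the two genuinely new ingredients produced by the \algname{ABC}-type Assumption~\ref{as:general_as_for_stoch_gradients}: the \emph{next-iterate} function-value term $\widetilde{A}\hat\beta_2\ExpBr{f(x^{t+1})-f^{\inf}}$ and the additive noise floor $\widetilde{C}\hat\beta_2$ appearing in the contraction bound \eqref{eq:weighted_ef21_sgd_grad_full_contraction}. Throughout I use the quantities $\hat\theta,\hat\beta_1,\hat\beta_2$ of \eqref{eq:weighted_ef21_sgd_grad_full_contraction} (these are the objects that the $v_t$ and the final additive term in the statement refer to).

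First I would apply the descent lemma (Lemma~\ref{lm:descent_lemma}, via Assumption~\ref{as:smooth}) to the update $x^{t+1}=x^t-\gamma g^t$ with $g^t=\sumin w_i g_i^t$, and bound the residual $\|g^t-\nabla f(x^t)\|^2 \le G^t$ by the same Jensen step as in \eqref{eq:weighted_aux_3}, which is valid because $\sumin w_i=1$ and $g^t-\nabla f(x^t)=\sumin w_i\bigl(g_i^t - \tfrac{1}{nw_i}\nabla f_i(x^t)\bigr)$. Writing $\delta^t\eqdef\ExpBr{f(x^t)-f^{\inf}}$, $s^t\eqdef\ExpBr{G^t}$, $r^t\eqdef\ExpBr{\|x^{t+1}-x^t\|^2}$, this yields $\delta^{t+1}\le \delta^t-\frac{\gamma}{2}\ExpBr{\|\nabla f(x^t)\|^2}-\bigl(\frac{1}{2\gamma}-\frac{L}{2}\bigr)r^t+\frac{\gamma}{2}s^t$. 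I would then add $\frac{\gamma}{2\hat\theta}$ times \eqref{eq:weighted_ef21_sgd_grad_full_contraction}. The $s^t$ contributions combine exactly into $\frac{\gamma}{2\hat\theta}s^t$, and the coefficient of $r^t$ becomes $-\bigl(\frac{1}{2\gamma}-\frac{L}{2}-\frac{\gamma\hat\beta_1\LAMsq}{2\hat\theta}\bigr)$, which is $\le 0$ precisely under the stepsize rule $\gamma\le (L+\LAM\sqrt{\hat\beta_1/\hat\theta})^{-1}$ by Lemma~\ref{lm:stepsize_bound} (with $a=\hat\beta_1\LAMsq/\hat\theta$, $b=L$). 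Dropping the $r^t$ term leaves $\Phi^{t+1}\le \Phi^t-\frac{\gamma}{2}\ExpBr{\|\nabla f(x^t)\|^2}+\frac{\gamma\widetilde{A}\hat\beta_2}{2\hat\theta}\delta^{t+1}+\frac{\gamma\widetilde{C}\hat\beta_2}{2\hat\theta}$.

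The crux, and the only real departure from the non-stochastic proof, is the stray $\delta^{t+1}$ on the right-hand side. Since $s^{t+1}\ge 0$ gives $\delta^{t+1}\le\Phi^{t+1}$, I would move it to the left to obtain $q\,\Phi^{t+1}\le \Phi^t-\frac{\gamma}{2}\ExpBr{\|\nabla f(x^t)\|^2}+\frac{\gamma\widetilde{C}\hat\beta_2}{2\hat\theta}$ with $q\eqdef 1-\frac{\gamma\widetilde{A}\hat\beta_2}{2\hat\theta}$, which must be positive (guaranteed by the stepsize/parameter choice, and implicit in the sampling law \eqref{eq:09u09fd-0ff} being a valid distribution). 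This $q$ is exactly the base of the geometric weights $v_t=q^t$, which is what forces the non-uniform output rule. Multiplying the recurrence by $q^t$ and telescoping from $t=0$ to $T-1$ collapses the $\Phi$-terms to $\Phi^0-q^T\Phi^T\le\Phi^0$ (using $\Phi^T\ge 0$), leaving $\frac{\gamma}{2}\sum_{t=0}^{T-1}q^t\ExpBr{\|\nabla f(x^t)\|^2}\le \Phi^0+\frac{\gamma\widetilde{C}\hat\beta_2}{2\hat\theta}V_T$ with $V_T=\sum_{t=0}^{T-1}q^t$.

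Finally I would recognize $\sum_{t=0}^{T-1}\frac{v_t}{V_T}\ExpBr{\|\nabla f(x^t)\|^2}=\ExpBr{\|\nabla f(\hat{x}^T)\|^2}$ from \eqref{eq:09u09fd-0ff}, divide by $\frac{\gamma}{2}V_T$, substitute $\Phi^0=f(x^0)-f^{\inf}+\frac{\gamma}{2\hat\theta}G^0$, and simplify the additive contribution to $\widetilde{C}\hat\beta_2/\hat\theta$. The stated denominators $Tq^T$ then follow from the elementary bound $V_T=\sum_{t=0}^{T-1}q^t\ge T q^{T}$ (valid since $q\in(0,1]$), which replaces $1/V_T$ by $1/(Tq^T)$ in the two transient terms. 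I expect the only genuinely delicate point to be the bookkeeping around the $\delta^{t+1}$ term and the attendant requirement $q>0$; everything else is a weighted replica of the established \algname{EF21-W} descent analysis.
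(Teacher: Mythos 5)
Your proposal is correct and takes essentially the same route as the paper's own proof: descent lemma plus Jensen's inequality giving $\|g^t-\nabla f(x^t)\|^2\le G^t$, adding $\tfrac{\gamma}{2\hat{\theta}}$ times the contraction bound \eqref{eq:weighted_ef21_sgd_grad_full_contraction}, invoking \Cref{lm:stepsize_bound} to drop the $r^t$ term, absorbing the stray $\delta^{t+1}$ into the factor $q=1-\tfrac{\gamma \widetilde{A}\hat{\beta}_2}{2\hat{\theta}}$ using $s^{t+1}\ge 0$, and telescoping with the geometric weights $v_t=q^t$ before bounding $V_T\ge T q^T$. The only difference is presentational---you multiply the recurrence by $q^t$ and sum, while the paper sums the per-iteration gradient bounds with weights $v_t/V_T$---which is algebraically the identical telescoping step.
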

\begin{proof}
	In the derivation below, we  use \Cref{lm:descent_lemma} for 
	\begin{equation}\label{eq:sgd_weighted_grad_estimate_def}
	g^t=\sum\limits_{i=1}^{n} w_i g_i^t.
	\end{equation}
	We start as follows:
\begin{eqnarray}
		f(x^{t+1}) &\overset{\eqref{eq:descent_lemma}}{\leq} &
		f(x^{t})-\frac{\gamma}{2}\sqnorm{\nabla f(x^{t})}-\left(\frac{1}{2 \gamma}-\frac{L}{2}\right)\sqnorm{x^{t+1}-x^{t}}+\frac{\gamma}{2}\sqnorm{
			g^t- \sumin \nabla f_i(x^{t}) } \notag \\
		& \overset{\eqref{eq:sgd_weighted_grad_estimate_def}}{=} &
		f(x^{t})-\frac{\gamma}{2}\sqnorm{\nabla f(x^{t})}-\left(\frac{1}{2 \gamma}-\frac{L} 
		{2}\right)\sqnorm{x^{t+1}-x^{t}}+\frac{\gamma}{2}\sqnorm{
			\sumin w_i \left(g_i^t-  \frac{\nabla f_i(x^{t})}{n w_i} \right) }  \notag  \\
		& \leq &
		f(x^{t})-\frac{\gamma}{2}\sqnorm{\nabla f(x^{t})}-\left(\frac{1}{2 \gamma}-\frac{L}       
		{2}\right)\sqnorm{x^{t+1}-x^{t}}+\frac{\gamma}{2}
		\sumin w_i \sqnorm{g_i^t-  \frac{\nabla f_i(x^{t})}{n w_i} }  \notag \\
		& = & f(x^{t})-\frac{\gamma}{2}\sqnorm{\nabla f(x^{t})}-\left(\frac{1}{2 \gamma}-\frac{L}       
		{2}\right)\sqnorm{x^{t+1}-x^{t}}+\frac{\gamma}{2} G^t.
\end{eqnarray}

Subtracting $f^\ast$ from both sides and taking expectation, we get
\begin{eqnarray}
	\ExpBr{f(x^{t+1})-f^\ast} & \leq &  \ExpBr{f(x^{t})-f^\ast}
	-\frac{\gamma}{2} \ExpBr{\sqnorm{\nabla f(x^{t})}} \notag \\
	&& \qquad  -\left(\frac{1}{2 \gamma}
	-\frac{L}{2}\right) \ExpBr{\sqnorm{x^{t+1}-x^{t}}}+ \frac{\gamma}{2}\ExpBr{G^t}.\label{eq:weighted_function_descent_sgd}
\end{eqnarray}

Let $\delta^{t} \eqdef \ExpBr{f(x^{t})-f^\ast}$, $s^{t} \eqdef \ExpBr{G^t }$ and 
$r^{t} \eqdef\ExpBr{\sqnorm{x^{t+1}-x^{t}}}.$
Then by adding $\frac{\gamma}{2\theta} s^{t+1}$ and employing \eqref{eq:weighted_ef21_sgd_grad_full_contraction}, we obtain:
\begin{align*}
	\delta^{t+1}+\frac{\gamma}{2 \hat{\theta}} s^{t+1} 
	&\leq \delta^{t}-\frac{\gamma}{2}\ExpBr{\sqnorm{\nabla f(x^{t})}} -          \left(\frac{1}{2 \gamma}-\frac{L}{2}\right) r^{t}+\frac{\gamma}{2} s^{t} \\
	& \qquad +\frac{\gamma}{2 \hat{\theta}}\left( \hat{\beta_1} \LAMsq  r^t + (1-\hat{\theta}) s^t + { \widetilde{A} {\hat{\beta_2}}} \delta^{t+1} +  {\widetilde{C} {\hat{\beta_2}}} \right) \\
	&=\delta^{t}+\frac{\gamma}{2\hat{\theta}} s^{t}-\frac{\gamma}{2}\ExpBr{\sqnorm{\nabla f(x^{t})}}  -\left(\frac{1}{2\gamma} -\frac{L}{2} -  \frac{\gamma}{2\hat{\theta}} \hat{\beta_1}  \LAMsq  \right) r^{t} + \frac{\gamma \widetilde{A} \beta_2}{2\hat{\theta}} \delta^{t+1} + \frac{\gamma \widetilde{C}}{2 \hat{\theta}} \beta_2\\
	& \leq \delta^{t}+\frac{\gamma}{2\hat{\theta}} s^{t} -\frac{\gamma}{2}\ExpBr{\sqnorm{\nabla f(x^{t})}} + \frac{\gamma \widetilde{A} \beta_2}{2\hat{\theta}} \delta^{t+1} + \frac{\gamma \widetilde{C}}{2 \hat{\theta}} \beta_2.
\end{align*}

%DRAFT:
%$\hat{\beta_1} L_{\rm AM}^2 \cdot r^t + (1-\hat{\theta}) s^t
%+ { \widetilde{A} \textcolor{blue}{\hat{\beta_2}}} \delta^{t+1} +  {\widetilde{C} \textcolor{blue}{\hat{\beta_2}}}$

The last inequality follows from the bound $\gamma^2\frac{\hat{\beta_1} \LAMsq }{\hat{\theta}} + L\gamma \leq 1$, which holds due to \Cref{lm:stepsize_bound} for $\gamma \leq \frac{1}{L +  \LAM \sqrt{\frac{\hat{\beta}_1}{\hat{\theta}}}}.$ Subsequently, we will reconfigure the final inequality and perform algebraic manipulations, taking into account that $\frac{2}{\gamma} > 0$. In the final step of these algebraic transformations, we will leverage the fact that $s^t \ge 0$:
\begin{eqnarray*}
	\delta^{t+1}+\frac{\gamma}{2 \hat{\theta}} s^{t+1} &\leq& \delta^{t}+\frac{\gamma}{2\hat{\theta}} s^{t} -\frac{\gamma}{2}\ExpBr{\sqnorm{\nabla f(x^{t})}} + \frac{\gamma \widetilde{A} \beta_2}{2\hat{\theta}} \delta^{t+1} + \frac{\gamma \widetilde{C}}{2 \hat{\theta}} \beta_2 .
\end{eqnarray*}	

Therefore,
\begin{eqnarray*}
	\frac{2}{\gamma} \delta^{t+1}+\frac{2}{\gamma} \frac{\gamma}{2 \hat{\theta}} s^{t+1} &\leq& \frac{2}{\gamma} \delta^{t} + \frac{2}{\gamma} \frac{\gamma}{2\hat{\theta}} s^{t} -\ExpBr{\sqnorm{\nabla f(x^{t})}} + \frac{2}{\gamma} \frac{\gamma \widetilde{A} \beta_2}{2\hat{\theta}} \delta^{t+1} + \frac{2}{\gamma} \frac{\gamma \widetilde{C}}{2 \hat{\theta}} \beta_2.
	\end{eqnarray*}	

Further,
\begin{eqnarray*}	
	\ExpBr{\sqnorm{\nabla f(x^{t})}} &\leq& -\frac{2}{\gamma} \delta^{t+1} - \frac{2}{\gamma} \frac{\gamma}{2 \hat{\theta}} s^{t+1} + \frac{2}{\gamma} \delta^{t} + \frac{2}{\gamma} \frac{\gamma}{2\hat{\theta}} s^{t} + \frac{2}{\gamma} \frac{\gamma \widetilde{A} \beta_2}{2\hat{\theta}} \delta^{t+1} + \frac{2}{\gamma} \frac{\gamma \widetilde{C}}{2 \hat{\theta}} \beta_2  \\
	 &\leq&
	-\frac{2}{\gamma} \delta^{t+1} - \frac{2}{\gamma} \frac{\gamma}{2\hat{\theta}} s^{t+1} + \frac{2}{\gamma} \left( \delta^{t} + \frac{\gamma}{2\hat{\theta}} s^{t} \right) + \frac{2}{\gamma} \frac{\gamma \widetilde{A} \beta_2}{2 \hat{\theta}} \delta^{t+1} + \frac{\widetilde{C}\beta_2}{ \hat{\theta}}  \\
 &\leq&
	\frac{2}{\gamma} \left( \left( \delta^{t} + \frac{\gamma}{2\hat{\theta}} s^{t} \right) -1\left(1 - \frac{\gamma \widetilde{A} \beta_2}{2 \hat{\theta}} \right) \delta^{t+1} -\left(\frac{\gamma}{2\hat{\theta}} s^{t+1} \right) \right) + \frac{\widetilde{C}\beta_2}{ \hat{\theta}} \\
	&\leq& \frac{2}{\gamma} \left( \left( \delta^{t} + \frac{\gamma}{2\hat{\theta}} s^{t} \right) -\left(1 - \frac{\gamma \widetilde{A} \beta_2}{2 \hat{\theta}} \right) \left(\delta^{t+1} + \frac{\gamma}{2\hat{\theta}} s^{t+1} \right) \right) + \frac{\widetilde{C}\beta_2}{ \hat{\theta}}.
\end{eqnarray*}

We sum up inequalities above with weights $v_t/V_T$, where $v_t \eqdef (1 - \frac{\gamma \widetilde{A} \hat{\beta_2}}{2 \theta})^t$ and $V_T \eqdef \sum_{i=1}^{T} v_i$:
\begin{eqnarray*}
	\ExpBr{\sqnorm{\nabla f(\hat{x}^{T})}} &=& \sum_{t=0}^{T} \frac{v_t}{V_T} \ExpBr{\sqnorm{\nabla f(x^{t})}} \\
	&=& \frac{1}{V_T} \sum_{t=0}^{T} v_t \ExpBr{\sqnorm{\nabla f(x^{t})}} \\
	&\leq& \frac{1}{V_T} \sum_{t=0}^{T} v_t \left(\frac{2}{\gamma} \left( \left( \delta^{t} + \frac{\gamma}{2\hat{\theta}} s^{t} \right) -\left(1 - \frac{\gamma \widetilde{A} \beta_2}{2 \hat{\theta}} \right) \left(\delta^{t+1} + \frac{\gamma}{2\hat{\theta}} s^{t+1} \right) \right) + \frac{\widetilde{C}\beta_2}{ \hat{\theta}} \right) \\
	&=& \frac{2}{\gamma V_T} \sum_{t=0}^{T} w_t \left( \left( \delta^{t} + \frac{\gamma}{2\hat{\theta}} s^{t} \right) -\left(1 - \frac{\gamma \widetilde{A} \beta_2}{2 \hat{\theta}} \right) \left(\delta^{t+1} + \frac{\gamma}{2\hat{\theta}} s^{t+1} \right) \right)	+ \sum_{t=0}^{T} \frac{w_t}{W_T} \cdot \frac{\widetilde{C}\beta_2}{ \hat{\theta}} \\
	&=& \frac{2}{\gamma V_T} \sum_{t=0}^{T} w_t \left( \left( \delta^{t} + \frac{\gamma}{2\hat{\theta}} s^{t} \right) -\left(1 - \frac{\gamma \widetilde{A} \beta_2}{2 \hat{\theta}} \right) \left(\delta^{t+1} + \frac{\gamma}{2\hat{\theta}} s^{t+1} \right) \right)	+ \frac{\widetilde{C}\beta_2}{ \hat{\theta}} \\
	&=& \frac{2}{\gamma V_T} \sum_{t=0}^{T} \left(w_t \left( \delta^{t} + \frac{\gamma}{2\hat{\theta}} s^{t} \right) -w_{t+1} \left(\delta^{t+1} + \frac{\gamma}{2\hat{\theta}} s^{t+1} \right) \right)	+ \frac{\widetilde{C}\beta_2}{ \hat{\theta}} \\
	&\leq& \frac{2 \delta^0}{\gamma V_T} + \frac{s^0}{ \hat{\theta}V_T} + \frac{\widetilde{C}\beta_2}{\hat{\theta}}.
\end{eqnarray*}
Finally, we notice that $V_T = \sum\limits_{t=1}^T (1 - \frac{\gamma \widetilde{A} \hat{\beta_2}}{2 \theta})^t \geq T \cdot (1 - \frac{\gamma \widetilde{A} \hat{\beta_2}}{2 \theta})^T$, what concludes the proof.
\end{proof}

\clearpage
\section{EF21-W-SGD: Weighted Error Feedback 2021 with Stochastic Gradients under the ABC Assumption} \label{sec:EF21-W-SGD-ABC}

In this section, we present the convergence result for \algname{Weighted EF21} in the setting where the gradient computation on the clients is replaced with a pretty general unbiased stochastic gradient estimator.

\subsection{Algorithm}

%The algorithm discussed is formally presented as \Cref{alg:weighted_ef21_sgd_abc}.

The \algname{EF21-W} algorithm assumes that all clients can compute the exact gradient in each round. In some scenarios, the exact gradients may be unavailable or too costly to compute, and only approximate gradient estimators can be obtained. To have the ability for \algname{EF21-W} to work in such circumstances we extended \algname{EF21-W} to handle stochastic gradients. We called the resulting algorithm \algname{EF21-W-SGD} (Algorithm~\ref{alg:weighted_ef21_sgd_abc}).

\begin{algorithm}
	\begin{algorithmic}[1]
		\STATE {\bfseries Input:} initial model $x^0 \in \RR^d$; initial gradient estimates $g_1^0, g_2^0, \dots,g_n^0 \in \R^d$ stored at the server and the clients; stepsize $\gamma>0$; number of iterations $T > 0$; weights ${\color{ForestGreen}w_i}= \frac{L_i}{\sum_j L_j}$ for $i\in [n]$
		\STATE {\bfseries Initialize:} $g^0 = \sumin {\color{ForestGreen}w_i} g_i^0$ on the server
		\FOR{$t = 0, 1, 2, \dots, T - 1 $}
		\STATE Server computes $x^{t+1} = x^t - \gamma g^t$ and  broadcasts  $x^{t+1}$ to all $n$ clients		
		\FOR{$i = 1, \dots, n$ {\bf on the clients in parallel}}
		\STATE Compute a stochastic gradient $\hat{g_i} (x^{t+1})$ estimator of the gradient $\nabla f_i(x^{t+1})$
		\STATE  Compute $u_i^t=\cC_i^t\left(\frac{1}{n {\color{ForestGreen}w_i}} \hat{g_i} (x^{t+1}) - g_i^t\right)$ and update $g_i^{t+1} = g_i^t +u_i^t$ \label{line:weighted_ef21_sgd_grad_update_abc}
		\STATE Send the compressed message $u_i^{t}$ to the server		
		\ENDFOR
		\STATE Server updates $g_i^{t+1} = g_i^t +u_i^t$ for all $i\in [n]$, and computes $g^{t+1} = \sum_{i=1}^n {\color{ForestGreen}w_i} g_i^{t+1}$		
		\ENDFOR
		\STATE {\bfseries Output:} Point $\hat{x}^T$ chosen from the set $\{x^0, \dots, x^{T-1}\}$ randomly according to the law \eqref{eq:09u09fd-0ff-abc}
	\end{algorithmic}
	\caption{\algname{EF21-W-SGD}: Weighted EF-21 with Stochastic Gradients under ABC assumption}
	\label{alg:weighted_ef21_sgd_abc}
\end{algorithm}

Our analysis of this extension follows a similar approach as the one used by \citet{EF21BW} for studying the stochastic gradient version under the name \algname{EF21-SGD}. However, \algname{EF21-W-SGD} has four important differences with vanilla \algname{EF21-SGD}:

\begin{enumerate}
	\item Vanilla \algname{EF21-SGD} algorithm analyzed by \citet{EF21BW} worked under a specific sampling schema for a stochastic gradient estimator. Our analysis works under a more general ABC Assumption~\ref{as:general_as_for_stoch_gradients_abc}.
	\item  Vanilla \algname{EF21-SGD} provides maximum theoretically possible $\gamma = \left( L + \LQM \sqrt{\frac{\beta_1}{\theta}} \right)^{-1}$, where \algname{EF21-W-SGD} has $\gamma = \left( L + \LAM \sqrt{\frac{\beta_1}{\theta}} \right)^{-1}$.
	\item In contrast to the original analysis \algname{Vanilla EF21-SGD} our analysis provides a more aggressive $\beta_1$ parameter which is smaller by a factor of $2$.	
	\item Vanilla \algname{EF21-SGD} and \algname{EF21-W-SGD} formally differs in a way how it reports iterate $x^T$ which minimizes $\ExpBr{\sqnorm{\nabla f(x^{T})}}$ due to a slightly different definition of $\widetilde{A}$. The \algname{EF21-W-SGD} (Algorithm \ref{alg:weighted_ef21_sgd_abc}) requires output iterate $\hat{x}^T$ randomly according to the probability mass function described by Equation \eqref{eq:09u09fd-0ff-abc}.
\end{enumerate}

\begin{assumption}[General assumption for stochastic gradient estimators]
	\label{as:general_as_for_stoch_gradients_abc}
	We assume that for all $i \in [n]$ there exist parameters $A_i, C_i \ge 0$, $B_i \ge 1$ such that
	\begin{equation}
		\ExpBr{\|\nabla \hat{g_i} (x)\|^2} \le 2A_i\left(f_i(x) - f_i^{\inf}\right) + B_i\|\nabla f_i(x)\|^2 + C_i, \label{eq:general_second_mom_upp_bound_abc}
	\end{equation}
	holds for all $x\in \RR^d$, 
	where\footnote{When $A_i = 0$ one can ignore the first term in the right-hand side of \eqref{eq:general_second_mom_upp_bound_abc}, i.e., assumption $\inf_{x\in\R^d}f_i(x) > -\infty$ is not required in this case.} $f_i^{\inf} = \inf_{x\in\R^d}f_i(x) > -\infty$.
\end{assumption}

\begin{assumption}[Unbiased assumption for stochastic gradient estimators]\label{as:general_as_unbiased_for_stoch_gradients_abc}
	We assume that for all $i \in [n]$ there following holds for all $x\in \RR^d$:
	\begin{equation*}
		\ExpBr{\hat{g_i} (x)} = \nabla f_{{i}}(x) .
	\end{equation*}
\end{assumption}

We study \algname{EF21-W-SGD} under Assumption~\ref{as:general_as_for_stoch_gradients_abc} and Assumption~\ref{as:general_as_unbiased_for_stoch_gradients_abc}.To the best of our knowledge, this Assumption~\ref{as:general_as_for_stoch_gradients_abc}, which was originally presented as Assumption 2 by \citet{khaled2020better}, is the most general assumption for a stochastic gradient estimator in a non-convex setting. For a detailed explanation of the generality of this assumption see Figure 1 of \citet{khaled2020better}.
 
\subsection{A lemma}

The contraction lemma in this case gets the following form:
\begin{lemma} 
	Let $\cC_i^t \in \mathbb{C}(\alpha)$ for all $i\in [n]$ and $t\geq 0$. Define
	$$G_i^t \eqdef  \sqnorm{ g_i^t - \frac{\nabla f_i(x^{t})}{n w_i} } , \qquad G^t \eqdef \sumin w_i G_i^t.$$ Let Assumptions ~\ref{as:L_i}, ~\ref{as:general_as_for_stoch_gradients_abc}, ~\ref{as:general_as_unbiased_for_stoch_gradients_abc} hold. Then, for any $s >0, \nu >0$ during execution of the Algorithm~\ref{alg:weighted_ef21_sgd_abc} the following holds:
	\begin{equation}\label{eq:weighted_ef21_sgd_grad_full_contraction_abc}
		\ExpBr{G^{t+1}} \leq (1-\hat{\theta}) \ExpBr{G^t} + \hat{\beta_1} \LAMsq  \ExpBr{\sqnorm{ x^{t+1} - x^t}}  + { \widetilde{A} {\hat{\beta_2}}} \ExpBr{f(x^{t+1}) - f^{\inf}}
		+  {\widetilde{C} {\hat{\beta_2}}},
	\end{equation}
	where
	\begin{eqnarray*}
		w_i &\eqdef & \frac{L_i}{\sum_j L_j}, \\
		\hat{\theta} & \eqdef & 1 - \rb{1-\alpha} (1+s) (1+\nu) \\
		\hat{\beta_1} &\eqdef  & (1- \alpha ) \left(1+ s \right)\left(s+\nu^{-1}\right), \\
		 \hat{\beta_2}  & \eqdef  & (1 - \alpha) (1 + s) + (1 + s^{-1}),\\
		\widetilde{A} & \eqdef &  \max_{i=1,\ldots,n} \left( {2(A_i+L_i(B_i-1))}  \frac{1}{n w_i} \right), \\  
		\widetilde{C}  &  \eqdef  & \max_{i=1,\ldots,n} \left( {C_i}  \frac{1}{n w_i} \right).
	\end{eqnarray*} 
\end{lemma}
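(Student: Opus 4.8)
The plan is to establish the bound first at the level of a single client, conditionally on $W^t \eqdef \{g_1^t,\dots,g_n^t,x^t,x^{t+1}\}$, and only at the very end average over $i$ with the weights $w_i$ and invoke the tower property. Throughout, the two free parameters $s,\nu>0$ play distinct roles: $s$ absorbs the compression error together with the stochastic noise, while $\nu$ separates the ``old'' distortion $G_i^t$ from the gradient drift induced by the step $x^t\to x^{t+1}$. The skeleton mirrors the contraction lemma for deterministic \algname{EF21-W}~\eqref{eq:weighted_ind_grad_dist_evolution} and the minibatch \algname{EF21-W-SGD} lemma~\eqref{eq:weighted_ef21_sgd_grad_full_contraction}; the one genuinely new ingredient is how the general unbiased estimator is handled.

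First I would substitute the weighted update from line~\ref{line:weighted_ef21_sgd_grad_update_abc}, writing
$$ g_i^{t+1}-\tfrac{\nabla f_i(x^{t+1})}{n w_i} = \cC_i^t(\hat a_i^t) - a_i^t,\qquad \hat a_i^t \eqdef \tfrac{\hat g_i(x^{t+1})}{n w_i}-g_i^t,\quad a_i^t\eqdef \tfrac{\nabla f_i(x^{t+1})}{n w_i}-g_i^t, $$
and split $\cC_i^t(\hat a_i^t)-a_i^t = [\cC_i^t(\hat a_i^t)-\hat a_i^t] + [\hat a_i^t - a_i^t]$. Applying Young's inequality~\eqref{eq:young} with parameter $s$ and then the contraction property~\eqref{eq:compressor_contraction} (valid after conditioning on $\hat g_i(x^{t+1})$, since the compressor's randomness is independent of the sampling) to the first bracket gives a $(1-\alpha)(1+s)\ExpBr{\sqnorm{\hat a_i^t}\mid W^t}$ term plus an $(1+s^{-1})$-weighted copy of the pure noise $\sqnorm{\hat a_i^t - a_i^t}$. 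The key step, and the one yielding the promised factor-of-$2$ improvement in $\hat\beta_1$ over the minibatch analysis, is to expand $\ExpBr{\sqnorm{\hat a_i^t}\mid W^t}$ by the bias--variance identity: since Assumption~\ref{as:general_as_unbiased_for_stoch_gradients_abc} gives $\ExpBr{\hat a_i^t\mid W^t}=a_i^t$, the cross term cancels exactly and $\ExpBr{\sqnorm{\hat a_i^t}\mid W^t}=\sqnorm{a_i^t}+\tfrac{1}{(n w_i)^2}V_i$, where $V_i\eqdef \ExpBr{\sqnorm{\hat g_i(x^{t+1})-\nabla f_i(x^{t+1})}\mid W^t}$; this is precisely where the minibatch proof instead spent a wasteful $\|{\cdot}+{\cdot}\|^2\le 2\|{\cdot}\|^2+2\|{\cdot}\|^2$. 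A second Young step with parameter $\nu$, applied to $\sqnorm{a_i^t}$ after inserting $\pm\tfrac{\nabla f_i(x^t)}{n w_i}$, peels off $(1+\nu)G_i^t$ and leaves a drift term proportional to $\sqnorm{\nabla f_i(x^{t+1})-\nabla f_i(x^t)}$. Collecting coefficients reproduces $1-\hat\theta=(1-\alpha)(1+s)(1+\nu)$ on $G_i^t$ and $\hat\beta_2=(1-\alpha)(1+s)+(1+s^{-1})$ on $V_i/(n w_i)^2$.

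It remains to bound $V_i$ and to average. For $V_i$ I would write the variance as the second moment minus $\sqnorm{\nabla f_i(x^{t+1})}$, apply the ABC bound~\eqref{eq:general_second_mom_upp_bound_abc}, and then use the standard smoothness consequence $\sqnorm{\nabla f_i(x)}\le 2L_i(f_i(x)-f_i^{\inf})$ (valid under Assumption~\ref{as:L_i} with $f_i^{\inf}>-\infty$) to fold the $B_i-1\ge 0$ term into the function-value gap, giving $V_i\le 2(A_i+L_i(B_i-1))(f_i(x^{t+1})-f_i^{\inf})+C_i$. The drift term is controlled by $L_i$-smoothness~\eqref{eq:L_i}. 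Taking the tower property, forming $G^{t+1}=\sumin w_i G_i^{t+1}$, and substituting $w_i=\nicefrac{L_i}{\sum_j L_j}$ collapses $\sumin \tfrac{L_i^2}{n^2 w_i}=\LAMsq$ in the drift term exactly as in the proof of Theorem~\ref{thm:EF21-W}; for the other two terms I would bound the per-client coefficients by their maxima to extract $\widetilde A$ and $\widetilde C$, replace each $f_i^{\inf}$ by $f^{\inf}$ and use $f=\avein f_i$ to convert $\avein(f_i(x^{t+1})-f^{\inf})$ into $f(x^{t+1})-f^{\inf}$, just as in the minibatch lemma. I expect the main obstacle to be essentially bookkeeping: keeping the roles of $s$ and $\nu$ straight so that the exact cross-term cancellation (rather than a further Young step) is what tames the noise, and carefully justifying the collapse of the heterogeneous constants $A_i,B_i,C_i,L_i$ into the single scalars $\widetilde A,\widetilde C,\LAMsq$, including the (loose) passage from the individual infima $f_i^{\inf}$ to the global $f^{\inf}$.
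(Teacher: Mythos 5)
Your proposal is correct and follows essentially the same route as the paper's own proof: the identical decomposition into compression error plus stochastic noise, Young's inequality with $s$ followed by the contraction bound, the exact cross-term cancellation via the unbiasedness assumption (which is precisely how the paper obtains its sharper constants relative to the minibatch lemma), the second Young step with $\nu$ after inserting $\pm\frac{\nabla f_i(x^t)}{n w_i}$, the variance bound via the ABC assumption combined with $\sqnorm{\nabla f_i(x)} \leq 2L_i\left(f_i(x) - f_i^{\inf}\right)$, and the final weighted averaging with $w_i = \nicefrac{L_i}{\sum_j L_j}$ collapsing the drift coefficient to $\LAMsq$ and extracting $\widetilde{A}$, $\widetilde{C}$ by maxima. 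One remark: your Young-$\nu$ step produces the drift coefficient $(1-\alpha)(1+s)(1+\nu^{-1})$, which is also what the paper's proof actually derives, indicating that the factor $(s+\nu^{-1})$ in the lemma statement's definition of $\hat{\beta_1}$ is a typo rather than a discrepancy in your argument.
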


\begin{proof}
	Define  $W^t \eqdef \{g_1^t, \dots,    g_n^t, x^t, x^{t+1}\}$. The proof starts as follows:
\begin{eqnarray*}
		\ExpBr{ G_i^{t+1} \;|\; W^t} &\overset{\eqref{eq:weighted_def_grad_distortion}}{=}& \ExpBr{  \sqnorm{g_i^{t+1} 
				- \frac{\nabla f_i(x^{t+1})}{n w_i}}  \;|\; W^t}	 \\
		&\overset{\text{line}~\ref{line:weighted_ef21_sgd_grad_update_abc}}{=}& \ExpBr{  \sqnorm{g_i^t + \cC_i^t \left( \frac{\hat{g_i}(x^{t+1})}{n w_i} - g_i^t \right) 
				- \frac{\nabla f_i(x^{t+1})}{n w_i}}  \;|\; W^t}	 \\
		&=& \ExpBr{  \sqnorm{\cC_i^t \left( \frac{\hat{g_i}(x^{t+1})}{n w_i} - g_i^t \right) - \left(\frac{\hat{g_i}(x^{t+1})}{n w_i} - g_i^t\right) + \frac{\hat{g_i}(x^{t+1})}{n w_i}
				- \frac{\nabla f_i(x^{t+1})}{n w_i}}  \;|\; W^t}	 \\
		&\overset{\eqref{eq:young}}{\leq}& (1+s) \ExpBr{ \sqnorm{\cC_i^t \left( \frac{\hat{g_i}(x^{t+1})}{n w_i} - g_i^t \right) - \left(\frac{\hat{g_i}(x^{t+1})}{n w_i} - g_i^t\right)}\;|\; W^t}  \\ 
		& & \qquad + (1+s^{-1})  \ExpBr{\sqnorm{\frac{\hat{g_i}(x^{t+1})}{n w_i}
				- \frac{\nabla f_i(x^{t+1})}{n w_i}}\;|\; W^t} \\
		&\overset{\eqref{eq:compressor_contraction}}{\leq}&  (1-\alpha) (1+s) \ExpBr{\sqnorm{ \left( \frac{\hat{g_i}(x^{t+1})}{n w_i} - \frac{\nabla f_i(x^{t+1})}{n w_i} \right) + \left( \frac{\nabla f_i(x^{t+1})}{n w_i} - g_i^t\right) } \;|\; W^t}  \\ 
		& & \qquad + (1+s^{-1})  \ExpBr{\sqnorm{\frac{\hat{g_i}(x^{t+1})}{n w_i}
				- \frac{\nabla f_i(x^{t+1})}{n w_i}}\;|\; W^t}\\
&=& (1 - \alpha) (1 + s) \ExpBr{\sqnorm{{g_i}^{t} - \frac{\nabla f_i(x^{t+1})}{n w_i}}|\; W^t} \\
		& & \qquad + (1 - \alpha) (1 + s) \ExpBr{\sqnorm{\frac{\nabla f_i(x^{t+1})}{n w_i} - \frac{\hat{g_i}(x^{t+1})}{n w_i}}\;|\; W^t} \\
		& & \qquad + (1+s^{-1}) \ExpBr{\sqnorm{\frac{\hat{g_i}(x^{t+1})}{n w_i}
				- \frac{\nabla f_i(x^{t+1})}{n w_i}}\;|\; W^t} \\
&=& (1 - \alpha) (1 + s) \ExpBr{\sqnorm{{g_i}^{t} - \frac{\nabla f_i(x^{t})}{n w_i} +
				\frac{\nabla f_i(x^{t})}{n w_i} - \frac{\nabla f_i(x^{t+1})}{n w_i}}|\; W^t} \\
		& & \qquad + (1 - \alpha) (1 + s) \ExpBr{\sqnorm{\frac{\nabla f_i(x^{t+1})}{n w_i} - \frac{\hat{g_i}(x^{t+1})}{n w_i}}\;|\; W^t} \\
		& & \qquad + (1+s^{-1}) \ExpBr{\sqnorm{\frac{\hat{g_i}(x^{t+1})}{n w_i}
				- \frac{\nabla f_i(x^{t+1})}{n w_i}}\;|\; W^t} .			
\end{eqnarray*}				

Further, we continue as follows				
\begin{eqnarray*}
		\ExpBr{ G_i^{t+1} \;|\; W^t} 							
		&\overset{\eqref{eq:young}}{\leq}& (1 - \alpha) (1 + s) (1+\nu) \ExpBr{\sqnorm{ {g_i}^{t} - \frac{\nabla f_i(x^t)}{n w_i}}\;|\; W^t} \\
		& & \qquad + (1 - \alpha) (1 + s) (1+\nu^{-1}) {\sqnorm{\frac{\nabla f_i(x^{t+1})}{n w_i} - \frac{\nabla f_i(x^{t})}{n w_i}}} \\
		& & \qquad + {\left((1+s^{-1})+ (1-\alpha)(1 + s) \right)} \ExpBr{\sqnorm{\frac{\hat{g_i}(x^{t+1})}{n w_i}
				- \frac{\nabla f_i(x^{t+1})}{n w_i}}\;|\; W^t}.
	\end{eqnarray*}
	
	To further bound the last term, which contains multiple ${(1+s^{-1})}$ factors, we leverage the property that $\hat g_i(x^{t+1})$ is a random variable serving as an unbiased estimator of $\nabla f_i(x^{t+1})$. Our approach is as follows:
	\begin{eqnarray*}
		\ExpBr{ G_i^{t+1} \;|\; W^t} &\leq& (1-\hat{\theta}) \ExpBr{G_i^t \;|\; W^t} + \hat{\beta_1} \frac{1}{n^2 w_i^2} \sqnorm{\nabla f_i(x^{t+1}) - \nabla f_i(x^t)} \\
		& & \qquad + \frac{{\hat{\beta_2}}}{(n w_i)^2} \ExpBr{\sqnorm{ \hat{g_i} (x^{t+1}) - {\nabla f_i}(x^{t+1})}\;|\; W^t} .	\end{eqnarray*}

	Now due to the requirement of unbiasedness of gradient estimators expressed in the form of Assumption~\ref{as:general_as_unbiased_for_stoch_gradients_abc} we have the following:
	\begin{eqnarray}
	\label{eq:var-decompoposition-abc}
	\ExpBr{\sqnorm{ \hat{g_i} (x^{t+1}) - {\nabla f_i}(x^{t+1})}\;|\; W^t} &=& \ExpBr{\sqnorm{ \hat{g_i} (x^{t+1})}\;|\; W^t} - {\sqnorm{ \nabla f_{{i}} (x^{t+1})}}
 	\end{eqnarray}
 
	Using this variance decomposition, we can proceed as follows.
	
	\begin{eqnarray*}
		\ExpBr{ G_i^{t+1} \;|\; W^t} &\overset{\eqref{eq:var-decompoposition-abc}}{\leq}& (1-\hat{\theta}) \ExpBr{G_i^t \;|\; W^t} + \hat{\beta_1} \frac{1}{n^2 w_i^2} \sqnorm{\nabla f_i(x^{t+1}) - \nabla f_i(x^t)} \\
		& & \qquad + \frac{{\hat{\beta_2}}}{(n w_i)^2 } \left( \ExpBr{\sqnorm{ \hat{g_i} (x^{t+1})}\;|\; W^t} - {\sqnorm{ \nabla f_{i}(x^{t+1})}} \right) \\
		&\overset{\eqref{eq:general_second_mom_upp_bound_abc}}{\leq}&	(1-\hat{\theta}) \ExpBr{G_i^t \;|\; W^t} + \hat{\beta_1} \frac{1}{n^2 w_i^2} \sqnorm{\nabla f_i(x^{t+1}) - \nabla f_i(x^t)} \\
		& &  + \frac{{\hat{\beta_2}}}{(n w_i)^2} \left(
		2A_i\left(f_i(x^{t+1}) - f_i^{\inf}\right) + B_i\|\nabla f_i(x^{t+1})\|^2 +   C_i - \sqnorm{\nabla f_i(x^{t+1}}) \right) \\
		&=& (1-\hat{\theta}) \ExpBr{G_i^t \;|\; W^t} + \hat{\beta_1} \frac{1}{n^2 w_i^2} \sqnorm{\nabla f_i(x^{t+1}) - {\nabla f_i(x^t)}} \\
		& & \qquad + \frac{ 2A_i {\hat{\beta_2}}}{(n w_i)^2} \left(f_i(x^{t+1}) - f_i^{\inf}\right)
		+  \frac{2(B_i-1){\hat{\beta_2}}}{(n w_i)^2} \left(\frac{1}{2} \|\nabla f_i(x^{t+1})\|^2 \right)
		+  \frac{ C_i {\hat{\beta_2}}}{(n w_i)^2} \\
		&\le& (1-\hat{\theta}) \ExpBr{G_i^t \;|\; W^t} + \hat{\beta_1} \frac{1}{n^2 w_i^2} \sqnorm{\nabla f_i(x^{t+1}) - {\nabla f_i(x^t)}} \\
		& & \qquad + \frac{ 2A_i {\hat{\beta_2}}}{(n w_i)^2} \left(f_i(x^{t+1}) - f_i^{\inf}\right)
		+  \frac{2(B_i-1){\hat{\beta_2}}}{(n w_i)^2} L_i \left(f_i(x^{t+1}) - f_i^{\inf} \right)
		+  \frac{C_i {\hat{\beta_2}}}{(n w_i)^2} \\
		&=& (1-\hat{\theta}) \ExpBr{G_i^t \;|\; W^t} + \hat{\beta_1} \frac{1}{n^2 w_i^2} \sqnorm{\nabla f_i(x^{t+1}) - {\nabla f_i(x^t)}} \\
		& & \qquad + \frac{ 2(A_i + L_i(B_i - 1)) {\hat{\beta_2}}}{(n w_i)^2} \left(f_i(x^{t+1}) - f_i^{\inf}\right)
		+  \frac{C_i {\hat{\beta_2}}}{(n w_i)^2 }.
	\end{eqnarray*}
	
	Next leveraging \Cref{as:L_i} we replace the second term in the last expression, and we can derive the subsequent bound:
	\begin{eqnarray*}
		\ExpBr{ G_i^{t+1} \;|\; W^t} &\overset{\eqref{eq:L_i}}{\leq}& (1-\hat{\theta})   G_i^t +  \frac{\hat{\beta_1} L_i^2}{n^2 w_i^2} \sqnorm{ x^{t+1} - x^t}  \\
		&& \qquad 
		+ \frac{ 2(A_i + L_i(B_i - 1)) {\hat{\beta_2}}}{(n w_i)^2} \left(f_i(x^{t+1}) - f_i^{\inf}\right)
		+  \frac{C_i {\hat{\beta_2}}}{(n w_i)^2}.
	\end{eqnarray*}
	Applying the tower property and subsequently taking the expectation, we obtain:
	\begin{equation}\label{eq:weighted_ef21_sgd_aux_1_abc}
		\begin{aligned}
			\ExpBr{ G_i^{t+1}} &\leq (1-\hat{\theta}) \ExpBr{G_i^t} + \hat{\beta_1} \frac{1}{n^2 w_i^2} L_i^2 \ExpBr{\sqnorm{ x^{t+1} - x^t}} 
			\\
			& \qquad + \frac{ 2(A_i + L_i(B_i - 1)) {\hat{\beta_2}}}{(n w_i)^2} \ExpBr{f_i(x^{t+1}) - f_i^{\inf}}
			+  \frac{C_i {\hat{\beta_2}}}{(n w_i)^2}.
		\end{aligned}
	\end{equation}

	Next for the expectation of the main quantity of our interest $G^{t+1}$, we derive the subsequent bound:
	\begin{eqnarray*}
		\ExpBr{G^{t+1}} &= & \ExpBr{\sumin w_i G_i^{t+1}} \\
		& = &  \sumin w_i \ExpBr{G_i^{t+1}} \\
		&\overset{\eqref{eq:weighted_ef21_sgd_aux_1_abc}}{\leq}  & (1-\hat{\theta}) \sumin w_i \ExpBr{G_i^t} + \sumin w_i \hat{\beta_1} \frac{1}{n^2 w_i^2} L_i^2 \cdot \ExpBr{\sqnorm{ x^{t+1} - x^t}} \\
		& &  \qquad + \sumin w_i \frac{ 2(A_i + L_i(B_i - 1)) {\hat{\beta_2}}}{(n w_i)^2} \cdot \ExpBr{f_i(x^{t+1}) - f_i^{\inf}}
		+  \sumin w_i \frac{C_i {\hat{\beta_2}}}{(n w_i)^2}	\\
		&=  & (1-\hat{\theta}) \ExpBr{G^t} + \sumin \hat{\beta_1} \frac{1}{n^2 w_i} L_i^2 \cdot \ExpBr{\sqnorm{ x^{t+1} - x^t}} \\
		& &  \qquad + \sumin \frac{ 2(A_i + L_i(B_i - 1)) {\hat{\beta_2}}}{(n)^2 w_i} \cdot \ExpBr{f_i(x^{t+1}) - f_i^{\inf}}
		+  \sumin \frac{C_i {\hat{\beta_2}}}{n^2 w_i}
	\end{eqnarray*}
	
	%\kostya{Think. Maybe we should work with this form.}
	
	Employing quantities $\tilde{A}$ and $\tilde{C}$, the final bound can be reformulated as follows:
	\begin{eqnarray}
		%\label{le_ef21_sgd_weighted:1}
		\ExpBr{G^{t+1}} \leq (1-\hat{\theta}) \ExpBr{G^t} + \sumin \hat{\beta_1} \frac{1}{n^2 w_i} L_i^2 \cdot \ExpBr{\sqnorm{ x^{t+1} - x^t}} \notag \\
		+ \frac{1}{n} \sumin { \widetilde{A} {\hat{\beta_2}}} \cdot \ExpBr{f_i(x^{t+1}) - f_i^{\inf}}
		+   {\widetilde{C} {\hat{\beta_2}}} \notag \\
		\leq (1-\hat{\theta}) \ExpBr{G^t} + \sumin \hat{\beta_1} \frac{1}{n^2 w_i} L_i^2 \cdot \ExpBr{\sqnorm{ x^{t+1} - x^t}} \notag \\
		+ \frac{1}{n} \sumin { \widetilde{A} {\hat{\beta_2}}} \cdot \ExpBr{f_i(x^{t+1}) - f^{\inf}}
		+   {\widetilde{C} {\hat{\beta_2}}} \notag \\
		\le (1-\hat{\theta}) \ExpBr{G^t} + \sumin \hat{\beta_1} \frac{1}{n^2 w_i} L_i^2 \cdot \ExpBr{\sqnorm{ x^{t+1} - x^t}} \notag \\
		+ { \widetilde{A} {\hat{\beta_2}}} \notag \ExpBr{f(x^{t+1}) - f^{\inf}}
		+  {\widetilde{C} {\hat{\beta_2}}}. \notag
	\end{eqnarray}

	%\kostya{Maybe here we need to be more careful because $\widetilde{A}, \widetilde{A}$ are functions of $w_i$.}
	
	Given that $w_i=\frac{L_i}{\sum_j L_j}$, we have:
	\begin{equation}
	%\label{le_ef21_sgd_weighted:2}
		\begin{aligned}
			\ExpBr{G^{t+1}} &\leq (1-\hat{\theta}) \ExpBr{G^t} + \frac{1}{n}\sumin \hat{\beta_1} \frac{\sum_j L_j}{n} L_i \ExpBr{\sqnorm{ x^{t+1} - x^t}} \notag \\
			& \qquad + { \widetilde{A} {\hat{\beta_2}}} \ExpBr{f(x^{t+1}) - f^{\inf}}
			+  {\widetilde{C} {\hat{\beta_2}}} \notag \\
			& = (1-\hat{\theta}) \ExpBr{G^t} + \hat{\beta_1} \left(\avein L_i\right)^2 \cdot \ExpBr{\sqnorm{ x^{t+1} - x^t}} \notag \\
			& \qquad + { \widetilde{A} {\hat{\beta_2}}} \ExpBr{f(x^{t+1}) - f^{\inf}}
			+  {\widetilde{C} {\hat{\beta_2}}},
		\end{aligned}
	\end{equation}
	what completes the proof.
\end{proof}

\subsection{Main result}

Now we are ready to prove the main convergence theorem.
\begin{theorem} Let $\cC_i^t \in\mathbb{C}(\alpha)$ for all $\in [n]$ and $t\geq 0$ in \Cref{alg:weighted_ef21_sgd_abc}. set the following quantities:
	\begin{eqnarray*}
		\hat{\theta} &\eqdef & 1 - \rb{1-\alpha} (1+s) (1+\nu),\\
		\hat{\beta_1} &\eqdef  & (1- \alpha ) \left(1+ s \right)\left(s+\nu^{-1}\right),\\
		\hat{\beta_2} &\eqdef  & (1 - \alpha) (1 + s) + (1 + s^{-1}),\\
		w_i &\eqdef  & \frac{L_i}{\sum_{j=1}^n L_j},\\		
		\widetilde{A} & \eqdef  & \max_{i=1,\ldots,n}  \frac{2(A_i+L_i(B_i-1))}{n w_i}, \\
		\widetilde{C} & \eqdef  & \max_{i=1,\ldots,n}  \frac{C_i}{n w_i} .
	\end{eqnarray*}
	Under Assumptions~\ref{as:smooth},~\ref{as:L_i}, ~\ref{as:general_as_for_stoch_gradients_abc}, ~\ref{as:general_as_unbiased_for_stoch_gradients_abc}, and  selection of $s >0, \nu >0$ small enough such that
	$(1+s)(1+\nu) < \frac{1}{1-\alpha}$ holds, set the stepsize in the following way:
	\begin{equation}
		\gamma \leq \frac{1}{L + \LAM \sqrt{\frac{\hat{\beta}_1}{\hat{\theta}}}}.
	\end{equation}
	Choose an iterate $\hat{x}^T$ from $\{x^0, x^1, \dots, x^{T-1}\}$ with probability 
	\begin{equation} 
		\label{eq:09u09fd-0ff-abc}
		\Prob(\hat{x}^T = x^t) = \frac{v_t}{V_T},
	\end{equation} 
	where
	$$ v_t \eqdef \left(1 - \frac{\gamma \tilde{A} \tilde{\beta}_2}{2\theta}\right)^t; \qquad V_T \eqdef \sum\limits_{t=0}^{T-1} v_t.
	$$
	Then,
	\begin{equation}
		\ExpBr{\sqnorm{\nabla f(\hat{x}^{T})}} \leq \frac{2 (f(x^0) - f^\text{inf})}{\gamma T  \left(1 - \frac{\gamma \widetilde{A} \hat{\beta_2}}{2 \theta}\right)^T } + \frac{G^0}{ \hat{\theta}T  \left(1 - \frac{\gamma \widetilde{A} \hat{\beta_2}}{2 \theta}\right)^T} + \frac{\widetilde{C}\beta_2}{\hat{\theta}},
	\end{equation}
	where $G^0 \eqdef \sumin w_i \norm{g_i^0 - \frac{1}{nw_i}\nabla f_i(x^0)}^2$.
\end{theorem}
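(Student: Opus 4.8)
The plan is to reproduce, essentially verbatim, the Lyapunov-and-weighted-telescoping argument used for the previous (subsampled) \algname{EF21-W-SGD} theorem, since the ABC contraction lemma \eqref{eq:weighted_ef21_sgd_grad_full_contraction_abc} has exactly the same structural form as its subsampled counterpart \eqref{eq:weighted_ef21_sgd_grad_full_contraction}; only the constants $\hat\theta,\hat\beta_1,\hat\beta_2,\widetilde A,\widetilde C$ are redefined, and the main-theorem proof never uses their explicit formulas. First I would invoke the descent lemma (\Cref{lm:descent_lemma}) with the weighted estimator $g^t=\sum_{i=1}^n w_i g_i^t$. Writing $g^t-\nabla f(x^t)=\sum_i w_i\,(g_i^t-\frac{\nabla f_i(x^t)}{n w_i})$ and applying Jensen's inequality to $\|\cdot\|^2$ — legitimate because $\sum_i w_i=1$ — bounds the residual term by $G^t$, yielding $f(x^{t+1})\leq f(x^t)-\frac\gamma2\sqnorm{\nabla f(x^t)}-(\frac1{2\gamma}-\frac L2)\sqnorm{x^{t+1}-x^t}+\frac\gamma2 G^t$. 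Subtracting the reference value and taking expectations turns this into a recursion for $\delta^t\eqdef\ExpBr{f(x^t)-f^{\inf}}$, $s^t\eqdef\ExpBr{G^t}$, and $r^t\eqdef\ExpBr{\sqnorm{x^{t+1}-x^t}}$.

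Next I would add $\frac\gamma{2\hat\theta}s^{t+1}$ to the descent recursion and control $s^{t+1}$ through \eqref{eq:weighted_ef21_sgd_grad_full_contraction_abc}. The coefficient of $r^t$ becomes $\frac1{2\gamma}-\frac L2-\frac{\gamma\hat\beta_1\LAMsq}{2\hat\theta}$, which is nonnegative exactly under the stepsize rule $\gamma\leq(L+\LAM\sqrt{\hat\beta_1/\hat\theta})^{-1}$ by \Cref{lm:stepsize_bound} (the bound $\gamma^2\frac{\hat\beta_1\LAMsq}{\hat\theta}+L\gamma\leq1$), so this term can be dropped. What remains is $\Phi^{t+1}\leq\Phi^t-\frac\gamma2\ExpBr{\sqnorm{\nabla f(x^t)}}+\frac{\gamma\widetilde A\hat\beta_2}{2\hat\theta}\delta^{t+1}+\frac{\gamma\widetilde C\hat\beta_2}{2\hat\theta}$, where $\Phi^t\eqdef\delta^t+\frac\gamma{2\hat\theta}s^t$. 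Using $s^{t+1}\geq0$ to replace $\delta^{t+1}$ by $\Phi^{t+1}$ and multiplying by $2/\gamma$ isolates the gradient norm as $\ExpBr{\sqnorm{\nabla f(x^t)}}\leq\frac2\gamma\big[\Phi^t-(1-\frac{\gamma\widetilde A\hat\beta_2}{2\hat\theta})\Phi^{t+1}\big]+\frac{\widetilde C\hat\beta_2}{\hat\theta}$.

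The decisive step is the weighted averaging. Summing these inequalities with weights $v_t=(1-\frac{\gamma\widetilde A\hat\beta_2}{2\theta})^t$ turns the right-hand side into a genuine telescope, because $v_t(1-\frac{\gamma\widetilde A\hat\beta_2}{2\hat\theta})=v_{t+1}$, so all intermediate $\Phi$-terms cancel and only $v_0\Phi^0$ survives, minus a nonnegative multiple of $\Phi^T$; meanwhile the constant $\frac{\widetilde C\hat\beta_2}{\hat\theta}$ persists under the normalization since the $v_t/V_T$ sum to one. This is precisely why $\hat x^T$ must be sampled from the law \eqref{eq:09u09fd-0ff-abc} rather than uniformly. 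Lower-bounding $V_T=\sum_{t=0}^{T-1}v_t\geq T(1-\frac{\gamma\widetilde A\hat\beta_2}{2\theta})^T$ and discarding $-\Phi^T\leq0$ then produces the claimed bound, with $G^0=\sumin w_i\sqnorm{g_i^0-\frac{1}{nw_i}\nabla f_i(x^0)}$.

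The main obstacle I anticipate is the bookkeeping forced by the additive $\widetilde A$ and $\widetilde C$ terms inherited from the ABC assumption: unlike the exact-gradient case, the contraction bound injects a copy of $\delta^{t+1}$ and a nonvanishing constant, so a naive telescope fails and the geometric weighting is essential. Care is also needed to confirm $\gamma\widetilde A\hat\beta_2/(2\hat\theta)<1$ — ensuring $v_t>0$ and that the telescoped coefficient of $\Phi^T$ is nonnegative — which follows from the stepsize choice together with the parameter constraint $(1+s)(1+\nu)<(1-\alpha)^{-1}$, and to rely on the passage from the per-client $f_i^{\inf}$ to the aggregate $f^{\inf}$ that has already been carried out inside the lemma.
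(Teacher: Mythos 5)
Your proof is correct and follows essentially the same route as the paper's own argument: the descent lemma plus Jensen's inequality with the weights $w_i$, the Lyapunov recursion $\delta^t+\frac{\gamma}{2\hat{\theta}}s^t$ combined with the ABC contraction lemma, the stepsize condition handled via \Cref{lm:stepsize_bound}, the use of $s^{t+1}\geq 0$ to absorb $\delta^{t+1}$ into $\Phi^{t+1}$, and the geometric weights $v_t$ that telescope the $\widetilde{A}$-term, finished by $V_T\geq T\bigl(1-\frac{\gamma\widetilde{A}\hat{\beta}_2}{2\hat{\theta}}\bigr)^T$. One small caveat: your assertion that $\gamma\widetilde{A}\hat{\beta}_2/(2\hat{\theta})<1$ follows from the stepsize rule and the constraint $(1+s)(1+\nu)<(1-\alpha)^{-1}$ is not actually justified (since $\widetilde{A}$ can be arbitrarily large independently of $L$ and $\LAM$); this positivity is an implicit assumption needed for the sampling law to be well-defined, and the paper's proof silently assumes it as well.
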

\begin{proof}
	In the derivation below, we  use \Cref{lm:descent_lemma} for 
	\begin{equation}
		\label{eq:sgd_weighted_grad_estimate_def_abc}
		g^t=\sum\limits_{i=1}^{n} w_i g_i^t.
	\end{equation}
	We start as follows:
	\begin{eqnarray*}
			f(x^{t+1}) &\overset{\eqref{eq:descent_lemma}}{\leq} & 
			f(x^{t})-\frac{\gamma}{2}\sqnorm{\nabla f(x^{t})}-\left(\frac{1}{2 \gamma}-\frac{L}{2}\right)\sqnorm{x^{t+1}-x^{t}}+\frac{\gamma}{2}\sqnorm{
				g^t- \frac{1}{n} \sumin \nabla f_i(x^{t}) } \\
			& \overset{\eqref{eq:sgd_weighted_grad_estimate_def}}{=} & 
			f(x^{t})-\frac{\gamma}{2}\sqnorm{\nabla f(x^{t})}-\left(\frac{1}{2 \gamma}-\frac{L} 
			{2}\right)\sqnorm{x^{t+1}-x^{t}}+\frac{\gamma}{2}\sqnorm{
				\sumin w_i \left(g_i^t-  \frac{\nabla f_i(x^{t})}{n w_i} \right) }  \\
			& \leq & 
			f(x^{t})-\frac{\gamma}{2}\sqnorm{\nabla f(x^{t})}-\left(\frac{1}{2 \gamma}-\frac{L}       
			{2}\right)\sqnorm{x^{t+1}-x^{t}}+\frac{\gamma}{2}
			\sumin w_i \sqnorm{ g_i^t-  \frac{\nabla f_i(x^{t})}{n w_i}  } \\
			& = & f(x^{t})-\frac{\gamma}{2}\sqnorm{\nabla f(x^{t})}-\left(\frac{1}{2 \gamma}-\frac{L}       
			{2}\right)\sqnorm{x^{t+1}-x^{t}}+\frac{\gamma}{2} G^t.
	\end{eqnarray*}

	Subtracting $f^\ast$ from both sides and taking expectation, we get
	\begin{equation*}
		%\label{eq:weighted_function_descent_sgd_abc}
		\ExpBr{f(x^{t+1})-f^\ast} \leq  \ExpBr{f(x^{t})-f^\ast}
		-\frac{\gamma}{2} \ExpBr{\sqnorm{\nabla f(x^{t})}}   -\left(\frac{1}{2 \gamma}
		-\frac{L}{2}\right) \ExpBr{\sqnorm{x^{t+1}-x^{t}}}+ \frac{\gamma}{2}\ExpBr{G^t}.
	\end{equation*}

	Let $\delta^{t} \eqdef \ExpBr{f(x^{t})-f^\ast}$, $s^{t} \eqdef \ExpBr{G^t }$ and 
	$r^{t} \eqdef\ExpBr{\sqnorm{x^{t+1}-x^{t}}}.$	Then by adding $\frac{\gamma}{2\theta} s^{t+1}$ and employing inequality  \eqref{eq:weighted_ef21_sgd_grad_full_contraction}, we obtain:
	\begin{align*}
		\delta^{t+1}+\frac{\gamma}{2 \hat{\theta}} s^{t+1} 
		&\leq \delta^{t}-\frac{\gamma}{2}\ExpBr{\sqnorm{\nabla f(x^{t})}} -          \left(\frac{1}{2 \gamma}-\frac{L}{2}\right) r^{t}+\frac{\gamma}{2} s^{t} \\
		& \qquad +\frac{\gamma}{2 \hat{\theta}}\left( \hat{\beta_1} \LAMsq  r^t + (1-\hat{\theta}) s^t + { \widetilde{A} {\hat{\beta_2}}} \delta^{t+1} +  {\widetilde{C} {\hat{\beta_2}}} \right) \\
		&=\delta^{t}+\frac{\gamma}{2\hat{\theta}} s^{t}-\frac{\gamma}{2}\ExpBr{\sqnorm{\nabla f(x^{t})}}  -\left(\frac{1}{2\gamma} -\frac{L}{2} -  \frac{\gamma}{2\hat{\theta}} \hat{\beta_1}  \LAMsq  \right) r^{t} + \frac{\gamma \widetilde{A} \beta_2}{2\hat{\theta}} \delta^{t+1} + \frac{\gamma \widetilde{C}}{2 \hat{\theta}} \beta_2\\
		& \leq \delta^{t}+\frac{\gamma}{2\hat{\theta}} s^{t} -\frac{\gamma}{2}\ExpBr{\sqnorm{\nabla f(x^{t})}} + \frac{\gamma \widetilde{A} \beta_2}{2\hat{\theta}} \delta^{t+1} + \frac{\gamma \widetilde{C}}{2 \hat{\theta}} \beta_2.
	\end{align*}
	
	%DRAFT:
	%$\hat{\beta_1} L_{\rm AM}^2 \cdot r^t + (1-\hat{\theta}) s^t
	%+ { \widetilde{A} \textcolor{blue}{\hat{\beta_2}}} \delta^{t+1} +  {\widetilde{C} \textcolor{blue}{\hat{\beta_2}}}$
	
	The last inequality follows from the bound $\gamma^2\frac{\hat{\beta_1} \LAMsq }{\hat{\theta}} + L\gamma \leq 1$, which holds due to \Cref{lm:stepsize_bound} for $$\gamma \leq \frac{1}{L +  \LAM \sqrt{\frac{\hat{\beta}_1}{\hat{\theta}}}}.$$ Subsequently, we will reconfigure the final inequality and perform algebraic manipulations, taking into account that $\frac{2}{\gamma} > 0$. In the final step of these algebraic transformations, we will leverage the fact that $s^t \ge 0$:
	\begin{eqnarray*}
		\delta^{t+1}+\frac{\gamma}{2 \hat{\theta}} s^{t+1} &\leq& \delta^{t}+\frac{\gamma}{2\hat{\theta}} s^{t} -\frac{\gamma}{2}\ExpBr{\sqnorm{\nabla f(x^{t})}} + \frac{\gamma \widetilde{A} \beta_2}{2\hat{\theta}} \delta^{t+1} + \frac{\gamma \widetilde{C}}{2 \hat{\theta}} \beta_2.
	\end{eqnarray*}		

Therefore,		
	\begin{eqnarray*}		
		\frac{2}{\gamma} \delta^{t+1}+\frac{2}{\gamma} \frac{\gamma}{2 \hat{\theta}} s^{t+1} &\leq& \frac{2}{\gamma} \delta^{t} + \frac{2}{\gamma} \frac{\gamma}{2\hat{\theta}} s^{t} -\ExpBr{\sqnorm{\nabla f(x^{t})}} + \frac{2}{\gamma} \frac{\gamma \widetilde{A} \beta_2}{2\hat{\theta}} \delta^{t+1} + \frac{2}{\gamma} \frac{\gamma \widetilde{C}}{2 \hat{\theta}} \beta_2 .
	\end{eqnarray*}				

Further,	
	\begin{eqnarray*}	
		\ExpBr{\sqnorm{\nabla f(x^{t})}} &\leq& -\frac{2}{\gamma} \delta^{t+1} - \frac{2}{\gamma} \frac{\gamma}{2 \hat{\theta}} s^{t+1} + \frac{2}{\gamma} \delta^{t} + \frac{2}{\gamma} \frac{\gamma}{2\hat{\theta}} s^{t} + \frac{2}{\gamma} \frac{\gamma \widetilde{A} \beta_2}{2\hat{\theta}} \delta^{t+1} + \frac{2}{\gamma} \frac{\gamma \widetilde{C}}{2 \hat{\theta}} \beta_2  \\
 &\leq&
		-\frac{2}{\gamma} \delta^{t+1} - \frac{2}{\gamma} \frac{\gamma}{2\hat{\theta}} s^{t+1} + \frac{2}{\gamma} \left( \delta^{t} + \frac{\gamma}{2\hat{\theta}} s^{t} \right) + \frac{2}{\gamma} \frac{\gamma \widetilde{A} \beta_2}{2 \hat{\theta}} \delta^{t+1} + \frac{\widetilde{C}\beta_2}{ \hat{\theta}}  \\
 &\leq&
		\frac{2}{\gamma} \left( \left( \delta^{t} + \frac{\gamma}{2\hat{\theta}} s^{t} \right) -1\left(1 - \frac{\gamma \widetilde{A} \beta_2}{2 \hat{\theta}} \right) \delta^{t+1} -\left(\frac{\gamma}{2\hat{\theta}} s^{t+1} \right) \right) + \frac{\widetilde{C}\beta_2}{ \hat{\theta}} \\
		&\leq& \frac{2}{\gamma} \left( \left( \delta^{t} + \frac{\gamma}{2\hat{\theta}} s^{t} \right) -\left(1 - \frac{\gamma \widetilde{A} \beta_2}{2 \hat{\theta}} \right) \left(\delta^{t+1} + \frac{\gamma}{2\hat{\theta}} s^{t+1} \right) \right) + \frac{\widetilde{C}\beta_2}{ \hat{\theta}}.
	\end{eqnarray*}
	
	We sum up inequalities above with weights $v_t/V_T$, where $v_t \eqdef (1 - \frac{\gamma \widetilde{A} \hat{\beta_2}}{2 \theta})^t$ and $V_T \eqdef \sum_{i=1}^{T} v_i$:
	\begin{eqnarray*}
		\ExpBr{\sqnorm{\nabla f(\hat{x}^{T})}} &=& \sum_{t=0}^{T} \frac{v_t}{V_T} \ExpBr{\sqnorm{\nabla f(x^{t})}} \\
		&=& \frac{1}{V_T} \sum_{t=0}^{T} v_t \ExpBr{\sqnorm{\nabla f(x^{t})}} \\
		&\leq& \frac{1}{V_T} \sum_{t=0}^{T} v_t \left(\frac{2}{\gamma} \left( \left( \delta^{t} + \frac{\gamma}{2\hat{\theta}} s^{t} \right) -\left(1 - \frac{\gamma \widetilde{A} \beta_2}{2 \hat{\theta}} \right) \left(\delta^{t+1} + \frac{\gamma}{2\hat{\theta}} s^{t+1} \right) \right) + \frac{\widetilde{C}\beta_2}{ \hat{\theta}} \right) \\
		&=& \frac{2}{\gamma V_T} \sum_{t=0}^{T} w_t \left( \left( \delta^{t} + \frac{\gamma}{2\hat{\theta}} s^{t} \right) -\left(1 - \frac{\gamma \widetilde{A} \beta_2}{2 \hat{\theta}} \right) \left(\delta^{t+1} + \frac{\gamma}{2\hat{\theta}} s^{t+1} \right) \right)	+ \sum_{t=0}^{T} \frac{w_t}{W_T} \cdot \frac{\widetilde{C}\beta_2}{ \hat{\theta}} \\
		&=& \frac{2}{\gamma V_T} \sum_{t=0}^{T} w_t \left( \left( \delta^{t} + \frac{\gamma}{2\hat{\theta}} s^{t} \right) -\left(1 - \frac{\gamma \widetilde{A} \beta_2}{2 \hat{\theta}} \right) \left(\delta^{t+1} + \frac{\gamma}{2\hat{\theta}} s^{t+1} \right) \right)	+ \frac{\widetilde{C}\beta_2}{ \hat{\theta}} \\
		&=& \frac{2}{\gamma V_T} \sum_{t=0}^{T} \left(w_t \left( \delta^{t} + \frac{\gamma}{2\hat{\theta}} s^{t} \right) -w_{t+1} \left(\delta^{t+1} + \frac{\gamma}{2\hat{\theta}} s^{t+1} \right) \right)	+ \frac{\widetilde{C}\beta_2}{ \hat{\theta}} \\
		&\leq& \frac{2 \delta^0}{\gamma V_T} + \frac{s^0}{ \hat{\theta}V_T} + \frac{\widetilde{C}\beta_2}{\hat{\theta}}.
	\end{eqnarray*}
	Finally, we notice that $V_T = \sum\limits_{t=1}^T (1 - \frac{\gamma \widetilde{A} \hat{\beta_2}}{2 \theta})^t \geq T \cdot (1 - \frac{\gamma \widetilde{A} \hat{\beta_2}}{2 \theta})^T$, what concludes the proof.
\end{proof}

\clearpage
\section{EF21-W-PP: Weighted Error Feedback 2021 with Partial Participation} \label{sec:EF21-W-PP}

In this section, we present another extension of error feedback. Again, to maintain brevity, we show our results for~\algname{EF21-W}, however, we believe getting an enhanced rate for standard \algname{EF21} should be straightforward. 

\subsection{Algorithm}
Building upon the delineation of \algname{EF21-W} in  \Cref{alg:EF21-W}, we turn our attention to its partial participation variant, \algname{EF21-W-PP}, and seek to highlight the primary distinctions between them. One salient difference is the introduction of a distribution, denoted as $\cD$, across the clients. For clarity, consider the power set ${\cal P}$ of the set $[n] \eqdef \{1, 2, \dots, n\}$, representing all possible subsets of $[n]$. Then, the distribution ${\cal D}$ serves as a discrete distribution over ${\cal P}$.

While \algname{EF21-W-PP} runs, at the start of each communication round $t$, the master, having computed a descent step as $x^{t+1} = x^t - \gamma g^t$, samples a client subset $S^t$ from the distribution $\cD$. Contrasting with  \Cref{alg:EF21-W} where the new iteration $x^{t+1}$ is sent to all clients, in this variant, it is sent exclusively to those in $S^t$.

Any client $i \in S^t$ adheres to procedures akin to ~\algname{EF21-W}: it compresses the quantity $\frac{1}{nw_i} \nabla f_i(x^t) - g_i^t$ and transmits this to the master. Conversely, client $j$ omitted in $S^t$, i.e., $j \notin S^t$, is excluded from the training for that iteration. Concluding the round, the master updates $g^{t+1}$ by integrating the averaged compressed variances received from clients in the set $S^t$.

\iffalse
\pr{The method needs to be described in a few sentences. In particular, assume that \algname{EF21-W} was explained before, and that all that needs to be explained now is the PP part of it. }

\pr{The set $S^t$ was not defined outside of the algorithm - it should be.}

\ks{Done. Start}
In this section, we provide an analysis for operating \algname{EF21-W} with partial participation of the clients. We provide a simple analysis of partial participation, which aligned with deviations provided in \citet{EF21BW}. The modified method \algname{EF21-W-PP} is presented as Algorithm~\ref{alg:weighted_ef21_pp}. At each iteration of \algname{EF21-W-PP}, the master samples a subset $S_t$ of $n$ clients, which are required to perform computation. All other clients $i \notin S_t$ participate neither in the computation nor in communication at round $t$. Similar to \algname{Vanilla EF21-PP} we allow for an arbitrary sampling strategy of a subset $S_t$, with only a requirement that for $p_i \eqdef \Prob(i \in S_t)$ we have $p_i > 0, \forall i \in [n]$. 

For more discussions about sampling see \citet{qu2016coordinate}.
\ks{Done. End.}

\fi

\begin{algorithm}
	\begin{algorithmic}[1]
		\STATE {\bfseries Input:} initial model parameters $x^0 \in \RR^d$; initial gradient estimates $g_1^0, g_2^0, \dots,g_n^0 \in \R^d$ stored at the clients; weights ${\color{ForestGreen}w_i} = \nicefrac{L_i}{\sum_j L_j}$; stepsize $\gamma>0$; number of iterations $T > 0$; distribution $\cD$ over clients
		\STATE {\bfseries Initialize:} $g^0 = \sum_{i=1}^n {\color{ForestGreen}w_i} g_i^0 $ on the server	
		\FOR{$t = 0, 1, 2, \dots, T - 1 $}
		\STATE Server computes $x^{t+1} = x^t - \gamma g^t$		
		\STATE Server samples a subset $S^t \sim \cD$ of clients
		\STATE Server broadcasts  $x^{t+1}$ to clients in $S^t$
		\FOR{$i = 1, \dots, n$ {\bf on the clients in parallel}} 		
		\IF{$i\in S^t$}		
		\STATE Compute $u_i^t=\cC_i^t (\frac{1}{n {\color{ForestGreen}w_i}}\nabla f_i(x^{t+1}) - g_i^t)$ and update $g_i^{t+1} = g_i^t +u_i^t$ \label{line:weighted_ef21_pp_grad_ind_update_step}	
		\STATE Send the compressed message $u_i^{t}$ to the server	
		\ENDIF
		\IF{$i\notin S^t$}
		%\State {\color{red} Do not change local state $g_i^{t+1} = g_i^t$ }
		\STATE Set $u_i^t = 0$		for the client and the server
		\STATE Do not change local state $g_i^{t+1} = g_i^t$
		\ENDIF
		%\STATE Send $g_i^{t+1}$ to the server implicitly. If $i\in S^t$ send $c_i^t$, if $i \notin S^t$ send nothing.
		\ENDFOR
		\STATE Server updates $g_i^{t+1} = g_i^t +u_i^t$ for all $i\in [n]$, and computes $g^{t+1} = \sum_{i=1}^n {\color{ForestGreen}w_i} g_i^{t+1}$		\label{line:averaging_ef21_pp_weighting} 
		\ENDFOR
		\STATE {\bfseries Output:} Point $\hat{x}^T$ chosen from the set $\{x^0, \dots, x^{T-1}\}$ uniformly at random		
	\end{algorithmic}
	\caption{\algname{EF21-W-PP}: Weighted Error Feedback 2021 with Partial Participation}
	\label{alg:weighted_ef21_pp}
\end{algorithm}

Assume $S$ is drawn from the distribution $\cD$. Let us denote 
\begin{equation}\label{eq:p_i_def}
p_i \eqdef \Prob(i \in S^t).
\end{equation}
In other words, $p_i$ represents the probability of client $i$ being selected in any iteration.  For given parameters $p_i$ such that $p_i \in (0,1]$ for  $i\in [n]$, we introduce the notations $p_{\min} \eqdef \min_i p_i$ and $p_{\max} \eqdef \max_i p_i$, respectively.

\subsection{A lemma}

Having established the necessary definitions, we can now proceed to formulate the lemma.
\begin{lemma} \label{lemma:contraction_ef21_pp_weighted}
	Let $\cC_i^t\in \mathbb{C}(\alpha)$ for all $i\in [n]$ and $t\geq 0$. Let  \Cref{as:L_i} hold. Define
	\begin{equation}\label{eq:grad_distortion_def_pp}
	G_i^t \eqdef  \sqnorm{ g_i^t - \frac{\nabla f_i(x^{t})}{n w_i} } , \qquad G^t \eqdef \sumin w_i G_i^t.
	\end{equation}
	For any $s>0$ and $\rho>0$, let us define the following quantities: 
	\begin{eqnarray*}
	\theta(\alpha, s) &\eqdef& 1 - (1 - \alpha)(1 + s)\\
	\beta(\alpha, s) &\eqdef& \beta(\alpha, s) = (1 - \alpha)(1+s^{-1})\\
	\theta_p &\eqdef& p_{\min}\rho + \theta(\alpha,s) p_{\max} - \rho - (p_{\max}-p_{\min}) \\
	\tilde{B} &\eqdef& \left({\beta(\alpha,s) p_{\max}} + (1-p_{\min}){(1+\rho^{-1})} \right) \LAMsq.
\end{eqnarray*}
	Additionally, assume that
 	$$
	\frac{1 + \rho(1 - p_{\min}) + (p_{\max} - p_{\min}) }{p_{\max}}\geq \theta(\alpha, s) > \frac{\rho(1 - p_{\min}) + (p_{\max} - p_{\min}) }{p_{\max}}.
	$$
	Then, we have
	\begin{equation} 
	\ExpBr{G^{t+1}} \le (1-\theta_p) \ExpBr{G^t} +
\tilde{B} \ExpBr{\sqnorm{x^{t+1} - x^{t}}}.
	\end{equation}
\end{lemma}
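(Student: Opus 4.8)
The plan is to replay the EF21-W contraction argument client-by-client, but to split each client's contribution according to whether it is sampled in round $t$. Fix the history $W^t \eqdef \{g_1^t,\dots,g_n^t,x^t,x^{t+1}\}$, so that $x^{t+1}$ is deterministic given $W^t$ and the remaining randomness comes from the sampling $S^t\sim\cD$ and from the compressors. For a fixed $i$, I would condition further on the disjoint events $\{i\in S^t\}$ and $\{i\notin S^t\}$, which have probabilities $p_i$ and $1-p_i$, and write
\begin{equation*}
\ExpBr{G_i^{t+1}\mid W^t} = p_i\,\ExpBr{G_i^{t+1}\mid W^t,\, i\in S^t} + (1-p_i)\,\ExpBr{G_i^{t+1}\mid W^t,\, i\notin S^t}.
\end{equation*}

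On the event $\{i\in S^t\}$ the update of $g_i^{t+1}$ is identical to that of \algname{EF21-W}, so the same chain (compressor contraction \eqref{eq:compressor_contraction} followed by Young's inequality \eqref{eq:young} with parameter $s$) that proved \eqref{eq:weighted_ind_grad_dist_evolution} gives $\ExpBr{G_i^{t+1}\mid W^t, i\in S^t}\le (1-\theta(\alpha,s))G_i^t + \beta(\alpha,s)\tfrac{1}{n^2w_i^2}\sqnorm{\nabla f_i(x^{t+1})-\nabla f_i(x^t)}$. The genuinely new ingredient is the non-participating case: there $g_i^{t+1}=g_i^t$, hence $G_i^{t+1}=\sqnorm{g_i^t-\tfrac{1}{nw_i}\nabla f_i(x^{t+1})}$, which is measured against the \emph{moved} reference gradient $\nabla f_i(x^{t+1})$ and is therefore \emph{not} equal to $G_i^t$. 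Inserting $\pm\tfrac{1}{nw_i}\nabla f_i(x^t)$ and applying Young's inequality \eqref{eq:young} with the fresh parameter $\rho$ yields $\ExpBr{G_i^{t+1}\mid W^t, i\notin S^t}\le (1+\rho)G_i^t + (1+\rho^{-1})\tfrac{1}{n^2w_i^2}\sqnorm{\nabla f_i(x^{t+1})-\nabla f_i(x^t)}$. Combining the two cases with weights $p_i$, $1-p_i$ produces a per-client bound with $G_i^t$-coefficient $c_i \eqdef 1+\rho-p_i(\theta(\alpha,s)+\rho)$ and gradient-difference coefficient $d_i \eqdef p_i\beta(\alpha,s)+(1-p_i)(1+\rho^{-1})$.

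I would then multiply by $w_i$, sum over $i\in[n]$, and replace $\sqnorm{\nabla f_i(x^{t+1})-\nabla f_i(x^t)}$ by $L_i^2\sqnorm{x^{t+1}-x^t}$ using \eqref{eq:L_i}. For the first group of terms, since $\theta(\alpha,s)+\rho>0$ the coefficient $c_i$ is decreasing in $p_i$ and hence maximized at $p_i=p_{\min}$; a one-line computation shows that $1+\rho-p_{\min}(\theta(\alpha,s)+\rho)\le 1-\theta_p$ is equivalent to $\theta(\alpha,s)(p_{\max}-p_{\min})\le p_{\max}-p_{\min}$, i.e.\ to $\theta(\alpha,s)\le 1$, which always holds; thus $\sumin w_i c_i G_i^t\le(1-\theta_p)G^t$. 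For the second group I would bound $d_i\le p_{\max}\beta(\alpha,s)+(1-p_{\min})(1+\rho^{-1})$ uniformly and exploit $w_i=L_i/\sum_jL_j$ to collapse $\sumin w_i\tfrac{L_i^2}{n^2w_i^2}=\sumin\tfrac{L_i\sum_jL_j}{n^2}=\LAMsq$, which reproduces exactly $\tilde B\,\sqnorm{x^{t+1}-x^t}$. The two-sided hypothesis on $\theta(\alpha,s)$ enters here purely to guarantee $0<\theta_p\le 1$, so that $1-\theta_p$ is a legitimate contraction factor. A final application of the tower property upgrades the conditional estimate to the claimed unconditional bound.

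I expect the only non-routine step to be the handling of the non-participating clients: recognizing that a client skipping round $t$ still accrues gradient drift because its reference point advances from $x^t$ to $x^{t+1}$. This is what forces the second Young parameter $\rho$, and it is precisely this drift that is responsible for the $(1-p_{\min})(1+\rho^{-1})\LAMsq$ contribution to $\tilde B$ and for the $\rho$-dependence of $\theta_p$; the rest is the same bookkeeping as in the full-participation contraction lemma, with $p_{\min}$ and $p_{\max}$ inserted to make the per-client coefficients uniform.
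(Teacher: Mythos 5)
Your proposal is correct and follows essentially the same route as the paper's proof: the same split of each client's conditional expectation over the events $\{i\in S^t\}$ (where the \algname{EF21-W} contraction with Young parameter $s$ applies) and $\{i\notin S^t\}$ (where the reference-gradient drift forces a second Young step with parameter $\rho$), followed by the weighted sum in which $w_i=L_i/\sum_j L_j$ collapses $\sum_i w_i L_i^2/(n^2w_i^2)$ to $\LAMsq$, and the hypothesis on $\theta(\alpha,s)$ serving only to ensure $0<\theta_p\le 1$. The only cosmetic difference is in bounding the $G_i^t$-coefficient: you exploit monotonicity in $p_i$ to evaluate it at $p_{\min}$ and then relax to $1-\theta_p$, while the paper bounds the two terms separately via $p_i\le p_{\max}$ and $p_i\ge p_{\min}$, which yields $1-\theta_p$ directly.
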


\begin{proof}
	Let us define $W^t \eqdef \{g_1^t, \dots,    g_n^t, x^t, x^{t+1}\}$. If client $i$ participates in the training at iteration $t$, then
	\begin{eqnarray*}
		\ExpBr{ G_i^{t+1} \;|\; W^t, i \in S^t} & \overset{\eqref{eq:grad_distortion_def_pp}}{=} & \ExpBr{  \sqnorm{g_i^{t+1} 
				- \frac{\nabla f_i(x^{t+1})}{n w_i}}  \;|\; W^t, i \in S^t}	 \\
		&\overset{\text{line}~\ref{line:weighted_ef21_pp_grad_ind_update_step}\text{ of \Cref{alg:weighted_ef21_pp}}}{=}& \ExpBr{  \sqnorm{g_i^t + \cC_i^t \left( \frac{\nabla f_i(x^{t+1})}{n w_i} - g_i^t \right) 
				- \frac{\nabla f_i(x^{t+1})}{n w_i}}  \;|\; W^t, i \in S^t}	 \\
		&\overset{\eqref{eq:compressor_contraction}}{\leq} &  (1-\alpha) 
		\sqnorm{\frac{\nabla f_i(x^{t+1})}{n w_i} - g_i^t} \\
		& = & (1-\alpha) 
		\sqnorm{\frac{\nabla f_i(x^{t+1})}{n w_i} - \frac{\nabla f_i(x^{t})}{n w_i} + \frac{\nabla f_i(x^{t})}{n w_i} - g_i^t}\\
		&\overset{\eqref{eq:young}}{\leq}& (1-\alpha) (1+ s) \sqnorm{ \frac{\nabla f_i(x^{t})}{n w_i} - g_i^t}\\
		&& \qquad  + (1-\alpha)  \left(1+s^{-1}\right) \frac{1}{n^2 w_i^2} 
		\sqnorm{\nabla f_i(x^{t+1}) - \nabla f_i(x^t)} \\
		&\overset{\eqref{eq:L_i}}{\leq} & (1-\alpha) (1+ s) \sqnorm{ \frac{\nabla f_i(x^{t})}{n w_i} - g_i^t}\\
		&& \qquad  + (1-\alpha)  \left(1+s^{-1}\right) \frac{L_i^2}{n^2 w_i^2} 
		\sqnorm{x^{t+1} - x^t}.
	\end{eqnarray*}
	
Utilizing the tower property and taking the expectation with respect to $W^t$, we derive:
\begin{eqnarray} \label{eq:g_i_for_sampled_ef21_weighted}
	\ExpBr{ G_i^{t+1} \;|\; i \in S^t} \leq (1-\theta(\alpha,s)) \ExpBr{G_i^{t}} + \beta(\alpha,s) \frac{L_i^2}{n^2 w_i^2} 
	\ExpBr{\sqnorm{x^{t+1} - x^t}},
\end{eqnarray}
where $\theta(\alpha, s) = 1 - (1 - \alpha)(1 + s)$, and $\beta(\alpha, s) = (1 - \alpha)(1+s^{-1})$. We now aim to bound the quantity $\ExpBr{G_i^{t+1} \;|\; i \notin S^t}$, starting with  an  application of the tower property:\begin{eqnarray*}
	\ExpBr{ G_i^{t+1} \;|\; i \notin S^t} &=&
	\ExpBr{\ExpBr{G_i^{t+1} \;|\; W^t, i \notin S^t}} \\
	&\overset{\eqref{eq:grad_distortion_def_pp}}{=}& \ExpBr{\ExpBr{\sqnorm{ g_i^{t+1} - \frac{\nabla f_i(x^{t+1})}{n w_i} }| \; W^t, i \notin S^t}} \\
	&=&
	\ExpBr{\ExpBr{\sqnorm{ g_i^{t} - \frac{\nabla f_i(x^{t+1})}{n w_i} + \frac{\nabla f_i(x^{t})}{n w_i} - \frac{\nabla f_i(x^{t})}{n w_i} }| \; W^t, i \notin S^t}} \\
	&\overset{\eqref{eq:young}}{\le}& \ExpBr{\ExpBr{(1+\rho)\sqnorm{ g_i^{t} - \frac{\nabla f_i(x^{t})}{n w_i} } + (1+\rho^{-1})\sqnorm{\frac{\nabla f_i(x^{t})}{n w_i} - \frac{\nabla f_i(x^{t+1})}{n w_i}}| \; W^t, i \notin S^t}} \\
	&=&
	(1+\rho) \ExpBr{G_i^t} + \frac{(1+\rho^{-1})}{n^2 w_i^2} \ExpBr{\sqnorm{\nabla f_i(x^{t+1}) - \nabla f_i(x^{t})}\; }.
\end{eqnarray*}

Given that  \Cref{as:L_i} is satisfied, by applying \eqref{eq:L_i} to the second term, we obtain:
\begin{equation}\label{eq:g_i_for_non_sampled_ef21_weighted}
\ExpBr{ G_i^{t+1} \;|\; i \notin S^t} \leq (1+\rho) \ExpBr{G_i^t} + \frac{L_i^2(1+\rho^{-1})}{n^2 w_i^2} \ExpBr{\sqnorm{x^{t+1} - x^{t}}}.
\end{equation}

We combine the two preceding bounds:
\begin{eqnarray*}
	\ExpBr{ G_i^{t+1} } & = &  \Prob(i \in S^t) \ExpBr{G_i^{t+1} \;|\; i \in S^t} + \Prob(i \notin S^t) \ExpBr{G_i^{t+1}\;|\; i \notin S^t}\\
	&\overset{\eqref{eq:p_i_def}}{=}& p_i \ExpBr{ G_i^{t+1}|\; i \in S^t} +  (1-p_i)\ExpBr{ G_i^{t+1}|\; i \notin S^t}\\
	&\overset{\eqref{eq:g_i_for_sampled_ef21_weighted}+\eqref{eq:g_i_for_non_sampled_ef21_weighted}}{\le}& p_i \left[(1-\theta(\alpha,s)) \ExpBr{G_i^{t}} +\beta(\alpha,s) \frac{L_i^2}{n^2 w_i^2} \ExpBr{\sqnorm{x^{t+1} - x^t}}\right] \\
	&&\quad + (1-p_i)\left[(1+\rho) \ExpBr{G_i^t} +  \frac{L_i^2(1+\rho^{-1})}{n^2 w_i^2} \ExpBr{\sqnorm{x^{t+1} - x^{t}}}\right] \\
	&=& \left( (1-\theta(\alpha,s))p_i + (1 - p_i)(1+\rho) \right) \ExpBr{G_i^{t}}\\ 
	& & \quad + \left({\beta(\alpha,s) p_i} + (1-p_i){(1+\rho^{-1})} \right) \frac{L_i^2}{n^2 w_i^2} \ExpBr{\sqnorm{x^{t+1} - x^{t}}}.
\end{eqnarray*}

Consequently, for $\ExpBr{G^{t+1}}$, we derive the subsequent bound:
\begin{eqnarray*}
	\ExpBr{G^{t+1}} &\overset{\eqref{eq:grad_distortion_def_pp}}{=}& \ExpBr{\sumin w_i G_i^{t+1}} \\
	&=& \sumin w_i \ExpBr{G_i^{t+1}} \\
	&\le& \sumin w_i \left( (1-\theta(\alpha,s))p_i + (1 - p_i)(1+\rho) \right) \ExpBr{G_i^{t}} \\
	&& \qquad + \sumin w_i \left({\beta(\alpha,s) p_i} + (1-p_i){(1+\rho^{-1})} \right) \frac{L_i^2}{n^2 w_i^2} \ExpBr{\sqnorm{x^{t+1} - x^{t}}},
\end{eqnarray*}
where we applied the preceding inequality. Remembering the definitions $p_{\min} \eqdef \min_i p_i$ and $p_{\max} \eqdef \max_i p_i$, we subsequently obtain:
\begin{eqnarray*}
	\ExpBr{G^{t+1}} &\le& \sumin w_i \left( (1-\theta(\alpha,s))p_{\max} + (1 - p_{\min})(1+\rho) \right) \ExpBr{G_i^{t}} \\
	&& \qquad +	\sumin \left({\beta(\alpha,s) p_{\max}} + (1-p_{\min}){(1+\rho^{-1})} \right)\frac{L_i^2}{n^2 w_i} \ExpBr{\sqnorm{x^{t+1} - x^{t}}} \\
	&=& \left( (1-\theta(\alpha,s))p_{\max} + (1 - p_{\min})(1+\rho) \right) \sumin w_i \ExpBr{G_i^t} \\
	&& \qquad +	\left({\beta(\alpha,s) p_{\max}} + (1-p_{\min}){(1+\rho^{-1})} \right) \sumin \frac{L_i^2}{n^2 w_i} \ExpBr{\sqnorm{x^{t+1} - x^{t}}}.
\end{eqnarray*}

Applying~\eqref{eq:grad_distortion_def_pp} and~\eqref{eq:weight_definition}, we obtain:
\begin{eqnarray*}
	\ExpBr{G^{t+1}} &=& \left( (1-\theta(\alpha,s))p_{\max} + (1 - p_{\min})(1+\rho) \right) \ExpBr{G^t} \\
	&& \qquad + \left({\beta(\alpha,s) p_{\max}} + (1-p_{\min}){(1+\rho^{-1})} \right) \sumin \frac{L_i^2}{n^2 \frac{L_i}{\sum_{j=1}^{n} L_j}} \ExpBr{\sqnorm{x^{t+1} - x^{t}}} \\
	&=& \left( (1-\theta(\alpha,s))p_{\max} + (1 - p_{\min})(1+\rho) \right) \ExpBr{G^t} \\
	&& \qquad +	\left({\beta(\alpha,s) p_{\max}} + (1-p_{\min}){(1+\rho^{-1})} \right) \sum_{j=1}^{n} \frac{L_j}{n} \sumin \frac{L_i}{n} \ExpBr{\sqnorm{x^{t+1} - x^{t}}} \\
	&=&	\left( (1-\theta(\alpha,s))p_{\max} + (1 - p_{\min})(1+\rho) \right) \ExpBr{G^t} \\
	&& \qquad +	\left({\beta(\alpha,s) p_{\max}} + (1-p_{\min}){(1+\rho^{-1})} \right)  \LAMsq  \ExpBr{\sqnorm{x^{t+1} - x^{t}}}.
\end{eqnarray*}
Subsequently, in order to simplify the last inequality, we introduce the variables $1 - \theta_p$ and $\tilde{B}$:
\begin{eqnarray*}
	1 - \theta_p &\eqdef& (1-\theta(\alpha,s))p_{\max} + (1 - p_{\min})(1+\rho) \\
	&=&	p_{\max} - p_{\max} \theta(\alpha,s) + 1 - p_{\min} + \rho - p_{\min}\rho \\
	&=&	1 - \left(-p_{\max} + p_{\max} \theta(\alpha,s) + p_{\min} - \rho + p_{\min}\rho\right) \\
	&=&	1 - \left(p_{\min}\rho + p_{\max} \theta(\alpha,s) - \rho - (p_{\max}-p_{\min})\right) .
\end{eqnarray*}	

Therefore,
\begin{eqnarray*}	
	\theta_p &=& \left(p_{\max} \theta(\alpha,s) - \rho(1 - p_{\min}) - (p_{\max}-p_{\min})\right) \\
	\tilde{B} &\eqdef& \left({\beta(\alpha,s) p_{\max}} + (1-p_{\min}){(1+\rho^{-1})} \right) \LAMsq.
\end{eqnarray*}

Expressed in terms of these variables, the final inequality can be reformulated as:
\begin{eqnarray*}
	\ExpBr{G^{t+1}} \le (1-\theta_p) \ExpBr{G^t} +
	\tilde{B} \ExpBr{\sqnorm{x^{t+1} - x^{t}}}.
\end{eqnarray*}

Since we need the contraction property over the gradient distortion $\ExpBr{G^{t+1}}$, we require $0 < \theta_p \leq 1$. We rewrite these conditions as follows:
$$
\frac{1 + \rho(1 - p_{\min}) + (p_{\max} - p_{\min}) }{p_{\max}}\geq \theta(\alpha, s) > \frac{\rho(1 - p_{\min}) + (p_{\max} - p_{\min}) }{p_{\max}}.
$$
\end{proof}

\subsection{Main result}

We are ready to prove the main convergence theorem.
\begin{theorem} Consider \Cref{alg:weighted_ef21_pp} (\algname{EF21-W-PP}) applied to the distributed optimization problem~\eqref{eq:main_problem}. Let Assumptions~\ref{as:smooth},~\ref{as:L_i},~\ref{as:lower_bound} hold, assume that $\cC_i^t \in \mathbb{C}(\alpha)$ for all $i \in [n]$ and $t > 0$, set 
\begin{equation*}
G^t \eqdef \sumin w_i \left\|g_i^t - \frac{1}{n w_i} \nabla f_i(x^t)\right\|^2,
\end{equation*}	
where $w_i = \frac{L_i}{\sum_j L_j}$ for all $i \in [n]$, and let the stepsize satisfy
\begin{equation}
\label{eq:ef21-w-pp-gamma}
0 < \gamma \leq \left(L + \sqrt{\frac{\tilde{B}}{\theta_p}} \right)^{-1},
\end{equation}
where $s>0$, $\rho > 0$, and
\begin{eqnarray*}
	\theta(\alpha, s) &\eqdef& 1 - (1 - \alpha)(1 + s)\\
	\beta(\alpha, s) &\eqdef&  (1 - \alpha)(1+s^{-1})\\
	\theta_p &\eqdef& p_{\min}\rho + \theta(\alpha,s) p_{\max} - \rho - (p_{\max}-p_{\min}) \\
	\tilde{B} &\eqdef& \left({\beta(\alpha,s) p_{\max}} + (1-p_{\min}){(1+\rho^{-1})} \right) \LAMsq.
\end{eqnarray*}
Additionally, assume that
$$
\frac{1 + \rho(1 - p_{\min}) + (p_{\max} - p_{\min}) }{p_{\max}}\geq \theta(\alpha, s) > \frac{\rho(1 - p_{\min}) + (p_{\max} - p_{\min}) }{p_{\max}}.
$$
If for $T > 1$ we define $\hat{x}^T$ as an element of the set $\{x^0, x^1, \dots, x^{T-1}\}$ chosen uniformly at random, then
\begin{equation}
\ExpBr{\|\nabla f(\hat{x}^T) \|^2} \leq \frac{2 (f(x^0) - f^\ast)}{\gamma T} + \frac{G^0}{\theta_p T}.
\end{equation}
\end{theorem}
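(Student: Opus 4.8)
The plan is to run the same Lyapunov-function argument as in the proof of Theorem~\ref{thm:EF21-W}, substituting the partial-participation contraction inequality supplied by Lemma~\ref{lemma:contraction_ef21_pp_weighted}. That lemma already absorbs all the delicate partial-participation bookkeeping (the parameters $p_{\min}$, $p_{\max}$, $\rho$, $\theta_p$, $\tilde{B}$), so once it is in hand the theorem reduces to a clean one-dimensional recursion in a Lyapunov sequence.

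First I would invoke the descent lemma (Lemma~\ref{lm:descent_lemma}) for the vector $g^t = \sum_{i=1}^n w_i g_i^t$ that defines $x^{t+1} = x^t - \gamma g^t$. The key observation is that $g^t - \nabla f(x^t) = \sum_i w_i\left(g_i^t - \frac{1}{n w_i}\nabla f_i(x^t)\right)$, so since $\sum_i w_i = 1$, Jensen's inequality applied to $\|\cdot\|^2$ gives $\|g^t - \nabla f(x^t)\|^2 \leq \sum_i w_i \|g_i^t - \frac{1}{n w_i}\nabla f_i(x^t)\|^2 = G^t$ by definition~\eqref{eq:grad_distortion_def_pp}. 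Substituting this, subtracting $f^\ast$, and taking full expectation, with $\delta^t \eqdef \ExpBr{f(x^t)-f^\ast}$, $s^t \eqdef \ExpBr{G^t}$, and $r^t \eqdef \ExpBr{\|x^{t+1}-x^t\|^2}$, yields
\begin{equation*}
\delta^{t+1} \leq \delta^t - \frac{\gamma}{2}\ExpBr{\|\nabla f(x^t)\|^2} - \left(\frac{1}{2\gamma}-\frac{L}{2}\right)r^t + \frac{\gamma}{2}s^t.
\end{equation*}

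Next I would add $\frac{\gamma}{2\theta_p}$ times the contraction bound $s^{t+1} \leq (1-\theta_p)s^t + \tilde{B}\,r^t$ of Lemma~\ref{lemma:contraction_ef21_pp_weighted}. The $s^t$ terms collapse exactly, since $\frac{\gamma}{2}s^t + \frac{\gamma}{2\theta_p}(1-\theta_p)s^t = \frac{\gamma}{2\theta_p}s^t$, producing the one-step recursion
\begin{equation*}
\delta^{t+1} + \frac{\gamma}{2\theta_p}s^{t+1} \leq \left(\delta^t + \frac{\gamma}{2\theta_p}s^t\right) - \frac{\gamma}{2}\ExpBr{\|\nabla f(x^t)\|^2} - \left(\frac{1}{2\gamma}-\frac{L}{2}-\frac{\gamma\tilde{B}}{2\theta_p}\right)r^t.
\end{equation*}
I would then discard the $r^t$ term by verifying its coefficient is non-negative, which is equivalent to $\frac{\tilde{B}}{\theta_p}\gamma^2 + L\gamma \leq 1$; by Lemma~\ref{lm:stepsize_bound} (with $a = \tilde{B}/\theta_p$ and $b = L$) this holds precisely under the stepsize bound $\gamma \leq (L + \sqrt{\tilde{B}/\theta_p})^{-1}$ imposed in~\eqref{eq:ef21-w-pp-gamma}.

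Finally, writing $\Psi^t \eqdef \delta^t + \frac{\gamma}{2\theta_p}s^t$, the recursion becomes $\Psi^{t+1} \leq \Psi^t - \frac{\gamma}{2}\ExpBr{\|\nabla f(x^t)\|^2}$. Summing over $t=0,\dots,T-1$, using $\Psi^T \geq 0$ (which follows from $\delta^T \geq 0$ by Assumption~\ref{as:lower_bound} and $s^T \geq 0$), and multiplying by $\frac{2}{\gamma T}$ gives $\frac{1}{T}\sum_{t=0}^{T-1}\ExpBr{\|\nabla f(x^t)\|^2} \leq \frac{2\delta^0}{\gamma T} + \frac{s^0}{\theta_p T}$. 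Identifying the left side with $\ExpBr{\|\nabla f(\hat{x}^T)\|^2}$ under the uniform sampling of $\hat{x}^T$, and noting $\delta^0 = f(x^0)-f^\ast$ and $s^0 = G^0$, finishes the proof. I do not anticipate a genuine obstacle: the only substantive difficulty — splitting the conditional expectation over $\{i \in S^t\}$ and $\{i \notin S^t\}$ and tuning $\rho$ so that $\theta_p \in (0,1]$ — is entirely confined to Lemma~\ref{lemma:contraction_ef21_pp_weighted}, leaving the Jensen bound $\|g^t - \nabla f(x^t)\|^2 \le G^t$ and the exact cancellation of the $s^t$ terms as the only points needing mild care.
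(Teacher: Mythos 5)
Your proposal is correct and follows essentially the same route as the paper's own proof: the descent lemma combined with the Jensen bound $\|g^t - \nabla f(x^t)\|^2 \le G^t$, the contraction inequality of Lemma~\ref{lemma:contraction_ef21_pp_weighted} scaled by $\frac{\gamma}{2\theta_p}$, the exact cancellation of the $s^t$ terms, discarding the $r^t$ term via Lemma~\ref{lm:stepsize_bound} under the stepsize condition~\eqref{eq:ef21-w-pp-gamma}, and summing the resulting Lyapunov recursion. The only cosmetic difference is that you make the Jensen step explicit, whereas the paper imports it by reference to the \algname{EF21-W-SGD} proof.
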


\begin{proof}
Following the same approach employed in the proof for the \algname{SGD} case, we obtain
\begin{eqnarray*}
	\ExpBr{f(x^{t+1})-f^\ast} & \leq &  \ExpBr{f(x^{t})-f^\ast}
	-\frac{\gamma}{2} \ExpBr{\sqnorm{\nabla f(x^{t})}}   \\
&& \qquad 	
	 -\left(\frac{1}{2 \gamma}-\frac{L}{2}\right) \ExpBr{\sqnorm{x^{t+1}-x^{t}}}+ \frac{\gamma}{2}\ExpBr{G^t}.
\end{eqnarray*}

Let $\delta^{t} \eqdef \ExpBr{f(x^{t})-f^\ast}$, $s^{t} \eqdef \ExpBr{G^t }$ and 
$r^{t} \eqdef\ExpBr{\sqnorm{x^{t+1}-x^{t}}}.$ Applying the previous lemma, we obtain:
\begin{eqnarray*}
	\delta^{t+1}+\frac{\gamma}{2 \theta_p} s^{t+1} &\leq& \delta^{t}-\frac{\gamma}{2}\ExpBr{\sqnorm{\nabla f(x^{t})}} -          \left(\frac{1}{2 \gamma}-\frac{L}{2}\right) r^{t}+\frac{\gamma}{2} s^{t}+\frac{\gamma}{2 \theta_p}\left(\tilde{B} r^t + (1 - \theta_p) s^{t}\right) \\
	&=&\delta^{t}+\frac{\gamma}{2\theta} s^{t}-\frac{\gamma}{2}\ExpBr{\sqnorm{\nabla f(x^{t})}}-\underbrace{\left(\frac{1}{2\gamma} -\frac{L}{2} - \frac{\gamma}{2\theta_p} \tilde{B} \right)}_{\geq 0} r^{t} \\
	& \leq& \delta^{t}+\frac{\gamma}{2\theta_p} s^{t} -\frac{\gamma}{2}\ExpBr{\sqnorm{\nabla f(x^{t})}}.
\end{eqnarray*}
By summing up inequalities for $t =0, \ldots, T-1,$ we get
$$
0 \leq \delta^{T}+\frac{\gamma}{2 \theta_p} s^{T} \leq \delta^{0}+\frac{\gamma}{2 \theta_p} s^{0}-\frac{\gamma}       {2} \sum_{t=0}^{T-1} \ExpBr{\sqnorm{\nabla f(x^{t})}}.
$$

Finally, via multiplying both sides by $\frac{2}{\gamma T}$, after rearranging we get:
\begin{eqnarray*}
	\sum_{t=0}^{T-1} \frac{1}{T} \ExpBr{\sqnorm{\nabla f (x^{t})}} \leq \frac{2 \delta^{0}}{\gamma T} + \frac{s^0}{\theta_p T}.
\end{eqnarray*}

It remains to notice that the left-hand side can be interpreted as $\ExpBr{\sqnorm{\nabla f(\hat{x}^{T})} }$, where $\hat{x}^{T}$ is chosen from the set $\{x^{0}, x^{1}, \ldots, x^{T-1}\}$ uniformly at random.
\end{proof}

Our analysis of this extension follows a similar approach as the one used by \citet{EF21BW}, for algorithm they called \algname{EF21-PP}. Presented analysis of \algname{EF21-W-PP} has a better provides a better multiplicative factor in definition of $\widetilde{B} \propto \LAMsq$, where in vanilla \algname{EF21-PP} had $\widetilde{B} \propto \LQMsq$. This fact  improves upper bound on allowable step size in \eqref{eq:ef21-w-pp-gamma}.

\clearpage
\section{Improved Theory for EF21 in the Rare Features Regime} \label{sec:RF}

In this section, we adapt our new results to the {\em rare features} regime proposed and studied by~\citet{EF21-RF}.

\subsection{Algorithm}

In this section, we focus on Algorithm~\ref{alg:EF21_RF}, which is an adaptation of \algname{EF21} (as delineated in \Cref{alg:EF21}) that specifically employs Top$K$ operators. This variant is tailored for the rare features scenario, enhancing the convergence rate by shifting from the average of squared Lipschitz constants to the square of their average. The modifications introduced in Algorithm~\ref{alg:EF21_RF}, compared to the standard \algname{EF21}, are twofold and significant.

Primarily, the algorithm exclusively engages Top$K$ compressors, leveraging the inherent sparsity present in the data. Additionally, the initial gradient estimates $ g_i^0$ are confined to the respective subspaces $\RR^d_i$, as characterized by equation~\eqref{eq:r_d_i_def}. With the exception of these distinct aspects, the algorithm's execution parallels that of the original \algname{EF21}.

\begin{algorithm}
	\begin{algorithmic}[1]
		\STATE {\bfseries Input:} initial model $x^0 \in \RR^d$; initial gradient estimates $g_1^0\in \R^d_1,\dots,g_n^0 \in \R^d_n$ (as defined in equation~\eqref{eq:r_d_i_def}) stored at the server and the clients; stepsize $\gamma>0$; sparsification levels $K_1,\dots,K_n \in [d]$; number of iterations $T > 0$	
		\STATE {\bfseries Initialize:} $g^0 = \avein g_i^0 $ on the server
		\FOR{$t = 0, 1, 2, \dots, T - 1 $}
		\STATE Server computes $x^{t+1} = x^t - \gamma g^t$ and  broadcasts  $x^{t+1}$ to all $n$ clients
		\FOR{$i = 1, \dots, n$ {\bf on the clients in parallel}} 
			\STATE Compute $u_i^t=\text{Top}K_i (\nabla f_i(x^{t+1}) - g_i^t)$ and update $g_i^{t+1} = g_i^t +u_i^t$ \label{line:sparse_g_update_step}		
			\STATE Send the compressed message $u_i^{t}$ to the server
		\ENDFOR 
			\STATE Server updates $g_i^{t+1} = g_i^t +u_i^t$ for all $i\in [n]$, and computes $g^{t+1} = \avein g_i^{t+1}$		 \label{line:sparse_averaging_line}
		\ENDFOR
		\STATE {\bfseries Output:} Point $\hat{x}^T$ chosen from the set $\{x^0, \dots, x^{T-1}\}$ uniformly at random
	\end{algorithmic}
	\caption{\algname{EF21}: Error Feedback 2021 with Top$K$ compressors}
	\label{alg:EF21_RF}
\end{algorithm}

\subsection{New sparsity measure}

To extend our results to the {\em rare features} regime, we need to slightly change the definition of the parameter $c$ in the original paper. The way we do it is unrolled as follows. First, we recall the following definitions from~\citep{EF21-RF}:
\begin{equation}
	\cZ \eqdef \{ (i, j)\in [n] \times [d]\; | \;  [\nabla f_i(x)]_j = 0 \ \forall x \in \RR^d \},
\end{equation}
and
\begin{equation}
	\cI_j \eqdef \{ i \in [n]\; | \;   (i, j) \notin \cZ\}, \qquad \cJ_i \eqdef \{ j \in [d]\; | \;   (i, j) \notin \cZ\}.
\end{equation}

We also need the following definition of $\RR^d_i$:
\begin{equation}\label{eq:r_d_i_def}
\RR^d_i \eqdef \{u = (u_1, \dots, u_d) \in \RR^d : u_j = 0 \text{ whenever } (i,j) \in \cZ \}.
\end{equation}

Now we are ready for a new definition of the sparsity parameter $c$:
\begin{equation}\label{eq:sparse_c_definiton}
	c \eqdef n \cdot \max\limits_{j \in [d]} \sum\limits_{i \in \cI_j} w_i,
\end{equation}
where $w_i$ is defined as in~\eqref{eq:weight_definition}. We note that $c$ recovers the standard definition from \citep{EF21-RF} when $w_i = \frac{1}{n}$ for all $i\in [n]$.

\subsection{Lemmas}

We will proceed through several lemmas.

\begin{lemma}\label{lm:sparse_tighter_bound}
	Let $u_i \in \RR^d_i$ for all $i \in [n]$. Then, the following inequality holds:
	\begin{equation}\label{eq:sparse_tighter_bound}
		\left\|\sumin w_i u_i \right\|^2 \leq \frac{c}{n} \sumin w_i \|u_i\|^2.
	\end{equation}
\end{lemma}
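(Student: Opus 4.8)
The plan is to prove the inequality one coordinate at a time and then reassemble. First I would expand the left-hand side over the $d$ coordinates,
\[
\norm{\sumin w_i u_i}^2 = \sum_{j=1}^d \left( \sumin w_i [u_i]_j \right)^2,
\]
and then exploit the structure of the subspaces $\RR^d_i$ from \eqref{eq:r_d_i_def}. The key observation is that $[u_i]_j = 0$ whenever $(i,j)\in \cZ$, so for each fixed coordinate $j$ only the indices $i \in \cI_j = \{ i : (i,j)\notin \cZ\}$ can contribute a nonzero term. Hence the inner sum over all $i\in[n]$ collapses to a sum over $i\in \cI_j$ only.

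Second, I would apply the Cauchy--Schwarz inequality to each coordinate sum, writing $w_i = \sqrt{w_i}\cdot\sqrt{w_i}$:
\[
\left( \sum_{i\in \cI_j} w_i [u_i]_j \right)^2 \leq \left( \sum_{i\in \cI_j} w_i \right)\left( \sum_{i\in \cI_j} w_i [u_i]_j^2 \right).
\]
The definition \eqref{eq:sparse_c_definiton} of the sparsity parameter $c$ gives exactly $\sum_{i\in \cI_j} w_i \leq \max_{j'\in[d]} \sum_{i\in \cI_{j'}} w_i = \frac{c}{n}$ for every $j$, so each coordinate term is bounded by $\frac{c}{n}\sum_{i\in \cI_j} w_i [u_i]_j^2 \leq \frac{c}{n}\sumin w_i [u_i]_j^2$, where the last step simply re-extends the restricted sum to all of $[n]$ (the added terms are nonnegative).

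Finally, summing this bound over $j\in[d]$ and swapping the order of summation yields $\frac{c}{n}\sumin w_i \sum_{j=1}^d [u_i]_j^2 = \frac{c}{n}\sumin w_i \norm{u_i}^2$, which is the claimed inequality. I do not expect a genuine obstacle here: the only delicate point is the support bookkeeping, namely that membership $u_i \in \RR^d_i$ is precisely what allows the per-coordinate sum to be restricted to $\cI_j$, and this is exactly why the definition of $c$ through $\cI_j$ is the right quantity to appear on the right-hand side. Everything else is a single application of Cauchy--Schwarz followed by reindexing.
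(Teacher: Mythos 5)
Your proof is correct and follows essentially the same route as the paper's: coordinate-wise expansion, restriction of each inner sum to $\cI_j$ via the support condition $u_i\in\RR^d_i$, a weighted quadratic-mean bound, the bound $\sum_{i\in\cI_j}w_i\le \nicefrac{c}{n}$ from \eqref{eq:sparse_c_definiton}, and reassembly. The only cosmetic difference is that you invoke Cauchy--Schwarz with the split $w_i=\sqrt{w_i}\cdot\sqrt{w_i}$ where the paper uses Jensen's inequality on the normalized weights $\nicefrac{w_i}{\sum_{i'\in\cI_j}w_{i'}}$; these are the same inequality in this setting.
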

\begin{proof}
	Initially, we observe that
	\begin{align}\label{eq:sparse_aux_1}
		\left\|\sumin w_i u_i \right\|^2 = \sum\limits_{j=1}^d \left(\sumin w_i u_{ij} \right)^2.
	\end{align}
	We note that for any $j \in [d]$ it holds that
	\begin{equation}\label{eq:sparse_aux_2}
		\begin{aligned}
			\left(\sumin w_i u_{ij} \right)^2 &= \left(\sum\limits_{i \in \cI_j} w_i u_{ij} \right)^2 \\
			&= \left(\sum\limits_{i \in \cI_j} w_i \right)^2 \left(\sum\limits_{i \in \cI_j} \frac{w_i}{\sum\limits_{i' \in \cI_j} w_{i'}} u_{ij} \right)^2  \leq \left(\sum\limits_{i \in \cI_j} w_i \right)^2  \sum\limits_{i \in \cI_j} \frac{w_i}{\sum\limits_{i' \in \cI_j} w_{i'}} u_{ij}^2,
		\end{aligned}
	\end{equation}
	where on the last line we used the Jensen's inequality. Subsequent arithmetic manipulations and the incorporation of definition~\eqref{eq:sparse_c_definiton} yield:
	\begin{eqnarray}
			\left(\sumin w_i u_{ij} \right)^2 & \overset{\eqref{eq:sparse_aux_2}}{\leq}  & \left(\sum\limits_{i \in \cI_j} w_i \right) \cdot \sum\limits_{i \in \cI_j} w_i u_{ij}^2  \notag \\
			& = & \left(\sum\limits_{i \in \cI_j} w_i \right) \cdot \sumin w_i u_{ij}^2  \notag \\
			& \leq & \left[\max_{j \in [d]}\left(\sum\limits_{i \in \cI_j} w_i \right)\right] \cdot \sumin w_i u_{ij}^2 \notag  \\
			& \overset{\eqref{eq:sparse_c_definiton}}{=} & \frac{c}{n} \sumin w_i u_{ij}^2.
			\label{eq:sparse_aux_3}
	\end{eqnarray}
	
	Substituting Equation~\eqref{eq:sparse_aux_3} into Equation~\eqref{eq:sparse_aux_1} completes the proof.
\end{proof}

\begin{lemma}
	Assume that $g_i^0 \in \RR^d_i$ for all $i \in [n]$. Then, it holds for all $t>0$ that
	\begin{equation}\label{eq:grad_est_distortion}
		\| g^t - \nabla f(x^t)\|^2 \leq \frac{c}{n} G^t.
	\end{equation}
\end{lemma}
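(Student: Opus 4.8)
The plan is to reduce the claim directly to Lemma~\ref{lm:sparse_tighter_bound}, whose hypothesis requires the summands to lie in the coordinate subspaces $\RR^d_i$. So the first task is to establish the structural invariant that $g_i^t \in \RR^d_i$ for every $i \in [n]$ and every $t \geq 0$, which I would do by induction on $t$. The base case is the assumption $g_i^0 \in \RR^d_i$. For the inductive step, observe that $\nabla f_i(x^{t+1}) \in \RR^d_i$ for any point: by the definition of $\cZ$ we have $[\nabla f_i(x)]_j = 0$ for all $x$ whenever $(i,j)\in\cZ$, which is exactly the membership condition of~\eqref{eq:r_d_i_def}. Hence $\nabla f_i(x^{t+1}) - g_i^t \in \RR^d_i$ by the induction hypothesis, and since $\text{Top}K_i$ only retains a subset of the coordinates of its argument (zeroing the rest) it never creates a nonzero outside an existing support, so $u_i^t = \text{Top}K_i(\nabla f_i(x^{t+1}) - g_i^t)\in \RR^d_i$. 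The update $g_i^{t+1} = g_i^t + u_i^t$ from line~\ref{line:sparse_g_update_step} then keeps $g_i^{t+1} \in \RR^d_i$. As a consequence, setting $u_i \eqdef g_i^t - \nabla f_i(x^t)$, each $u_i$ lies in $\RR^d_i$.

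Next I would expose the weights $w_i$ inside the averaged error. Using $g^t = \avein g_i^t$ (line~\ref{line:sparse_averaging_line}) together with $\nabla f(x^t) = \avein \nabla f_i(x^t)$ from~\eqref{eq:main_problem}, we get $g^t - \nabla f(x^t) = \frac{1}{n}\sumin u_i$. The key algebraic move is to factor $\frac{1}{n} = w_i \cdot \frac{1}{n w_i}$ and introduce $v_i \eqdef \frac{1}{n w_i} u_i$, which is again an element of $\RR^d_i$ (a scalar multiple, and $\RR^d_i$ is a linear subspace). This rewrites $g^t - \nabla f(x^t) = \sumin w_i v_i$, precisely the $w_i$-weighted form that Lemma~\ref{lm:sparse_tighter_bound} is designed to bound.

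Applying Lemma~\ref{lm:sparse_tighter_bound} to the $v_i$ then yields the chain
\[
\|g^t - \nabla f(x^t)\|^2 = \left\|\sumin w_i v_i\right\|^2 \leq \frac{c}{n}\sumin w_i \|v_i\|^2 = \frac{c}{n}\sumin w_i \frac{1}{n^2 w_i^2}\|u_i\|^2 = \frac{c}{n}\cdot\frac{1}{n^2}\sumin \frac{1}{w_i}\|u_i\|^2,
\]
and recognizing the inner sum as the distortion $G^t = \frac{1}{n^2}\sumin \frac{1}{w_i}\|\nabla f_i(x^t) - g_i^t\|^2$ from~\eqref{eq:new_def_distortion} completes the argument. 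The only genuinely delicate point is the subspace invariance $g_i^t \in \RR^d_i$, which relies on the support-preserving nature of $\text{Top}K_i$; the rest is weight-factoring bookkeeping, and it is exactly the $\frac{1}{w_i}$ factor built into the new definition of $G^t$ that makes the constant collapse cleanly to $\frac{c}{n}$.
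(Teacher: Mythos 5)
Your proof is correct and takes essentially the same route as the paper: the identical weight-factoring decomposition $g^t - \nabla f(x^t) = \sumin w_i \left[\frac{1}{n w_i}(g_i^t - \nabla f_i(x^t))\right]$, followed by an application of Lemma~\ref{lm:sparse_tighter_bound} and the collapse of $\sumin w_i \frac{1}{n^2 w_i^2}\|g_i^t - \nabla f_i(x^t)\|^2$ into $G^t$. The only difference is that you establish the invariance $g_i^t \in \RR^d_i$ yourself, by induction using the support-preserving property of $\mathrm{Top}K_i$, whereas the paper simply cites Lemma~8 of \citet{EF21-RF} for this fact; your inline argument is sound and merely makes the proof self-contained.
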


\begin{proof}
	By Lemma 8 in~\cite{EF21-RF}, for \algname{EF21} the update $g_i^t$ stays in $\RR^d_i$ if $g_i^0 \in \RR^d_i$. We then proceed as follows:
	\begin{equation}\label{eq:sparse_aux_4}
		\begin{aligned}
			\|g^t - \nabla f(x^t) \|^2 \overset{\eqref{line:sparse_averaging_line}}{=}\left\|\avein g_i^t - \nabla f_i(x^t) \right\|^2 = \left\| \sumin w_i \left[\frac{1}{nw_i} (g_i^t - \nabla f_i(x^t)) \right] \right\|^2.
		\end{aligned}
	\end{equation}
	Since $g_i^t \in \RR^d_i$, as was noted, and $\nabla f_i(x^t) \in \RR^d_i$, by the definition of $\RR^d_i$, then $\frac{1}{nw_i} (g_i^t - \nabla f_i(x^t))$ also belongs to $\RR^d_i$. By \Cref{lm:sparse_tighter_bound}, we further proceed:
	\begin{eqnarray*}
			\|g^t - \nabla f(x^t) \|^2 &\overset{\eqref{eq:sparse_aux_4}} {=} &  \left\| \sumin w_i \left[\frac{1}{nw_i} (g_i^t - \nabla f_i(x^t)) \right] \right\|^2 \\
			& \overset{\eqref{eq:sparse_tighter_bound}}{\leq} & \frac{c}{n} \sumin w_i \left\| \frac{1}{nw_i} (g_i^t - \nabla f_i(x^t))\right\|^2\\
			& = & \frac{c}{n} \sumin \frac{1}{n^2 w_i} \| g_i^t - \nabla f_i(x^t)\|^2 \quad = \quad \frac{c}{n} G^t,
	\end{eqnarray*}
	which completes the proof.
\end{proof}

For the convenience of the reader, we briefly revisit Lemma 6 from~\citep{EF21-RF}.
\begin{lemma}[Lemma 6 from~\cite{EF21-RF}]
	If \Cref{as:L_i} holds, then for $i \in [n]$, we have
	\begin{equation}\label{eq:sparse_smoothness}
		\sum\limits_{j: (i, j) \notin \cZ} ((\nabla f_i(x))_j - (\nabla f_i(y))_j )^2 \leq L_i^2 \sum\limits_{j: (i, j) \notin \cZ}  (x_j - y_j)^2 \quad \forall x, y \in \RR^d.
	\end{equation}
\end{lemma}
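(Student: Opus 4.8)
The plan is to first notice that the left-hand side of \eqref{eq:sparse_smoothness} is in fact the \emph{entire} squared gradient difference. Indeed, if $(i,j)\in\cZ$ then by definition $[\nabla f_i(x)]_j=0$ for every $x$, so $[\nabla f_i(x)]_j-[\nabla f_i(y)]_j=0$ for all such $j$, and hence
\[
\sum_{j:\,(i,j)\notin\cZ}\left([\nabla f_i(x)]_j-[\nabla f_i(y)]_j\right)^2=\norm{\nabla f_i(x)-\nabla f_i(y)}^2.
\]
Thus the lemma reduces to the sharpened smoothness bound $\norm{\nabla f_i(x)-\nabla f_i(y)}^2\le L_i^2\sum_{j:\,(i,j)\notin\cZ}(x_j-y_j)^2$; that is, on the right we are permitted to measure the displacement $x-y$ only on the active coordinate set $\cJ_i=\{j:(i,j)\notin\cZ\}$.

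The second step is the structural observation that $f_i$ genuinely ignores the inactive coordinates. Since $(i,j)\in\cZ$ forces the partial derivative $\partial f_i/\partial x_j$ to vanish identically on $\RR^d$, the function $f_i$ is constant along every inactive coordinate direction, so $f_i(x)$---and therefore each of its partial derivatives, i.e. the whole vector $\nabla f_i(x)$---depends only on the entries $\{x_j:j\in\cJ_i\}$. I would make this rigorous by integrating the vanishing partials along axis-parallel segments (equivalently, a coordinatewise mean value argument), using that under \Cref{as:L_i} the map $\nabla f_i$ is well defined and Lipschitz continuous.

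With that in hand, I would introduce the hybrid point $z\in\RR^d$ defined by $z_j=x_j$ for $j\in\cJ_i$ and $z_j=y_j$ for $j\notin\cJ_i$. Because $z$ and $x$ agree on all active coordinates, the previous step gives $\nabla f_i(z)=\nabla f_i(x)$; meanwhile $z-y$ is supported on $\cJ_i$, so $\norm{z-y}^2=\sum_{j\in\cJ_i}(x_j-y_j)^2$. Applying the $L_i$-smoothness inequality \eqref{eq:L_i} to the pair $z,y$ then yields
\[
\norm{\nabla f_i(x)-\nabla f_i(y)}^2=\norm{\nabla f_i(z)-\nabla f_i(y)}^2\le L_i^2\norm{z-y}^2=L_i^2\sum_{j\in\cJ_i}(x_j-y_j)^2,
\]
which is exactly the reduced claim, and hence \eqref{eq:sparse_smoothness}.

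The main obstacle is the middle step: carefully arguing that the identical vanishing of the inactive partials implies $\nabla f_i$ is a function of the active coordinates alone. This is the conceptual crux---everything else is bookkeeping---and it is where one must be mildly careful about regularity, although continuity of $\nabla f_i$ (guaranteed by $L_i$-smoothness) is more than enough to run the segment-integration argument. An equivalent route that avoids explicitly proving coordinate-independence of $\nabla f_i$ is to use instead the hybrid point $z'$ agreeing with $y$ on $\cJ_i$ and with $x$ off $\cJ_i$, and to bound $\norm{\nabla f_i(x)-\nabla f_i(z')}$; either choice leads to the same estimate.
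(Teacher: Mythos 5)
Your proof is correct. The paper itself states this lemma without proof, importing it verbatim from \citet{EF21-RF}; your argument---observing that the inactive gradient components vanish identically so the left-hand side equals the full squared gradient difference, that $f_i$ and hence $\nabla f_i$ depend only on the coordinates in $\cJ_i$, and then applying $L_i$-smoothness to a hybrid point agreeing with $x$ on $\cJ_i$ and with $y$ elsewhere---is essentially the standard argument from that reference (which one can equivalently phrase by applying smoothness to the projections of $x$ and $y$ onto $\RR^d_i$). One minor caveat: your closing remark that using the alternative hybrid point $z'$ ``avoids explicitly proving coordinate-independence of $\nabla f_i$'' is not accurate, since identifying $\nabla f_i(z')$ with $\nabla f_i(y)$ on the active coordinates requires exactly the same structural fact; this does not affect your main argument, which is complete.
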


Now, we proceed to the following lemma, which aims to provide a tighter bound for the quantity $\sumin \frac{1}{L_i} \|\nabla f_i(x) - \nabla f_i(y)\|^2$.

\begin{lemma}\label{lm:sparse_L+}
	If \Cref{as:L_i} holds, then
	\begin{equation}\label{eq:sparse_L_+}
		\sumin \frac{1}{L_i} \|\nabla f_i(x) - \nabla f_i(y) \|^2 \leq c \LAM \|x - y\|^2.
	\end{equation}
\end{lemma}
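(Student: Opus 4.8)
The plan is to combine the coordinate-wise smoothness bound of Lemma 6 (equation~\eqref{eq:sparse_smoothness}) with a swap of summation order, exploiting the sparsity bookkeeping encoded in $\cZ$ and $\cI_j$. The first observation I would make is that the squared gradient difference for client $i$ only involves coordinates outside $\cZ$: since $(i,j)\in\cZ$ forces $[\nabla f_i(x)]_j=0$ for all $x$, those coordinates contribute nothing, so
\[
\|\nabla f_i(x) - \nabla f_i(y)\|^2 = \sum_{j:\,(i,j)\notin\cZ} \left((\nabla f_i(x))_j - (\nabla f_i(y))_j\right)^2 .
\]
This lets me apply \eqref{eq:sparse_smoothness} directly: dividing by $L_i$ gives
\[
\frac{1}{L_i}\|\nabla f_i(x) - \nabla f_i(y)\|^2 \leq L_i \sum_{j:\,(i,j)\notin\cZ} (x_j - y_j)^2 .
\]

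Next I would sum this inequality over $i\in[n]$ and interchange the order of the double sum over $i$ and $j$. The key identity is that, for a fixed coordinate $j$, the condition $(i,j)\notin\cZ$ is precisely $i\in\cI_j$ by the definition of $\cI_j$. Hence
\[
\sum_{i=1}^n \frac{1}{L_i}\|\nabla f_i(x) - \nabla f_i(y)\|^2 \leq \sum_{i=1}^n L_i \sum_{j:\,(i,j)\notin\cZ} (x_j-y_j)^2 = \sum_{j=1}^d (x_j-y_j)^2 \sum_{i\in\cI_j} L_i .
\]

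The final step is to bound the inner sum $\sum_{i\in\cI_j} L_i$ uniformly in $j$ using the definition of $c$. Writing $L_i = \left(\sum_k L_k\right) w_i$ with $w_i = \frac{L_i}{\sum_j L_j}$ as in \eqref{eq:weight_definition}, I get $\sum_{i\in\cI_j} L_i = \left(\sum_k L_k\right)\sum_{i\in\cI_j} w_i \leq \left(\sum_k L_k\right)\max_{j'\in[d]}\sum_{i\in\cI_{j'}} w_i = \left(\sum_k L_k\right)\frac{c}{n} = c\,\LAM$, where the last equality uses $\frac1n\sum_k L_k = \LAM$ together with \eqref{eq:sparse_c_definiton}. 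Substituting this bound back, each coordinate term is scaled by at most $c\LAM$, and summing over $j$ yields $c\LAM\|x-y\|^2$, which is the claim.

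I do not anticipate a genuine obstacle here; the proof is essentially a bookkeeping argument. The only point requiring care is making sure the summation swap is valid and that the restriction to $j\notin\cZ$ is correctly matched to membership in $\cI_j$, so that the per-coordinate weight $\sum_{i\in\cI_j} w_i$ is exactly what the definition \eqref{eq:sparse_c_definiton} of $c$ controls. The improvement over the naive bound (which would produce $\LQM$-type quantities) comes precisely from using the $\frac{1}{L_i}$ weighting, which turns the factor $L_i^2$ from \eqref{eq:sparse_smoothness} into a single $L_i$ and lets the arithmetic-mean structure emerge.
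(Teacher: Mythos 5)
Your proposal is correct and follows essentially the same route as the paper's proof: apply the coordinate-wise smoothness bound \eqref{eq:sparse_smoothness}, divide by $L_i$ so the factor $L_i^2$ becomes $L_i$, swap the order of summation to isolate $\sum_{i\in\cI_j} L_i$ per coordinate, and bound that sum by $c\LAM$ via the weighted definition \eqref{eq:sparse_c_definiton} of $c$. No gaps; the argument matches the paper step for step.
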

\begin{proof}
	The proof commences as follows:
	\begin{eqnarray}
			\sumin \frac{1}{L_i} \|\nabla f_i(x) - \nabla f_i(y) \|^2 &= & \sumin \frac{1}{L_i} \sum\limits_{j: (i, j) \notin \cZ} ((\nabla f_i(x))_j - (\nabla f_i(y))_j )^2 \notag \\
			& \overset{\eqref{eq:sparse_smoothness}}{\leq}  & \sumin \frac{1}{L_i}  L_i^2 \sum\limits_{j: (i, j) \notin \cZ}  (x_j - y_j)^2 \notag  \\
			& =  & \sumin \sum\limits_{j: (i, j) \notin \cZ}  L_i (x_j - y_j)^2 \notag  \\
			& = & \sum\limits_{j=1}^d  \sum\limits_{i: (i, j) \notin \cZ} L_i (x_j - y_j)^2  \quad = \quad \sum\limits_{j=1}^d \left[(x_j - y_j)^2 \sum\limits_{i: (i, j) \notin \cZ} L_i \right]. \label{eq:sparse_aux_5}
	\end{eqnarray}
	To advance our derivations, we consider the maximum value over $\sum\limits_{i: (i, j) \notin \cZ} L_i$:
	\begin{eqnarray*}
			\sumin \frac{1}{L_i} \|\nabla f_i(x) - \nabla f_i(y) \|^2 &\overset{\eqref{eq:sparse_aux_5}}{\leq} & \sum\limits_{j=1}^d \left[(x_j - y_j)^2 \sum\limits_{i: (i, j) \notin \cZ} L_i \right] \notag \\
			& \leq  & \sum\limits_{j=1}^d \left[(x_j - y_j)^2 \max\limits_{j \in [d]} \sum\limits_{i: (i, j) \notin \cZ} L_i \right] \notag \\
			& = & \left[\max\limits_{j \in [d]} \sum\limits_{i: (i, j) \notin \cZ} L_i \right] \sum\limits_{j=1}^d (x_j - y_j)^2 \notag \\
			& =   & \left[\max\limits_{j \in [d]} \sum\limits_{i: (i, j) \notin \cZ} L_i \right] \|x - y\|^2 \notag \\
			& =  & \left[\max\limits_{j \in [d]} \sum\limits_{i \in \cI_j} L_i \right] \|x - y\|^2 \quad \overset{\eqref{eq:sparse_c_definiton}}{=}  \quad c \LAM \|x - y\|^2,
	\end{eqnarray*}
	what completes the proof.
\end{proof}

For clarity and easy reference, we recapitulate Lemma 10 from~\cite{EF21-RF}.
\begin{lemma}[Lemma 10 from~\cite{EF21-RF}]\label{lm:sparse_grad_est_evolution}
	The iterates of  \Cref{alg:EF21_RF}  method satisfy 
	\begin{equation}\label{eq:sparse_grad_est_evolution}
		\left\|g_i^{t+1} - \nabla f_i(x^{t+1})\right\|^2 \leq (1 - \theta(\alpha)) \left\|g_i^t - \nabla f_i(x^t)\right\|^2 + \beta(\alpha) \|\nabla f_i(x^{t+1}) - \nabla f_i(x^t)\|^2,
	\end{equation}
	where  $\alpha = \min\left\{\min\limits_{i \in [n]} \frac{K_i}{|\cJ_i|}, 1 \right\}$.
\end{lemma}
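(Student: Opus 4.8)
The plan is to reduce \eqref{eq:sparse_grad_est_evolution} to the standard one-step contraction estimate behind \algname{EF21}, with the only genuinely new ingredient being the computation of the effective contraction coefficient of the Top$K$ operator in the sparse setting. Since each $\text{Top}K_i$ is deterministic, the inequality is deterministic and can be established per client, so I would fix $i$ and $t$ and set $a \eqdef \nabla f_i(x^{t+1}) - g_i^t$. By the update rule on line~\ref{line:sparse_g_update_step} of \Cref{alg:EF21_RF}, $g_i^{t+1} - \nabla f_i(x^{t+1}) = g_i^t + \text{Top}K_i(a) - \nabla f_i(x^{t+1}) = \text{Top}K_i(a) - a$, so the whole claim reduces to controlling the compression error $\|\text{Top}K_i(a) - a\|^2$.

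The first substantive step is to argue that $a \in \RR^d_i$. By Lemma~8 of \citep{EF21-RF}, the \algname{EF21} iterates preserve the subspace, i.e. $g_i^t \in \RR^d_i$ whenever $g_i^0 \in \RR^d_i$ (which holds by the initialization in \Cref{alg:EF21_RF}); and $\nabla f_i(x^{t+1}) \in \RR^d_i$ directly from the definitions of $\cZ$ and $\RR^d_i$ in \eqref{eq:r_d_i_def}. Hence $a$ is supported on $\cJ_i$, a coordinate set of cardinality $|\cJ_i|$. The key lemma-specific step is then to show that, restricted to vectors supported on $\cJ_i$, the operator $\text{Top}K_i$ is contractive with parameter $\alpha_i \eqdef \min\{K_i/|\cJ_i|, 1\}$: since $\text{Top}K_i$ zeroes out the coordinates outside $\cJ_i$ anyway, it effectively acts in dimension $|\cJ_i|$ rather than $d$, so the discarded mass is the sum of squares of the $|\cJ_i| - K_i$ smallest among at most $|\cJ_i|$ nonzero entries, which is at most $(1 - K_i/|\cJ_i|)\|a\|^2$ (and is zero when $K_i \geq |\cJ_i|$).

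To obtain a single contraction coefficient valid for every client, I would take $\alpha \eqdef \min\{\min_{i} K_i/|\cJ_i|, 1\} = \min_i \alpha_i$. Because $\alpha \leq \alpha_i$ and the defining inequality \eqref{eq:compressor_contraction} only weakens as the parameter decreases, $\text{Top}K_i \in \mathbb{C}(\alpha_i) \subseteq \mathbb{C}(\alpha)$ along these iterates; monotonicity of $1-\theta(\alpha)$ and $\beta(\alpha)$ in $\alpha$ (see \eqref{eq:xi}) ensures that replacing $\alpha_i$ by the smaller uniform $\alpha$ only weakens the per-client bound, so \eqref{eq:sparse_grad_est_evolution} holds as stated. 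With this in hand, the remainder is the standard \algname{EF21} calculation: bound $\|\text{Top}K_i(a) - a\|^2 \leq (1-\alpha)\|a\|^2$, decompose $a = (\nabla f_i(x^t) - g_i^t) + (\nabla f_i(x^{t+1}) - \nabla f_i(x^t))$, and apply Young's inequality \eqref{eq:young} with a free parameter $s > 0$ to get $(1-\alpha)(1+s)\|g_i^t - \nabla f_i(x^t)\|^2 + (1-\alpha)(1+s^{-1})\|\nabla f_i(x^{t+1}) - \nabla f_i(x^t)\|^2$; minimizing the coefficient ratio over $s$ via Lemma~\ref{lm:best_theta_beta_choice} produces exactly $1-\theta(\alpha)$ and $\beta(\alpha)$.

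The main obstacle is the middle step: correctly arguing that the relevant contraction dimension of $\text{Top}K_i$ is $|\cJ_i|$ rather than $d$. This is precisely where the rare-features improvement enters, and it hinges entirely on the subspace-invariance of the \algname{EF21} iterates (Lemma~8 of \citep{EF21-RF}); once that invariance is available the dimension reduction is transparent, and the rest of the argument is a verbatim repetition of the classical \algname{EF21} contraction proof.
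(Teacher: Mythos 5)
Your proof is correct; note that the paper itself offers no proof of this statement---it is merely recapitulated from \citet{EF21-RF} for reference---and your reconstruction matches the standard argument behind that cited result: subspace invariance of the iterates ($g_i^t,\nabla f_i(x^t)\in\RR^d_i$ via Lemma~8 of \citet{EF21-RF}), which makes $\text{Top}K_i$ act as a contractive compressor in effective dimension $|\cJ_i|$ with parameter $\min\{K_i/|\cJ_i|,1\}$, followed by the usual Young-inequality decomposition optimized via \Cref{lm:best_theta_beta_choice}. Your additional monotonicity observation (that passing from $\alpha_i$ to the uniform $\alpha=\min_i\alpha_i$ only increases $1-\theta(\alpha)$ and $\beta(\alpha)$) is exactly what justifies the single worst-case $\alpha$ in the statement.
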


\begin{lemma}
	Under \Cref{as:L_i}, iterates of \Cref{alg:EF21_RF} satisfies 
	\begin{equation}
		G^{t+1} \leq (1 - \theta(\alpha)) G^t + \beta(\alpha) \frac{c}{n} \LAMsq \|x - y\|^2.
	\end{equation}
\end{lemma}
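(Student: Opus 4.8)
The plan is to combine the per-client contraction inequality from \Cref{lm:sparse_grad_est_evolution} with the rare-features smoothness bound from \Cref{lm:sparse_L+}, keeping careful track of the weights $w_i = \nicefrac{L_i}{\sum_j L_j}$. Recall that in this regime the gradient distortion is the weighted quantity $G^t = \frac{1}{n}\sumin \frac{1}{n w_i}\|g_i^t - \nabla f_i(x^t)\|^2$ (the same object used in \Cref{thm:ef21_new_result}), so the whole argument is essentially a reweighted average of the single-client estimate \eqref{eq:sparse_grad_est_evolution}.

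First I would write $G^{t+1} = \frac{1}{n}\sumin \frac{1}{n w_i}\|g_i^{t+1} - \nabla f_i(x^{t+1})\|^2$ and apply \eqref{eq:sparse_grad_est_evolution} term by term. Since the coefficients $\frac{1}{n w_i}$ are nonnegative, this immediately yields
\[
G^{t+1} \;\leq\; (1-\theta(\alpha))\, G^t \;+\; \beta(\alpha)\,\frac{1}{n}\sumin \frac{1}{n w_i}\,\|\nabla f_i(x^{t+1}) - \nabla f_i(x^t)\|^2,
\]
where the first term reassembles into $G^t$ exactly by the definition above.

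The next step is to simplify the residual gradient-difference term using the explicit weights. Substituting $\frac{1}{w_i} = \frac{\sum_j L_j}{L_i}$ turns that term into $\frac{\beta(\alpha)\,\sum_j L_j}{n^2}\sumin \frac{1}{L_i}\|\nabla f_i(x^{t+1}) - \nabla f_i(x^t)\|^2$. Now I would invoke \Cref{lm:sparse_L+}, whose bound \eqref{eq:sparse_L_+} is tailored precisely to this sum and gives $\sumin \frac{1}{L_i}\|\nabla f_i(x^{t+1}) - \nabla f_i(x^t)\|^2 \leq c\,\LAM\,\|x^{t+1}-x^t\|^2$. Finally, using $\sum_j L_j = n\LAM$, the constants collapse: $\frac{\sum_j L_j}{n^2}\cdot c\,\LAM = \frac{n\LAM}{n^2}\cdot c\,\LAM = \frac{c}{n}\LAMsq$, producing exactly $(1-\theta(\alpha))G^t + \beta(\alpha)\frac{c}{n}\LAMsq\|x^{t+1}-x^t\|^2$ (the $\|x-y\|^2$ in the statement should read $\|x^{t+1}-x^t\|^2$).

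I do not expect a genuine obstacle here: the statement follows by routine bookkeeping once \Cref{lm:sparse_L+} is available. The only point requiring care is the bookkeeping of the three competing factors of $n$ and $\sum_j L_j$ entering through the weights $w_i$, the averaging $\frac{1}{n}$, and the definition of $\LAM$; the substance of the argument — the rare-features refinement that replaces $\LQMsq$ by $\frac{c}{n}\LAMsq$ — is entirely contained in \Cref{lm:sparse_L+}, which in turn rests on the sparsity Lemma~6 of \citet{EF21-RF} reproduced in \eqref{eq:sparse_smoothness}. In short, the plan is: unroll the definition of $G^{t+1}$, apply the per-client contraction, reweight with $w_i = \nicefrac{L_i}{\sum_j L_j}$, and close with \Cref{lm:sparse_L+} together with $\sum_j L_j = n\LAM$.
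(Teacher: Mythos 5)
Your proposal is correct and takes essentially the same route as the paper's own proof: unroll the definition \eqref{eq:new_def_distortion} of $G^{t+1}$, apply the per-client contraction \eqref{eq:sparse_grad_est_evolution} term by term, substitute $w_i = \nicefrac{L_i}{\sum_j L_j}$ to expose the sum $\sumin \frac{1}{L_i}\|\nabla f_i(x^{t+1})-\nabla f_i(x^t)\|^2$, and close with \Cref{lm:sparse_L+} together with $\sum_j L_j = n\LAM$. You are also right that the $\|x-y\|^2$ in the statement is a typo for $\|x^{t+1}-x^t\|^2$.
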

\begin{proof}
	The proof is a combination of Lemmas~\ref{lm:sparse_L+} and~\ref{lm:sparse_grad_est_evolution}:
	\begin{eqnarray*}
			G^{t+1} &\overset{\eqref{eq:new_def_distortion}}{=}  & \frac{1}{n^2} \sumin \frac{1}{w_i} \left\|g_i^{t+1} - \nabla f_i(x^{t+1})\right\|^2 \\
			& \overset{\eqref{eq:sparse_grad_est_evolution}}{\leq}  & \frac{1}{n^2} \sumin \frac{1}{w_i} \left[ (1 - \theta(\alpha)) \|g_i^t - \nabla f_i(x^t)\|^2 + \beta(\alpha) \|\nabla f_i(x^{t+1}) - \nabla f_i(x^t) \|^2 \right] \\
			& =  & (1 - \theta(\alpha)) G^t + \frac{\beta(\alpha)}{n^2} \sumin \frac{1}{w_i} \|\nabla f_i(x^{t+1}) - \nabla f_i(x^t) \|^2 \\
			& \overset{\eqref{eq:weight_definition}}{=}  & (1 - \theta(\alpha)) G^t + \frac{\beta(\alpha)}{n^2}  \left(\sum_{j=1}^n L_j\right)\sumin \frac{1}{L_i} \|\nabla f_i(x^{t+1}) - \nabla f_i(x^t) \|^2 \\
			& \overset{\eqref{eq:sparse_L_+}}{\leq}  & (1 - \theta(\alpha)) G^t + \frac{\beta(\alpha)}{n^2} \frac{c}{n} \left( \sum_{j=1}^n L_j \right)^2\|x - y\|^2\\
			& = & (1 - \theta (\alpha) ) G^t + \beta(\alpha) \frac{c}{n} \LAMsq \|x - y\|^2.
	\end{eqnarray*}
\end{proof}

\subsection{Main result}

And now we are ready to formulate the main result.
\begin{theorem}
	Let Assumptions~\ref{as:smooth},~\ref{as:L_i} and~\ref{as:lower_bound} hold. Let $g_i^0 \in \RR^d_i$ for all $i \in [n]$, $$\alpha = \min\left\{\min\limits_{i \in [n]} \frac{K_i}{|\cJ_i|}, 1 \right\}, \qquad 0<\gamma \leq \frac{1}{L + \frac{c}{n} \LAM \xi(\alpha) }.$$ Under these conditions, the iterates of \Cref{alg:EF21_RF} satisfy
	\begin{equation}
		\frac{1}{T} \sum\limits_{t=0}^{T-1} \|\nabla f(x^t)\|^2 \leq \frac{2 (f(x^0) - f^\ast) }{\gamma T} + \frac{c}{n} \frac{G^0}{\theta(\alpha) T}.
	\end{equation}
\end{theorem}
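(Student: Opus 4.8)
The plan is to mirror the Lyapunov argument from the proof of \Cref{thm:ef21_new_result}, now propagating the sparsity factor $\frac{c}{n}$ through each step. Because the Top$K$ operator is deterministic, no expectations are required and the estimate will hold pathwise, consistent with the deterministic form of the claim. I would work with the Lyapunov function
\[
\Phi^t \eqdef f(x^t) - f^\ast + \frac{\gamma}{2\theta(\alpha)}\cdot \frac{c}{n}\, G^t,
\]
and target the one-step descent $\Phi^{t+1} \leq \Phi^t - \frac{\gamma}{2}\|\nabla f(x^t)\|^2$.

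First I would apply the descent lemma (\Cref{lm:descent_lemma}) to bound $f(x^{t+1})$ and then replace the residual $\|g^t - \nabla f(x^t)\|^2$ by the sparsity-aware connection \eqref{eq:grad_est_distortion}, namely $\|g^t - \nabla f(x^t)\|^2 \leq \frac{c}{n} G^t$. Next I would add $\frac{\gamma}{2\theta(\alpha)}\frac{c}{n} G^{t+1}$ to both sides and invoke the distortion-recursion lemma $G^{t+1} \leq (1-\theta(\alpha))G^t + \beta(\alpha)\frac{c}{n}\LAMsq\|x^{t+1}-x^t\|^2$. The coefficient $\frac{\gamma}{2\theta(\alpha)}$ is chosen precisely so that the $G^t$ terms recombine, since $\frac{\gamma}{2} + \frac{\gamma}{2\theta(\alpha)}(1-\theta(\alpha)) = \frac{\gamma}{2\theta(\alpha)}$, folding back into $\Phi^t$.

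The key step is to verify that the coefficient of $\|x^{t+1}-x^t\|^2$ is nonpositive. Gathering those terms yields the requirement
\[
\frac{\gamma}{2\theta(\alpha)}\cdot\frac{c^2}{n^2}\,\beta(\alpha)\LAMsq \leq \frac{1}{2\gamma}-\frac{L}{2},
\]
which, after multiplying through by $2\gamma$, is equivalent to $\gamma^2 \frac{c^2}{n^2}\frac{\beta(\alpha)}{\theta(\alpha)}\LAMsq + L\gamma \leq 1$. Applying \Cref{lm:stepsize_bound} with $a = \frac{c^2}{n^2}\frac{\beta(\alpha)}{\theta(\alpha)}\LAMsq$ and $b = L$, and using $\xi(\alpha)=\sqrt{\beta(\alpha)/\theta(\alpha)}$ from \eqref{eq:xi}, this holds exactly under the stated stepsize $\gamma \leq (L + \frac{c}{n}\LAM\,\xi(\alpha))^{-1}$. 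With the one-step descent in hand, I would telescope over $t=0,\dots,T-1$, discard the nonnegative term $\Phi^T \geq 0$ (valid since $f(x^T)\geq f^\ast$ and $G^T\geq 0$), and multiply by $\frac{2}{\gamma T}$ to obtain the claimed bound.

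The main obstacle is entirely bookkeeping: I must ensure that the two occurrences of $\frac{c}{n}$---one from the residual connection \eqref{eq:grad_est_distortion} and one inside the distortion recursion---compound into the $\frac{c^2}{n^2}$ appearing in the stepsize constraint (so that a single $\frac{c}{n}$ multiplies $\LAM\xi(\alpha)$ after taking the square root), while only a single $\frac{c}{n}$ survives in front of $\frac{G^0}{\theta(\alpha)T}$ in the final estimate. No inequality beyond the already-established lemmas is needed.
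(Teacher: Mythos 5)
Your proposal is correct and follows essentially the same route as the paper's own proof: the identical Lyapunov function $f(x^t)-f^\ast+\frac{\gamma c}{2\theta n}G^t$, the descent lemma combined with the bound $\|g^t-\nabla f(x^t)\|^2\leq \frac{c}{n}G^t$, the sparsity-aware distortion recursion, and \Cref{lm:stepsize_bound} applied with $a=\frac{c^2}{n^2}\frac{\beta(\alpha)}{\theta(\alpha)}\LAMsq$ and $b=L$ to justify the stepsize, followed by telescoping. Your bookkeeping of the two compounding $\frac{c}{n}$ factors matches the paper exactly, including the observation that the argument is deterministic and needs no expectations.
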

\begin{proof}
	Let us define the Lyapunov function: 
	\begin{equation}\label{eq:sparse_Lyapunov}
		\Psi^t \eqdef f(x^t) - f^\ast + \frac{\gamma c}{2\theta n} G^t.
	\end{equation}
	We start the proof as follows:
	\begin{eqnarray*}
			\Psi^{t+1} &\overset{\eqref{eq:sparse_Lyapunov}}{=}  & f(x^{t+1}) - f^\ast + \frac{\gamma c}{2\theta n} G^{t+1} \\
			& \overset{\eqref{eq:descent_lemma}}{\leq}  & f(x^t) - f^\ast - \frac{\gamma}{2} \|\nabla f(x^t)\|^2 - \left(\frac{1}{2\gamma} - \frac{L}{2}\right) \|x^{t+1} - x^t\|^2  + \frac{\gamma}{2} \|g^t - \nabla f(x^t)\|^2 +  \frac{\gamma c}{2\theta n} G^{t+1} \\
			& \overset{\eqref{eq:grad_est_distortion}}{\leq} &  f(x^t) - f^\ast - \frac{\gamma}{2} \|\nabla f(x^t)\|^2 - \left(\frac{1}{2\gamma} - \frac{L}{2}\right) \|x^{t+1} - x^t\|^2  + \frac{\gamma}{2}\frac{c}{n} G^t +  \frac{\gamma c}{2\theta n} G^{t+1} \\
			& \overset{\eqref{eq:sparse_grad_est_evolution}}{\leq}  & f(x^t) - f^\ast - \frac{\gamma}{2} \|\nabla f(x^t)\|^2 - \left(\frac{1}{2\gamma} - \frac{L}{2}\right) \|x^{t+1} - x^t\|^2 \\
			& &  \quad + \frac{\gamma}{2}\frac{c}{n} G^t +  \frac{\gamma c}{2\theta n} \left((1 - \theta) G^t  + \beta\frac{c}{n} \LAMsq \|x^{t+1} - x^t\|^2 \right) \\
			& =  & f(x^t) - f^\ast + \frac{\gamma c}{2\theta n} G^t - \frac{\gamma}{2} \|\nabla f(x^t)\|^2 - \left(\frac{1}{2\gamma} - \frac{L}{2} - \frac{\gamma}{2} \frac{\beta}{\theta} \frac{c^2}{n^2} \cdot \LAMsq \right) \|x^{t+1} - x^t\|^2\\
			& =  & \Psi^t - \frac{\gamma}{2} \|\nabla f(x^t)\|^2 - \underbrace{\left(\frac{1}{2\gamma} - \frac{L}{2} - \frac{\gamma}{2} \frac{\beta}{\theta} \frac{c^2}{n^2} \cdot \LAMsq \right)}_{\geq 0} \|x^{t+1} - x^t\|^2 \\
			& \leq & \Psi^t - \frac{\gamma}{2} \|\nabla f(x^t)\|^2.
	\end{eqnarray*}

Unrolling the inequality above, we get
\begin{align*}
	0 \leq \Psi^T \leq \Psi^{T-1} - \frac{\gamma}{2} \|\nabla f(x^{T-1})\|^2 \leq \Psi^0 - \frac{\gamma}{2}\sum\limits_{t=0}^{T-1} \|\nabla f(x^t) \|^2,
\end{align*}
from what the main result follows.
\end{proof}

\clearpage
\section{Experiments: Further Details}
\label{app:exp-additional-details-main-part}

\begin{center}	
	\begin{figure*}[h]
		\centering
		\captionsetup[sub]{font=normalsize,labelfont={}}	
		\captionsetup[subfigure]{labelformat=empty}		
		
		\begin{subfigure}[ht]{0.5\textwidth}
			\includegraphics[width=\textwidth]{./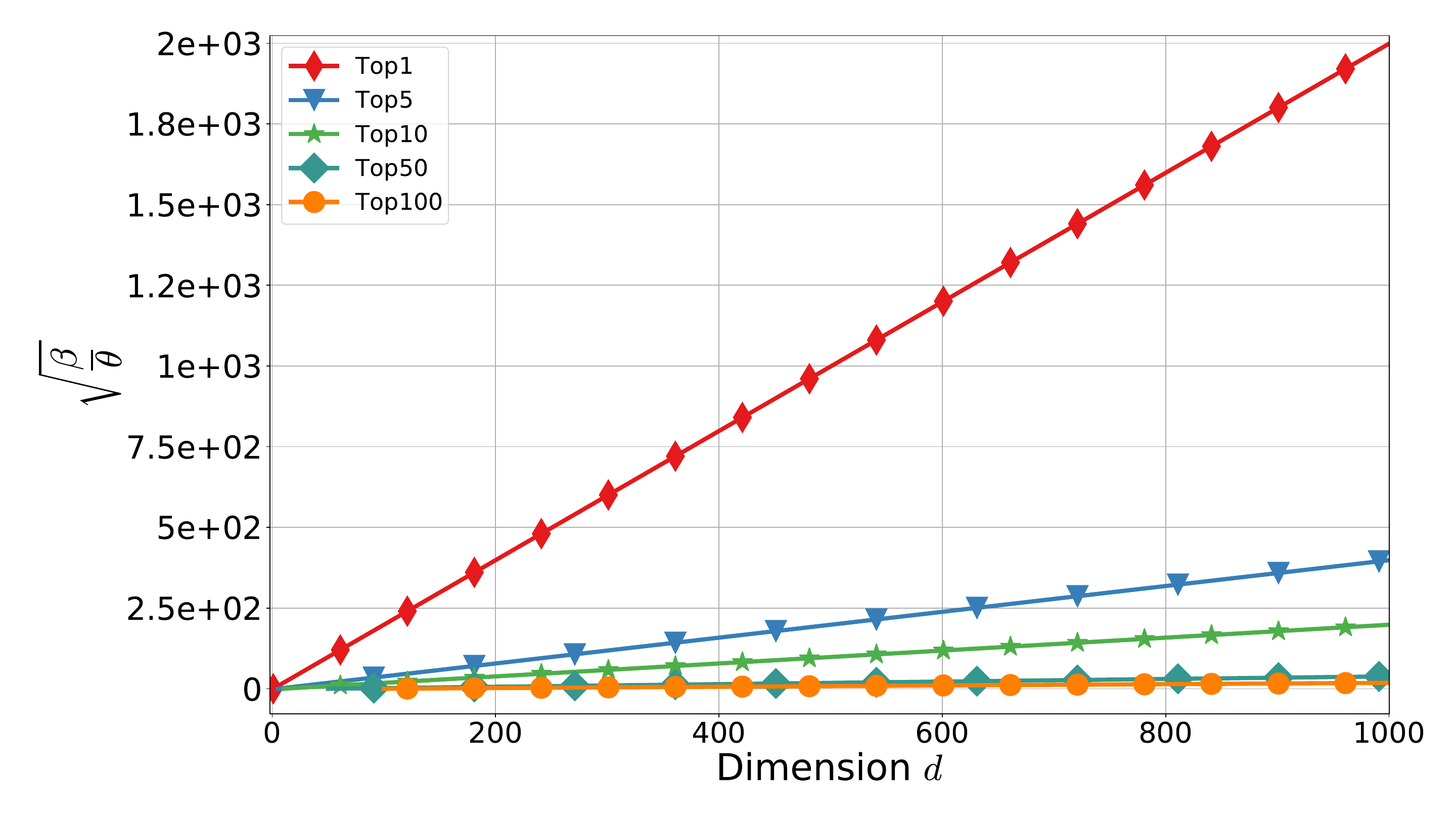} \caption{}
		\end{subfigure}
		
		%\vspave{-1pt}
		\caption{\small{The factor $\xi=\sqrt{{\beta}/{\theta}}$ as a function of optimization variable dimension $d$ for several \algnamesmall{TopK} compressors. The behavior is independent of properties of $\{f_1(x),\dots,f_n(x)\}$ and $f(x)$.}}
		\label{fig:beta-over-theta-theoretical}
	\end{figure*}
\end{center}

\subsection{Computing and software environment}
\label{app:compute-env}

We used the Python software suite \texttt{FL\_PyTorch} \citep{burlachenko2021fl_pytorch} to simulate the distributed environment for training. We carried out experiments on a compute node with \texttt{Ubuntu 18.04 LTS}, $256$ GBytes of DRAM DDR4 memory at $2.9$GHz, and $48$ cores ($2$ sockets with $24$ cores per socket) of {Intel(R) Xeon(R) Gold 6246 CPU at $3.3$GHz}. We used double-precision arithmetic during computing gradient oracles. All our computations were carried on \texttt{CPU}.

\subsection{Comments on the improvement}
\label{app:practical-role}

The standard \algname{EF21} analysis \citep{EF21} allows to utilize \algname{EF21} with maximum allowable step size $\gamma$ equal to:
\begin{eqnarray*}
	\label{frm:ef21}
	\gamma = \left( L + \LQM \sqrt{\frac{\beta(\alpha)}{\theta(\alpha)}} \right)^{-1}, \qquad  \theta(\alpha) = 1 - \sqrt{1-\alpha}, \qquad \beta(\alpha)=\frac{1-\alpha}{1 - \sqrt{1-\alpha}}.
\end{eqnarray*}

Our analysis allows us to replace the quantity $\LQM$ with $\LAM$. This improvement has an important consequence. The replaced quantity affects the step size by a factor of $\xi(\alpha)=\sqrt{\frac{\beta(\alpha)}{\theta(\alpha)}}$. This factor can be arbitrarily large as $d$ increases, as shown in Figure \ref{fig:beta-over-theta-theoretical}. If $d$ is increasing and the parameter $k$ of \algname{TopK} compressor is fixed, then even a small improvement in the constant term can have a significant impact in an absolute sense to the computed step size if $\xi(\alpha) \gg L$.

\subsection{When improved analysis leads to more aggressive steps}
\label{app:when-improved-analysis-better}

The quantity $\LQM \eqdef \sqrt{\frac{1}{n} \sum_{i=1}^n L_i ^2}$ plays essential role in \algname{EF21} analysis. As we saw with special consideration this quantity for \algname{EF21} and its extensions is improvable. The improved analysis allows us to replace it with $\LAM \eqdef \frac{1}{n} \sum_{i=1}^n L_i.$ Clearly, by the arithmetic-quadratic mean inequality,  $$\Lvar \eqdef \LQMsq - \LAMsq  \ge 0.$$ The difference $\LQM - \LAM$ can be expressed as follows:

\begin{eqnarray*}
	\LQM - \LAM &=&  (\LQM - \LAM)\left(\frac{\LQM + \LAM}{\LQM + \LAM}\right) \\
	&=& \frac{\LQMsq - \LAMsq}{\LQM + \LAM} \quad = \quad \frac{1}{\LQM + \LAM} \cdot {\frac{1}{n} \sum_{i=1}^{n} \left(L_i - \frac{1}{n} \sum_{i=1}^{n} L_i\right) ^2}.
\end{eqnarray*}

The coefficient $\frac{1}{\LQM+\LAM}$ in the last equation can be bounded from below and above as follows:
$$\frac{1}{2 \LQM} = \frac{1}{2 \cdot \max(\LQM, \LAM)} \le \frac{1}{\LQM+\LAM} \le \frac{1}{2 \cdot \min(\LQM, \LAM)} \le \frac{1}{2\LAM}.$$

As a consequence, difference $\LQM - \LAM$ is bound above by the estimated variance of $L_i$ divided by the mean of $L_i$, also known as \textit{Index of Dispersion} in statistics. From this consideration, we can more easily observe that \algname{EF21-W} can have an arbitrarily better stepsize than vanilla \algname{EF21} if the variance of $L_i$ is increasing faster than the mean of $L_i$.

\subsection{Dataset generation for synthetic experiment}
\label{app:dataset-gen-synthetic}

First, we assume that the user provides two parameters: $\mu \in \mathbb{R}_+$ and $L \in \mathbb{R}_+$. These parameters define the construction of strongly convex function $f_i(x)$, which are modified by meta-parameters $q \in [-1,1]$ and $z > 0$, described next.
\begin{enumerate}
\item  Each client initially has
$$f_i(x) \eqdef \frac{1}{n_i} \norm{\bA_i x - {b_i}}^2,$$ where $\bA_i$ is initialized in such way that $f_i$ is $L_{i}$ smooth and $\mu_{f_i}$ strongly convex. Parameters are defined in the following way: 
\[L_{i}=\frac{i}{n} \cdot (L-\mu) + \mu, \qquad \mu_{f_i} = \mu.\]

\item The scalar value $q \in [-1,+1]$ informally plays the role of meta parameter to change the distribution of $L_{i}$ and make values of $L_{i}$ close to one of the following: (i) $\mu$; (ii) $L$; (iii) $(L+\mu)/2$. The exact modification of $L_{i}$ depends on the sign of meta parameter $q$.

\begin{itemize}
\item Case $q \in [0,1]$. In this case for first $n/2$ (i.e., $i \in [0, n/2]$) compute the value $L_{i,q} = \mathrm{lerp}(L_{i}, \mu, q)$, where $\mathrm{lerp}(a,b,t):\RD \times \RD \times [0,1] \to \RD$ is standard linear interpolation	
\[
\mathrm{lerp}(a, b, t) = a (1-t) + bt.
\]

The last $n/2$ ($i \in [n/2+1, n]$) compute the value $L_{i,q} = \mathrm{lerp}(L_{i}, L, q)$. For example, if $q=0$ then $L_{i,q}=L_{i}, \forall i \in [n]$, and if $q=1$ then $L_{i,q} = \mu$ for first ${n}/{2}$ clients and $L_{i,q} = L$ for last ${n}/{2}$ clients.

\item  Case $q \in [-1,0]$. In this for all $n$ clients the new value $L_{i,q} = \mathrm{lerp}(L_{i}, ({L+\mu})/{2}, -q)$. In this case for example if $q=0$ then $L_{i,q}=L_{i}$ and if $q=-1$ then $L_{i,q} = ({L+\mu})/{2}$.

\end{itemize}

The process firstly fills the $\bA_i$ in such form that $L_{i}$ forms a uniform spectrum in $[\mu, L]$ with the center of this spectrum equal to $a=\frac{L+\mu}{2}$. And then as $q \to 1$, the variance of $L_{i,q}$ is increasing. 
%From Appendix \ref{app:when-improved-analysis-better} we know that constructed variance of $L_{i}$ plays a central role in the difference between classical version of the \algname{EF21} family and the weighted variants.

\item  We use these new values  $L_{i,q}$ for all $i \in [n]$ clients as a final target $L_{i}^{\mathrm{new}}$ values. Due to numerical issues, we found that it's worthwhile for the first and last client to add extra scaling. First client scales $L_{1,q}$ by 
factor $1/z$, and last $n$-th client scales $L_{n,q}$ by factor $z$. Here $z \ge 0$ is an additional meta-parameter.

\item Next obtained values are used to generate $\bA_i$ in such way that $\nabla^2 f_i(x)$ has uniform spectrum in $[\mu_{f_i,q,z}, L_{i,q,z}]$.

\item  As a last step the objective function $f(x)$ is scaled in such a way that it is $L$ smooth with constant value $L$. The $b_i$ for each client is initialized as $b_i \eqdef \bA_i \cdot x_{\mathrm{solution}}$, where $x_{\mathrm{solution}}$ is fixed solution.
\end{enumerate}

\subsection{Dataset shuffling strategy for LIBSVM dataset}
\label{app:dataset-shuffling-for-libsvm}

Our dataset shuffling strategy heuristically splits data points so that $\Lvar$ is maximized. It consists of the following steps:
\begin{enumerate}
\item  {\bf Sort data points from the whole dataset according to $L$ constants.} Sort all data points according to the smoothness constants of the loss function for each single data point.
\item  {\bf Assign a single data point to each client} Assume that there are total $m$ data points in the datasets, and the total number of clients is $n$. At the beginning each client $i$ holds a single data point $\lfloor (i-1 + 1/2) \cdot \frac{m}{n}\rfloor$.
\item  {\bf Pass through all points.} Initialize set $F=\{\}$. Next, we pass through all points except those assigned from the previous step. For each point we find the best client $i'\in [n] \backslash F$ to assign the point in a way that assignment of point to client $i'$ maximize $\Lvar \eqdef \LQMsq - \LAMsq$. Once the client already has $\lceil \frac{m}{n} \rceil$ data points assigned to it, the client is added to the set $F$. 
\end{enumerate}

The set $F$ in the last step guarantees each client will have $\frac{m}{n}$ data points. In general, this is a heuristic greedy strategy that approximately maximizes $\Lvar$ under the constraint that each client has the same amount of data points equal to $\lfloor \frac{m}{n} \rfloor$. Due to its heuristic nature, the Algorithm does not provide deep guarantees, but it was good enough for our experiments.

\clearpage
\section{Additional Experiments}
\label{app:exp-additional-experiments}

In this section, we present additional experiments for comparison \algname{EF21-W}, \algname{EF21-W-PP}, \algname{EF21-W-SGD} with their vanilla versions. We applied these algorithms in a series of synthetically generated convex and non-convex optimization problems and for training logistic regression with non-convex regularized with using several \texttt{LIBSVM} datasets \citep{chang2011libsvm}. While carrying out additional experiments we will use three quantities. These quantities have already been mentioned in the main part, but we will repeat them here:
$$\LQM \eqdef {\color{red} \sqrt{\avein L_i^2 }}, \quad  \LAM \eqdef {\color{blue} \avein L_i } , \quad \Lvar \eqdef \LQMsq - \LAMsq ={\frac{1}{n} \sum_{i=1}^{n} \left(L_i - \frac{1}{n} \sum_{i=1}^{n} L_i\right) ^2}.$$

The relationship between these quantities was discussed in Appendix~\ref{app:when-improved-analysis-better}. In our experiments we used \algname{TopK} compressor. The \algname{TopK} compressor returns sparse vectors filled with zeros, except $K$ positions, which correspond to $K$ maximum values in absolute value and which are unchanged by the compressor. Even if this compressor breaks ties arbitrarily, it is possible to show that $\alpha = \frac{K}{d}$.
The compressor parameter $\alpha$ is defined without considering properties of $f_i$. The quantities $\beta$, $\theta$, $\frac{\beta}{\theta}$ are derived from $\alpha$, and they do not depend on $L_{i}$.

\subsection{Additional experiments for \algname{EF21}}

\begin{center}	
	\begin{figure*}[t]
		\centering
		\captionsetup[sub]{font=normalsize,labelfont={}}	
		\captionsetup[subfigure]{labelformat=empty}
		%\captionsetup{position=top}
		
		\begin{subfigure}[ht]{0.3\textwidth}
			\includegraphics[width=\textwidth]{./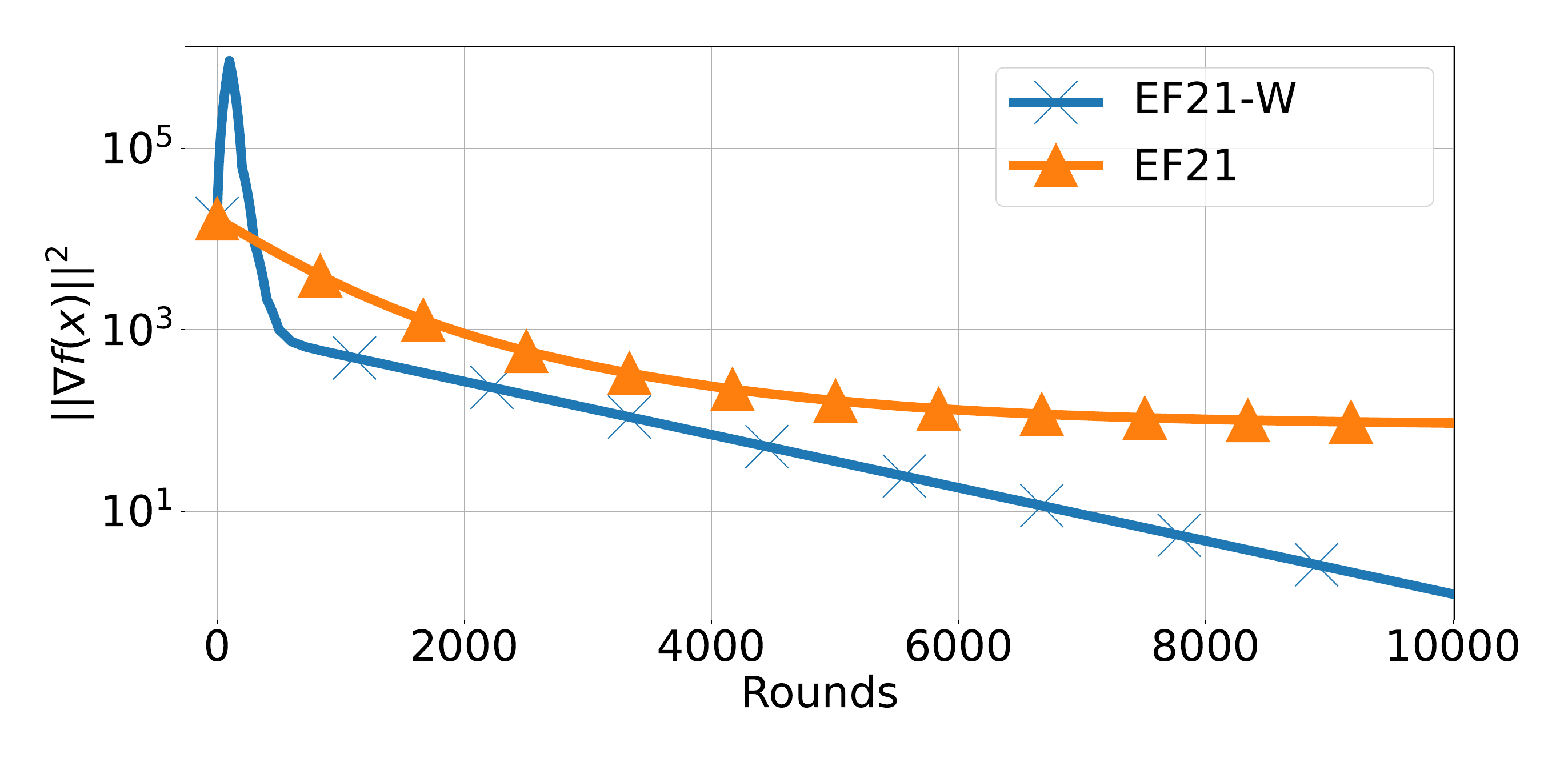} 
			\caption{(a) $\Lvar\approx 4.45 \times 10^6$}
			%\caption{ (a) $q=1,z=10^4$ }
		\end{subfigure}		
		\begin{subfigure}[ht]{0.3\textwidth}
			\includegraphics[width=\textwidth]{./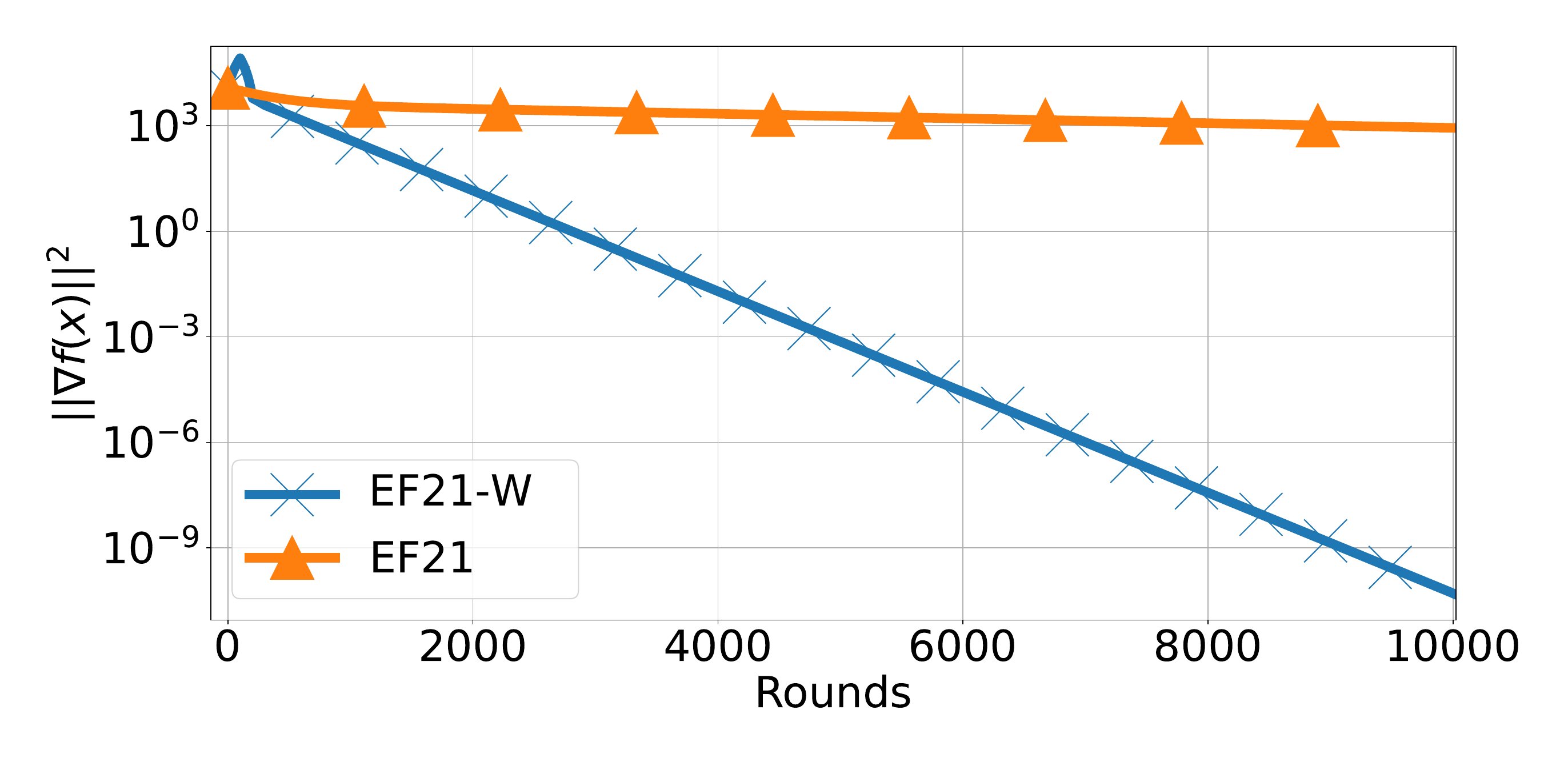}
			\caption{(b) $\Lvar\approx 1.97 \times 10^6$}
			%\caption{ (b) $q=1,z=10^3$ }
		\end{subfigure}
		
		\begin{subfigure}[ht]{0.3\textwidth}
			\includegraphics[width=\textwidth]{./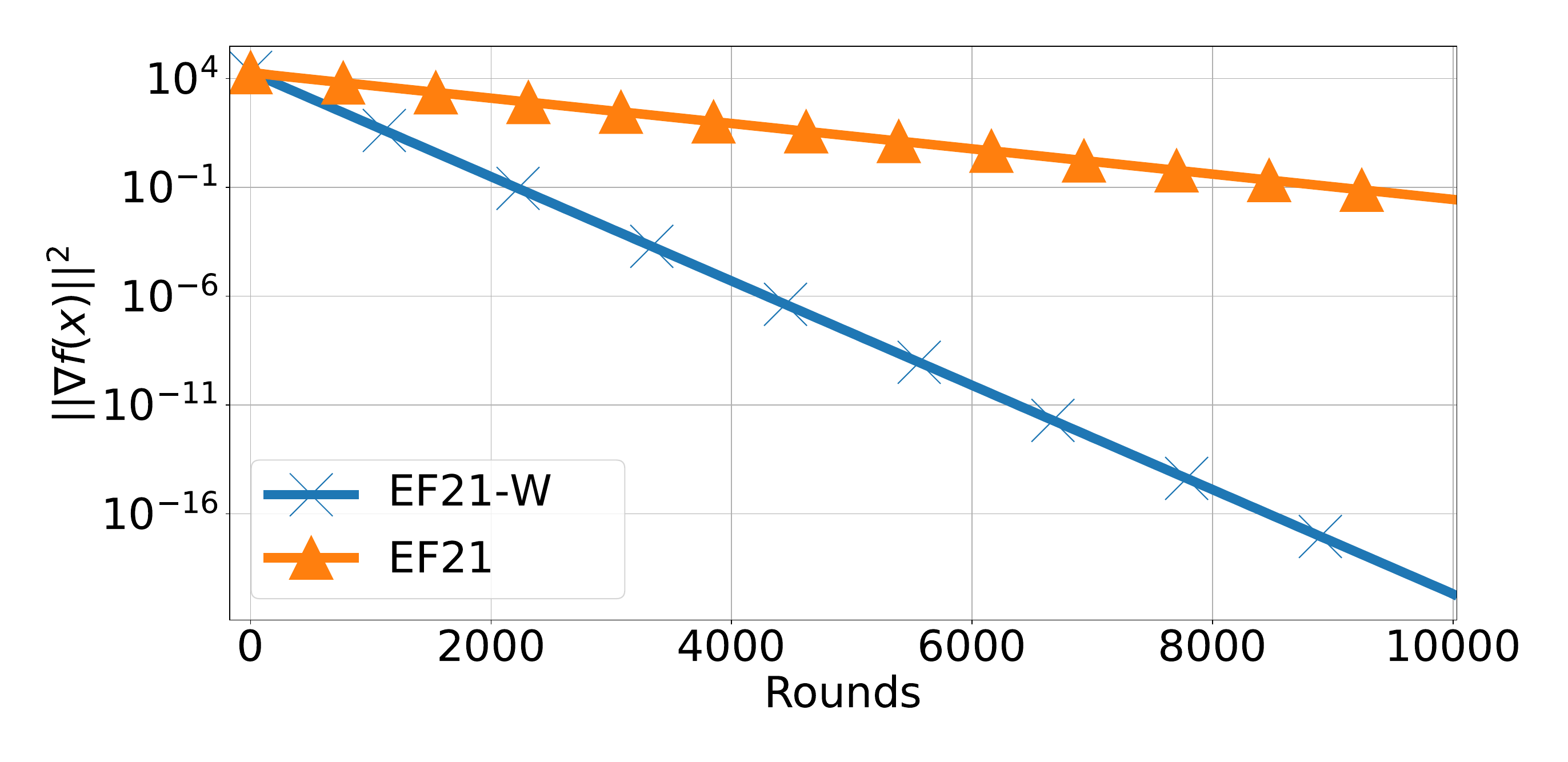}
			\caption{(c) $\Lvar\approx 1.08 \times 10^5$}
			% \caption{ (c) $q=1,z=10^2$}
		\end{subfigure}		
		\begin{subfigure}[ht]{0.3\textwidth}
			\includegraphics[width=\textwidth]{./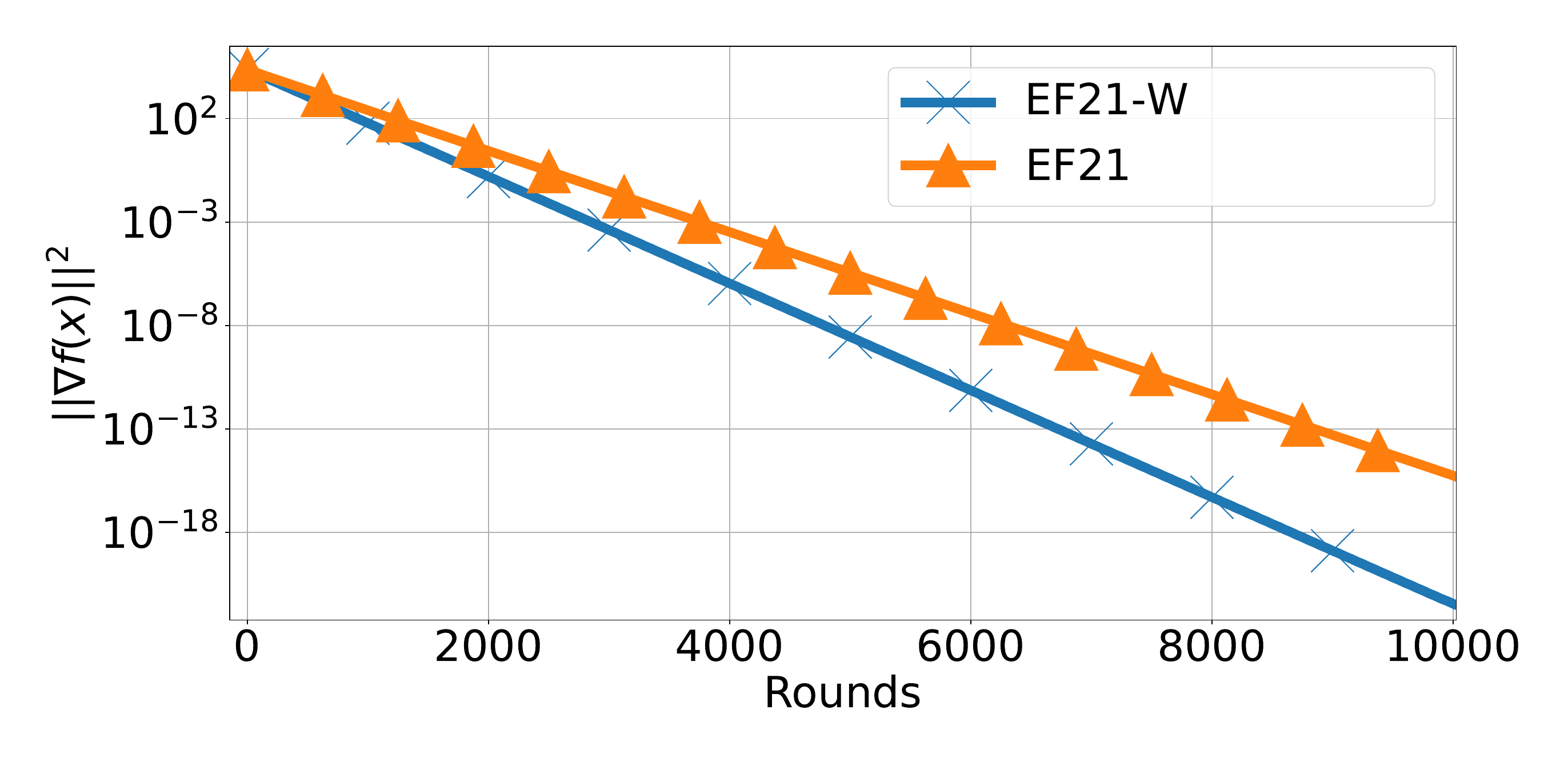}
			\caption{(d) $\Lvar\approx 5.42 \times 10^3$}			
			 %\caption{ (d) $q=0.8,z=1$}
		\end{subfigure}
		
		%\vspave{-1pt}
		\caption{\small{Convex smooth optimization. \algnamesmall{EF21} and \algnamesmall{EF21-W} with \algnamesmall{Top1} client compressor, $n=2\,000$, $d=10$. The objective function is constitute of $f_i(x)$ defined in Eq.\eqref{eq:linreg-cvx}. Regularization term $\lambda \frac{\|x\|^2}{2}$, where $\lambda=0.01$. Theoretical step size. Full participation. Extra details are in Table \ref{tbl:app-syn-ef21-cvx}.}}
%=======
%		\caption{\small{Convex smooth optimization. \algnamesmall{EF21} and \algnamesmall{EF21-W} with \algnamesmall{TopK[K=1]} client compressor, $n=2\,000$, $d=10$. Regularization term $\lambda \frac{\|x\|^2}{2}$, $\lambda=0.01$. Theoretical step size. Full participation. Extra details are in Table \ref{tbl:app-syn-ef21-cvx}.}}
%>>>>>>> a97f43d2581ea660f51132b9c44e976cb4b5ae16
		\label{fig:app-syn-ef21-cvx}
	\end{figure*}
\end{center}

\paragraph{Convex case with synthetic datasets.} 
We aim to solve optimization problem  \eqref{eq:main_problem} in the case when  the functions $f_1,\dots,f_n$  are strongly convex. In particular, we work with
\begin{eqnarray}
	\label{eq:linreg-cvx}
	f_i(x) \eqdef \frac{1}{n_i} \norm{\bA_i x - {b_i}}^2 + \frac{\lambda}{2} \|x\|^2,
\end{eqnarray}
where $\lambda =0.01$. It can be shown that $L_{i} = \frac{2}{n_i} \lambda_{\max}({\bA_i}^\top \bA_i) + \lambda$. The result of experiments for training linear regression model with a convex regularized is presented in Figure \ref{fig:app-syn-ef21-cvx}. The total number of rounds for simulation is $r=10,000$.  Instances of optimization problems were generated for values $L=50$, $\mu = 1$ with several values of $q,z$ with using the dataset generation schema described in Appendix~\ref{app:dataset-gen-synthetic}. The summary of derived quantities is presented in Table \ref{tbl:app-syn-ef21-cvx}. We present several optimization problems to demonstrate the possible different relationships between $\LQM$ and $\LAM$. As we see from experiments, the \algname{EF21-W} is superior as the variance of $L_{i}$ tends to increase. The plots in Figure \ref{fig:app-syn-ef21-cvx} (a)--(d) correspond to decreasing variance of $L_{i}$. As we see, as the variance of $L_i$ decreases, the difference between \algname{EF21-W} and \algname{EF21} also tends to decrease. Finally, \algname{EF21-W} is always at least as best as \algname{EF21}.

\begin{table}[h]
	\begin{center}
		\caption{\small{Convex Optimization experiment in Figure \ref{fig:app-syn-ef21-cvx}. Quantities which define theoretical step size.}}
		\label{tbl:app-syn-ef21-cvx}
		\begin{tabular}{|c||c|c|c|c|c|c|c|c|c|c|}
			\hline
			Tag & $L$ & $q$ & $z$ & $\Lvar$ & $\xi=\sqrt{\frac{\beta}{\theta}}$ & $\LQM$ & $\LAM$ & $\gamma_{\algnametiny{EF21}}$ & $\gamma_{\algnametiny{EF21-W}}$ \\
			\hline
			(a) & $50$ & $1$ & $10^4$ & $4.45  \times  10^6$ & $18.486$ & $2111.90$ & $52.04$ & $2.55 \times 10^{-5}$ & $9.87 \times 10^{-4}$ \\
			\hline
			(b) & $50$ & $1$ & $10^3$ & $1.97  \times  10^6$ & $18.486$ & $1408.49$ & $63.56$ & $3.83 \times 10^{-5}$ & $8.16  \times 10^{-4}$ \\
			\hline
			(c) & $50$ & $1$ & $10^2$ & $1.08  \times 10^5$ & $18.486$ & $339.34$ & $80.97$ & $1.58 \times 10^{-4}$ & $6.46  \times 10^{-4}$ \\
			\hline
			(d) & $50$ & $0.8$ & $1$ & $5.42  \times  10^3$ & $18.486$ & $112.51$ & $85.03$ & $4.69 \times 10^{-4}$ & $6.16  \times  10^{-4}$\\
			\hline
		\end{tabular}
	\end{center}
\end{table}

\paragraph{Non-convex case with synthetic datasets.} 
 We aim to solve optimization problem  \eqref{eq:main_problem} in the case when  the functions $f_1,\dots,f_n$  are non-convex. In particular, we work with \begin{eqnarray}
 	\label{eq:ncvx-linear-reg}
	f_i(x) \eqdef \frac{1}{n_i} \norm{\bA_i x - {b_i}}^2 + \lambda \cdot \sum_{j=1}^{d} \frac{x_j^2}{x_j^2 + 1}.
\end{eqnarray}

\begin{center}	
	\begin{figure*}[t]
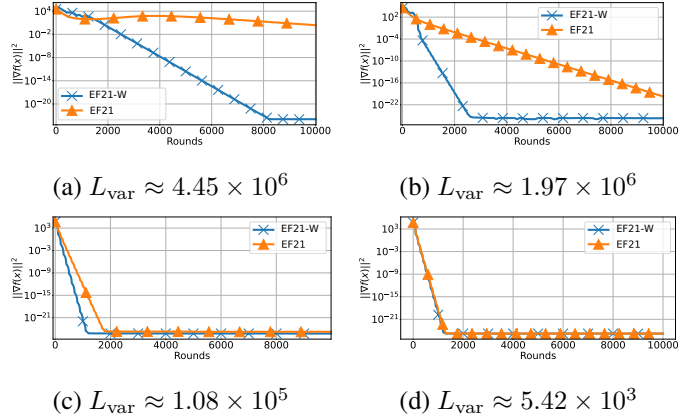

		\centering
		\captionsetup[sub]{font=normalsize,labelfont={}}	
		\captionsetup[subfigure]{labelformat=empty}
		%\captionsetup{position=top}
		
		\begin{subfigure}[ht]{0.3\textwidth}
			\includegraphics[width=\textwidth]{./ef21-vc-figs/expsyn/ef21-vc-run-33-gradsqr-ncvx.pdf} 
			\caption{(a) $\Lvar \approx 4.45 \times 10^6$}
			%\caption{ (a) $q=1,z=10^4$ }
		\end{subfigure}		
		\begin{subfigure}[ht]{0.3\textwidth}
			\includegraphics[width=\textwidth]{./ef21-vc-figs/expsyn/ef21-vc-run-34-gradsqr-ncvx.pdf} 
			\caption{(b) $\Lvar \approx 1.97  \times 10^6$}
			%\caption{ (b) $q=1,z=10^3$ }
		\end{subfigure}
		
		\begin{subfigure}[ht]{0.3\textwidth}
			\includegraphics[width=\textwidth]{./ef21-vc-figs/expsyn/ef21-vc-run-18-gradsqr-ncvx.pdf} 
			\caption{(c) $\Lvar \approx 1.08  \times  10^5$}
			%\caption{ (c) $q=1,z=10^2$}
		\end{subfigure}		
		\begin{subfigure}[ht]{0.3\textwidth}
			\includegraphics[width=\textwidth]{./ef21-vc-figs/expsyn/ef21-vc-run-22-gradsqr-ncvx.pdf} 
			\caption{(d) $\Lvar \approx 5.42  \times  10^3$}
			%\caption{ (d) $q=0.8,z=1$}
		\end{subfigure}
		
		%\vspave{-1pt}
		\caption{\small{Non-Convex smooth optimization. \algnamesmall{EF21} and \algnamesmall{EF21-W} with \algnamesmall{Top1} client compressor, $n=2,000$, $d=10$. The objective function is constitute of $f_i(x)$ defined in Eq. \eqref{eq:ncvx-linear-reg}.
Regularization term $\lambda \sum_{j=1}^{d} \frac{x_j^2}{x_j^2 + 1}$, with $\lambda = 100$. Theoretical step size. Full client participation. Extra details are in Table~\ref{tbl:app-syn-ef21-ncvx}.}}
%=======
%		\caption{\small{Non-Convex smooth optimization. \algnamesmall{EF21} and \algnamesmall{EF21-W} with \algnamesmall{TopK[K=1]} client compressor, $n=2,000$, $d=10$. Regularization term $\lambda \sum_{j=1}^{d} \frac{x_j^2}{x_j^2 + 1}$, $\lambda = 100$. Theoretical step size. Full participation. Extra details are in Table \ref{tbl:app-syn-ef21-ncvx}.}}
%>>>>>>> a97f43d2581ea660f51132b9c44e976cb4b5ae16		
		\label{fig:app-syn-ef21-ncvx}
	\end{figure*}
\end{center}

The result of experiments for training linear regression model with a non-convex regularization is presented in Figure \ref{fig:app-syn-ef21-ncvx}. The regularization coefficient $\lambda=100$. Instances of optimization problems were generated for values $L=50, \mu = 1$ and several values of $q,z$ for employed 
dataset generation schema from Appendix~\ref{app:dataset-gen-synthetic}.

 The summary of derived quantities is presented in Table \ref{tbl:app-syn-ef21-ncvx}.  We present various instances of optimization problems to demonstrate the different relationships between $\LQM$ and $\LAM$. As we see in the case of small variance of $L_{i}$ algorithm \algname{EF21-W} is at least as best as standard \algname{EF21}.

\begin{table}[h]
	\begin{center}
		\caption{\small{Non-convex optimization experiment in Figure \ref{fig:app-syn-ef21-ncvx}. Quantities which define theoretical step size.}}				
		\label{tbl:app-syn-ef21-ncvx}
		\begin{tabular}{|c||c|c|c|c|c|c|c|c|c|c|}
			\hline
			Tag & $L$ & $q$ & $z$ & $\Lvar$ & $\xi=\sqrt{\frac{\beta}{\theta}}$ & $\LQM$ & $\LAM$ & $\gamma_{\algnametiny{EF21}}$ & $\gamma_{\algnametiny{EF21-W}}$ \\
			\hline
			(a) & $50$ & $1$ & $10^4$ & $4.45  \times 10^6$ & $18.486$ & $2126.25$ & $252.035$ & $2.52 \times 10^{-5}$ & $2.03  \times  10^{-4}$ \\
			\hline
			(b) & $50$ & $1$ & $10^3$ & $1.97 \times 10^6$ & $18.486$ & $1431.53$ & $263.55$ & $3.74 \times 10^{-5}$ & $1.95 \times 10^{-4}$ \\
			\hline
			(c) & $50$ & $1$ & $10^2$ & $1.08 \times 10^5$ & $18.486$ & $433.05$ & $280.958$ & $1.21 \times 10^{-4}$ & $1.83  \times  10^{-4}$ \\
			\hline
			(d) & $50$ & $0.8$ & $1$ & $5.42  \times 10^3$ & $18.486$ & $294.39$ & $285.022$ & $1.17 \times 10^{-4}$ & $1.81  \times 10^{-4}$\\
			\hline
		\end{tabular}
	\end{center}
\end{table}

\paragraph{Non-convex logistic regression on benchmark datasets.} We aim to solve optimization problem  \eqref{eq:main_problem} in the case when the functions $f_1,\dots,f_n$ are non-convex. In particular, we work with logistic regression with a non-convex robustifying regularization term:
\begin{eqnarray}
	\label{eq:ncvx-log-reg}
	f_i(x) \eqdef \frac{1}{n_i} \sum_{j=1}^{n_i} \log \left(1+\exp({-y_{ij} \cdot a_{ij}^{\top} x})\right) + \lambda \cdot \sum_{j=1}^{d} \frac{x_j^2}{x_j^2 + 1},
\end{eqnarray}
where $ (a_{ij},  y_{ij}) \in \mathbb{R}^{d} \times \{-1,1\}$.

We used several \texttt{LIBSVM} datasets \citep{chang2011libsvm} for our benchmarking purposes. The results are presented in Figure \ref{fig:app-real-ef21-ncvx} and Figure \ref{fig:app-real-ef21-ncvx-aus}. The important quantities for these instances of optimization problems are summarized in Table \ref{tbl:app-real-ef21-ncvx}. From Figures \ref{fig:app-real-ef21-ncvx} (a), (b), (c), we can observe that for these datasets, the \algname{EF21-W} is better, and this effect is observable in practice. For example, from these examples, we can observe that $12.5$K rounds of \algname{EF21-W} corresponds to only $10$K rounds of \algname{EF21}. This improvement is essential for Federated Learning, in which both communication rounds and communicate information during a round represent the main bottlenecks and are the subject of optimization. Figures \ref{fig:app-real-ef21-ncvx} (d), (e), (f) demonstrate that sometimes the \algname{EF21-W} can have practical behavior close to \algname{EF21}, even if there is an improvement in step-size (For exact values of step size see Table \ref{tbl:app-real-ef21-ncvx}). The experiment on \texttt{AUSTRALIAN} datasets are presented in Figure \ref{fig:app-real-ef21-ncvx-aus}. This example demonstrates that in this \texttt{LIBSVM} benchmark datasets, the relative improvement in the number of rounds for \algname{EF21-W} compared to \algname{EF21} is considerable. For example $40$K rounds of \algname{EF21} corresponds to $5$K rounds of \algname{EF21-W} in terms of attainable $\| \nabla f(x^t)\|^2$.

\begin{table}[h]
	\begin{center}
	\tiny
		\caption{\small{Non-convex optimization experiments in Figures \ref{fig:app-real-ef21-ncvx}, \ref{fig:app-real-ef21-ncvx-aus}. Derived quantities which define theoretical step size.}}
		\label{tbl:app-real-ef21-ncvx}
		\begin{tabular}{|c||c|c|c|c|c|c|c|c|c|c|}
			\hline
			Tag & $L$ & $\Lvar$ & $\xi=\sqrt{\frac{\beta}{\theta}}$ & $\LQM$ & $\LAM$ & $\gamma_{\algnametiny{EF21}}$ & $\gamma_{\algnametiny{EF21-W}}$ \\
			\hline
			\tiny{(a) W1A} & $0.781$ & $3.283$ & $602.49$ & $2.921$ & $2.291$ & $5.678  \times 10^{-4}$ & $7.237   \times  10^{-4}$ \\
			\hline
			\tiny{(b) W2A} & $0.784$ & $2.041$ & $602.49$ & $2.402$ & $1.931$ & $6.905  \times 10^{-4}$ & $8.589  \times 10^{-4}$ \\
			\hline
			\tiny{(c) W3A} & $0.801$ & $1.579$ & $602.49$ & $2.147$ & $1.741$ & $7.772  \times 10^{-4}$ & $9.523   \times  10^{-4}$ \\
			\hline
			\tiny{(d) MUSHROOMS} & $2.913$ & $5.05 \times10^{-1}$ & $226.498$ & $3.771$ & $3.704$ & $1.166  \times 10^{-3}$ & $1.187  \times 10^{-3}$\\
			\hline
			\tiny{(e) SPLICE} & $96.082$ & $2.23   \times  10^{2}$ & $122.497$ & $114.43$ & $113.45$ & $7.084  \times 10^{-5}$ & $7.14  \times 10^{-5}$\\
			\hline
			\tiny{(f) PHISHING} & $0.412$ & $9.2   \times  10^{-4}$ & $138.498$ & $0.429$ & $0.428$ & $1.670  \times 10^{-2}$ & $1.674   \times  10^{-2}$\\			
			\hline				
			\tiny{(g) AUSTRALIAN} & $3.96   \times  10^6 $ & $1.1   \times  10^{16}$ & $18.486$ & $3.35   \times  10^{7}$ & $3.96   \times  10^{6}$ & $9.733  \times 10^{-10}$ & $8.007   \times  10^{-9}$\\
			\hline				
		\end{tabular}
	\end{center}
\end{table}

\begin{center}	
	\begin{figure*}[t]
		\centering
		\captionsetup[sub]{font=normalsize,labelfont={}}	
		\captionsetup[subfigure]{labelformat=empty}
		%\captionsetup{position=top}
		
		\begin{subfigure}[ht]{0.3\textwidth}
			\includegraphics[width=\textwidth]{./ef21-vc-figs/expreal/fig1-09JULY23-w1a.pdf}
			\caption{\small{(a) \texttt{W1A}}}
			% $\Lvar\approx 3.28$}}
			%\caption{ (a) \texttt{W1A}, $d=302$ }
		\end{subfigure}		
		\begin{subfigure}[ht]{0.3\textwidth}
			\includegraphics[width=\textwidth]{./ef21-vc-figs/expreal/fig2-09JULY23-w2a.pdf}
			\caption{\small{(b) \texttt{W2A}}}
			% $\Lvar\approx 2.04$}}
			%\caption{ (b) \texttt{W2A}, $d=302$ }
		\end{subfigure}
		\begin{subfigure}[ht]{0.3\textwidth}
			\includegraphics[width=\textwidth]{./ef21-vc-figs/expreal/fig3-09JULY23-w3a.pdf} 
			\caption{\small{(c) \texttt{W3A}}}
			% $\Lvar\approx 1.58$}}
			%\caption{ (c) \texttt{W3A}, $d=302$ }
		\end{subfigure}	
		\\
		\begin{subfigure}[ht]{0.3\textwidth}
			\includegraphics[width=\textwidth]{./ef21-vc-figs/expreal/fig4-09JULY23-mushrooms}
			\caption{\small{(d) \texttt{MUSHROOMS}}} %$\Lvar\approx 0.50$}}
			%\caption{ (d) \texttt{MUSHROOMS}, $d=114$ }
		\end{subfigure}		
		\begin{subfigure}[ht]{0.3\textwidth}
			\includegraphics[width=\textwidth]{./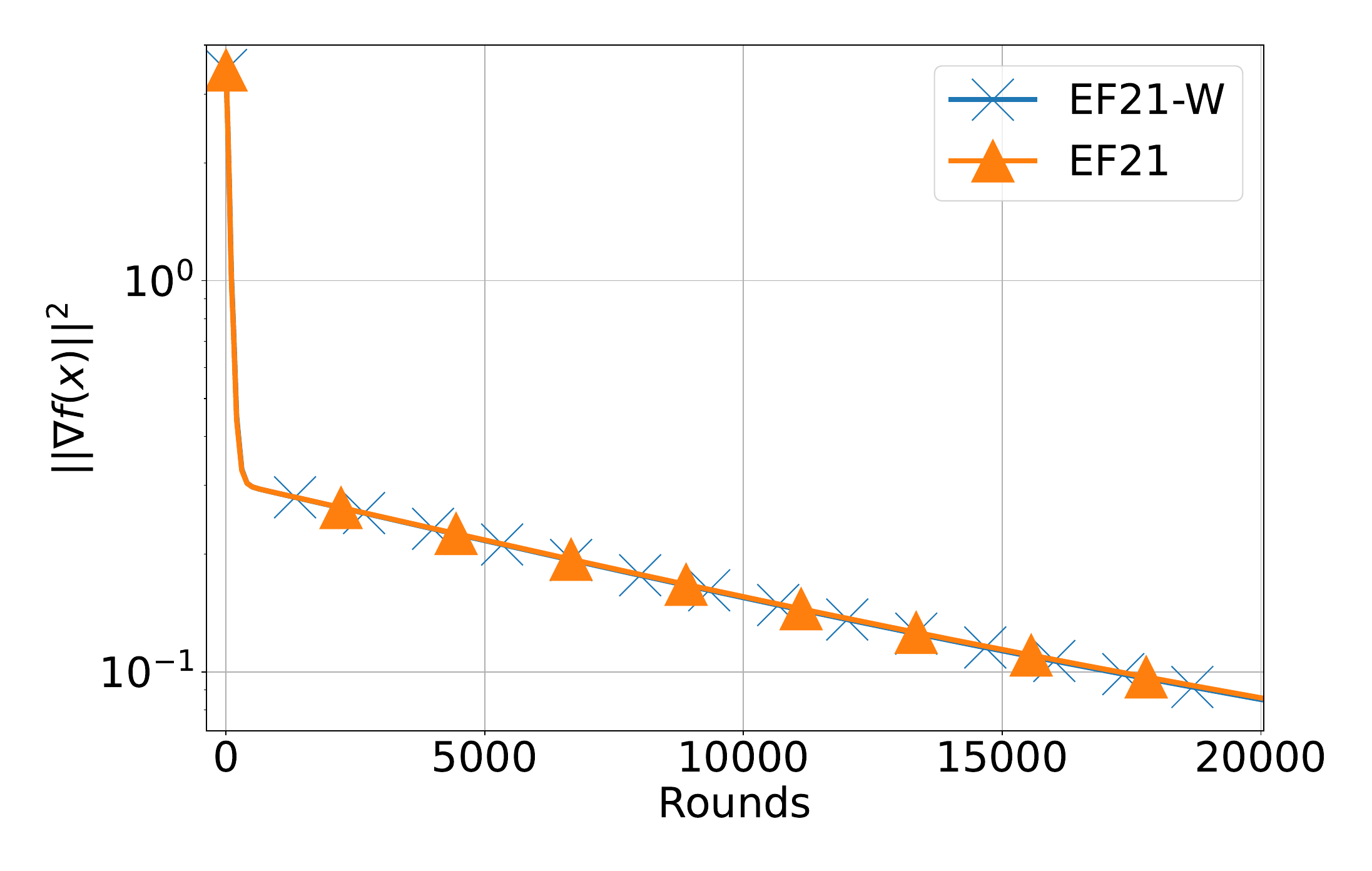} 
			\caption{\small{(e) \texttt{SPLICE}}}
			% $\Lvar\approx 223$}}
			%\caption{ (e) \texttt{SPLICE}, $d=62$ }
		\end{subfigure}
		\begin{subfigure}[ht]{0.3\textwidth}
			\includegraphics[width=\textwidth]{./ef21-vc-figs/expreal/fig6-09JULY23-phishing.pdf} 
			\caption{\small{(f) \texttt{PHISHING}}} %$\Lvar\approx 9.2   \times  10^{-4}$}}
			%\caption{ (f) \texttt{PHISHING}, $d=70$ }
		\end{subfigure}
		
		%\vspave{-1pt}
		\caption{\small{Non-Convex Logistic Regression: comparison of \algnamesmall{EF21} and \algnamesmall{EF21-W}. The used compressor is \algnamesmall{Top1}. The number of clients $n=1,000$. Regularization term $\lambda \sum_{j=1}^{d} \frac{x_j^2}{x_j^2 + 1}$, with $\lambda=0.001$. Theoretical step size. Full client participation. The objective function is constitute of $f_i(x)$ defined in Eq. \eqref{eq:ncvx-log-reg}. Extra details are in Table \ref{tbl:app-real-ef21-ncvx}.}}
%=======
%		\caption{\small{Non-Convex Logistic Regression: comparison of \algnamesmall{EF21} and \algnamesmall{EF21-W}. The used compressor is \algnamesmall{TopK[K=1]}. The number of clients $n=1\,000$. Regularization term $\lambda \sum_{j=1}^{d} \frac{x_j^2}{x_j^2 + 1}$, $\lambda=0.001$. Theoretical step size. Full participation. Extra details are in Table \ref{tbl:app-real-ef21-ncvx}.}}
%>>>>>>> a97f43d2581ea660f51132b9c44e976cb4b5ae16
		\label{fig:app-real-ef21-ncvx}
	\end{figure*}
\end{center}

\begin{center}	
	\begin{figure*}[t]
		\centering
		\captionsetup[sub]{font=normalsize,labelfont={}}	
		\captionsetup[subfigure]{labelformat=empty}
		%\captionsetup{position=top}
		
		\begin{subfigure}[ht]{0.3\textwidth}
			\includegraphics[width=\textwidth]{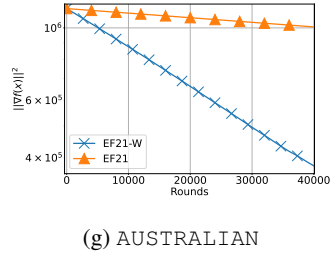} 
			\caption{\small{(g) \texttt{AUSTRALIAN}}}
			%, $\Lvar\approx 1.1 \cdot 10^{16}$}}
			%\caption{ \texttt{AUSTRALIAN}, $d=16$ }
		\end{subfigure}
		
		%\vspave{-1pt}
		\caption{\small{Non-Convex Logistic Regression: comparison of the performance of standard \algnamesmall{EF21} and \algnamesmall{EF21-W}. The used compressor is \algnamesmall{Top1}. The number of clients $n=200$. Regularization term $\lambda \sum_{j=1}^{d} \frac{x_j^2}{x_j^2 + 1}$, with $\lambda=1,000$. Theoretical step size. The objective function is constitute of $f_i(x)$ defined in Eq. \eqref{eq:ncvx-log-reg}. Extra details are in Table  \ref{tbl:app-real-ef21-ncvx}.}}
%=======
%		\caption{\small{Non-Convex Logistic Regression: comparison of the performance of standard \algnamesmall{EF21} and \algnamesmall{EF21-W}. The used compressor is \algnamesmall{TopK[K=1]}. The number of clients $n=200$. Regularization term $\lambda \sum_{j=1}^{d} \frac{x_j^2}{x_j^2 + 1}$, $\lambda=1000$. Theoretical step size. Full participation. Extra details are in Table  \ref{tbl:app-real-ef21-ncvx}.}}
%>>>>>>> a97f43d2581ea660f51132b9c44e976cb4b5ae16
		\label{fig:app-real-ef21-ncvx-aus}
	\end{figure*}
\end{center}

\paragraph{Non-convex logistic regression with non-homogeneous compressor.}

In this supplementary experiment, we leveraged the \texttt{AUSTRALIAN} \texttt{LIBSVM} datasets \citep{chang2011libsvm} to train logistic regression, incorporating a non-convex sparsity-enhanced regularization term defined in Eq. \eqref{eq:ncvx-log-reg}. The experiment featured the utilization of a non-homogeneous compressor known as \algname{Natural} by \citet{horvoth2022natural}, belonging to the family of unbiased compressors and adhering to Definition \ref{eq:unbiased} with $w=1/8$. This compressor, in a randomized manner, rounds the exponential part and zeros out the transferred mantissa part when employing the standard IEEE 754 Standard for Floating-Point Arithmetic \cite{IEEE754-2008} representation for floating-point numbers. Consequently, when using a single float-point format (FP32) during communication, only $9$ bits of payload per scalar need to be sent to the master, and the remaining $23$ bits of mantissa can be entirely dropped.

The experiment results are depicted in Figure \ref{fig:app-natual-ef21-ncvx}. In this experiment, we fine-tuned the theoretical step size by multiplying it with a specific constant. As we can see the \algname{EF21-W} consistently outperforms \algname{EF21} across all corresponding step-size multipliers. As we see \algname{EF21-W} operates effectively by utilizing unbiased non-homogeneous compressors, and the advantages over \algname{EF21} extend beyond the scope of applying \algname{EF21-W} solely to homogeneous compressors. Finally, it is worth noting that the increased theoretical step size in \algname{EF21-W} does not entirely capture the practical scenario of potentially enhancing the step size by a significantly large multiplicative factor (e.g., $\times 40$), which remains a subject for future research.

\begin{center}	
	\begin{figure*}[t]
		\centering
		\captionsetup[sub]{font=normalsize,labelfont={}}	
		\captionsetup[subfigure]{labelformat=empty}
		%\captionsetup{position=top}
		
		\begin{subfigure}[ht]{0.35\textwidth}
			\includegraphics[width=\textwidth]{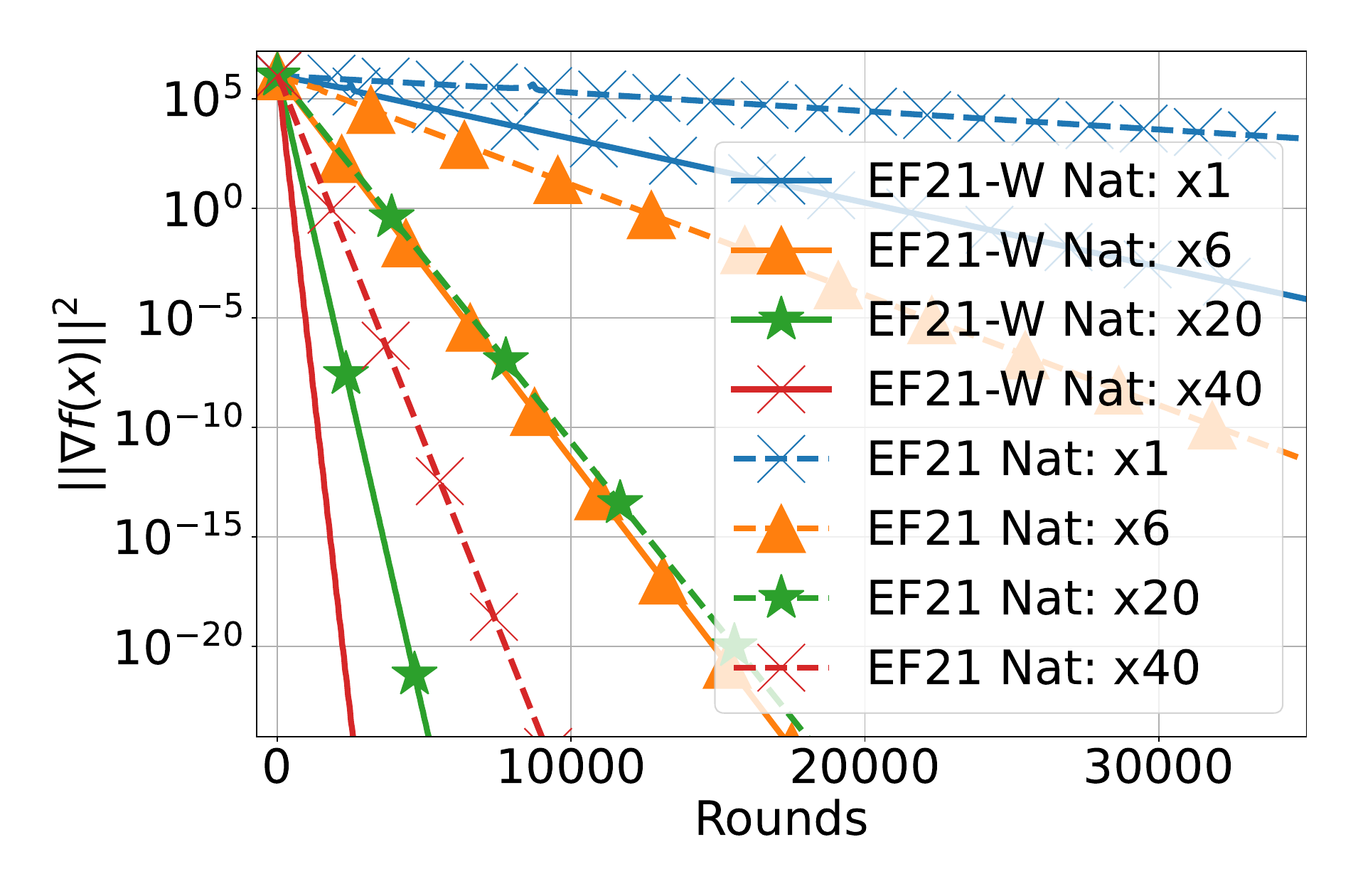} \caption{\small{(a)}}
		\end{subfigure}
		\begin{subfigure}[ht]{0.35\textwidth}
			\includegraphics[width=\textwidth]{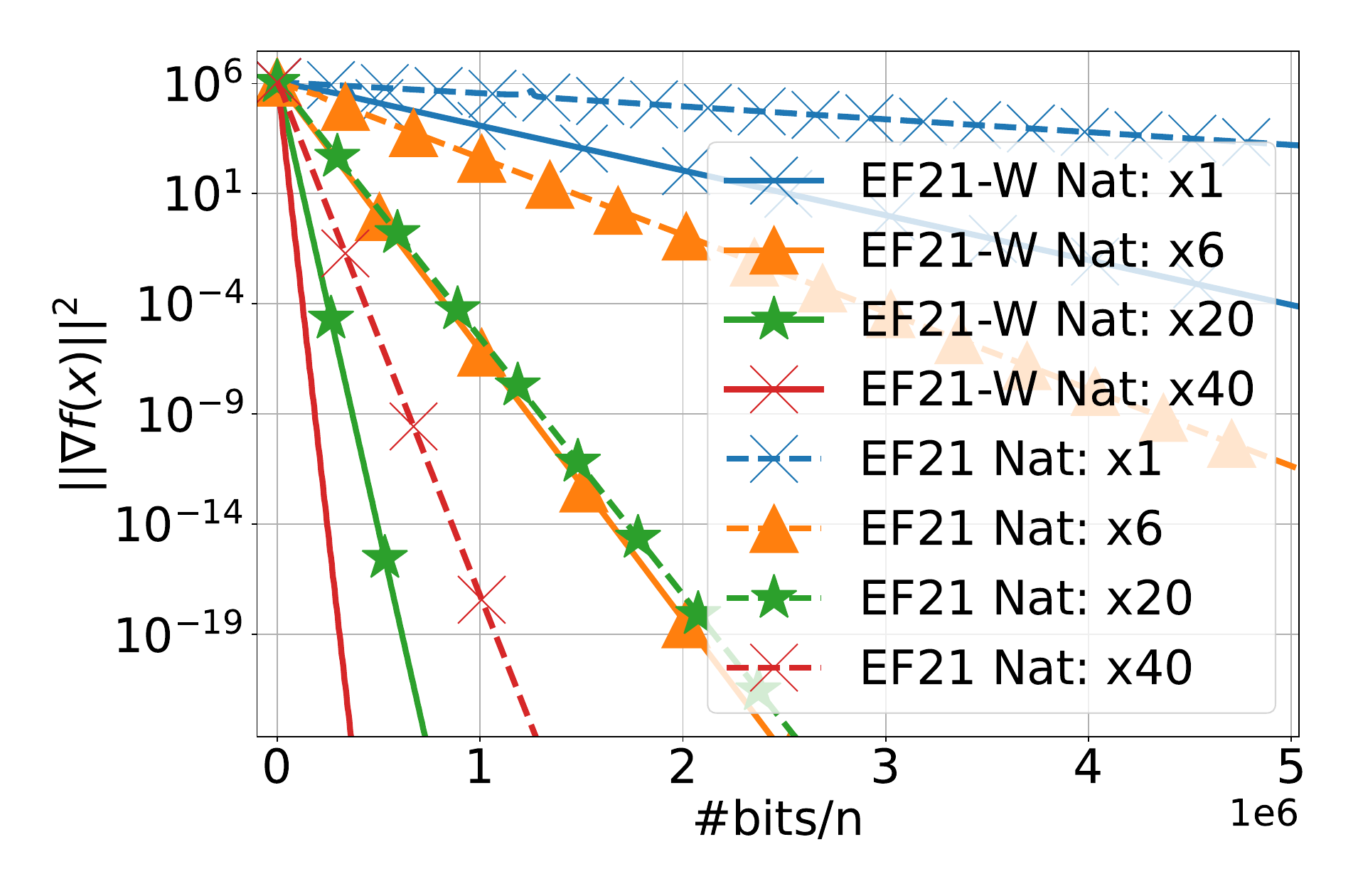} \caption{\small{(b)}}
		\end{subfigure}
				
				%\vspave{-1pt}
				\caption{\small{Non-Convex Logistic Regression: comparison of the performance of standard \algnamesmall{EF21} and \algnamesmall{EF21-W}. The used compressor for \algnamesmall{EF21} and \algnamesmall{EF21-W} is \algnamesmall{Natural} compressor \citet{horvoth2022natural}. The number of clients $n=200$. The objective function is constitute of $f_i(x)$ defined in Eq. \eqref{eq:ncvx-log-reg}. Regularization term $\lambda \sum_{j=1}^{d} \frac{x_j^2}{x_j^2 + 1}$, with $\lambda=1,000$. Multipliers of theoretical step size. Full participation. Computation format single precision (FP32). Dataset: \texttt{AUSTRALIAN}.}}
				\label{fig:app-natual-ef21-ncvx}
		\end{figure*}
\end{center}

\clearpage

\subsection{Additional experiments for \algname{EF21-W-PP}}

\paragraph{Convex case with synthetic datasets.} 

\begin{center}	
	\begin{figure*}[t]
		\centering
		\captionsetup[sub]{font=normalsize,labelfont={}}	
		\captionsetup[subfigure]{labelformat=empty}
		%\captionsetup{position=top}
		
		\begin{subfigure}[ht]{0.3\textwidth}
			\includegraphics[width=\textwidth]{./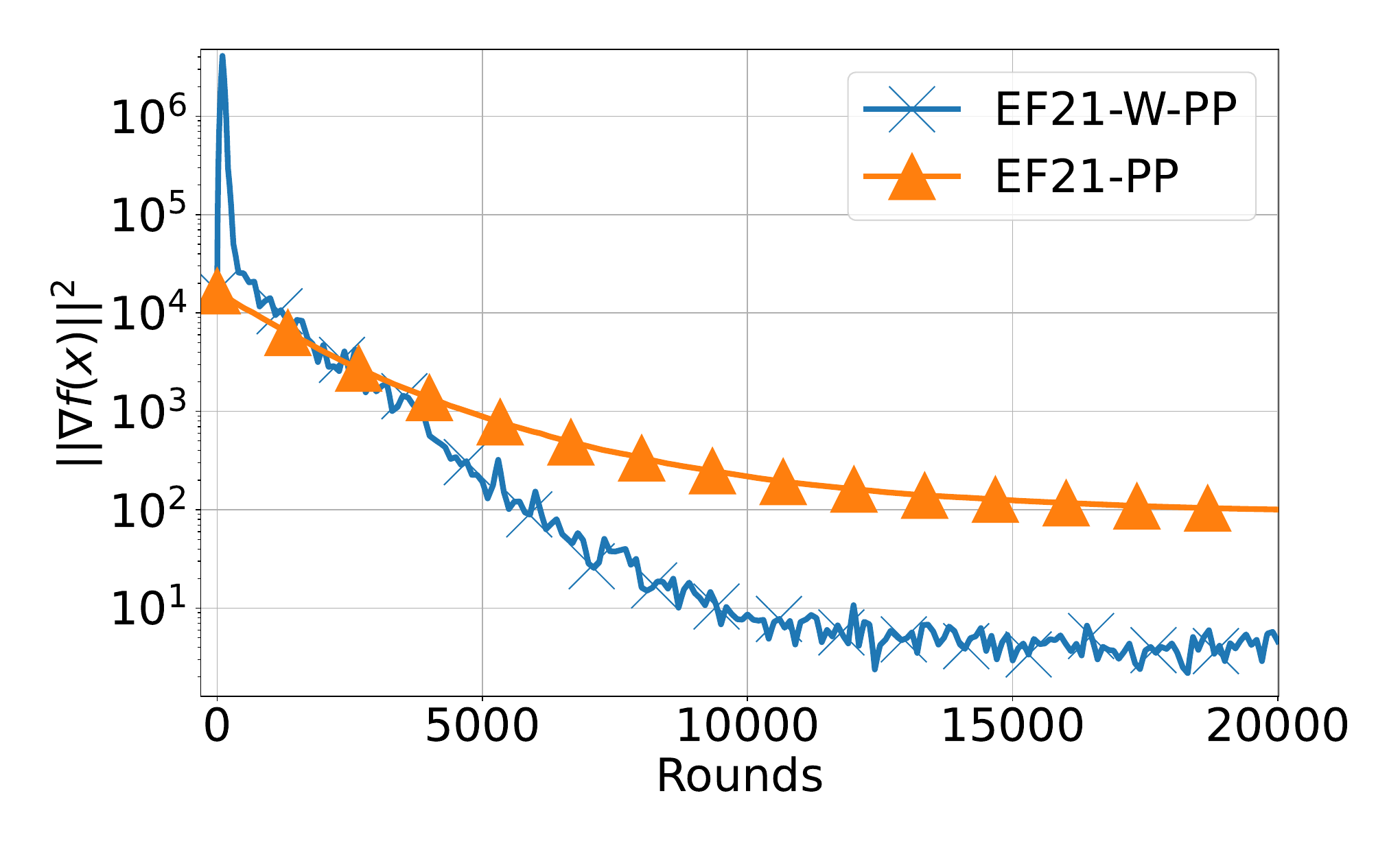} \caption{ (a) $\Lvar = 4.45   \times  10^6$ }
		\end{subfigure}		
		\begin{subfigure}[ht]{0.3\textwidth}
			\includegraphics[width=\textwidth]{./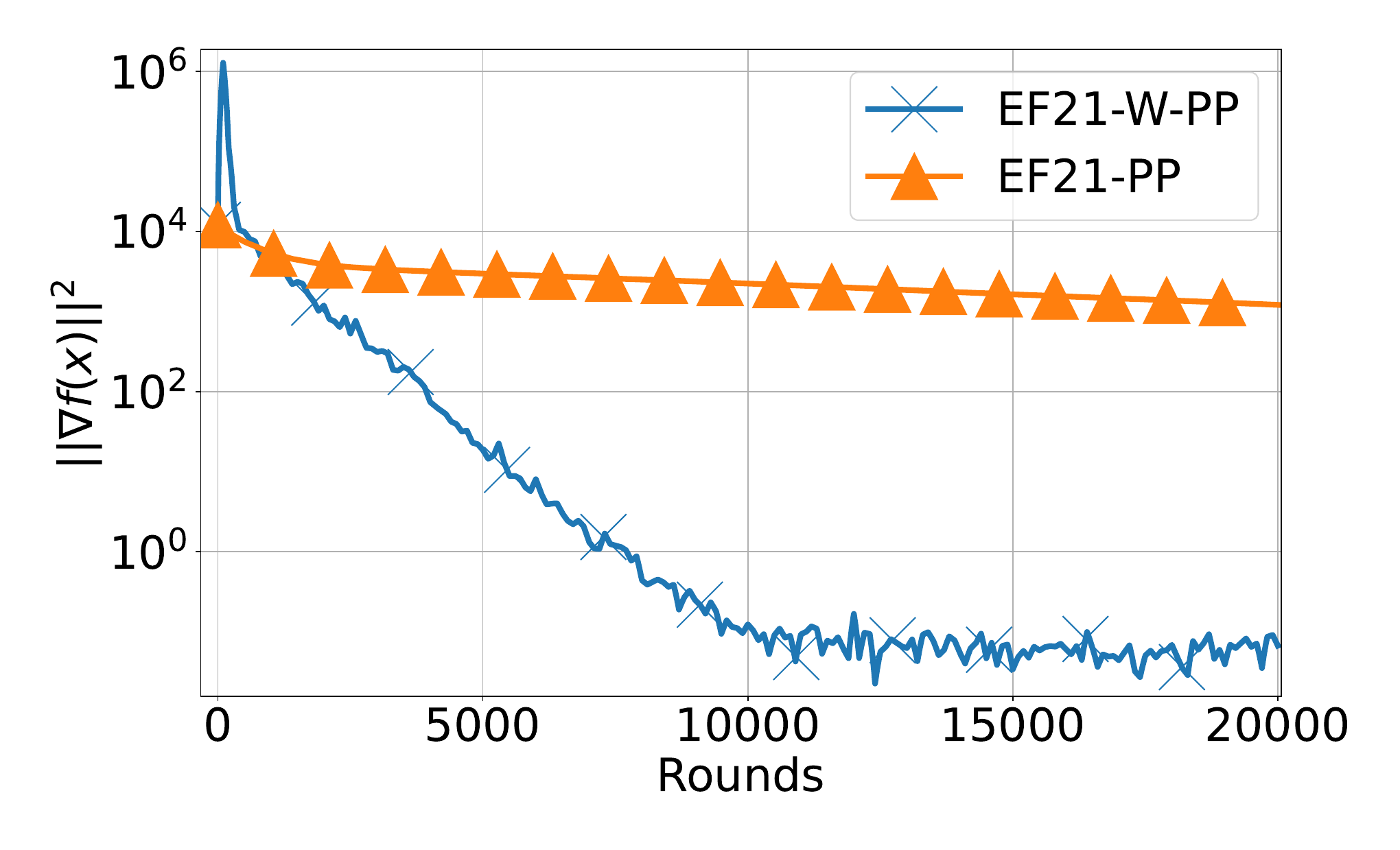} \caption{ (b) $\Lvar = 1.97   \times  10^6$ }
		\end{subfigure}
		\begin{subfigure}[ht]{0.3\textwidth}
			\includegraphics[width=\textwidth]{./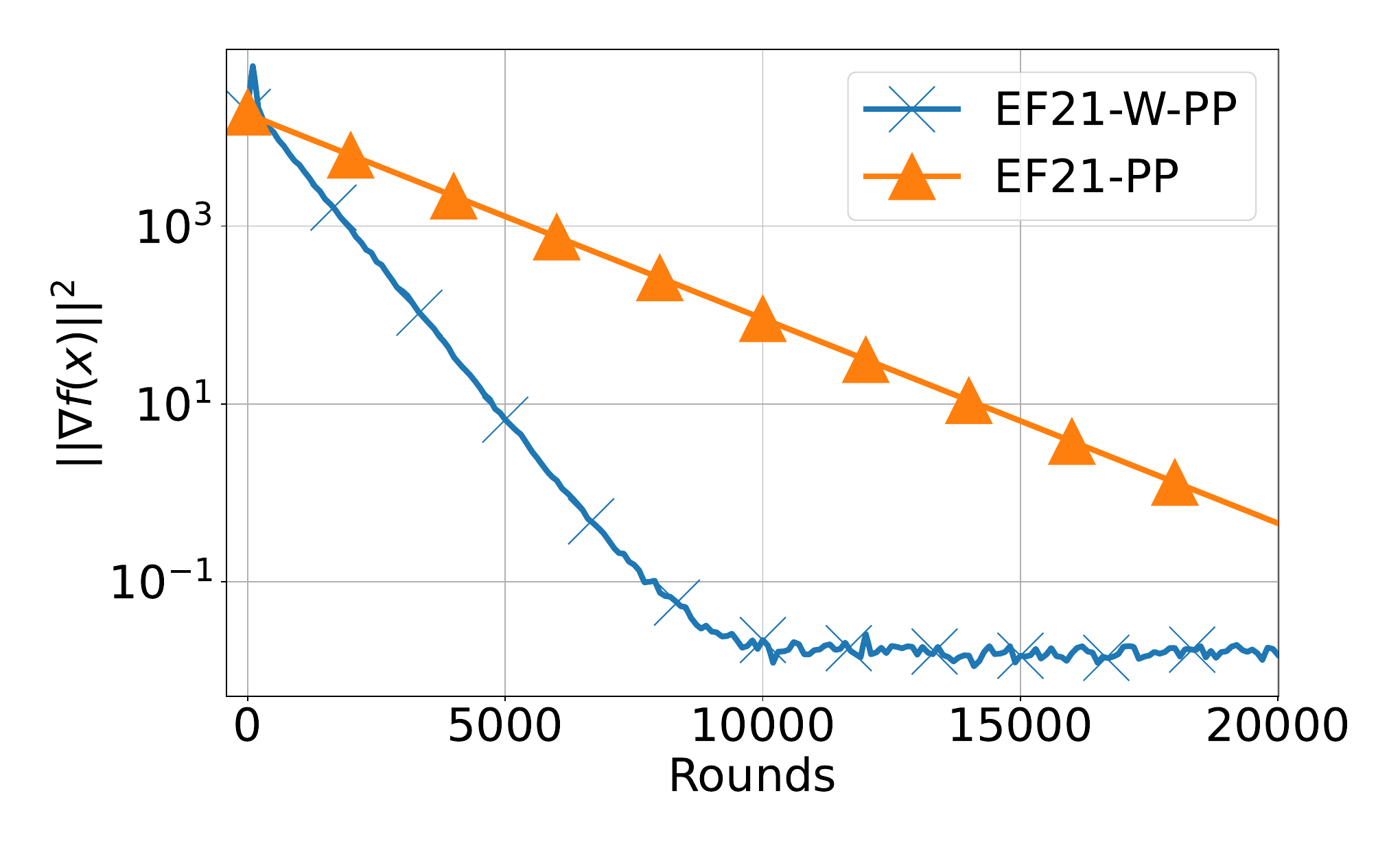} \caption{ (c) $\Lvar = 1.08   \times  10^5$}
		\end{subfigure}
		
		%\vspave{-1pt}		
		\caption{\small{Convex smooth optimization. \algnamesmall{EF21-PP} and \algnamesmall{EF21-W-PP} with \algnamesmall{Top1} client compressor, $n=2\,000$, $d=10$. The objective function is constitute of $f_i(x)$ defined in Eq. \eqref{eq:cvx-linear-reg}. Regularization term $\lambda \frac{\|x\|^2}{2}$, $\lambda=0.01$. Theoretical step size. The objective function is constitute of $f_i(x)$ defined in Eq.\eqref{eq:cvx-linear-reg}.	Each client participates in each round with probability $p_i=0.5$. Extra details are in Table \ref{tbl:app-syn-ef21-pp-cvx}.}}
%=======
%		\caption{\small{Convex smooth optimization. \algnamesmall{EF21-PP} and \algnamesmall{EF21-W-PP} with \algnamesmall{TopK[K=1]} client compressor, $n=2\,000$, $d=10$. Regularization term $\lambda \frac{\|x\|^2}{2}$, $\lambda=0.01$. Theoretical step size. Each client participates in each round with probability $p_i=0.5$. Extra details are in Table \ref{tbl:app-syn-ef21-pp-cvx}.}}
%>>>>>>> a97f43d2581ea660f51132b9c44e976cb4b5ae16
		\label{fig:app-syn-ef21-pp-cvx}
	\end{figure*}
\end{center}

\begin{table}[h]
	\begin{center}
		\caption{\small{Convex optimization experiment in Figure \ref{fig:app-syn-ef21-pp-cvx}. Derived quantities which define theoretical step size.}}
		\label{tbl:app-syn-ef21-pp-cvx}
		\begin{tabular}{|c||c|c|c|c|c|c|c|c|c|c|}
			\hline
			Tag & $L$ & $q$ & $z$ & $\Lvar$ & $\sqrt{\frac{\beta}{\theta}}$ & $\LQM$ & $\LAM$ & $\gamma_{\algnametiny{EF21-PP}}$ & $\gamma_{\algnametiny{EF21-W-PP}}$ \\
			\hline
			(a) & $50$ & $1$ & $10^4$ & $4.45   \times  10^6$ & $18.486$ & $2111.90$ & $52.04$ & $2.55  \times 10^{-5}$ & $9.87   \times  10^{-4}$ \\
			\hline
			(b) & $50$ & $1$ & $10^3$ & $1.97   \times  10^6$ & $18.486$ & $1408.49$ & $63.56$ & $3.83  \times 10^{-5}$ & $8.16   \times  10^{-4}$ \\
			\hline
			(c) & $50$ & $1$ & $10^2$ & $1.08   \times  10^5$ & $18.486$ & $339.34$ & $80.97$ & $1.58  \times 10^{-4}$ & $6.46   \times  10^{-4}$ \\
			\hline
		\end{tabular}
	\end{center}
\end{table}

We aim to solve optimization problem  \eqref{eq:main_problem} in the case when 
the functions $f_1,\dots,f_n$ are strongly convex. In particular, we choose: \begin{eqnarray}
	\label{eq:cvx-linear-reg}
	f_i(x) \eqdef \frac{1}{n_i} \norm{\bA_i x - {b_i}}^2 + \frac{\lambda}{2} \|x\|^2.
\end{eqnarray}

In this synthetic experiment, we have used the maximum allowable step size suggested by the theory of \algname{EF21-PP} and for the proposed \algname{EF21-W-PP} algorithm. The initial  gradient estimators have been initialized as $g_i^0 = \nabla f_i (x^0)$ for all $i$. The number of clients in simulation $n=2000$, dimension of optimization problem $d=10$, number of samples per client $n_i=10$, and number of communication rounds is $r=10,000$. For both \algname{EF21-PP} and \algname{EF21-W-PP} clients we used \algname{Top1} biased contractile compressor. In our experiment, each client's participation in each communication round is governed by an independent Bernoulli trial which takes $p_i=0.5$. The result of experiments for training linear regression model with a convex regularizer is presented in Figure \ref{fig:app-syn-ef21-pp-cvx}. The regularization constant was chosen to be $\lambda=0.01$. Instances of optimization problems were generated for values $L=50$,$ \mu = 1$ with several values of $q$ and $z$. The summary of derived quantities is presented in Table \ref{tbl:app-syn-ef21-pp-cvx}. We present several optimization problems to demonstrate the possible different relationships between $\LQM$ and $\LAM$. As we see from experiments, the \algname{EF21-W-PP} is superior as the variance of $L_{i}$ tends to increase. As we can observe \algname{EF21-W-PP} is always at least as best as \algname{EF21-PP}.

\paragraph{Non-convex logistic regression on benchmark datasets.}  We provide additional numerical experiments in which we compare \algname{EF21-PP} and \algname{EF21-W-PP} for solving \eqref{eq:main_problem}. We address the problem of training a binary classifier via a logistic model on several \texttt{LIBSVM} datasets \citep{chang2011libsvm} with non-convex regularization. We consider the case when the functions $f_1,\dots,f_n$ are non-convex; in particular, we set
$f_i(x)$ as follows:
\begin{eqnarray}
	\label{eq:ncvx-log-reg-2}
	f_i(x) \eqdef \frac{1}{n_i} \sum_{j=1}^{n_i} \log \left(1+\exp({-y_{ij} \cdot a_{ij}^{\top} x})\right) + \lambda \cdot \sum_{j=1}^{d} \frac{x_j^2}{x_j^2 + 1}, 
\end{eqnarray}
where $(a_{ij},  y_{ij}) \in \mathbb{R}^{d} \times \{-1,1\}$.

We conducted distributed training of a logistic regression model on \texttt{W1A}, \texttt{W2A}, \texttt{W3A}, \texttt{PHISHING}, and \texttt{AUSTRALIAN} datasets with non-convex regularization. The initial gradient estimators are set $g_i^0 = \nabla f_i(x^0)$ for all $ i \in [n]$. For comparison of \algname{EF21-PP} and \algname{EF21-W-PP}, we used the largest step size allowed by theory. We used the dataset shuffling strategy described in Appendix~\ref{app:dataset-shuffling-for-libsvm}. The results are presented in Figure \ref{fig:app-real-ef21-pp-ncvx} and Figure \ref{fig:app-real-ef21-pp-ncvx-aus}. The important quantities for these instances of optimization problems are summarized in Table~\ref{tbl:app-real-ef21-pp-ncvx}.

\begin{table}[h]
	\begin{center}
		\caption{\small{Non-Convex optimization experiments in Figures \ref{fig:app-real-ef21-pp-ncvx}, \ref{fig:app-real-ef21-pp-ncvx-aus}. Quantities which define theoretical step size.}}
		\label{tbl:app-real-ef21-pp-ncvx}
		\begin{tabular}{|c||c|c|c|c|c|c|c|c|c|}
			\hline
			Tag & $L$ & $\Lvar$ & $\LQM$ & $\LAM$ & $\gamma_{\algnametiny{EF21-PP}}$ & $\gamma_{\algnametiny{EF21-W-PP}}$ \\
			\hline
			\tiny{(a) W1A} & $0.781$ & $3.283$ & $2.921$ & $2.291$ & $2.315  \times 10^{-4}$ & $2.95   \times  10^{-4}$ \\
			\hline
			\tiny{(b) W2A} & $0.784$ & $2.041$ & $2.402$ & $1.931$ & $2.816  \times 10^{-4}$ & $3.503  \times 10^{-4}$ \\
			\hline
			\tiny{(c) W3A} & $0.801$ & $1.579$ & $2.147$ & $1.741$ & $3.149   \times  10^{-4}$ & $3.884   \times  10^{-4}$ \\
			\hline
			\tiny{(d) PHISHING} & $0.412$ & $9.2   \times  10^{-4}$ & $0.429$ & $0.428$ & $6.806  \times 10^{-3}$ & $6.823   \times  10^{-3}$\\
			\hline				
			\tiny{(e) AUSTRALIAN} & $3.96   \times  10^6 $ & $1.1   \times  10^{16}$ & $3.35   \times  10^{7}$ & $3.96   \times  10^{6}$ & $3.876  \times 10^{-10}$ & $3.243   \times  10^{-9}$\\
			\hline				
		\end{tabular}
	\end{center}
\end{table}

\begin{center}	
	\begin{figure*}[t]
		\centering
		\captionsetup[sub]{font=normalsize,labelfont={}}	
		\captionsetup[subfigure]{labelformat=empty}
		%\captionsetup{position=top}
		
		\begin{subfigure}[ht]{0.3\textwidth}
			\includegraphics[width=\textwidth]{./ef21-vc-figs-pp/expreal/w1a.pdf} \caption{ \small{(a) \texttt{W1A} }}
			%			\caption{ (a) \texttt{W1A}, $d=302$ }
		\end{subfigure}		
		\begin{subfigure}[ht]{0.3\textwidth}
			\includegraphics[width=\textwidth]{./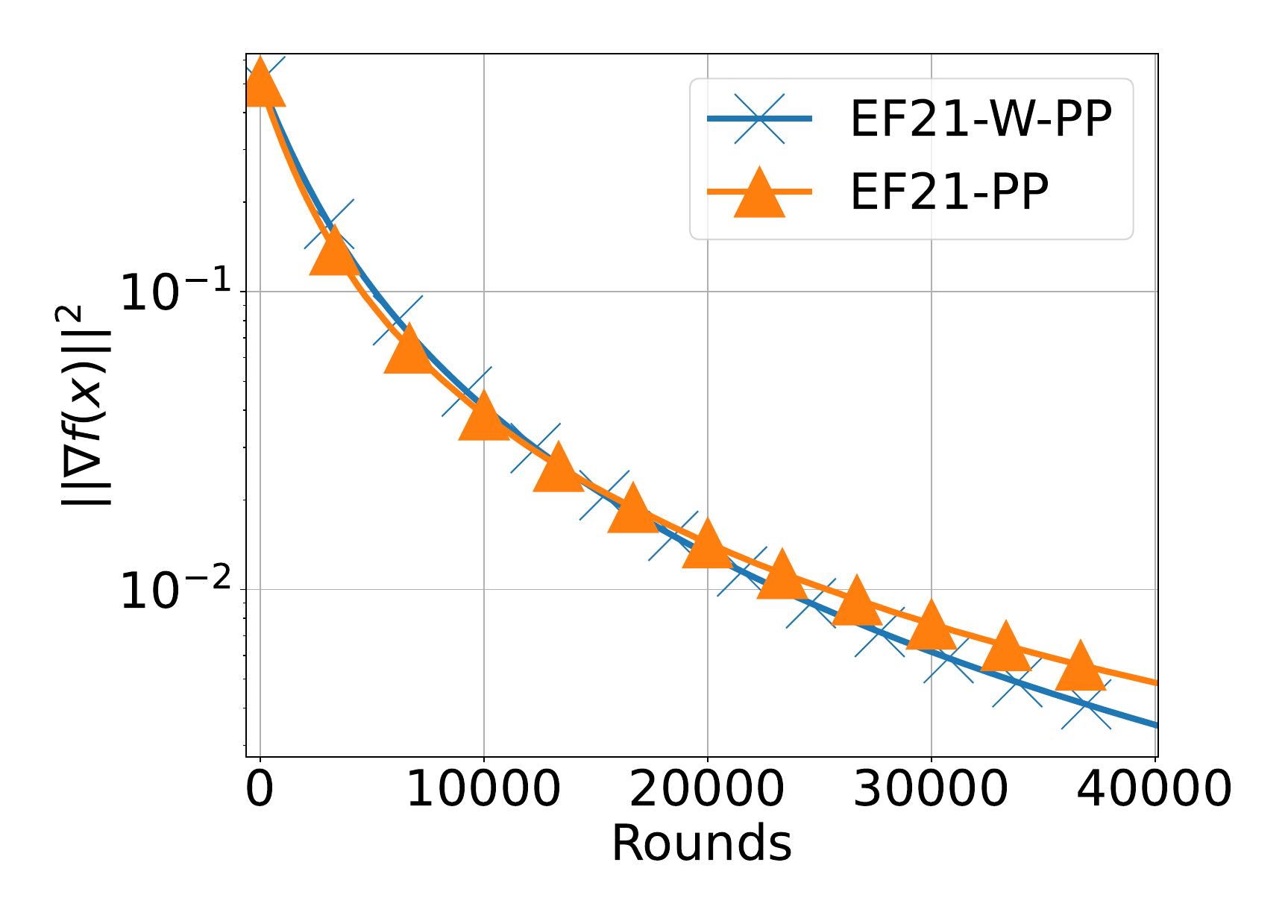} \caption{ \small{ (b) \texttt{W2A} }}
			%			\caption{ (b) \texttt{W2A}, $d=302$ }
		\end{subfigure} \\
		\begin{subfigure}[ht]{0.3\textwidth}
			\includegraphics[width=\textwidth]{./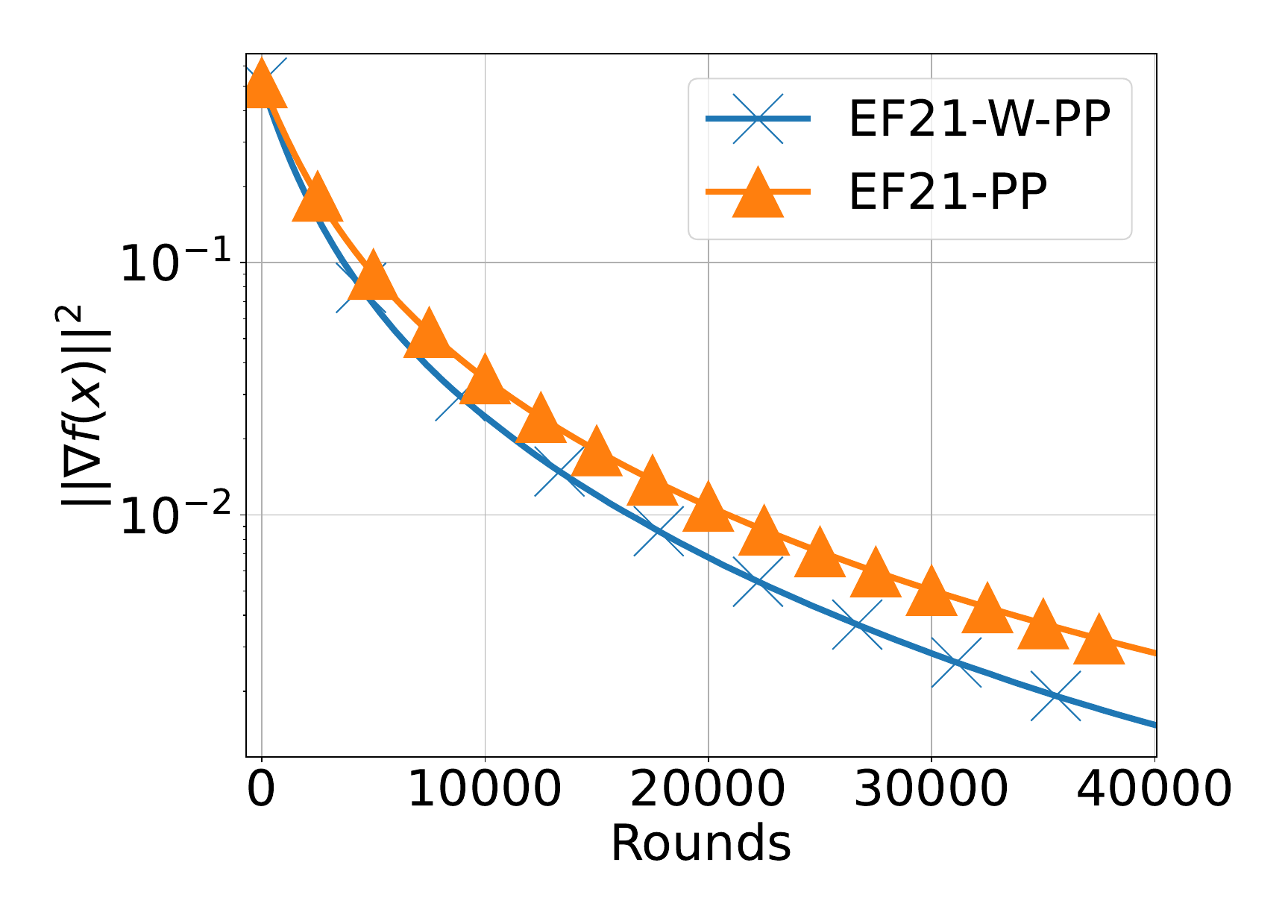} \caption{ \small{(c) \texttt{W3A} }}
			%\caption{ (c) \texttt{W3A}, $d=302$ }
		\end{subfigure}		
		\begin{subfigure}[ht]{0.3\textwidth}
			\includegraphics[width=\textwidth]{./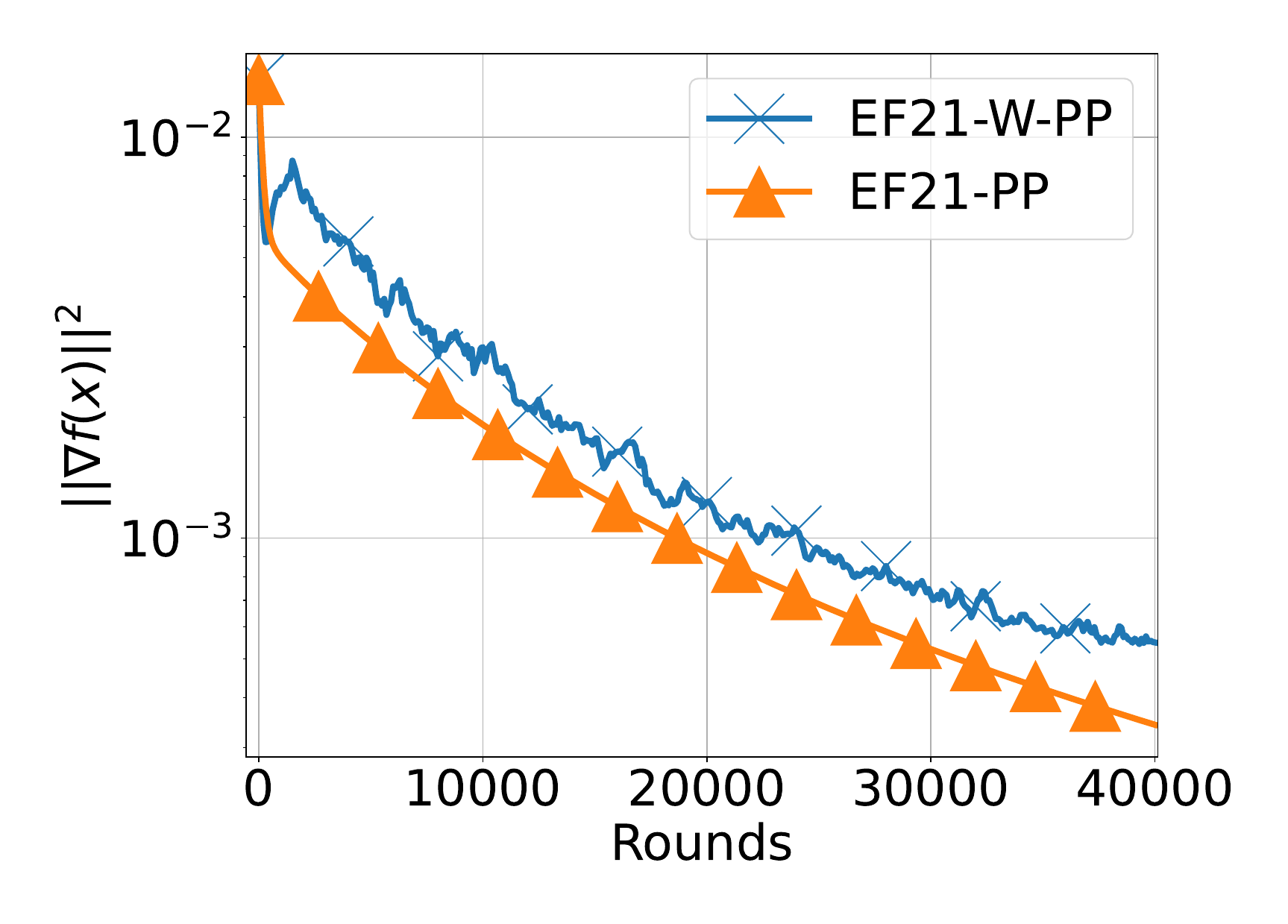} \caption{ \small{(d) \texttt{PHISHING}}}
			% \caption{ (f) \texttt{PHISHING}, $d=70$ }
		\end{subfigure}
		
		%\vspave{-1pt}
		\caption{\small{Non-Convex Logistic Regression: comparison of \algnamesmall{EF21-PP} and \algnamesmall{EF21-W-PP}. The used compressor is \algname{Top1}. The number of clients $n=1,000$. Regularization term $\lambda \sum_{j=1}^{d} \frac{x_j^2}{x_j^2 + 1}$, $\lambda=0.001$.  Theoretical step size. Each client participates in each round with probability $p_i=0.5$. The objective function is constitute of $f_i(x)$ defined in Eq.\eqref{eq:ncvx-log-reg-2}. Extra details are in Table \ref{tbl:app-real-ef21-pp-ncvx}.}}
%=======
%		\caption{\small{Non-Convex Logistic Regression: comparison of \algnamesmall{EF21-PP} and \algnamesmall{EF21-W-PP}. The used compressor is \algnamesmall{TopK[K=1]}. The number of clients $n=1\,000$. Regularization term $\lambda \sum_{j=1}^{d} \frac{x_j^2}{x_j^2 + 1}$, $\lambda=0.001$.  Theoretical step size. Each client participates in each round with probability $p_i=0.5$. Extra details are in Table \ref{tbl:app-real-ef21-pp-ncvx}.}}
%>>>>>>> a97f43d2581ea660f51132b9c44e976cb4b5ae16
		\label{fig:app-real-ef21-pp-ncvx}
	\end{figure*}
\end{center}

\begin{center}	
	\begin{figure*}[t]
		\centering
		\captionsetup[sub]{font=normalsize,labelfont={}}	
		\captionsetup[subfigure]{labelformat=empty}
		%\captionsetup{position=top}
		
		\begin{subfigure}[ht]{0.4\textwidth}
			\includegraphics[width=\textwidth]{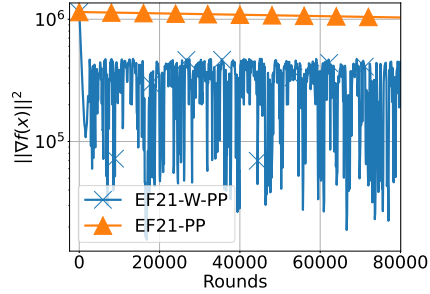} \caption{\small{(e)  \texttt{AUSTRALIAN}}}
			%\caption{ \texttt{AUSTRALIAN}, $d=16$ }
		\end{subfigure}
		
		%\vspave{-1pt}	
		\caption{\small{Non-Convex Logistic Regression: comparison of \algnamesmall{EF21-PP} and \algnamesmall{EF21-W-PP}. The used compressor is \algnamesmall{Top1}. The number of clients $n=200$. Regularization term $\lambda \sum_{j=1}^{d} \frac{x_j^2}{x_j^2 + 1}$, with $\lambda=1,000$.  Theoretical step size. Each client participates in each round with probability $p_i=0.5$. The objective function is constitute of $f_i(x)$ defined in Eq.\eqref{eq:ncvx-log-reg-2}. Extra details are in Table \ref{tbl:app-real-ef21-pp-ncvx}.}}
%=======
%		\caption{\small{Non-Convex Logistic Regression: comparison of \algnamesmall{EF21-PP} and \algnamesmall{EF21-W-PP}. The used compressor is \algnamesmall{TopK[K=1]}. The number of clients $n=200$. Regularization term $\lambda \sum_{j=1}^{d} \frac{x_j^2}{x_j^2 + 1}$, $\lambda=1000$.  Theoretical step size. Each client participates in each round with probability $p_i=0.5$. Extra details are in Table \ref{tbl:app-real-ef21-pp-ncvx}.}}
%>>>>>>> a97f43d2581ea660f51132b9c44e976cb4b5ae16
		\label{fig:app-real-ef21-pp-ncvx-aus}
	\end{figure*}
\end{center}

From Figure \ref{fig:app-real-ef21-pp-ncvx} (a), (b), (c), we can observe that for these datasets, the \algname{EF21-W-PP} is better, and this effect is observable in practice and is not negligible. Figures \ref{fig:app-real-ef21-pp-ncvx} (d), demonstrate that sometimes \algname{EF21-W-PP} in terms of the full gradient at last iterate can have slightly worse behavior compared to \algname{EF21-PP}, even though theory allow more aggressive step-size (For exact values of step size see Table \ref{tbl:app-real-ef21-pp-ncvx}. The experiment on \texttt{AUSTRALIAN} dataset is presented in Figure \ref{fig:app-real-ef21-pp-ncvx-aus}. This example demonstrates that in this \texttt{LIBSVM} benchmark datasets, the relative improvement in the number of rounds for \algname{EF21-W-PP} compared to \algname{EF21-PP} is considerable. The \algname{EF21-W-PP} exhibits more oscillation behavior in terms of $\|\nabla f(x^t)\|^2$ for \texttt{AUSTRALIAN} dataset, however as we can see observe in expectation $\|\nabla f(x^t)\|^2$ tends to decrease faster compare to \algname{EF21-PP}.

\subsection{Additional experiments for \algname{EF21-W-SGD}}

The standard \algname{EF21-SGD} with the analysis described in Corollary 4 \citep{EF21BW} allows performing the optimization procedure with maximum allowable step size up to the factor of $2$ equal to:  
$$\gamma_{\text{\algnametiny{EF21-SGD}}} = \left(L + \sqrt{\frac{\hat{\beta_1}}{\hat{\theta}}} {\color{red}{\LQM}} \right)^{-1}.$$

In last expression quantities $\hat{\theta} = 1 - (1-\alpha)(1+s) (1+\nu)$, and $\hat{\beta_1} = 2(1- \alpha ) \left(1+ s \right)\left(s+\nu^{-1}\right)$. Improved analysis for \algname{EF21-W-SGD} allows to apply step size:
$$\gamma_{\text{\algnametiny{EF21-W-SGD}}} = \left(L + \sqrt{\frac{\hat{\beta_1}}{\hat{\theta}}}{\color{blue}{\LAM}} \right)^{-1}.$$

Therefore in terms of step size $$\gamma_{\text{\algnametiny{EF21-W-SGD}}} \ge \gamma_{\text{\algnametiny{EF21-SGD}}}$$ and  \algname{EF21-W-SGD} exhibits a more aggressive step-size.

We conducted distributed training of a logistic regression model on \texttt{W1A}, \texttt{W2A}, \texttt{W3A}, \texttt{PHISHING}, \texttt{AUSTRALIAN} datasets with non-convex regularization. For all datasets, we consider the  optimization problem \eqref{eq:main_problem}, 
where
\begin{eqnarray}
	\label{eq:ncvx-log-reg-3}
	f_i(x) \eqdef \frac{1}{n_i} \sum_{j=1}^{n_i} \log \left(1+\exp({-y_{ij} \cdot a_{ij}^{\top} x})\right) + \lambda  \sum_{j=1}^{d} \frac{x_j^2}{x_j^2 + 1},
\end{eqnarray}
and $(a_{ij},  y_{ij}) \in \mathbb{R}^{d} \times \{-1,1\}$.

The initial gradient estimators are set to $g_i^0 = \nabla f_i(x^0)$ for all  $i \in [n]$. For comparison of \algname{EF21-SGD} and \algname{EF21-W-SGD}, we used the largest step size allowed by theory. The dataset shuffling strategy repeats the strategy that we have used for \algname{EF21-W-PP} and \algname{EF21-W} and it is described in Appendix~\ref{app:dataset-shuffling-for-libsvm}. The algorithms \algname{EF21-SGD} and \algname{/EF21-W-SGD} employed an unbiased gradient estimator, which was estimated by sampling a single training point uniformly at random and independently at each client.

\begin{table}[h]		
	\begin{center}
		\caption{\small{Non-Convex optimization experiments in Figures \ref{fig:app-real-ef21-sgd-ncvx}, \ref{fig:app-real-ef21-sgd-ncvx-aus}. Quantities which define theoretical step size.}}
		\label{tbl:app-real-ef21-sgd-ncvx}
		\begin{tabular}{|c||c|c|c|c|c|c|c|c|c|}
			\hline
			Tag & $L$ & $\Lvar$ & $\LQM$ & $\LAM$ & $\gamma_{\text{\algnametiny{EF21-SGD}}}$ & $\gamma_{\text{\algnametiny{EF21-W-SGD}}}$ \\
			\hline
			\tiny{(a) W1A} & $0.781$ & $3.283$ & $2.921$ & $2.291$ & $4.014  \times 10^{-4}$ & $5.118   \times  10^{-4}$ \\
			\hline
			\tiny{(b) W2A} & $0.784$ & $2.041$ & $2.402$ & $1.931$ & $4.882  \times 10^{-4}$ & $6.072  \times 10^{-4}$ \\
			\hline
			\tiny{(c) W3A} & $0.801$ & $1.579$ & $2.147$ & $1.741$ & $5.460   \times  10^{-4}$ & $6.733   \times  10^{-4}$ \\
			\hline
			\tiny{(f) PHISHING} & $0.412$ & $9.2   \times  10^{-4}$ & $0.429$ & $0.428$ & $1.183  \times 10^{-2}$ & $1.186   \times  10^{-2}$\\
			\hline				
			\tiny{(g) AUSTRALIAN} & $3.96   \times  10^6 $ & $1.1   \times  10^{16}$ & $3.35   \times  10^{7}$ & $3.96   \times  10^{6}$ & $3.876  \times 10^{-10}$ & $3.243   \times  10^{-9}$\\
			\hline				
		\end{tabular}
	\end{center}
\end{table}

\begin{center}	
	\begin{figure*}[t]
		\centering
		\captionsetup[sub]{font=normalsize,labelfont={}}	
		\captionsetup[subfigure]{labelformat=empty}
		%\captionsetup{position=top}
		
		\begin{subfigure}[ht]{0.3\textwidth}
			\includegraphics[width=\textwidth]{./ef21-vc-figs-sgd/expreal/w1a-sgd.pdf} \caption{\small{ (a) \texttt{W1A}}}%, $d=302$ }
		\end{subfigure}		
		\begin{subfigure}[ht]{0.3\textwidth}
			\includegraphics[width=\textwidth]{./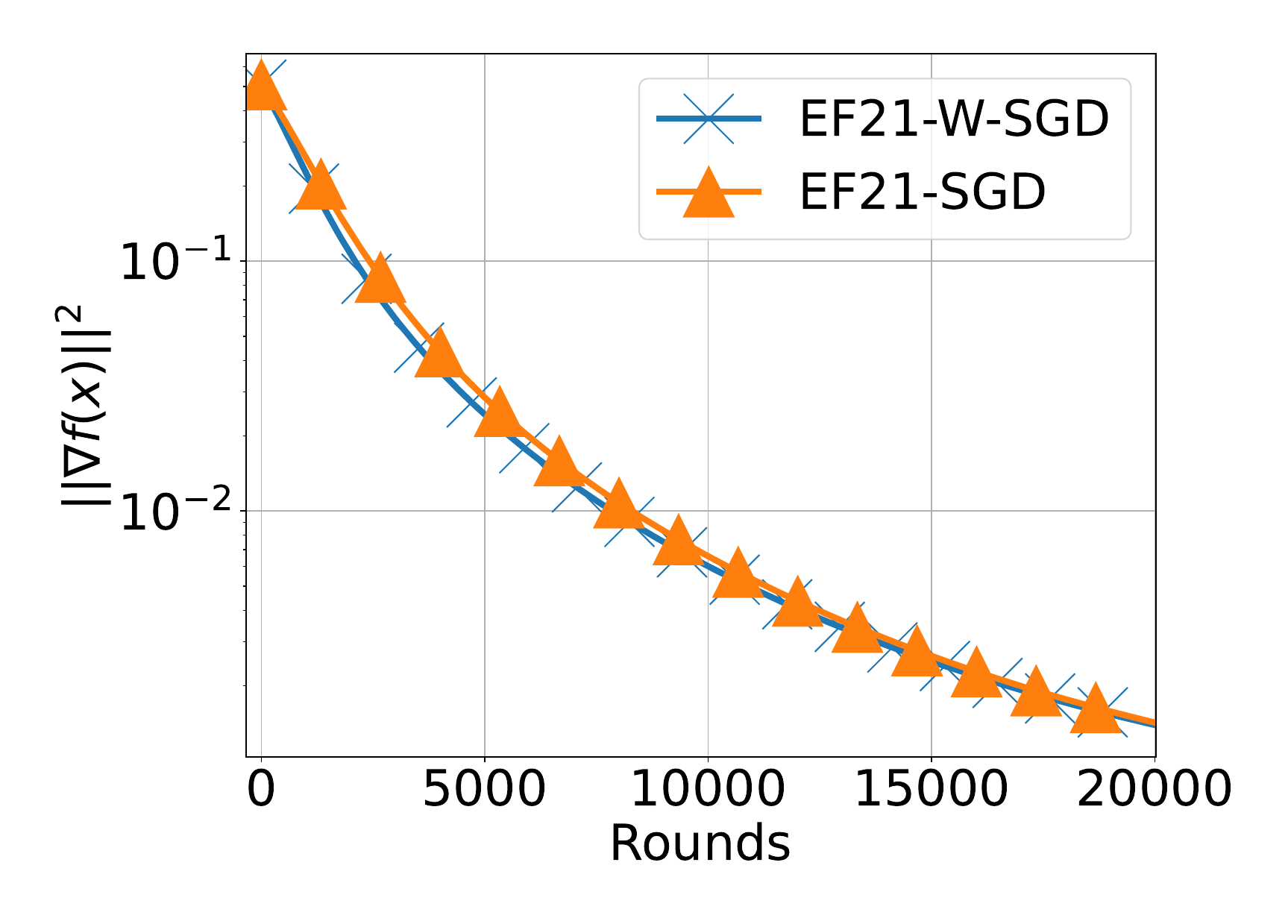} \caption{\small{ (b) \texttt{W2A}}}%, $d=302$ }
		\end{subfigure} \\
		\begin{subfigure}[ht]{0.3\textwidth}
			\includegraphics[width=\textwidth]{./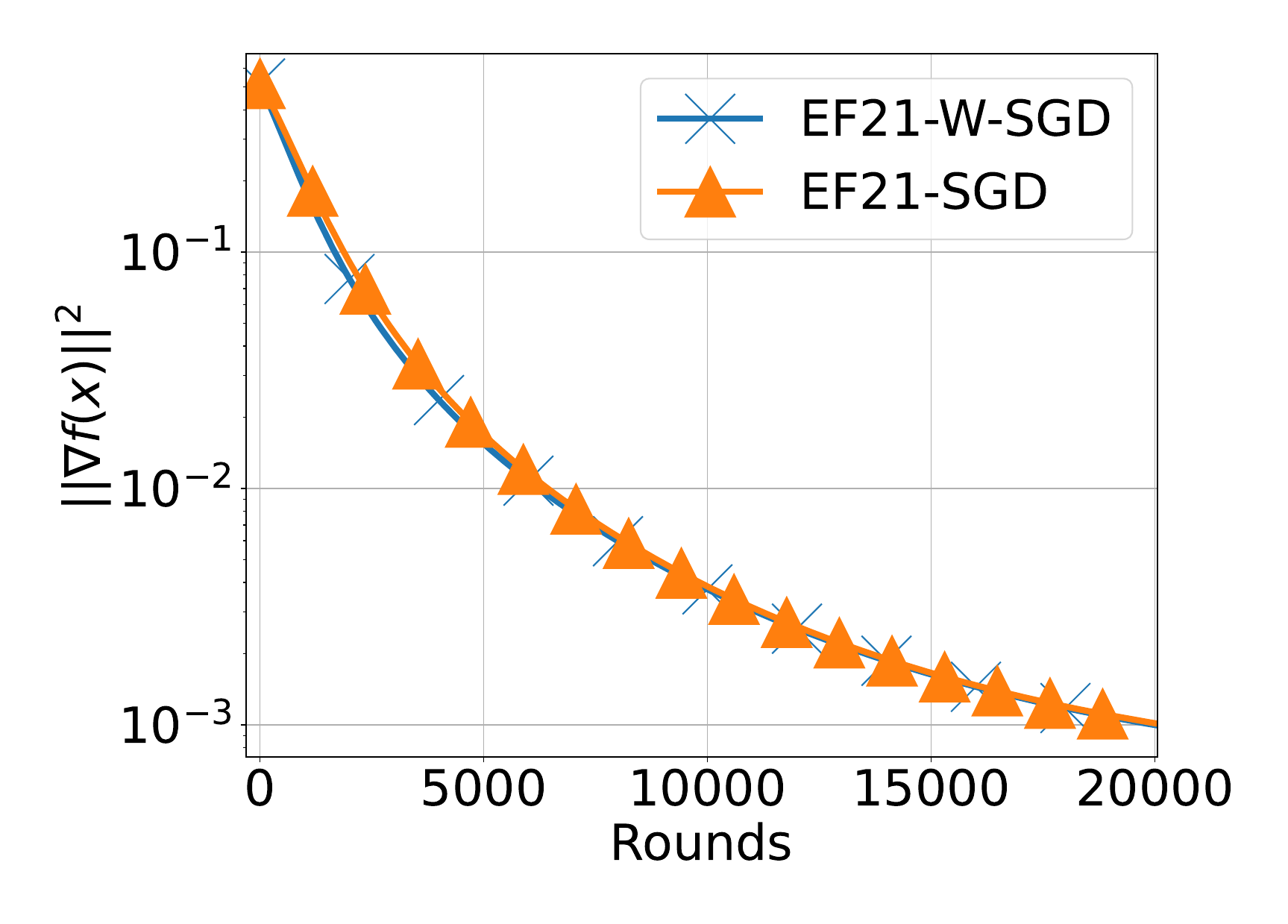} \caption{\small{ (c) \texttt{W3A}}}%, $d=302$ }
		\end{subfigure}		
		\begin{subfigure}[ht]{0.3\textwidth}
			\includegraphics[width=\textwidth]{./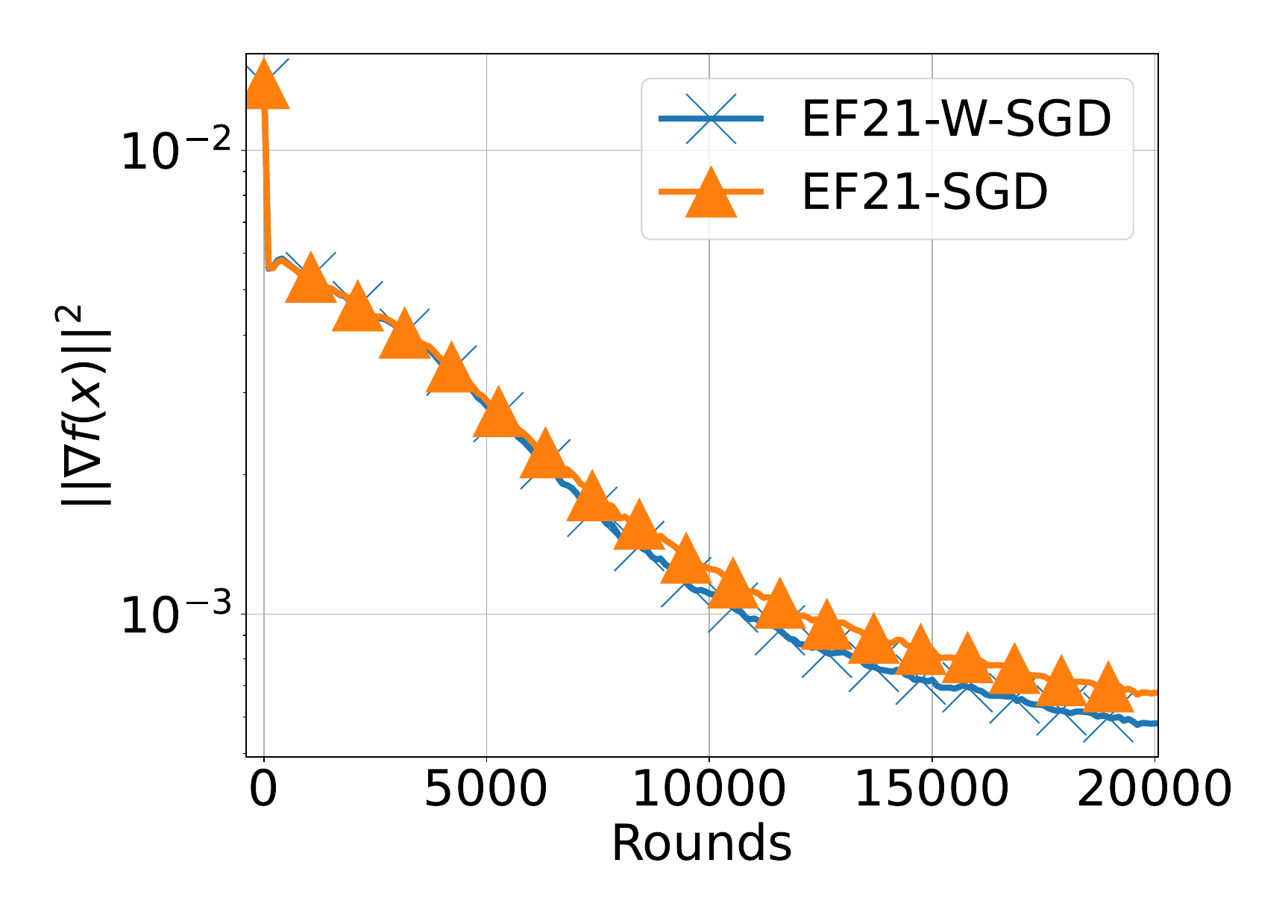} \caption{\small{ (f) \texttt{PHISHING}}}
			%, $d=70$ }
		\end{subfigure}
		
		%\vspave{-1pt}
		\caption{\small{Non-Convex logistic regression: comparison of \algnamesmall{EF21-SGD} and \algnamesmall{EF21-W-SGD}. The used compressor is \algnamesmall{Top1}. The \algnamesmall{SGD} gradient estimator is \algnamesmall{SGD-US}, $\tau=1$. The number of clients $n=1,000$. The objective function is constitute of $f_i(x)$ defined in Eq.\eqref{eq:ncvx-log-reg-3}.	Regularization term $\lambda \sum_{j=1}^{d} \frac{x_j^2}{x_j^2 + 1}$, $\lambda=0.001$. Theoretical step size. See also  Table \ref{tbl:app-real-ef21-sgd-ncvx}.}
%=======
%		\caption{\small{Non-Convex Logistic Regression: comparison of \algnamesmall{EF21-SGD} and \algnamesmall{EF21-W-SGD}. The used compressor is \algnamesmall{TopK[K=1]}. The SGD gradient estimator is \algnamesmall{SGD-US}, $\tau=1$. The number of clients $n=1\,000$. Regularization term $\lambda \sum_{j=1}^{d} \frac{x_j^2}{x_j^2 + 1}$, $\lambda=0.001$. Theoretical step size. Full participation. Extra details are in Table \ref{tbl:app-real-ef21-sgd-ncvx}.}
%>>>>>>> a97f43d2581ea660f51132b9c44e976cb4b5ae16
		\label{fig:app-real-ef21-sgd-ncvx}}
	\end{figure*}
\end{center}

\begin{center}	
	\begin{figure*}[t]
		\centering
		\captionsetup[sub]{font=normalsize,labelfont={}}	
		\captionsetup[subfigure]{labelformat=empty}
		%\captionsetup{position=top}
		
		\begin{subfigure}[ht]{0.4\textwidth}
			\includegraphics[width=\textwidth]{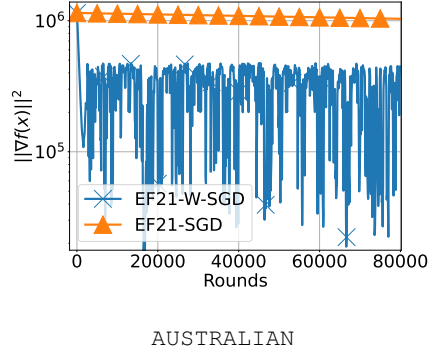} \caption{\small{ \texttt{AUSTRALIAN}}}
			% $d=16$ }
		\end{subfigure}
		
		%\vspave{-1pt}
		\caption{\small{Non-Convex logistic regression: comparison of \algnamesmall{EF21-SGD} and \algnamesmall{EF21-W-SGD}. The used compressor is \algnamesmall{Top1}. The \algnamesmall{SGD} gradient estimator is \algnamesmall{SGD-US}, $\tau=1$. The number of clients $n=200$. The objective function is constitute of $f_i(x)$ defined in Eq.\eqref{eq:ncvx-log-reg-3}. Regularization term $\lambda \sum_{j=1}^{d} \frac{x_j^2}{x_j^2 + 1}$, with $\lambda=1,000$. Theoretical step size. Full participation. Extra details are in Table \ref{tbl:app-real-ef21-sgd-ncvx}.}}
%=======
%		\caption{\small{Non-Convex Logistic Regression: comparison of \algnamesmall{EF21-SGD} and \algnamesmall{EF21-W-SGD}. The used compressor is \algnamesmall{TopK[K=1]}. The SGD gradient estimator is \algnamesmall{SGD-US}, $\tau=1$. The number of clients $n=200$. Regularization term $\lambda \sum_{j=1}^{d} \frac{x_j^2}{x_j^2 + 1}$, $\lambda=1000$. Theoretical step size. Full participation. Extra details are in Table \ref{tbl:app-real-ef21-sgd-ncvx}.}}
%>>>>>>> a97f43d2581ea660f51132b9c44e976cb4b5ae16
		\label{fig:app-real-ef21-sgd-ncvx-aus}
	\end{figure*}
\end{center}

The results are presented in Figure \ref{fig:app-real-ef21-sgd-ncvx} and Figure \ref{fig:app-real-ef21-sgd-ncvx-aus}. The important quantities for these instances of optimization problems are summarized in Table \ref{tbl:app-real-ef21-sgd-ncvx}. In all Figures \ref{fig:app-real-ef21-sgd-ncvx} (a), (b), (c), (d) we can observe that for these datasets, the \algname{EF21-W-SGD} is better, and this effect is observable in practice. The experiment on \texttt{AUSTRALIAN} datasets are presented in Figure \ref{fig:app-real-ef21-sgd-ncvx-aus}. This example demonstrates that in this \texttt{LIBSVM} benchmark datasets, the relative improvement in the number of rounds for \algname{EF21-W-SGD} compared to \algname{EF21-SGD} is considerable. Finally, we address oscillation behavior to the fact that employed step size for \algname{EF21-SGD} is too pessimistic, and its employed step size removes oscillation of $\|\nabla f(x^t)\|^2$.

%\newpage

% DIFFERENCE FROM ORIGINAL SUBMISSION
\section{Reproducibility Statement}

To ensure reproducibility, we use the following \texttt{FL\_PyTorch} simulator features: (i) random seeds were fixed for data synthesis; (ii) random seeds were fixed for the runtime pseudo-random generators involved in \algname{EF21-PP} and \algname{EF21-SGD} across clients and the server; (iii)  the thread pool size was turned off to avoid the non-deterministic order of client updates in the server.

If you are interested in the source code for all experiments, please contact the authors.

\end{document}